\documentclass{article}
\usepackage[margin=1in]{geometry}
\usepackage{natbib}
\usepackage{mathtools} 
\usepackage{amsmath,amssymb,amsfonts}
\usepackage{algorithmic}
\usepackage{graphicx}
\usepackage{textcomp}
\usepackage{mathtools}
\usepackage{xcolor}
\def\BibTeX{{\rm B\kern-.05em{\sc i\kern-.025em b}\kern-.08em
    T\kern-.1667em\lower.7ex\hbox{E}\kern-.125emX}}
\usepackage{booktabs}
\usepackage{bm} %
\usepackage{amsmath} %

\usepackage[usenames,dvipsnames]{xcolor}
\usepackage{color}
\usepackage{colortbl}
\usepackage{subcaption} %
\usepackage[printonlyused]{acronym}
\usepackage{tikz}
\usetikzlibrary{patterns}
\usetikzlibrary{shapes.geometric, arrows.meta, positioning, fit, shapes.arrows, shapes}
\usepackage{pgfplots}
\pgfplotsset{compat=1.18} %
\usepgfplotslibrary{groupplots}

\usepackage{amsthm}  %
\newtheorem{theorem}{Theorem}[section]  %
\newtheorem{lemma}{Lemma}[section]     %
\newtheorem{proposition}{Proposition}[section] %
\newtheorem{definition}{Definition}[section]  %
\newtheorem{remark}{Remark}[section]  %
\newtheorem{fact}{Fact}[section]  %
\newtheorem{assumption}{Assumption}[section]  %
\newtheorem{configuration}{Configuration}

\usepackage{hyperref}
\usepackage{cleveref}
\usepackage{url}
\Crefname{assumption}{Assumption}{Assumptions}
\Crefname{fact}{Fact}{Facts}
\definecolor{lightblue}{rgb}{0.83,0.85,1.0}
\definecolor{white}{rgb}{1.0,1.0,1.0}
\definecolor{light-gray}{gray}{0.95}
\renewcommand{\arraystretch}{1.5}
\newcolumntype{C}{>{\centering\arraybackslash}p{0.5cm}}
\newcolumntype{T}{>{\columncolor{lightblue}}c}
\newcolumntype{V}{>{\columncolor{light-gray}}c}
\usepackage{diagbox}

\usepackage{float}

\usepackage{comment}

\newcommand{\R}{\mathbb{R}}

\definecolor{mohsen}{RGB}{0,128,0}

\usepackage{caption}
\usepackage{wrapfig}
\usepackage{xspace}

\newcommand{\calD}{\mathcal{D}}
\newcommand{\calE}{\mathcal{E}}
\newcommand{\calF}{\mathcal{F}}
\newcommand{\calG}{\mathcal{G}}
\newcommand{\calH}{\mathcal{H}}
\newcommand{\calI}{\mathcal{I}}

\newcommand{\calM}{\mathcal{M}}

\newcommand{\calO}{\mathcal{O}}
\newcommand{\calP}{\mathcal{P}}

\newcommand{\calS}{\mathcal{S}}
\newcommand{\calT}{\mathcal{T}}
\newcommand{\calU}{\mathcal{U}}

\newcommand{\calX}{\mathcal{X}}
\newcommand{\calY}{\mathcal{Y}}
\newcommand{\calZ}{\mathcal{Z}}

\DeclareMathOperator{\argmin}{arg\,min}

\newcommand{\lp}{\left(}   %
\newcommand{\rp}{\right)}  %

\newcommand{\lb}{\left[}   %
\newcommand{\rb}{\right]}  %

\newcommand{\lbr}{\left\{}  %
\newcommand{\rbr}{\right\}} %

\newcommand{\lv}{\left\lvert}  %
\newcommand{\rv}{\right\rvert} %
\newcommand{\iid}{i.i.d.\xspace}
\newcommand{\dkl}{D_{KL}}

\newcommand{\PA}{P^{(A)}}
\newcommand{\QA}{Q^{(A)}}
\newcommand{\That}{\hat{T}}
\newcommand{\EA}{\mathbb{E}^{(A)}}
\newcommand{\EE}{\mathbb{E}}
\newcommand{\qtext}[1]{\quad\text{#1}\quad}
\usepackage{autonum}

\newcommand{\LNF}{L^{\mathrm{NF}}}
\newcommand{\LhatNF}{\widehat{L}^{\mathrm{NF}}}
\newcommand{\empPX}{\mathbb{P}^{(X)}}
\newcommand{\empPZ}{\mathbb{P}^{(Z)}}
\newcommand{\PX}{\mathbb{P}^{(X)}}
\newcommand{\PZ}{\mathbb{P}^{(Z)}}
\newcommand{\rmD}{\mathrm{D}}

\newcommand{\wtilde}{\widetilde{w}}
\newcommand{\Utilde}{\widetilde{U}}

\newcommand{\clip}{\mathrm{clip}}

\newcommand{\RadComp}{\mathfrak{R}}

\newcommand{\htilde}{\tilde{h}}
\newcommand{\xx}{\mathbf{x}}
\newcommand{\yy}{\mathbf{y}}
\newcommand{\zz}{\mathbf{z}}
\newcommand{\dx}{d_{\mathbf{x}}}
\newcommand{\dy}{d_{\mathbf{y}}}
\newcommand{\dz}{d_{\mathbf{z}}}
\newcommand{\LhatY}{\widehat{L}_{\mathbf{y},n}}
\newcommand{\LhatZ}{\widehat{L}_{\mathbf{z}, n}}
\newcommand{\Lhat}{\widehat{L}_n}
\newcommand{\Jac}{\mathbb{J}}
\newcommand{\NFrealizability}{\textbf{(NF1)}\xspace}
\newcommand{\NFvariational}{\textbf{(NF3)}\xspace}
\newcommand{\NFmoments}{\textbf{(NF4)}\xspace}
\newcommand{\NFconvergence}{\textbf{(NF2)}\xspace}

\begin{document}

\newacro{NF}{normalizing flow}
\newacro{IK}{inverse kinematics}
\newacro{NLL}{negative log-likelihood}
\newacro{INN}{invertible neural network}
\newacro{BNN}{Bayesian neural network}
\newacro{MMD}{maximum mean discrepancy}
\newacro{ABC}{approximate Bayesian computation}
\newacro{PINN}{physics-informed neural networks}
\newacro{GAN}{generative adversarial network}
\newacro{IPM}{integral probability metric}
\newacro{SL}{supervised loss}
\newacro{USL}{unsupervised loss}
\newacro{RKHS}{reproducing kernel Hilbert space}
\newacro{ERM}{empirical risk minimization}
\newacro{VINA}{variational invertible neural architecture}
\newacro{GI}{
Geoacoustic inversion}
\newacro{KL}{
Kullback-Leibler}
\newacro{JS}{Jensen–Shannon}

\title{
VINA: Variational Invertible Neural Architectures
}

\author{
  Shubhanshu Shekhar\thanks{Equal contribution.}\\
  EECS Department\\
  University of Michigan, Ann Arbor\\
  \texttt{shubhan@umich.edu}
  \and
  Mohammad Javad Khojasteh\footnotemark[1]\\
  EME Department\\
  Rochester Institute of Technology\\
  \texttt{mjkem@rit.edu}
  \and
  Ananya Acharya\\
  EME Department\\
  Rochester Institute of Technology\\
  \texttt{aa2334@rit.edu}
  \and
  Tony Tohme\\
  Department of Mechanical Engineering\\
  Massachusetts Institute of Technology\\
  \texttt{tohme@mit.edu}
  \and
  Kamal Youcef-Toumi\\
  Department of Mechanical Engineering\\
  Massachusetts Institute of Technology\\
  \texttt{youcef@mit.edu}
}

\date{}

\maketitle
\begin{abstract}
The distinctive architectural features of normalizing flows (NFs), notably bijectivity and tractable Jacobians, make them well-suited for generative modeling.
Invertible neural networks (INNs) build on these principles to address supervised inverse problems, enabling direct modeling of both forward and inverse mappings.
In this paper, we revisit these architectures from both theoretical and practical perspectives and address a key gap in the literature: the lack of theoretical
guarantees on approximation quality under realistic assumptions, whether for posterior inference in INNs or for generative modeling with NFs.

We introduce a unified framework for INNs and NFs based on variational unsupervised loss functions, inspired by analogous formulations in related areas such as generative
adversarial networks (GANs) and the Precision-Recall divergence for training normalizing flows. Within this framework, we derive theoretical performance guarantees, quantifying
posterior accuracy for INNs and distributional accuracy for NFs, under assumptions that are weaker and more practically realistic than those used in prior work.

Building on these theoretical results, we conduct extensive case studies to distill general design principles and practical guidelines. We conclude by demonstrating the
effectiveness of our approach on a realistic ocean-acoustic inversion problem.
\end{abstract}

\section{Introduction}
\label{sec:introduction}

Recently, machine learning approaches, particularly those based on deep neural networks, have emerged as effective alternatives to conventional inverse problem solvers. Among these,  \acp{INN}~\citep{ardizzone2018analyzing} stand out due to their ability to model complex, non-linear relationships while ensuring invertibility between input and output spaces. \ac{INN}s offer advantages such as bijectivity and computationally tractable Jacobians, which make them particularly suitable for solving inverse problems, where both forward and inverse mappings need to be computed accurately and efficiently.  \acp{INN} have been applied for solving inverse problems in various fields, including epidemiology~\citep{radev2021outbreakflow}, astrophysics~\citep{ardizzone2018analyzing}, optics~\citep{Luce_2023}, geophysics~\citep{zhang2021bayesian,WuHuaZha:23}, and reservoir engineering~\citep{padmanabha2021solving}.

\begin{wrapfigure}{r}{0.5\textwidth}
\centering
\resizebox{0.47\textwidth}{!}{%
\usetikzlibrary{arrows.meta,positioning,calc}
\begin{tikzpicture}[>=Stealth, font=\small]

% side boxes
\node[draw, rounded corners=2pt, minimum width=10mm, minimum height=28mm, align=center] (L) {$X$};
\node[draw, rounded corners=2pt, minimum width=10mm, minimum height=28mm, align=center, right=75mm of L] (R) {};

% geometry for the double-headed arrow
\coordinate (A) at ($(L.east)+(8mm,0)$);
\coordinate (B) at ($(R.west)+(-8mm,0)$);
\def\h{3mm} % <-- half thickness of the arrow (make this smaller/thicker as desired)
\def\w{4mm} % <-- head length

\coordinate (TopR) at ($(R.west)+(0,9mm)$);
\coordinate (BotR) at ($(R.west)+ (0,-9mm)$);
\node at (R.center |- TopR) {$Y$};
\node at (R.center |- BotR) {$Z$};

% thin double-headed arrow as a polygon
\filldraw[fill=blue!12, draw=blue!60, line width=0.7pt]
  ($(A)+(-\w,0)$) --
  ($(A)+(0,\h)$) --
  ($(B)+(0,\h)$) --
  ($(B)+(\w,0)$) --
  ($(B)+(0,-\h)$) --
  ($(A)+(0,-\h)$) -- cycle;

% label centered on the arrow
\node[font=\bfseries] at ($(A)!0.5!(B)$) {Invertible Neural Network};

% dashed guide arrows + labels
\draw[dashed,gray,->, very thick] ($(L.east)+(0,9mm)$) -- ($(R.west)+(0,9mm)$)
  node[midway, above=2pt, black]{forward modeling};
\draw[dashed,gray,->, very thick] ($(R.west)+(0,-9mm)$) -- ($(L.east)+(0,-9mm)$)
  node[midway, below=2pt, black]{inverse prediction};

\end{tikzpicture}
}
\caption{\acp{INN} can be represented by an invertible map $T$ that approximates the relation from the input $X \in \mathbb{R}^{\dx}$ to the output $Y \in \mathbb{R}^{\dy}$ and a latent variable $Z \in \mathbb{R}^{\dx - \dy}$. The invertibility of $T$ means that for any $\yy \in \mathbb{R}^{\dy}$, we also have an approximate  posterior sampling  distribution via $T^{-1}(\yy, Z)$.}
\label{INN_Figure}
\end{wrapfigure}
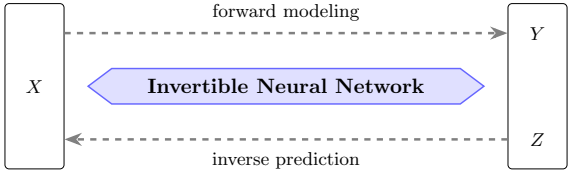

The name \ac{INN} in the literature has been used to refer to supervised 
models trained to solve inverse problems, where the training loss involves a supervised loss (to model observed outputs) as well as an unsupervised component~\citep{ardizzone2019guided,guan2024efficient,ardizzone2018analyzing,hagemann2021stabilizing}. 
There exists a related class of models based on similar architectural principles, referred to as \acp{NF}, which are primarily used for unsupervised density estimation and generative modeling~\citep{papamakarios2021normalizing,papamakarios2017masked,gomez2017reversible,kingma2018glow,dinh2014nice}. 
For ease of exposition, we also use this convention and distinguish between \acp{INN} and \acp{NF} based on the existence of the supervised loss in the training process.

A typical inverse problem can be described as follows:  We are given a dataset $\mathcal{D} = \{(X_i, Y_i) \in \mathbb{R}^{\dx + \dy}\}_{i=1}^n$ consisting of $n$ input-output pairs, where $X_i \in \mathbb{R}^{d_{\mathbf{x}}}$ is the input feature vector and $Y_i \in \mathbb{R}^{d_{\mathbf{y}}}$ is the target vector, from a joint distribution $(X, Y) \sim P_{X,Y}$.  Usually we have $\dx \geq \dy$ representing some inherent information loss in a measurement process~(forward map). This joint distribution can be decomposed into a prior $P_X$ and a possibly randomized transformation $P_{Y \mid X}$ representing the forward process. The general goal of the Bayesian approach to inverse problems~\citep{stuart2010inverse,dashti2012besov} is to learn or approximate the posterior distribution $P_{X \mid Y}$ given the data $\mathcal{D}$.  
As depicted in~\Cref{INN_Figure}, the approach taken by \acp{INN} to achieve this goal is to incorporate an additional latent output variable,  denoted by $Z \in \mathbb{R}^{\dz}$ where $\dz = \dx - \dy$. This latent variable is designed to represent information related to $X$ that is not contained in $Y$, and its distribution $P_Z$ is selected by the data analyst~\citep{hagemann2021stabilizing}. An \ac{INN} consists of a map $T \equiv (T_{\yy}, T_{\zz}): \mathbb{R}^{\dx} \to \mathbb{R}^{\dy \times \dz}$  that is continuous and invertible with a continuous inverse, and usually they are also assumed to be continuously differentiable to enable gradient based training procedures.
Let $\calT$ denote an appropriate parametrized collection of such functions~(we discuss some architectural details in~\Cref{sec:background}).  The model is trained by minimizing an objective of the form
\begin{align}
\label{eq:forwardcostINN}
    T^*_{\ac{INN}} \in \argmin_{T \in \calT} \; \lp \mathbb{E}_{XY}[\|T_\yy(X) -Y\|_2^2] \;\;+\;\; \lambda \, D(P_{Y,Z}, P_{Y,T_{\zz}(X)}) \rp,
\end{align}
where the two terms correspond to the supervised and the unsupervised (population) losses, respectively, and $\lambda>0$ is a regularizing constant. 
In this display, we use $D$ to denote some divergence or distance measure~(such as $f$-divergence, kernel-MMD, Wasserstein metric, etc.) between the joint distributions of $P_{Y,T_{\zz}(X)}$ and $P_{Y,Z} = P_Y  P_Z$. 
The supervised term encourages the component $T_{\yy}$ to approximate the forward map $P_{Y\mid X}$, while the unsupervised term matches the joint distribution of $P_{Y, T_{\zz}(X)}$ with the output-latent  variable pair $P_{Y,Z}$. Crucially, due to the invertibility of $T$, it is possible to show that under certain conditions, making these two terms small induces an approximate posterior sampling distribution: for a given $Y=\yy$, the distribution of $T^{-1}(\yy, Z)$ for $Z \sim P_Z$ is close~(in the same divergence $D$) to the true posterior $P_{X\mid Y=\yy}$.  Note that unlike~\citet{hagemann2021stabilizing}, we use $P_{Y, T_{\zz}(X)}$ instead of $P_{T_{\yy}(X), T_{\zz}(X)}$ as the second argument of $D(\cdot, \cdot)$ to simplify some of the technical arguments. We expect that our results can be extended to the unsupervised loss with $P_{T_{\yy}(X), T_{\zz}(X)}$, and we consider this choice in some of our experiments.

In the purely unsupervised setting, the same invertible models reduce to %
\acp{NF}.  In particular, we have access to a dataset $\calD = \{X_i \in \mathbb{R}^{d} \}_{i=1}^n$, and our goal is to estimate the distribution $P_X$ of these observations. As in the case of \acp{INN}, the idea is to introduce a \emph{latent variable}  $Z \sim P_Z$, also in $\mathbb{R}^{d}$, and learn an invertible map $T: \mathbb{R}^{d} \to \mathbb{R}^d$ that minimizes some notion of divergence between $P_Z$ and the forward map $T(X)$; that is, 
\begin{align}
    T^*_{\ac{NF}} \in \argmin_{T \in \calT} \; \; D( P_Z, P_{T(X)}). 
\end{align}
By making the divergence between the distributions of $Z$ and $T(X)$ small, the invertibility of $T$ can be used to establish closeness (in the same divergence) between the  distributions of the inverse map $X$, and $T^{-1}(Z)$, as desired. Within this formulation, in this paper, we view \acp{NF} as a purely unsupervised variant of the more general class of \acp{INN}.  

The choice of divergence $D$ in the training objective critically shapes the behavior of \acp{INN}~(and \acp{NF}) as different choices encourage different behavior in the trained model. For instance, forward relative entropy~(or KL divergence) induces mode coverage behavior, while reverse relative entropy encourages mode seeking. Jensen-Shannon~(JS) divergence provides a natural compromise between these two extremes~\citep{polyanskiy2025information}. Other popular choices include the Wasserstein metrics that are known to provide informative gradients, and kernel \ac{MMD} which is an important instance of the family of \acp{IPM}~\citep{sriperumbudur2012empirical}. In practice, these choices influence not only the final model's eventual behavior but also the optimization dynamics, sample quality, and robustness to constraints such as limited data and architectural constraints.

\subsection{Overview of our contributions} 
We now present an overview of our main contributions, which can be divided into the proposal of unifying variational framework, theoretical characterization of the quality of approximation achieved by the \ac{ERM}-based model, experimental case studies that verify and augment the theory, and a real-world ocean acoustic inversion application. 

\paragraph{A unifying framework.} The starting point of our work is the observation that a large class of divergence measures used to define the unsupervised loss term in the \ac{INN} and~\ac{NF} training objective admit a variational (or dual) representation. This allows an interpretation of the training process as a saddle-point optimization problem. While we present the formal details in~\Cref{sec:general-framework}, we illustrate the idea using an $f$-divergence between two distributions on $\mathbb{R}^d$, denoted by $D_f(P \parallel Q)$, which admits the following classical Donsker-Varadhan~(DV) type variational representation~\citep[Theorem 7.26]{polyanskiy2025information}: 
\begin{align}
    D_f(P \parallel Q) = \sup_{g: \mathbb{E}_Q[f^*(g(X))]< \infty} \; \lbr \mathbb{E}_P[g(X)] - \mathbb{E}_Q[f^{\ast}(g(X))] \rbr, \qtext{where} f^{\ast}(b) = \sup_{a \in \mathbb{R}} \; a b - f(a)  
\end{align}
denotes the convex conjugate of $f$ { (also known as Fenchel conjugate~\citep{rockafellar2015convex})}. 
With this formulation, the training of an \ac{INN} given the dataset~(augmented with samples from the latent distribution $P_Z$) $\calD = \{(X_i, Y_i, Z_i) \sim P_{X,Y,Z} \}_{i=1}^n$, a model class $\calT$, and a ``critic class''  $\calG \subset \{g: \mathbb{E}_Q[f^*(g(X))] < \infty\}$, can be represented as 
\begin{align}
    \That_n \in \argmin_{T \in \calT} \lbr  \frac{1}{n} \sum_{i=1}^n \|T_\yy(X_i) - Y_i\|_2^2 + \lambda \sup_{g \in \calG} \lp \frac 1 n \sum_{i=1}^n g(Y_i, Z_i)  - \frac{1}{n} \sum_{j=1}^n f^*\big(g (Y,T_{\zz}(X))\big) \rp \rbr. 
\end{align}
Similar expressions can be obtained for the general family of \acp{IPM}, Wasserstein-1~($W_1$) metric using Kantorovich-Rubinstein duality~\citep{villani2008optimal}. We refer to this class of models as \emph{Variational Invertible Neural Architectures}~(VINA).  

\paragraph{Theoretical analysis.} In~\Cref{subsec:variational-inn}~(\acp{INN}) and~\Cref{subsec:variational-nf}~(\acp{NF}), we analyze the quality of the \ac{ERM} model $\That_n$ under the unified variational framework. In both cases, we present our results at three ``levels''. At the first level~(\Cref{theorem:finite-sample} and~\Cref{theorem:nf-estimation-error}), we state how certain assumptions, such as realizability, uniform learnability, moment bounds, and representation ability of the critic class can allow us to convert the empirical loss into a high probability guarantee on the approximation quality of~$\That_n$ in terms of the $W_1$ metric~(see~\Cref{remark:main-theorem-W1} for justification of this choice).  At the next level~(\Cref{subsec:INN-interpretable-conditions} and~\Cref{subsubsec:interpretable-conditions-nf}), we identify sufficient and more naturally verifiable conditions under which the requirements of the previous level are satisfied. In particular, we present conditions on the Rademacher complexity, Lipschitz regularity of the model class, and certain tightness properties associated with the latent distribution and model class. At the third and final level~(\Cref{subsec:INN-example} and~\Cref{subsubsec:variational-nf-example}), we present a concrete instantiation of a practically relevant model class~(iResNet models), and critic class~(norm constrained \acp{RKHS}), and verify that they satisfy all the conditions obtained at the previous level. 

Unlike existing theoretical guarantees, our results incorporate some practical aspects of training \acp{INN} and \acp{NF}. \citet{ardizzone2018analyzing} showed that if a model $T$ achieves exactly zero population loss~(supervised squared error + unsupervised kernel-MMD loss), then sampling via $T^{-1}(\yy, Z)$ exactly recovers the posterior $P_{X \mid \yy}$. In practice, however, zero population loss is seldom achieved.~\citet{hagemann2021stabilizing} addressed this issue by bounding the $W_1$ distance between the true posterior and the estimated posterior in terms of the population loss~(combination of supervised and unsupervised loss).
However, their analysis places a rather strong bounded support assumption on the input and output of the network, which excludes common choices such as Gaussian latent variables.  
In this work, we relax this assumption by only requiring a finite-moment assumption which significantly broadens the scope of our result to more practical scenarios. In fact, we obtain the first formal quantification of the quality of the posterior distribution represented by \acp{INN} trained via empirical risk minimization~(ERM) in terms of the Wasserstein~($W_1$) metric,  and also discuss how it can be extended to other metrics such as kernel-MMD.

\paragraph{Empirical Results.}  We then perform a thorough empirical study  to deliver practically useful insights and general design principles that complement the theoretical results mentioned above. 
    More specifically, we study the impact of the different design choices involved in our framework, such as the effect of the choice of divergence / metric, the choice of the latent distribution, and the prior loss. The key insights obtained from these empirical evaluations are summarized below. 
    \begin{itemize}

     \item In~\Cref{sec:observation-1-prior}, we study the significance of integrating the knowledge of prior distribution on the input space~($\calX = \mathbb{R}^{\dx}$) into the training of \acp{INN}. Our empirical findings show that \emph{employing a well-specified prior can significantly improve the performance of \acp{INN}}, but a misspecified can lead to a degradation in the quality of the generated samples.

      \item  \sloppy In~\Cref{sec:observation-2-f-div}, we compare coupling-based and iResNet architectures using $f$-divergence-based unsupervised losses in both forward and backward directions. 
We observe that training performance is influenced by multiple factors, including the network architecture, the training procedure, and the capacity of the critic class, with \textit{backward $f$-divergence losses tending to yield lower inference error}. 
In line with prior work in the literature (e.g.,~\citet{behrmann2019invertible}), coupling-based architectures demonstrate improved efficiency relative to iResNet-based models.

        \item In~\Cref{sec:observation-3}, we observe \emph{an
 approximately local U-shaped dependence between latent dimension and inference performance of \acp{INN}.} More specifically, as we change the latent dimension size $\dz$~(and appropriately modify $\dx$ by padding), we observe that  increasing dimension initially improves results, and then starts degrading,  and the overall trend is  multimodal.

    \item In~\Cref{sec:observation-4}, we investigate the practical aspects of solving inverse problems using Wasserstein Distance~\citep{villani2008optimal}. 
    Calculating the Wasserstein distance is computationally challenging and susceptible to the curse of dimensionality~\citep{fournier2015rate}. Here, we deploy the entropic estimate of the Wasserstein distance ($W_2$), both Sinkhorn approximation\citep{cuturi2013sinkhorn} and Sinkhorn divergence~\citep{peyre2019computational}, as a metric within our training process. We observe that \emph{lower entropic regularization improves sample quality but increases training time for both Sinkhorn Divergence and Sinkhorn approximation of Wasserstein Distance, while Sinkhorn divergence demonstrates greater stability than the Sinkhorn approximation.}  
       
    \item In~\Cref{sec:observation-5}, we study the effect of the support  of the latent  distribution when training with the  KL divergence and the Sinkhorn divergence. It is well known that  $f$-divergences  perform poorly when the two distributions are mutually singular\citep{zhang2019variational,zhang2020spread}, while 
      Wasserstein distances  can provide informative gradients even when supports do not overlap because it reflects the underlying geometric cost.
    Our empirical observations in this section validate this hypothesis, and the \emph{models trained with the Sinkhorn divergence were more robust to mismatch between the supports of the latent and prior distributions.} 
     
    \item In~\Cref{sec:observation-6}, we study the relation between the number of finite moments of $X$, and the performance of the \ac{NF} model. As \emph{predicted by our theoretical results}, we find that as the number of finite moment of $X$ increases, the $W_1$ distance between the true and estimated distributions decreases.

    \end{itemize}

Overall, our exploratory experiments in~\Cref{subsec:observations} provide empirical validation of some of the theoretical predictions of~\Cref{sec:main-results}, and also provide some general practical insights in training invertible architectures. 

\paragraph{Ocean-Acoustic Application.} 

To demonstrate the practical relevance of our approach,  we employ the insights gained from the exploratory experiments in~\Cref{subsec:observations} to the ocean–acoustic inversion setting of the SWellEx-96 experiment conducted off the coast of San Diego near Point Loma~\citep{yardim2010geoacoustic,MeyGem:J21}.
\ac{GI}, as demonstrated in the SWellEx-96 study, is a challenging and computationally intensive problem~\citep{DosDet:11,huang2006uncertainty,chapman2021review}. This paper utilizes synthetic data to simplify the task for the initial application of an \ac{INN}-based framework.
These preliminary studies revealed that the latent-space dimensionality plays a critical role in estimating the posterior, and that physically meaningful priors, such as uniform priors for uncertain quantities like sound speed, improve stability. 
We demonstrate that invertible architectures can offer a favorable computational trade-off relative to likelihood-based MCMC sampling for \ac{GI}, in which likelihood evaluations rely on repeated calls to the forward model KRAKEN~\citep{porter1992kraken}. Although no method is universally best, moving this computational burden offline into training enables a pre-trained \ac{INN} to support rapid, near-real-time posterior inference at test time.

To summarize, our work advances both the theory and practice of invertible neural architectures. We provide a first unified \ac{ERM} analysis of invertible models that provides explicit bounds~(in $W_1$ metric) on the quality of approximation achieved by the trained model under realistic moment and capacity assumptions. Through concrete instantiations we show that these conditions are satisfied by practically useful models, and finally we implement a series of empirical case studies and apply our ideas to a  real-world ocean-acoustic inversion problem. 

\subsection{Organization of the paper}
The remainder of the paper is organized as follows. We present a thorough review of the related work in~\Cref{sec:related-work}, and then recall some background on the architecture and training details of existing invertible models. We present our main results in~\Cref{sec:main-results}, and in particular, propose our variational training strategy in~\Cref{sec:general-framework} and derive theoretical results for  \acp{INN} in~\Cref{subsec:variational-inn}. Also, we extend our theoretical analysis to variational  \acp{NF} in~\Cref{subsec:variational-nf}. We then move on to the empirical part of the paper in~\Cref{sec:case-studies}. In particular, in~\Cref{subsec:observations} we present a series of observations about the effect of various design choices on the performance of invertible models. Some of these observations verify the theoretical predictions from the previous section (such as the effect of the complexity of critic class, and the effect of the number of finite moments of the latent distributions), while others concern important aspects such as the latent dimension, choice of divergence measure etc. Finally, in~\Cref{subsec:geoacoustic_inversion}, we apply our ideas to a real-world task of ocean-acoustic inversion.

\section{Related Works}
\label{sec:related-work}
\paragraph{Invertible architectures~(\acp{NF} and \acp{INN}):}
There are numerous ways to achieve invertibility in neural architectures~\citep{tabak2013family,kobyzev2020normalizing,keller2021self},  such as by using residual connections~\citep{gomez2017reversible,jacobsen2018revnet,behrmann2019invertible},  triangularization~\citep{bogachev2005triangular,marzouk2016introduction,parno2016multiscale,jm2017improving}, and  coupling-based normalizing flows (e.g., RealNVP~\citet{dinh2016density} and Glow~\citet{kingma2018glow}).
In practice, coupling-based architecture strikes the right balance between computational efficiency and expressivity. In fact, under appropriate assumptions, they have been shown to be universal diffeomorphism approximators~\citep{teshima2020coupling,jin2024approximation}. 

While \acp{NF} were developed largely for tractable density modeling, their supervised variants \acp{INN} have become increasingly popular for solving inverse problems.
We explore this particular direction in~\Cref{subsec:geoacoustic_inversion}. However, \acp{INN}~(as defined in our paper) are not the only way to employ invertible neural networks to solve inverse problems.  There exist other approaches, such as using conditional~\acp{INN}~\citep{ardizzone2019guided,winkler2019learning}.  Another approach is to use conditional \acp{NF} to learn the likelihoods~\citep{papamakarios2019sequential},  which can be used to estimate posterior by combining it with a prior distribution using Bayes rule. \acp{NF} have also been integrated with sampling-based Bayesian inference, to combine the advantages of both approaches~\citep{song2017nice,winter2023machine,kruse2025enhanced}, as will be discussed further below.

Our research focuses on \acp{INN}, which provide a probabilistic  framework for addressing inverse problems. However, it's important to recognize the growing popularity of neural networks in solving inverse problem. For instance,  In non-probabilistic contexts, works~\citet{ying2022solving, fan2019solving, khoo2019switchnet} have  designed specialized neural architectures that embed physical formulation to recover unknown parameters while reducing the reliance on large amounts of data.

\paragraph{\acp{INN} vs. sampling-based Bayesian methods:}
Conventional Markov Chain Monte Carlo (MCMC) methods~\citep{mackay2003information, brooks2011handbook, andrieu2003introduction, doucet2005monte,korattikara2014austerity,kungurtsev2023decentralized,atchade2005adaptive} are widely used for sampling from complex probability distributions, including posterior distributions arising in inverse problems~\citep{geweke1989bayesian}.
From a pushforward viewpoint, MCMC defines a Markov transition kernel  with the target posterior as its stationary distribution; under standard ergodicity conditions, the chain converges asymptotically to the exact posterior.
However, standard likelihood-evaluation-based MCMC methods often require substantial computational time to achieve adequate sampling~\citep{roy2020convergence,jones2022markov}, which becomes a bottleneck, particularly in inverse problems where the forward process is computationally expensive to evaluate. Likelihood-free MCMC variants exist but can be substantially more computationally demanding in practice~\citep{beaumont2019approximate}.

On the other hand, invertible generative models parameterize an invertible map that pushes a simple base distribution to an induced (learned) distribution that approximates the target distribution. 
When the Jacobian determinant is tractable, this learned  distribution can be evaluated exactly via the change-of-variables formula. 
In context of inverse problem, once an \ac{INN} is trained,
approximating the posterior involves running the network in reverse, which is computationally inexpensive. 
We demonstrate that invertible architectures can offer a favorable computational trade-off relative to repeated likelihood-evaluation-based MCMC sampling for a practical inverse problem.
This speedup comes from shifting computation offline into training. 
In fact, under appropriate assumptions, \acp{INN} can even perform in amortized settings, where a model trained extensively can generalize effectively to new instances without retraining~\citep{radev2023bayesflow,radev2021outbreakflow}. 

Since  practically relevant  performance guarantees on invertible architecture are scarce in literature~(we discuss some existing results later in this section), \citet{gabrie2021efficient,gabrie2022adaptive,brofos2022adaptation,schonle2023optimizing} have incorporated \acp{NF} to characterize the proposal distributions of MCMC methods, thereby accelerating convergence in practical tasks, while retaining theoretical guarantees for MCMC convergence.
In this work, we establish error bounds for our proposed class of invertible models under appropriate assumptions, providing theoretical characterizations of posterior accuracy for \acp{INN} and generative accuracy for \acp{NF}. 
Taken together, our results clarify when offline-trained invertible maps with controlled approximation error can replace repeated online likelihood-based Monte Carlo sampling.

Posterior sampling in high dimensions is often computationally challenging~\citep{montanari2023posterior}, and there is no universally best configuration for these problems. %
As demonstrated in our empirical studies, \acp{INN} exhibit sensitivity to several design and optimization choices such as  the learning rate and architectural configuration. To address this, in our cases studies, we employed tree-structured Parzen estimator ~\citep{bergstra2011algorithms} to  search for promising hyperparameters including: the size of the subnetworks, the latent dimensionality, and the number of coupling layers.

\paragraph{Unsupervised loss:} The choice of training objective critically shapes the behavior of \ac{INN}s as these models are trained through  optimization. 
\ac{INN}s were initially trained using \ac{MMD}~\citep{ardizzone2018analyzing}, which only requires samples and avoids the need for explicit density modeling. However, due to challenges such as poor scaling in high dimensions and sensitivity to kernel choice~\citep{ramdas2015decreasing}, \ac{NLL}~\citep{dinh2016density} has become more prevalent in recent \ac{INN} architectures~\citep{ren2020benchmarking, kruse2021benchmarking}.
The use of \ac{NLL} can be justified as an approximation of training by minimizing the KL-divergence~\citep{kingma2019introduction,papamakarios2021normalizing}. 
Training an \ac{INN} using \ac{NLL} in the literature is based on making assumptions about the distribution of the output samples, for instance Gaussian distribution has been used in prior work~\citep{ren2020benchmarking, kruse2021benchmarking}. In this paper, we eliminate the need for such distributional assumptions by relying solely on samples and leveraging the variational formulation of f-divergences, as introduced by ~\citet{nguyen2010estimating} and follow-up works~\citep{nowozin2016f,ruderman2012tighter,ke2021imitation}.  

Our approach is inspired by analogous formulations in related domains, such as generative adversarial networks (GANs)~\citep{goodfellow2014generative,nowozin2016f}, Flow-GAN~\citep{grover2018flow} and the 
Precision-Recall divergence for training \acp{NF}~\citep{verine2023precision}. 
To train invertible architectures, we take a unified look at the usage of a class of variational distance metrics over the space of probability measures, such as \acp{IPM}~\citep{sriperumbudur2012empirical}, which include the \ac{MMD}~\citep{li2017mmd} and Wasserstein metrics~\citep{arjovsky2017wasserstein,coeurdoux2022sliced}, and  the $f$-divergence family~\citep{csiszar1967information,arjovsky2017towards}, and we study its implication in  supervised inverse problems (variational \ac{INN}) and  in unsupervised generative modeling tasks~(Variational \ac{NF}). Training based on the variational representation of f-divergence enables stronger theoretical guarantees.  By extending and unifying the results of ~\citet{nguyen2010estimating} and~\citet{hagemann2021stabilizing}, we derive new convergence bounds for \ac{INN} that are both tighter and hold under weaker assumptions than prior work.
In particular,~\citet[Theorem 2]{hagemann2021stabilizing} establish an approximation guarantee for the posterior distribution of an \ac{INN} represented by a homeomorphism $T$ in terms of the error bounds on the supervised and unsupervised losses. Our results extend their result in two main ways: first we relax a strong requirement of bounded observations imposed by~\citet{hagemann2021stabilizing} that omits usual choices like Gaussian latent distributions, and second, we analyze the performance of a data-driven model $\That_n$ trained using an \ac{ERM} strategy, unlike \citet{hagemann2021stabilizing} who work with a fixed $T$ satisfying certain approximation guarantees.

\paragraph{Latent distribution:} 
It is well established that the structure and topology of the latent space play a crucial role in the accuracy of generative models~\citep{gurumurthy2017deligan,bevins2023piecewise,stimper2022resampling,laszkiewicz2021copula,hickling2024flexible,fadel2021principled}.
While the most commonly used latent distributions for \acp{INN} is the standard Gaussian distribution~\citep{ardizzone2018analyzing},
the works in \citet{behrmann2021understanding,hagemann2021stabilizing} shows that a simple Gaussian prior is often insufficient for multimodal problems and choosing a suitable latent distribution can improve robustness.  In this paper, we study the effect of size of latent variable and  support  set of this variable in performance of \ac{INN}.

\begin{table*}[t]
    \caption{Definitions of important acronyms.}
    \label{tab:acronyms}
    \renewcommand{\arraystretch}{1.5}
    \providecommand {\cellLeft} [1] {\hspace{0.2cm}#1}
    \centering
    \begin{tabular}{  m{1.2cm}   m{5.3cm} | m{1.2cm} m{5.3cm}   }
        \hline

        \rowcolor{Turquoise!12!white} 
        \cellLeft{\acs{NF}
        } & \acl{NF}
        &
        \cellLeft{\acs{INN}
        } & \acl{INN}  
        \\
        
        \cellLeft{\acs{IK}
        } & \acl{IK}
        &
        \cellLeft{\acs{BNN}
        } & \acl{BNN} 
        \\
        \rowcolor{Turquoise!12!white} 
        
            \cellLeft{\acs{NLL}
        } & \acl{NLL}
        &
        \cellLeft{\acs{MMD}
        } & \acl{MMD} 
        \\
        \cellLeft{\acs{ERM}
        } & \acl{ERM}
        &
        \cellLeft{\acs{KL}
        } & \acl{KL}
        \\ 
        \rowcolor{Turquoise!12!white}
        \cellLeft{\acs{JS}
        } & \acl{JS}
        &
        \cellLeft{\acs{IPM}
        } & \acl{IPM}
        \\

        \cellLeft{\acs{SL}
        } & \acl{SL}
        &
        \cellLeft{\acs{USL}
        } & \acl{USL}
        \\ 
        \rowcolor{Turquoise!12!white}
        \cellLeft{\acs{GAN}
        } & \acl{GAN}
        &
        \cellLeft{\acs{GI}
        } & \acl{GI}
        \\

        \hline
    \end{tabular}
\end{table*}

A selective discussion of the existing architectural choices and design of \acp{INN} is provided in~\Cref{sec:background}.

\section{Main Results}
\label{sec:main-results}
We begin this section by presenting a general unsupervised training cost for \ac{VINA}  in~\Cref{sec:general-framework}. As mentioned earlier, this formulation unifies several commonly used cost functions in the literature. For simplicity we focus our presentation in~\Cref{sec:general-framework} for the case of \acp{INN}, but we then illustrate how it naturally applies to \acp{NF} in~\Cref{subsec:variational-nf}.

  \subsection{Variational \ac{USL}}
\label{sec:general-framework}
Our main methodological contribution in this paper begins with the observation, also used in priors works such as~\citet{nowozin2016f, zhang2019variational, grover2018flow}, that several commonly used statistical divergence measures used in generative modeling admit variational representations. More specifically, with $T:\mathbb{R}^{\dx} \to \mathbb{R}^{\dy} \times \mathbb{R}^{\dz}$ denoting an \ac{INN}, we wish to minimize some divergence of the form 
    \begin{align}
    D_\phi(P_{Y,Z}, P_{Y,T_{\zz}(X)}) = \sup_{g\in \calG_{\phi}} \mathbb{E}\lb \phi(g, X, Y, Z, T) \rb, \quad \stackrel{\text{emp. loss}}{\implies} \quad 
         \hat{L} = \sup_{g \in \calG_\phi} \; \frac{1}{n} \sum_{i=1}^n \phi\lp g, X_i, Y_i, Z_i, T \rp.  \label{eq:variationa-USL}
    \end{align}
Here $\calG_\phi$ denotes some class of functions, usually from $\calX = \mathbb{R}^{\dx}$ to $\mathbb{R}$, and we will refer to it as the ``critic class'' following the convention used in the \ac{GAN} literature~\citep{goodfellow2014generative}. In practice, this class can be represented by machine learning models such as neural networks, or more analytically tractable classes such as \acp{RKHS}. Our theoretical results in the next two subsections explore the trade-offs involved in choosing more expressive $\calG_\phi$ and approximation guarantees. 
For different choices of $(\phi, \calG_\phi)$, the above formulation recovers various popular loss functions such as relative entropy~(and more generally, the $f$-divergence family), kernel-\ac{MMD}~(and more generally, the \ac{IPM} family), energy distance, and the Wasserstein metric.  Besides the conceptual unification, this approach also allows us to obtain theoretical guarantees on the approximation performance of \acp{INN} trained via empirical risk minimization strategy, as we discuss in~\Cref{subsec:variational-inn} for \acp{INN}~(and in~\Cref{subsec:variational-nf}  for \acp{NF}). 

Before proceeding further, we recall two important family of distance or divergence measures that are realizations of~\eqref{eq:variationa-USL}. The first class of distance metrics are the \acp{IPM}, which for some function class $\calG$ that is closed under negation~(i.e., if $g \in \calG$, then so does $-g$), are defined as 
\begin{align}
   \mathrm{IPM}(P_{Y,Z}, P_{Y, T_{\zz}(X)}) = \sup_{g \in \calG}  \mathbb{E}\lb g(Y,Z) - g(Y, T_{\zz}(X))  \rb 
\end{align}
which corresponds to  $\phi_{\mathrm{IPM}}(g, x, y, z, T) =  g(y,z) - g(y,T_{\zz}(x))$.  
Perhaps the most important element of the \ac{IPM} family is the kernel-\ac{MMD} metric, where the critic class~(also known as the witness class) is the unit norm ball \ac{RKHS} associated with a positive-definite kernel $k:\calX \times \calX \to \mathbb{R}$. An immediate consequence of the reproducing property of such $\calG$ is that for square integrable kernels, we have 
\begin{align}
    \mathrm{MMD}(P_{Y,Z}, P_{Y, T_{\zz}(X)}) = \| \mu_{P_{Y,Z}} - \mu_{P_{Y,T_{\zz}(X)}}\|_k, \qtext{where} \mu_P = \int k(x, \cdot) dP(x), 
\end{align}
is referred to as the kernel mean-embedding of a distribution $P$, $\|\cdot\|_k$ denotes the \ac{RKHS} norm, and $\int$ above is a Bochner integral. This leads to a very natural interpretation of \ac{MMD} metric between two distributions $P$ and $Q$: it is the distance~(as measured by the \ac{RKHS} norm) between the two representative elements $\mu_P$ and $\mu_Q$ associated with the two distributions. Other important \acp{IPM} include the total variation distance, and the Wasserstein~$1$~metric~(also known as the earth-mover distance), which by using Kantorovich-Rubinstein duality~(Theorem 1.14~\citet{villani2021topics}, and Theorem 11.8.2.~\citet{dudley2018real}) can be represented as an \ac{IPM} associated with the class $\calG$ of $1$-Lipschitz functions.

The second important class of divergence measures captured by~\eqref{eq:variationa-USL} is the family of $f$-divergences. For any convex, lower semicontinuous $f:(0,\infty) \to (0,\infty)$ with $f(1)=0$, the $f$-divergence between  $P_{Y,Z}$ and $P_{Y,T_{\zz}(X)}$, assuming that $P_{Y,Z} \ll P_{Y,T_{\zz}(X)}$, is defined as 
\hspace{-1mm}
\begin{align}
    D_f(P_{Y,Z} \parallel P_{Y,T_{\zz}(X)}) 
    = \int q_{T}(y, T_{\zz}(x)) f \lp \frac{p_{Y,Z}(y,z)}{p_T(y,T_{\zz}(x))} \rp \nu(x,y,z) dx dy dz, 
\end{align}
where  we have assumed that both distributions admit densities $p_T$ and $p_{Y,Z}$, with respect to some common dominating measure $\nu$, for all choices of $T$ in the model class. 
These divergence measures are also known to admit the following variational representation 
\begin{align}
    D_f(P_{Y,Z}\parallel P_{Y,T_{\zz}(X)} ) = \sup_{g} \lp \mathbb{E}[g(Y, Z) - f^*(g(Y,T_{\zz}(X)))]\rp, \label{eq:f-div}
\end{align}
which corresponds to $\phi_f(g, x, y, z, T) = g(y,z) - f^*(g(y,T_{\zz}(x))$.
Here,  the supremum is over all measurable $g:\calY \times \calZ \to \mathbb{R}$ make the right-hand side (RHS) well-defined~(by avoiding $\infty - \infty$). This implies that if we restrict our attention to any smaller class of functions $\calG$, get a lower bound  on the divergence, and this gap~(that we later refer to as the variational gap) decreases by enlarging the critic class $\calG$. This fact will play a central role in our analysis in the next two subsections. 

The canonical member of the $f$-divergence family is the (forward)  relative entropy~(or \ac{KL} divergence), with generator function $f(u) = u \log u$. The corresponding $\phi$ function in~\eqref{eq:variationa-USL} is $\phi_{KL}(g, x, y, z, T) = \log g(y,z) - g(T(x)) + 1$. Other important $f$-divergences are the squared Hellinger divergence with $f(u) = (\sqrt{u} - 1)^2$ and $\phi_H(g, x, y, z, T) = g(y,z) - g(T(x))/(1-g(T(x)))$, and the Jensen Shannon divergence with $f(u) = -u(+1) \log((1+u)/2) + u \log u$ and $\phi_{JS}(g, x, y, z, T) = g(y,z) + \log \lp 2 - e^{g(T(x))}\rp$.

We now proceed to a discussion of \acp{INN} defined using a variational objective as introduced in~\eqref{eq:variationa-USL}.

\subsection{Variational  invertible neural networks (V-INN)}
\label{subsec:variational-inn}

Let $\calX = \mathbb{R}^{\dx}$, $\calY = \mathbb{R}^{\dy}$ and $\calZ = \mathbb{R}^{\dz}$, with $\dx = \dy + \dz$, and suppose we have a dataset $\{(X_i, Y_i, Z_i) \in \calX \times \calY \times \calZ: 1 \leq i \leq n\}$ drawn \iid from the joint distribution $P_{XYZ}$. Our goal is to learn an invertible neural network~(\ac{INN}) from a family of homeomorphisms~(i.e., each $T:\calX \to \calY\times \calZ$ is a continuous bijective map with a continuous inverse) using this training dataset.  Each candidate \ac{INN} $T$ can be decomposed into $T_{\mathbf{y}}:\calX \to \calY$ and $T_{\mathbf{z}}:\calX \to \calZ$, and consequently, the \ac{INN} training process involves population loss functions: 
\begin{align}
    &L_{\mathbf{y}}(T) = \mathbb{E}_{P_{XY}}\lb \|T_{\mathbf{y}}(X) - Y\|_2^2 \rb,  \qtext{and} 
    L_{\mathbf{z}}(T) \leq D(P_{Y, Z}, P_{Y, T_{\mathbf{z}}(X)}), 
\end{align} 
where $D(P, Q)$ is an appropriate notion of distance or divergence between probability distributions $ P, Q \in \calP(\calY \times \calZ)$. 
In this section, we will focus on the case of $D$ being an $f$-divergence and employ its variational definition stated in~\eqref{eq:f-div} within the empirical risk minimization~(ERM) framework.
The use of an ``$\leq$'' instead of ``$=$'' when introducing $L_{\mathbf{z}}(T)$ above is a consequence of the variational definition with a restricted critic class as we make precise in \textbf{(INN4)} in~\Cref{assump:INN-theorem}. More explicitly, we work with a function class $\calG_n$~(the ``critic class''), and define $L_\zz(T)$ as 
\begin{align}
L_\zz(T) = \sup_{g \in \calG_n} \lbr \EE_{P_{YZ}}[g(Y,Z)] - \EE_{P_{XY}}[f^*(g(Y, T_\zz(X)))] \rbr \leq D_f(P_{Y,Z}, P_{Y, T_{\zz}(X)}).      \label{eq:def-population-Lz}
\end{align}
To train an \ac{INN} from a family $\calT$ (i.e., $\calT$ might represent all \acp{INN} with a fixed architecture), we use the empirical analogs of the population loss terms,  
\begin{align}
    &\LhatY(T) = \frac{1}{n} \sum_{i=1}^n \|T_{\mathbf{y}}(X_i) - Y_i\|_2^2,  \quad 
    \LhatZ(T) = \sup_{g \in \calG_n}\lbr \frac{1}{n} \sum_{i=1}^n g(Y_i, Z_i) - \frac{1}{n} \sum_{i=1}^n f^* \circ g(Y_i, T_{\mathbf{z}}(X_i)) \rbr, 
\end{align}
where $f^*$ denotes the convex conjugate of $f$.  The model learned by ERM can then be defined as 
\begin{align}
    \hat{T}_n \in \argmin_{T \in \calT} \; \LhatY(T) + \lambda \LhatZ(T), \label{eq:erm-INN}
\end{align}
for some regularization parameter $\lambda >0$. The first loss term~($\LhatY$) forces the learned map $\That_n$ to approximate $Y$ from an input $X$, while the second loss term~($\LhatZ$) ensures that the joint law of $(Y, \That_{n, \zz}(X))$ matches the true joint distribution $(Y,Z)$. 
Thus, intuitively, if the sample-size $n$ is large enough, we should expect $\hat{T}_n$ to be a good proxy for $T^*$, an element of $\calT$ that minimizes the population risk: 
\begin{align}
    T^* \in \argmin_{T \in \calT}\; L_{\mathbf{y}}(T) + \lambda L_{\mathbf{z}}(T). \label{eq:population-optimal-INN}
\end{align}
The performance of $\That_n$ defined above is governed by two effects: (i) the finite sample effect caused by the uniform deviations between the empirical objectives $\LhatY$ and $\LhatZ$ from their population counterparts; and (ii) the variational or approximation error induced by restricting the divergence representation to a class $\calG_n$. To make the variational loss smaller, we need to increase the capacity of $\calG_n$. However, that also causes the uniform deviations between the empirical and population losses to increase. Thus, the crucial challenge is to find the right trade-offs between these two effects to simultaneously drive the overall error to zero. 

For the rest of this section, we follow a three-step roadmap. \emph{First}, we state a general result~(\Cref{theorem:finite-sample}) that obtains an upper bound on the quality of the posterior approximation provided by $\That_n$ under a set of high-level assumptions~(stated in~\Cref{assump:INN-theorem}). \emph{Second}, we present more concrete, verifiable, sufficient conditions for satisfying the requirements of~\Cref{assump:INN-theorem} in~\Cref{subsec:INN-interpretable-conditions}. \emph{Finally}, we specialize the general theorem to a specific choice of \ac{INN} architecture~(iRes-Nets in~\Cref{def:inn-calT}) and critic class~(\ac{RKHS} over truncated domain in~\Cref{def:inn-calG}) using Jensen-Shannon divergence in~\Cref{subsec:INN-example}.

We now present the assumptions  required for stating the main result of this section. 
\begin{assumption}
\label{assump:INN-theorem}
To analyze the performance of the \ac{INN} model $\That_n$ defined in~\eqref{eq:erm-INN}, we place the following assumptions: 
\begin{itemize}
    \item \textbf{(INN1): Realizability and Bi-Lipschitz.}
    There exists a $T^* \in \calT$ such that $Y = T_{\yy}^*(X)$ almost surely (a.s.) and $P_{Y, Z} = P_{Y, T^*_z(X)}$, and furthermore 
    \begin{align}
        \sup_{T \in \calT} \lbr \mathrm{Lip}(T), \, \mathrm{Lip}(T^{-1}) \rbr \leq J < \infty. 
    \end{align}

    \item \textbf{(INN2): Uniform Convergence.} For any confidence level $\delta \in (0,1)$, there exist deterministic sequences $\{r_n \equiv r_n(\delta): n \geq 1\}$ and $\{u_n \equiv u_n(\delta): n \geq 1\}$, with $r_n, u_n \to 0$, such that the following conditions hold with probability at least $1-\delta$: 
    \begin{align}
        \sup_{T \in \calT} |\LhatY(T) - L_{\yy}(T)| \leq u_n, \qtext{and} \sup_{T \in \calT} |\LhatZ(T) - L_{\zz}(T)| \leq r_n. 
    \end{align}
    \item \textbf{(INN3): Moment Bounds.} There exists an $R<\infty$, such that the following holds for some $a>0$:
    \begin{align}
        \max \lbr \mathbb{E}[\|(Y, Z)\|^{1+a}], \; \sup_{T \in \calT} \mathbb{E}[\|(Y, T_{\mathbf{z}}(X)\|^{1+a}] \rbr \leq R. 
    \end{align}
    Here $\|\cdot\|$ to denotes the $\ell_2$ norm, and we use $\|(y,z)\|$ as a shorthand for $\sqrt{\|y\|^2  + \|z\|^2}$. 
    \item \textbf{(INN4): Variational Approximation Gap.} The function class $\calG_n$ contains the $\boldsymbol{0}$ function for all $n \geq 1$, and there exists a vanishing deterministic sequence $\eta_n \to 0$, such that 
    \begin{align}
        \sup_{T \in \calT} D_f(P_{Y,Z} \parallel P_{Y, T_{\mathbf{z}}(X)} ) -  L_{z}(T) \leq \eta_n. 
    \end{align}
    In other words, we assume that the capacity of the critic class $\calG_n$ grows with $n$ to approximate the likelihood ratios of all distributions modeled by elements of $\calT$ and the true data distribution $P_{Y,Z}$. 
\end{itemize}
\end{assumption}

\begin{theorem}
    \label{theorem:finite-sample} 
    Suppose~\Cref{assump:INN-theorem} holds, and $D_f$ satisfies the Pinsker-type inequality $c_f \sqrt{D_f(P \parallel Q)} \geq TV(P, Q)$ for all distributions $P, Q$, and for some constant $c_f>0$.  Then, for every measurable $A \subset \calY$ with $P_Y(A)>0$, on the $(1-\delta)$ probability event of \textbf{(INN2)}, we have 
    \begin{align}
        W_1\lp \PA_X ,  \PA_{\That_n^{-1}(Y,Z) }\rp  & \lesssim (r_n + u_n + \eta_n)^{\frac{a}{2(1+a)}},
    \end{align}
    where  $W_1$ denotes the $1$-Wasserstein metric, $\PA_X = P_{X|Y\in A}$, $\PA_{\That_n^{-1}(Y,Z)} = P_{\hat{T}^{-1}_n(Y, Z)|Y \in A}$, and $\lesssim$ suppresses the constant factors and lower order terms.  The exact expression of the upper bound is in~\eqref{eq:main-theorem-proof-6}.   
\end{theorem}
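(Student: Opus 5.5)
The plan is to pass from the training losses of $\That_n$ to a bound on the population losses, then to a $W_1$ bound between joint laws in $(y,z)$-space, and finally to the conditional law of the pulled-back samples in $x$-space using the bi-Lipschitz property in \textbf{(INN1)}.

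\textbf{Step 1 (ERM to population).} Work on the $(1-\delta)$ event of \textbf{(INN2)}. By optimality of $\That_n$ in~\eqref{eq:erm-INN} and realizability,
\begin{align}
L_\yy(\That_n)+\lambda L_\zz(\That_n)\le \LhatY(\That_n)+\lambda\LhatZ(\That_n)+u_n+\lambda r_n\le \LhatY(T^*)+\lambda\LhatZ(T^*)+u_n+\lambda r_n\le L_\yy(T^*)+\lambda L_\zz(T^*)+2u_n+2\lambda r_n .
\end{align}
\textbf{(INN1)} gives $L_\yy(T^*)=0$. It also gives $L_\zz(T^*)\le D_f(P_{YZ}\|P_{YZ})=0$, and $L_\zz\ge 0$ because $\boldsymbol 0\in\calG_n$ and $f^*(0)=-\inf f\le 0$. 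Hence $L_\yy(\That_n)\lesssim u_n+r_n$. By \textbf{(INN4)}, $D_f(P_{Y,Z}\|P_{Y,\That_{n,\zz}(X)})\le L_\zz(\That_n)+\eta_n\lesssim u_n+r_n+\eta_n=:\epsilon_n$. The Pinsker-type inequality then gives $TV(P_{Y,Z},P_{Y,\That_{n,\zz}(X)})\le c_f\sqrt{\epsilon_n}$.

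\textbf{Step 2 (TV to $W_1$ via moments).} This is the main obstacle, since TV does not control $W_1$ on unbounded supports; the moment bound \textbf{(INN3)} is used here. Write $\mu=P_{Y,Z}$ and $\nu=P_{Y,\That_{n,\zz}(X)}$, and condition both on $\{y\in A\}$. The conditional TV distance is at most $2TV/P_Y(A)$, because both laws have the same $Y$-marginal. Take a maximal coupling of the conditionals and use $W_1\le \mathbb E[\|U-V\|\mathbf 1\{U\neq V\}]$. Truncate at radius $M$:
\begin{align}
W_1(\mu^{(A)},\nu^{(A)})\le 2M\,TV(\mu^{(A)},\nu^{(A)})+\mathbb E_{\mu^{(A)}}[\|U\|\mathbf 1_{\|U\|>M}]+\mathbb E_{\nu^{(A)}}[\|V\|\mathbf 1_{\|V\|>M}] \lesssim M\sqrt{\epsilon_n}+\frac{R}{P_Y(A)M^{a}}.
\end{align}
Choosing $M=\epsilon_n^{-1/(2(1+a))}$ balances the two terms and gives $\epsilon_n^{a/(2(1+a))}$. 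To be careful with the coupling, I would instead use the dual/mass-transport inequality $W_1\le\int\|u\|\,d|\mu-\nu|$ and split at $M$.

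\textbf{Step 3 (pull back through $\That_n^{-1}$).} Because $Y=T^*_\yy(X)$ almost surely, $P^{(A)}_X$ equals the law of $X$ given $Y\in A$. Let $\tilde\nu^{(A)}$ be the law of $(T^*_\yy(X),\That_{n,\zz}(X))$ given $Y\in A$. Then $\That_n^{-1}$ maps $\tilde\nu^{(A)}$ to $P^{(A)}_X$ up to an error controlled by $\|\That_{n,\yy}(X)-Y\|$. Concretely, $\That_n^{-1}(Y,\That_{n,\zz}(X))$ is within $J\|Y-\That_{n,\yy}(X)\|$ of $\That_n^{-1}(\That_n(X))=X$. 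Taking conditional expectations and applying Cauchy–Schwarz bounds this term by $J\sqrt{L_\yy(\That_n)/P_Y(A)}\lesssim\sqrt{\epsilon_n}$. Since $\tilde\nu^{(A)}=\nu^{(A)}$ and $\That_n^{-1}$ is $J$-Lipschitz, $W_1$ contracts by a factor $J$, so $W_1(P_{\That_n^{-1}(Y,Z)}^{(A)}, P^{(A)}_{\That_n^{-1}(Y,\That_{n,\zz}(X))})\le J\,W_1(\mu^{(A)},\nu^{(A)})$. The triangle inequality then gives $W_1\lesssim J\epsilon_n^{a/(2(1+a))}+J\sqrt{\epsilon_n/P_Y(A)}$, and the first term dominates because $a/(2(1+a))<1/2$.
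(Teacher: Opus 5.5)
Your proof is correct and follows the paper's argument essentially step for step. The shared ingredients are: ERM plus realizability to get $L_\yy(\That_n),L_\zz(\That_n)\lesssim u_n+r_n$; \textbf{(INN4)} plus the Pinsker-type inequality for TV control; the truncation bound $W_1\le 2M\,\mathrm{TV}+O(RM^{-a})$ from \textbf{(INN3)}; and the $J$-Lipschitz pullback through $\That_n^{-1}$ with the natural coupling for the supervised term. Doing the triangle inequality in $x$-space instead of $(y,z)$-space is the same decomposition pushed forward by $\That_n^{-1}$.

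The only real variation is how you pass to the conditional laws. You bound the conditional TV directly by $\mathrm{TV}/P_Y(A)$ using the shared $Y$-marginal (the factor $2$ is unnecessary). The paper instead uses its conditional $f$-divergence lemma, so your route is slightly cleaner.

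One small point: your claim $f^*(0)=-\inf f\le 0$ in Step 1 holds only under the paper's convention $f\ge 0$. For a general convex $f$ with $f(1)=0$ one has $f^*(0)\ge -f(1)=0$. The paper relies on the same implicit assumption.
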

\sloppy The proof of this result is in~\Cref{proof:finite-sample}. Note that the Pinsker-type inequality $TV(P, Q) \leq c_f \sqrt{D_f(P \parallel Q)}$ is satisfied by several divergences, such as relative entropy, Jensen Shannon, Hellinger, and chi-squared divergence. 

\begin{remark}
    \label[remark]{remark:comparison-with-HM2023} 
    The result of~\Cref{theorem:finite-sample} is obtained by combining two ingredients: (i) using the Pinsker-type inequality, we convert the control over the given $f$-divergence into control over total variation, and (ii) a truncation argument~(\Cref{lemma:truncation}) that converts the total variation into a bound on $W_1$ under just finite $(1+a)$ moment condition. 
    
    One  crucial advantage of our result over \citet[Theorem 2]{hagemann2021stabilizing} is that we do not impose the bounded support condition on the random variables $(X, Y, Z)$. The mild $(1+a)$ moment requirement significantly broadens the applicability of our result, and in particular, allows us to consider more realistic models used in practice. 
    Additionally, the dependence on the parameter $a$ quantifies how the heavy tails affect the approximation quality. We empirically verify this insight in~\Cref{sec:observation-6}.

\end{remark}
\begin{remark}
        Although~Theorem~\ref{theorem:finite-sample} characterizes the distance between the true~(and unknown) posterior and the \ac{INN} posterior in terms of $W_1$ metric,  we can use existing inequalities to translate it into other distances. For example, by~\citet[Theorem 21]{sriperumbudur2010hilbert}, we know that $\mathrm{MMD}(\PA_{X}, \PA_{\That_n^{-1}(Y,Z)}) \lesssim W_1(\PA_{X}, \PA_{\That_n^{-1}(Y,Z)})$ for commonly used kernels, such as the Gaussian and Mat\'ern  kernels, and thus~\Cref{theorem:finite-sample} also implies a bound on the kernel-MMD distance between the true and estimated posteriors. 
\end{remark}

\begin{remark}
    \label[remark]{remark:main-theorem-W1} A closer look at the proof of~Theorem~\ref{theorem:finite-sample} reveals several reasons why $W_1$ metric is the appropriate choice to present the result: (i) By Kantorovich-Rubinstein duality, the $W_1$ metric is defined via Lipschitz critic functions. This fact coupled with the assumed bi-Lipschitz  structure~\textbf{(INN1)}  allows us to transfer bounds from posterior distributions to the forward distributions. (ii) Since $W_1$ is a metric,  it satisfies the  triangle inequality which leads to a natural decomposition in the proof into terms that correspond directly to the supervised and unsupervised training losses. (iii) The Lipschitz formulation of $W_1$ is exactly what makes our truncation argument(\Cref{lemma:truncation}) effective for handling heavy tailed distributions under a mild $(1+a)$ moment assumption.
\end{remark}
\begin{remark}
    \label[remark]{remark:inn-moment-bound} While stating~\Cref{theorem:finite-sample}, we only needed to explicitly place a $(1+a)$ moment requirement, for some $a>0$. However, since $L_{\mathbf{y}}$ consists of squared losses, the existence of the uniform convergence rates in \textbf{(INN2)} implicitly places a stronger $(2+\beta)$ moment conditions, for some $\beta>0$. We will make this explicit when identifying verifiable sufficient conditions of these assumptions in~\Cref{prop:inn2-uniform-convergence}. 
\end{remark}

\subsubsection{Verifiable Sufficient Conditions for~\Cref{assump:INN-theorem}}
\label{subsec:INN-interpretable-conditions} 
Theorem~\ref{theorem:finite-sample} gives us a general result that translates uniform convergence and the variational approximation gap guarantees on the two losses, $L_{\mathbf{y}}(\That_n)$ and $L_{\mathbf{z}}(\That_n)$ into a bound on the $W_1$ metric between the true posterior, and the \ac{INN} posterior.  
We now present more easily verifiable sufficient conditions for the assumptions \textbf{(INN2)-(INN4)} required by~\Cref{theorem:finite-sample}, working under the realizability part of the  assumption \textbf{(INN1)}. 
We begin with the usual characterization of uniform convergence in terms of the Rademacher complexities~(see~\Cref{def:rademacher-complexity} in~\Cref{appendix:background}) of the associated function classes. 
\begin{proposition}
    \label[proposition]{prop:inn2-uniform-convergence}
    Assume that the critic class $\calG_n$ consists of uniformly bounded functions; that is, $\sup_{g \in \calG_n} \|g \|_{\infty} \leq b_n$, for some $b_n<\infty$. 
    Let $\calF_{\yy} = \{x \mapsto \langle u, T_{\mathbf{y}}(x)\rangle: T \in \calT, \, \|u\|\leq 1\}$, and define its Rademacher complexity as 
    \begin{align}
        \RadComp_n(\calF_{\yy}) = \mathbb{E}_{\epsilon^n, X^n} \lb \sup_{f \in \calF_{\yy}} \sum_{i=1}^n \epsilon_i f(X_i) \rb, \qtext{with} \epsilon^n \stackrel{\iid}{\sim} \mathrm{Rademacher}(\{-1, +1\}). 
    \end{align}
    Introduce the two critic class complexities 
    \begin{align}
        \RadComp_n^{(1)}(\calG_n) = \mathbb{E}_{\epsilon^n, Y^n, Z^n} \lb \sup_{g \in \calG_n} \frac{1}{n} \sum_{i=1}^n \epsilon_i g(Y_i, Z_i) \rb, \quad \RadComp_n^{(2)}(\calG_n, \calT) =\mathbb{E}_{\epsilon^n, X^n, Y^n} \lb \sup_{T, g} \frac{1}{n} \sum_{i=1}^n \epsilon_i g(Y_i, T_{\mathbf{z}}(X_i)) \rb. 
    \end{align}
    Suppose $\max\{\EE[\|Y\|^{2+\beta}], \,\sup_{T \in \calT} \mathbb{E}[\|T_{\mathbf{y}}(X)\|^{2+\beta}] \} \leq R_{\yy} < \infty$ for some $\beta>0$, and $\Pi_{K_n}:\R^{\dy} \to \{y \in \R^{\dy}: \|y\|\leq K_n\}$ denoting the projection on a ball of radius $K_n$ in $\calY = \R^{\dy}$,  define the projected empirical and population supervised losses: 
    \begin{align}
        \LhatY^{K_n}(T) &= \frac{1}{n} \sum_{i=1}^n \|\Pi_{K_n}(T_{\mathbf{y}}(X_i)) - \Pi_{K_n}(Y_i)\|^2, \qtext{and}
        L_{\mathbf{y}}^{K_n}(T) =  \mathbb{E}\lb \|\Pi_{K_n}(T_{\mathbf{y}}(X)) - \Pi_{K_n}(Y)\|^2  \rb. 
    \end{align}
    Assume that the function $f$~(in fact, its convex conjugate $f^*$) satisfies
    \begin{align}
        A_{1,n} \coloneqq \sup_{|u| \leq b_n} |(f^*)'(u)| < \infty, \qtext{and} 
        A_{2,n} \coloneqq  \sup_{|u| \leq b_n} |f^*(u)| < \infty.  \label{eq:A1-A2-def}
    \end{align}
    Then, the following two conditions hold with probability at least $(1-\delta)$, for a given $\delta\ in (0,1)$: 
    \begin{align}
            \sup_{T \in \calT} |\LhatZ(T) - L_{\mathbf{z}}(T)| &\;\lesssim\;  \RadComp_n^{(1)} +  A_{1,n} \RadComp_n^{(2)} + (b_n + A_{2,n})n^{-1/2},  
        \label{eq:inn2-prop-Lz} \\
        \sup_{T \in \calT} |\LhatY(T) - L_{\mathbf{y}}(T)| &\;\lesssim\;  K_n \RadComp_n(\calF_{\yy}) +  K_n^2 n^{-1/2} + {4 R_{\yy}}{K_n^{-\beta}}, \label{eq:inn2-prop-Ly} 
   \end{align}
    where $\lesssim$ suppresses constants that depend on $\delta$ and $R_{\yy}$.
    Thus, a sufficient condition to satisfy~\textbf{(INN2)} with vanishing sequences of $\{r_n, u_n: n \geq 1\}$, is to select an appropriate sequence of $K_n, b_n$, and $\calG_n$ to drive both these terms to $0$. 
\end{proposition}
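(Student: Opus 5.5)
We split the argument into the critic term \eqref{eq:inn2-prop-Lz} and the supervised term \eqref{eq:inn2-prop-Ly}. Each is handled by symmetrization plus a bounded-differences concentration step, and the two resulting events (each of probability $1-\delta/2$) are combined by a union bound.

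\emph{Critic term.} For each $T$, write $\LhatZ(T) - L_\zz(T)$ as a difference of two suprema over $\calG_n$. This gives
\begin{align}
\sup_{T}|\LhatZ(T) - L_\zz(T)| \leq \sup_{g\in\calG_n}\Big|\tfrac1n\textstyle\sum_i g(Y_i,Z_i) - \EE g(Y,Z)\Big| + \sup_{T,g}\Big|\tfrac1n\textstyle\sum_i f^*(g(Y_i,T_\zz(X_i))) - \EE f^*(g(Y,T_\zz(X)))\Big|.
\end{align}
\begin{itemize}
\item \emph{First supremum.} The summands are bounded by $b_n$. McDiarmid's inequality (bounded differences $2b_n/n$) gives concentration around the mean at rate $b_n\sqrt{\log(1/\delta)/n}$. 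Standard symmetrization bounds the mean by $2\RadComp_n^{(1)}(\calG_n)$.
\item \emph{Second supremum.} The summands $f^*\circ g$ are bounded by $A_{2,n}$ because $|g|\le b_n$, so McDiarmid gives a deviation of order $A_{2,n}n^{-1/2}$. After symmetrization, apply the Ledoux--Talagrand contraction lemma. The map $u\mapsto f^*(u)$ is $A_{1,n}$-Lipschitz on $[-b_n,b_n]$ by the mean value theorem and the definition of $A_{1,n}$, so the Rademacher average of $\{f^*\circ g(\cdot,T_\zz(\cdot))\}$ is at most $2A_{1,n}\RadComp_n^{(2)}(\calG_n,\calT)$. (The centering $f^*(0)$ can be subtracted first; it is harmless.)
\end{itemize}
This yields \eqref{eq:inn2-prop-Lz}. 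The absolute value is handled by applying the argument to both $g$ and the negated deviation, or equivalently by using the two-sided version of McDiarmid's inequality.

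\emph{Supervised term.} Decompose
\begin{align}
|\LhatY(T)-L_\yy(T)| \le |\LhatY(T)-\LhatY^{K_n}(T)| + |\LhatY^{K_n}(T)-L^{K_n}_\yy(T)| + |L^{K_n}_\yy(T)-L_\yy(T)|.
\end{align}

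\emph{Middle (truncated) piece.} The loss is bounded by $4K_n^2$, so McDiarmid contributes $K_n^2 n^{-1/2}$. For the expectation, symmetrize and write $\|a-b\|^2 = \|a\|^2 - 2\langle a,b\rangle + \|b\|^2$ with $a=\Pi_{K_n}T_\yy(X_i)$ and $b=\Pi_{K_n}Y_i$.
\begin{itemize}
\item The $\|b\|^2$ term does not depend on $T$ and contributes only $K_n^2n^{-1/2}$.
\item For the remaining terms, use a vector-contraction argument (Maurer's inequality). The map $a\mapsto\|a-b\|^2$ restricted to the ball of radius $K_n$ is $4K_n$-Lipschitz, and $\Pi_{K_n}$ is $1$-Lipschitz. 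Hence the Rademacher average is bounded by a constant times $K_n$ times the Rademacher complexity of the coordinate projections of $T_\yy$, which is controlled by $\RadComp_n(\calF_\yy)$ since the class includes all unit directions $u$.
\end{itemize}
This is the step I expect to be the main obstacle. Scalar contraction does not apply directly to vector-valued $T_\yy$, so one must either invoke Maurer's vector contraction (costing a $\sqrt{2}$ and a sum over coordinates, hence a factor $\dy$ absorbed in $\lesssim$) or linearize using the unit vectors $u$ as in the definition of $\calF_\yy$. Normalization conventions also need care: $\RadComp_n(\calF_\yy)$ is defined without the factor $1/n$.

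\emph{Truncation pieces.} These two pieces are bounded by tail arguments. The difference between the truncated and untruncated losses is nonzero only when $\|T_\yy(X)\|>K_n$ or $\|Y\|>K_n$, and there it is at most $2\|T_\yy(X)\|^2+2\|Y\|^2$ (up to constants). Then $\EE[\|W\|^2\mathbf 1\{\|W\|>K_n\}]\le \EE\|W\|^{2+\beta}K_n^{-\beta}\le R_\yy K_n^{-\beta}$, which handles the population piece. For the empirical piece, apply Markov's inequality to the empirical tail average (whose mean has the same bound), giving $R_\yy K_n^{-\beta}/\delta$ with probability $1-\delta/4$; the $\delta$-dependence is absorbed into $\lesssim$.

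Collecting the pieces gives \eqref{eq:inn2-prop-Ly}, and the union bound over the two events gives the stated probability $1-\delta$.
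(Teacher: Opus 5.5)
Your proposal follows the paper's proof almost step for step, so the overall approach is the same. For the critic term, both arguments bound $\sup_T|\LhatZ(T)-L_{\zz}(T)|$ by a supremum over $\calG_n$ plus a supremum over $(T,g)$, and control each by symmetrization, McDiarmid, and (for the $f^*$ part) contraction with constant $A_{1,n}$. For the supervised loss, both split it into a truncated deviation, handled by symmetrization and bounded differences with range $4K_n^2$, and two tail pieces controlled through $\EE[\|W\|^2\mathbf{1}\{\|W\|>K_n\}]\le \EE\|W\|^{2+\beta}K_n^{-\beta}$ and Markov.

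The one real divergence is the contraction step for the truncated square loss, and there your version is the sound one. The paper invokes scalar contraction and arrives at $4K_n\,\EE\sup_{T,\|u_i\|\le 1}\frac1n\sum_i\epsilon_i\langle u_i,T_{\yy}(X_i)\rangle$, which it equates with $\RadComp_n(\calF_{\yy})$. That identification fails for two reasons:
\begin{itemize}
\item With a separate direction $u_i$ for each sample, this supremum is just $\EE\sup_T\frac1n\sum_i\|T_{\yy}(X_i)\|$, which does not vanish.
\item Scalar contraction does not apply to maps acting on $\R^{\dy}$ in the first place.
\end{itemize}
Your route works. Apply Maurer's inequality to the globally $4K_n$-Lipschitz maps $a\mapsto\|\Pi_{K_n}(a)-\Pi_{K_n}(Y_i)\|^2$, split the resulting sum over coordinates, and use $x\mapsto\langle e_k,T_{\yy}(x)\rangle\in\calF_{\yy}$. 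This gives a bound of order $\dy K_n\RadComp_n(\calF_{\yy})$, using the $1/n$-normalized complexity, which is what the paper's proof and its later bound $\RadComp_n(\calF_{\yy})=\calO(n^{-1/2})$ actually use. Commit to this route. The alternative you mention, linearizing through unit vectors $u$, is exactly what fails: a single fixed $u$ cannot linearize the square loss, and sample-dependent $u_i$ destroy the cancellation.

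One step you share with the paper is not supported by the stated hypotheses. For the empirical truncation piece, Markov must be applied to $\sup_{T}\frac1n\sum_i\|T_{\yy}(X_i)\|^2\mathbf{1}\{\|T_{\yy}(X_i)\|>K_n\}$. Its mean is an $\EE\sup_T$, while the assumption only controls $\sup_T\EE\|T_{\yy}(X)\|^{2+\beta}$, and the two can differ wildly. Take (smoothed) bumps of height $c_j=j^{1/\beta}$ on disjoint sets $I_j$ with $P(X\in I_j)\propto j^{-(2+\beta)/\beta}$. Each bump has $(2+\beta)$-moment bounded by the same constant, yet $\EE\sup_j c_j^2\mathbf{1}\{X\in I_j,\,c_j>K_n\}\propto\sum_{j>K_n^{\beta}}j^{-1}=\infty$. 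So your claim that this mean ``has the same bound'' is false in general. The population piece is fine, since there the supremum sits outside the expectation.

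The repair is to add an integrable-envelope assumption, $\EE[\sup_{T\in\calT}\|T_{\yy}(X)\|^{2+\beta}]<\infty$. With it, your Markov argument goes through verbatim. In the paper's iResNet example this envelope follows from $\|T(x)\|\le J_0+J\|x\|$ together with a $(2+\beta)$ moment on $X$.
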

The proof of~\eqref{eq:inn2-prop-Lz} relies on some standard arguments from empirical process theory~\citep{shorack2009empirical}, while the justification of~\eqref{eq:inn2-prop-Ly} requires a combination of symmetrization and vector contraction along with a truncation argument, and we present the details in~\Cref{proof:inn2-uniform-convergence}.

Next, we present a simple result stating that the uniform moment bound required in~\textbf{(INN3)} can be satisfied if the model class $\calT$ is uniformly Lipschitz. As we will observe in~\Cref{subsec:INN-example}, this condition holds for an important class of \acp{INN}. 
\begin{proposition}
    \label[proposition]{prop:inn3-uniform-moment-bound}
    Suppose there exist constants $J_0, J < \infty$, such that with $\|\cdot\|$ denoting the $\ell_2$ norm, we have $\|T(x)\| \leq J_0 + J\|x\|$, for all $x \in \calX$, and for all $T \in \calT$. 
    Then, assuming that $\mathbb{E}[\|X\|^{1+a}] < \infty$ for some $a>0$,  the following uniform moment bounds hold under the realizability assumption: 
    \begin{align}
        &\max \lbr \mathbb{E}\lb \|(Y,Z)\|^{1+a} \rb, \, \sup_{T\in \calT} \mathbb{E}\lb \|(Y, T_{\mathbf{z}}(X)\|^{1+a} \rb \rbr \leq R,   \qtext{with} 
        R = 2^{2a+1} \lp J_0^{1+a} + J^{1+a}\mathbb{E}[\|X\|^{1+a}] \rp. 
    \end{align}
\end{proposition}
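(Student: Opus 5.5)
The idea is to reduce both moment bounds to the affine growth bound $\|T(x)\| \leq J_0 + J\|x\|$ by writing each pair as the image of $X$ under a map in $\calT$, and then use an elementary convexity inequality. For the first quantity, realizability \textbf{(INN1)} gives a $T^* \in \calT$ with $Y = T^*_{\yy}(X)$ a.s. and $P_{Y,Z} = P_{Y, T^*_{\zz}(X)}$. Hence
\begin{align}
\mathbb{E}\lb \|(Y,Z)\|^{1+a}\rb = \mathbb{E}\lb \|(T^*_{\yy}(X), T^*_{\zz}(X))\|^{1+a}\rb = \mathbb{E}\lb \|T^*(X)\|^{1+a}\rb .
\end{align}
The first equality uses only that the norm is a function of the joint law.

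For the second quantity, fix an arbitrary $T \in \calT$. Again using $Y = T^*_{\yy}(X)$ a.s., we have $\|(Y, T_{\zz}(X))\|^2 = \|T^*_{\yy}(X)\|^2 + \|T_{\zz}(X)\|^2 \leq \|T^*(X)\|^2 + \|T(X)\|^2$. This gives $\|(Y,T_{\zz}(X))\| \leq \|T^*(X)\| + \|T(X)\|$, and by the growth hypothesis the right-hand side is at most $2(J_0 + J\|X\|)$. The same bound, without the factor $2$, holds for $\|(Y,Z)\|$ in distribution.

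Next, raise to the power $1+a$ and apply $(s+t)^{p} \leq 2^{p-1}(s^p + t^p)$ with $p = 1+a \geq 1$. This yields
\begin{align}
\|(Y,T_{\zz}(X))\|^{1+a} \leq 2^{1+a}\cdot 2^{a}\lp J_0^{1+a} + J^{1+a}\|X\|^{1+a}\rp = 2^{2a+1}\lp J_0^{1+a} + J^{1+a}\|X\|^{1+a}\rp.
\end{align}
Taking expectations, which are finite because $\mathbb{E}[\|X\|^{1+a}]<\infty$, gives the bound $R$ uniformly in $T$, since no constant depends on $T$. The first quantity obeys the smaller bound $2^{a}(\cdots) \leq R$, so the maximum is at most $R$ as claimed.

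There is no real obstacle here. The only point needing care is the bookkeeping of the mixed pair $(Y, T_{\zz}(X))$, where $Y$ comes from $T^*$ and the second coordinate from $T$. Handling it through $\|T^*(X)\| + \|T(X)\|$ is what produces the extra factor $2^{1+a}$ and hence the stated constant $2^{2a+1}$. One should also note that realizability is needed only to express $Y$ as a function of $X$ and to identify the law of $(Y,Z)$.
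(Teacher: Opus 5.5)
Your proposal is correct and follows essentially the same route as the paper. Both use realizability to identify $(Y,Z)$ with $T^*(X)$, then the affine growth bound, then $(s+t)^p\le 2^{p-1}(s^p+t^p)$ with $p=1+a$, and both reach the same constant $2^{2a+1}$. The only difference is cosmetic: the paper applies the convexity inequality twice, splitting into $\|Y\|^p+\|T_{\zz}(X)\|^p$ and using $\|Y\|\le\|(Y,Z)\|$ so that only the law of $(Y,Z)$ is needed. You instead use the almost-sure identity $Y=T^*_{\yy}(X)$ to get the pointwise bound $2(J_0+J\|X\|)$ first, and then apply the inequality once.
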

\begin{proof}
   Since we are working under the realizability assumption, we have $(Y, Z) \stackrel{d}{=} T^*(X)$.  Using the fact that $(x+y)^p \leq 2^{p-1}(x^p + y^p)$, with $p=1+a$, and the condition that $\|T(x)\| \leq J_0 + J\|x\|$, we have $\mathbb{E}\lb \|(Y,Z)\|^p \rb \leq 2^{p-1}\lp J_0^p + J^p \mathbb{E}[\|X\|^p] \rp$. 
   Next, we look at the term $(Y, T_{\mathbf{z}}(X))$ and observe that 
   \begin{align}
       &\|(Y, T_{\mathbf{z}}(X))\|^p \leq 2^{p-1} \lp \|Y\|^p + \|T_{\mathbf{z}}(X)\|^p \rp  \leq 2^{p-1} \lp \|(Y, Z)\|^p + \|T(X)\|^p \rp  
       \end{align}
       which implies that 
       \begin{align}
 \mathbb{E}\lb \|(Y, T_{\mathbf{z}}(X))\|^p \rb &\leq 2^{p-1}\mathbb{E}[ \|(Y, Z)\|^p]  + 2^{2(p-1)} \lp J_0^p + J^p \mathbb{E}[\|X\|^p] \rp \\ 
&\leq 2 \times 2^{2(p-1)}\lp J_0^p + J^p \mathbb{E}[\|X\|^p] \rp.  
   \end{align}
   Taking the maximum of the two bounds gives us the required expression for $R$. 
\end{proof}

\begin{proposition}
    \label[proposition]{prop:inn4-variational-gap} 
    Introduce the notation $P \equiv P_{YZ}$ and $Q_T \equiv P_{Y, T_{\mathbf{z}}(X)}$, and assume that for all $T \in \calT$, we have  $P \ll Q_T$. For each $T \in \calT$, let $g^*_T$ denote a maximizer in the definition of the population $L_{\mathbf{z}}(T)$ in~\eqref{eq:def-population-Lz}, and   let $\bar{g}_T = \clip(g^*_T, -b_n, b_n)$ denote its clipped version for $b_n > 0$. For some $K_n > 0$, let $B_{K_n}$ denote the ball $\{u \in \mathbb{R}^{\dy + \dz}: \|u\| \leq K_n\}$, and define the following terms: 
    \begin{itemize}
        \item An approximation error term $\delta_n \equiv \delta_n(b_n, K_n)$ defined as  
        \begin{align}
            \delta_n \equiv \delta_n(b_n, K_n) \coloneqq \sup_{T \in \calT} \inf_{g \in \calG_n} \lbr \mathbb{E}_P[|\bar{g}_T - g| \boldsymbol{1}_{B_{K_n}}]  +  \mathbb{E}_{Q_T}[|\bar{g}_T - g| \boldsymbol{1}_{B_{K_n}}]\rbr. \label{eq:inn-approximation-term}
        \end{align}
        \item The tightness term $\tau_1 \equiv \tau_1(K_n)$,  and the clipping error term $\tau_2 \equiv \tau_2(b_n)$, as 
        \begin{align}
            &\tau_1(K_n) = \sup_{T \in \calT} P(B_{K_n}^c)+Q_T(B_{K_n}^c), \\ 
            &\tau_2(b_n) = \sup_{T \in \calT} \lbr \mathbb{E}_P[(|g_T^*| - b_n)^+  ] + A_{1,n} \mathbb{E}_{Q_T} \lb (|g_T^*| - b_n)^+ \rb \rbr, \label{eq:inn-tightness-terms}
        \end{align}
        where we use $(a)^+$ to denote $\max\{0, a\}$ for any $a \in \R$. 
    \end{itemize}
    Suppose there exists a sequence of $\{K_n, b_n: n \geq 1\}$  such that 
    \begin{align}
        \lim_{n \to \infty} \lbr   \tau_2 + A_{1,n} \delta_n + (b_n + A_{2,n}) \, \tau_1  \rbr = 0,  \label{eq:inn-variational-gap-sufficient-condition}
    \end{align}
    where $A_{1,n}$ and $A_{2,n}$ were defined in~\eqref{eq:A1-A2-def}. 
    Then the variational gap $\eta_n$ also converges to zero; that is, 
    \begin{align}
    \text{if \eqref{eq:inn-variational-gap-sufficient-condition} is true,}\quad \implies \quad 
         \eta_n &= \sup_{T \in \calT} \lbr D_f(P_X \parallel Q_T) - L_{\mathbf{z}}(T) \rbr  \;\stackrel{n \to \infty}{\longrightarrow}\;  0. 
    \end{align}   
\end{proposition}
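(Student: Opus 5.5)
Fix $T \in \calT$ and write $\Phi(g) = \EE_P[g] - \EE_{Q_T}[f^*(g)]$. By the variational representation~\eqref{eq:f-div}, $D_f(P\parallel Q_T) = \Phi(g_T^*)$, and by definition $L_\zz(T) = \sup_{g\in\calG_n}\Phi(g) \geq \Phi(g)$ for every $g \in \calG_n$. So for any $g\in\calG_n$,
\begin{align}
D_f(P\parallel Q_T) - L_\zz(T) \leq \big(\Phi(g_T^*) - \Phi(\bar g_T)\big) + \big(\Phi(\bar g_T) - \Phi(g)\big).
\end{align}
The first bracket is the clipping error and the second the approximation error. I would bound each uniformly in $T$, then take the infimum over $g\in\calG_n$ and the supremum over $T$.

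\emph{Clipping term.} Note $|g_T^* - \bar g_T| = (|g_T^*| - b_n)^+$, so $\EE_P[g_T^* - \bar g_T] \leq \EE_P[(|g_T^*|-b_n)^+]$. For the $f^*$ part I need $f^*(\bar g_T) - f^*(g_T^*) \leq A_{1,n}(|g_T^*|-b_n)^+$. Since $f^*$ is convex, the mean value theorem applied between $\bar g_T$ (which lies in $[-b_n,b_n]$) and $g_T^*$ only gives derivative control at the endpoint $\bar g_T = \pm b_n$.
\begin{itemize}
\item On $\{g_T^*>b_n\}$, convexity gives $f^*(g_T^*) \geq f^*(b_n) + (f^*)'(b_n)(g_T^*-b_n)$. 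Hence $f^*(\bar g_T) - f^*(g_T^*) \leq -(f^*)'(b_n)(g_T^*-b_n) \leq A_{1,n}(g_T^*-b_n)$.
\item On $\{g_T^*<-b_n\}$, the symmetric argument gives $f^*(\bar g_T)-f^*(g_T^*) \leq (f^*)'(-b_n)(-b_n - g_T^*) \leq A_{1,n}(|g_T^*|-b_n)$.
\end{itemize}
In both cases the supporting-line inequality of convexity does the work, so no bound on $(f^*)'$ outside $[-b_n,b_n]$ is needed. Combining, $\Phi(g_T^*) - \Phi(\bar g_T) \leq \tau_2(b_n)$.

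\emph{Approximation term.} Split each expectation over $B_{K_n}$ and its complement.
\begin{itemize}
\item On $B_{K_n}$: $|\EE_P[(\bar g_T - g)\boldsymbol 1_{B_{K_n}}]| \leq \EE_P[|\bar g_T-g|\boldsymbol 1_{B_{K_n}}]$. Both $\bar g_T$ and $g$ take values in $[-b_n,b_n]$, where $f^*$ is $A_{1,n}$-Lipschitz, so $|\EE_{Q_T}[(f^*(g)-f^*(\bar g_T))\boldsymbol 1_{B_{K_n}}]| \leq A_{1,n}\EE_{Q_T}[|\bar g_T-g|\boldsymbol 1_{B_{K_n}}]$. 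Here I assume, as in~\Cref{prop:inn2-uniform-convergence}, that $\calG_n$ is bounded by $b_n$.
\item On $B_{K_n}^c$: use the crude bounds $|\bar g_T - g| \leq 2b_n$ and $|f^*(g)-f^*(\bar g_T)| \leq 2A_{2,n}$. This contributes at most $2(b_n + A_{2,n})\big(P(B_{K_n}^c)+Q_T(B_{K_n}^c)\big) \leq 2(b_n+A_{2,n})\tau_1$.
\end{itemize}
Taking the infimum over $g\in\calG_n$ and the supremum over $T$ gives
\begin{align}
\eta_n \leq \tau_2 + \max\{1,A_{1,n}\}\delta_n + 2(b_n+A_{2,n})\tau_1.
\end{align}
This tends to zero under~\eqref{eq:inn-variational-gap-sufficient-condition}. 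The factor $\max\{1,A_{1,n}\}$ can be absorbed: either treat it as a constant, or note that $A_{1,n}$ is bounded below when $f^*$ is nonconstant.

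\emph{Main obstacle.} The subtle step is the clipping bound. It needs the convexity tangent argument above, and it needs to handle the case where $g_T^*$ is only a near-maximizer, or where $\Phi(g_T^*)$ involves non-integrable parts. I would also need to check integrability of $f^*(g_T^*)$ under $Q_T$, which follows from $D_f<\infty$ together with $P\ll Q_T$.
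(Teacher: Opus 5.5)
Your proposal is correct and follows the paper's proof essentially step for step. It uses the same split into a clipping error $\Phi(g_T^*)-\Phi(\bar g_T)$ and an approximation error $\Phi(\bar g_T)-\Phi(g)$, the same inside/outside-$B_{K_n}$ decomposition with crude $b_n$ and $A_{2,n}$ bounds on the tail, and arrives at $\eta_n \le \tau_2 + O(1+A_{1,n})\,\delta_n + O(b_n+A_{2,n})\,\tau_1$. Your supporting-line argument for the clipping term is more careful than the paper's, which simply invokes the $A_{1,n}$-Lipschitz property of $f^*$ on $[-b_n,b_n]$ even though $g_T^*$ lies outside that interval exactly where clipping is active.
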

The proof of this result is in~\Cref{proof:inn4-variational-gap}. The structure of~\eqref{eq:inn-variational-gap-sufficient-condition} is worth further discussion. The term $\delta_n$ is a purely approximation property of the critic class $\calG_n$, and it enforces the requirement that every optimal critic function can be  approximated in $L^1$ by a bounded element in $\calG_n$ with vanishing error. The terms $\tau_1$ and $\tau_2$ can be interpreted as tail approximation terms: $\tau_1$ is controlled by the moment bounds from~\textbf{(INN3)}, while $\tau_2$ depends on how quickly the tails of the optimal critic function decays with the clipping level $b_n$. 

To summarize, the results of this section provide a concrete recipe to apply the more abstract~\Cref{theorem:finite-sample} to a particular instantiation of the variational \ac{INN} pipeline. In particular, for a chosen model and critic class pair, it suffices to show that the Rademacher complexities of the associated function classes decay sufficiently quickly to $0$~(\Cref{prop:inn2-uniform-convergence}), to establish a uniform bi-Lipschitz property, and through it the moment bounds~(\Cref{prop:inn3-uniform-moment-bound}), and show that the critic class and well-approximate the optimal witness function on high probability regions~(\Cref{prop:inn4-variational-gap}). In the next subsection, we illustrate these steps for a specific variational \ac{INN} architecture with Gaussian \ac{RKHS} critic class.

\subsubsection{INN Example}
\label{subsec:INN-example}

In this section, we instantiate our abstract assumptions and conditions from~\Cref{subsec:INN-interpretable-conditions} and~\Cref{assump:INN-theorem} with a concrete class of \ac{INN} models and critic function classes. The main goal of this section is demonstrate the the conditions derived in the previous sections are satisfied by a non-trivial, reasonably expressive, and practically relevant pair of  model and critic classes. In particular, we show that a residual \ac{INN} with bounded weights and biases, trained with a Jensen-Shannon based loss and a Gaussian RKHS critic class, fits into our framework developed in the previous sections.  
We begin with a formal description of the model class. 
\begin{definition}
    \label[definition]{def:inn-calT} 
    As before, we assume $\dx = \dy + \dz$, and for an integer $M \geq 1$ and $s \in (0, 1)$, define $T:\mathbb{R}^{\dx} \to \mathbb{R}^{\dx}$ as follows, with $I_{\dx}$ denoting the $\dx \times \dx$ identity matrix: 
    \begin{align}
        T = (I_{\dx} + F_M) \circ (I_{\dx} + F_{M-1})\circ \cdots \circ (I_{\dx} + F_1),
        \quad F_j(x) \coloneqq W_{j,2} \tanh(W_{j,1}x + b_{j,1}) + b_{j,2}, 
    \end{align}
    with the following constraints for all $j \in [M] = \{1, \ldots, M\}$: 
    \begin{align}
        \|W_{j,1} \|_{op} \leq s, \quad \|W_{j,2}\|_{op}\leq s, \quad \max \lbr \|b_{j,1}\|, \|b_{j,2}\| \rbr \leq B. 
    \end{align}
    For every $j \in [M]$, the matrices $W_{j,1}$ and $W_{j,2}$ have dimensions $d_j \times \dx$ and $\dx \times d_j$ respectively, and let $H \coloneqq \max_{j \in [M]} d_j$.  We split the output coordinates to write $T(x) = (T_{\mathbf{y}}(x), T_{\mathbf{z}}(x))$, with $T_{\mathbf{y}}(x) \in \calY = \mathbb{R}^{\dy}$ and $T_{\mathbf{z}}(x) \in \calZ = \mathbb{R}^{\dz}$. It is easy to verify that this function class is bi-Lipschitz and there exist constants $J_0, J$~(depending on $M, B, s, H$) such that for all $x \in \calX$, we have $\|T(x)\| \leq J_0 + J \|x\|$ and $\|T^{-1}(y,z)\| \leq J_0 + J\|(y,z)\|$. 
\end{definition}
This choice of the \ac{INN} model class $\calT$ above is driven by the need for a global bi-Lipschitz bounds on both the forward and inverse maps. The residual architecture with $\tanh$ activations and the norm constraints on the weight and bias terms provide a convenient way of controlling global Lipschitz constants. In contrast the popular coupling-based architectures such as RealNVP of~\citet{dinh2014nice} generally do not admit such global control of Lipschitz constants.  We now present the details of our critic class construction. 
\begin{definition}
\label[definition]{def:inn-calG}
    Let $k_n \equiv k_{\gamma_n}$ denote the Gaussian kernel on $\mathbb{R}^{\dy+\dz}$, 
    \begin{align}
        k_n((y,z), (y',z')) = \exp \lp - \gamma_n \|(y,z) - (y',z')\|^2 \rp    = \exp \lp - \gamma_n \lp \|y-y'\|^2 + \|z-z'\|^2 \rp \rp, 
    \end{align}
    and let $\calH_{k_n}$ denote its  RKHS. Fix a radius parameter $K_n>0$ and a uniform upper bound $b_n>0$ and define the critic class as 
    \begin{align}
        \calG_n = \lbr g = h \boldsymbol{1}_{C_{K_n}} \;:\; h \in \calH_{k_n}, \; \|h\|_{k_n} \leq b_n\rbr, \qtext{with} C_{K_n} = [-K_n, K_n]^{\dx}.  
    \end{align}
    Note that $\boldsymbol{0} \in \calG_n$, and each $g \in \calG_n$ also satisfies $\|g\|_{\infty} \leq b_n$ due to the reproducing property and the fact that $\sup_x \sqrt{k_n(x,x)} = 1$.  
\end{definition}
This particular choice of the critic class $\calG_n$ is chosen as a compromise between analytical tractability and representation power. The \ac{RKHS} associated with Gaussian kernels admit clean bounds on their Rademacher complexity, and  are also known to be \emph{universal} in the sense that they are approximate any continuous bounded function over compact domains arbitrarily well in $\sup$ norm. These are the exact properties that we need in~\Cref{prop:inn2-uniform-convergence} to ensure uniform convergence, and in~\Cref{prop:inn4-variational-gap} for controlling the variational approximation gap. The additional truncation on cube $C_{K_n}$ by the indicator function is a purely technical choice: it ensures the uniform boundedness of the critic class while also decoupling the approximation on $C_{K_n}$ from tail events, which can be handled separately via moment bounds. Furthermore, since the $\ell_2$-ball $B_{K_n}$ used by~\Cref{prop:inn4-variational-gap} is contained in $C_{K_n}$, any approximation bound $C_{K_n}$ is also valid on $B_{K_n}$.  We now present the choice of the $f$-divergence to be used for the unsupervised loss.
\begin{definition}
\label[definition]{def:jensen-shannon}    
We will employ a  Jensen-Shannon divergence based unsupervised loss in our \ac{INN} training. This corresponds to $f(t) = x \log \tfrac{2x}{x+1} + \log \tfrac{2}{x+1}$,  and $f^*(t) = \log(1+e^t) - \log 2$ with 
\begin{align}
    A_{1,n} = \sup_{|t| \leq b_n} | (f^*)'(t)| \leq 1, \qtext{and} A_{2,n} = \sup_{|t| \leq b_n} |f^*(t)| \leq \log((1+e^{b_n})/2) \leq b_n.  
\end{align}
\end{definition}
Having introduced all the main components, we can now present the main result of this section. 
\begin{theorem}
    \label{theorem:inn-example}
    Let $(\calT, \calG_n)$ denote a pair of model and critic classes as introduced in~\Cref{def:inn-calT} and~\Cref{def:inn-calG} respectively, and define the parameters 
    \begin{align}
        K_n &= M (s \sqrt{H} + B) + \sqrt{\dx} + \sqrt{2 \log n}, \\
        \gamma_n &= K_n^{2 + \epsilon} \quad \text{for some fixed } \epsilon>0, \\
        b_n &= C_b \gamma_n^{\dx/4} K_n^{2 + \dx/2} = C_b K_n^{2 + \dx + \frac{\dx}{4} \epsilon}.
    \end{align}
    Then, under the realizability assumption \textbf{(INN1)}, and with these choices of the parameters, the ERM model~$\That_n$ defined in~\eqref{eq:erm-INN} satisfies 
    \begin{align}
        W_1 \lp P_{X \mid Y \in A}, \,  P_{\hat{T}^{-1}_n(Y, Z)|Y \in A}\rp = o(1), \quad \text{w.p. at least } 1-\delta.  \label{eq:inn-theorem-main-result}
    \end{align}
    In other words, with these parameters, the models introduced above satisfy the sufficient conditions for \textbf{(INN2)}-\textbf{(INN4)} to hold, as derived in~\Cref{subsec:INN-interpretable-conditions}. 
\end{theorem}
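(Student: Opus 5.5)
The plan is to verify \textbf{(INN1)}--\textbf{(INN4)} for the pair $(\calT,\calG_n)$ and then invoke \Cref{theorem:finite-sample}. The Pinsker-type inequality holds for Jensen--Shannon, so it suffices to show $r_n,u_n,\eta_n\to0$.

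\textbf{Step 1: \textbf{(INN1)} and \textbf{(INN3)}.} Each residual block $I+F_j$ has $\mathrm{Lip}(F_j)\le s^2<1$. Hence it is bi-Lipschitz with constants $1+s^2$ and $(1-s^2)^{-1}$, and $T$ is bi-Lipschitz with $J=\max\{(1+s^2)^M,(1-s^2)^{-M}\}$. Realizability is assumed. The affine growth bound $\|T(x)\|\le J_0+J\|x\|$ from \Cref{def:inn-calT}, combined with \Cref{prop:inn3-uniform-moment-bound}, gives \textbf{(INN3)}. This needs $\EE\|X\|^{1+a}<\infty$. I would take $X$ to have Gaussian-type tails, which is implicit in the choice of $K_n$ containing $\sqrt{2\log n}$. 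Since $(Y,Z)=T^*(X)$ and $Z$ is Gaussian, the same holds for $X=(T^*)^{-1}(Y,Z)$.

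\textbf{Step 2: \textbf{(INN2)}} via \Cref{prop:inn2-uniform-convergence}. Here $A_{1,n}\le1$ and $A_{2,n}\le b_n$. For the critic terms, $\RadComp^{(1)}_n\le b_n/\sqrt n$ by the standard RKHS-ball bound; the indicator multiplies $h$ pointwise and can be absorbed after conditioning on which points lie in $C_{K_n}$. For $\RadComp^{(2)}_n$, the supremum over $T$ can be dropped because the RKHS-ball bound $b_n\sqrt{\sum_i k_n(u_i,u_i)}/n\le b_n/\sqrt n$ holds for arbitrary inputs $u_i$. So $r_n\lesssim b_n n^{-1/2}$, which is polylogarithmic over $\sqrt n$ since $b_n$ is a fixed power of $K_n=O(\sqrt{\log n})$.

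For $u_n$, the class $\calF_\yy$ consists of compositions of $M$ Lipschitz layers with bounded operator norms. I would bound its Rademacher complexity by a peeling/contraction argument or a covering-number bound over the finite-dimensional parameter set, obtaining $\RadComp_n(\calF_\yy)/n\lesssim \mathrm{poly}(M,H,\dx)\,\sqrt{\log n}/\sqrt n$. The truncation term $R_\yy K_n^{-\beta}$ vanishes, and $K_n^2n^{-1/2}\to0$, so $u_n\to0$.

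\textbf{Step 3: \textbf{(INN4)}} via \Cref{prop:inn4-variational-gap}. This is the main obstacle.

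For the tightness term $\tau_1$, I would use $\|T(x)\|\le M(s\sqrt H+B)+\|x\|$. With standard-Gaussian-tailed inputs, Gaussian concentration $P(\|G\|\ge\sqrt{\dx}+t)\le e^{-t^2/2}$ at $t=\sqrt{2\log n}$ gives $\tau_1\lesssim 1/n$. Hence $(b_n+A_{2,n})\tau_1\lesssim b_n/n\to0$. This is exactly the role of the particular form of $K_n$.

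For the clipping term $\tau_2$, the JS optimal critic is $g_T^*=\log\frac{2p}{p+q_T}$ in the conjugate parametrization, which is bounded above by $\log 2$. Its negative tail is integrable under $P$ because $P\ll Q_T$. I would bound $\EE_P[(|g^*_T|-b_n)^+]$ using $P(g^*_T<-b_n)\le 2e^{-b_n}$ together with a uniform tail estimate. The $Q_T$ part follows similarly since $(f^*)'$ is bounded.

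For the approximation term $\delta_n$, I need a quantitative approximation of $\bar g_T$ on $C_{K_n}$ by a Gaussian-RKHS function of norm at most $b_n$. The plan is:
\begin{itemize}
\item mollify $\bar g_T$ at scale $\gamma_n^{-1/2}$;
\item use the standard Fourier estimate $\|k_\gamma * \psi\|_{\calH_{k_\gamma}}\lesssim\gamma^{\dx/4}\|\psi\|_{L^2}$ (up to constants), with $\|\psi\|_{L^2(C_{K_n})}\lesssim b_n$-type bounds times $K_n^{\dx/2}$.
\end{itemize}
This is what motivates $b_n=C_b\gamma_n^{\dx/4}K_n^{2+\dx/2}$. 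The $L^1$ error of mollification is controlled by the modulus of continuity of $\bar g_T$, uniformly over $T$. Here I would use that densities of $Q_T$ are pushforwards under uniformly bi-Lipschitz maps, which gives uniform local regularity. The extra power $\epsilon$ in $\gamma_n=K_n^{2+\epsilon}$ makes the bandwidth shrink faster than the box grows, so the error vanishes.

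The uniform-in-$T$ regularity of the likelihood ratio is the step I expect to be delicate. If needed, I would state it as a mild smoothness condition on $P_X$ inherited through $\calT$. With $\tau_2+A_{1,n}\delta_n+(b_n+A_{2,n})\tau_1\to0$ established, \Cref{prop:inn4-variational-gap} yields $\eta_n\to0$, and \Cref{theorem:finite-sample} gives \eqref{eq:inn-theorem-main-result}.
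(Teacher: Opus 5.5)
Your skeleton matches the paper's: check \textbf{(INN1)}--\textbf{(INN4)} via \Cref{prop:inn3-uniform-moment-bound}, \Cref{prop:inn2-uniform-convergence} and \Cref{prop:inn4-variational-gap}, then apply \Cref{theorem:finite-sample}. Several of your local steps are more careful than the paper's. The displacement bound $\|T(x)-x\|\le M(s\sqrt H+B)$ explains the form of $K_n$ and gives $(b_n+A_{2,n})\tau_1\to0$ explicitly. The change-of-measure bound on $P(g_T^*<-b_n)$ needs no extra assumptions. A parametric covering bound for $\calF_\yy$ is sounder than the paper's contraction step. However, the core of \textbf{(INN4)} has a genuine gap.

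\textbf{The missing estimate.} The paper relies on a uniform-in-$T$ estimate on the log-likelihood ratio $\ell_T$, which you do not have. The estimate is: global growth $|\ell_T(u)|\lesssim 1+\|u\|^2$, and on $C_K$ a Lipschitz constant $O(K)$, hence $\sup_{C_K}|\ell_T|=O(K^2)$. The paper carries \Cref{lemma:nf-Jacobian} and \Cref{lemma:nf-likelihood-ratio} over from the NF analysis for this. The estimate comes from two sources:
\begin{itemize}
\item the change-of-variables formula with the Gaussian latent density, whose $\tfrac12\|T(x)\|^2$ term produces the quadratic growth;
\item the Lipschitz continuity of the block Jacobians, $\|\Jac F_j(x)-\Jac F_j(y)\|_{op}\le 2s^3\|x-y\|$, which makes $\log\det\Jac T$ Lipschitz.
\end{itemize}
Your proposed substitute does not deliver this. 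Pushforwards under uniformly bi-Lipschitz maps only give two-sided bounds on Jacobian determinants and no modulus of continuity for the pushed-forward density. Replacing the estimate by a smoothness hypothesis on $P_X$ changes the theorem.

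\textbf{Steps that fail without it.} Two of your steps do not close.
\begin{itemize}
\item For $\delta_n$: if the target is only known to be bounded by the clip level $b_n$, then $\|\psi\|_{L^2}\lesssim b_nK_n^{\dx/2}$. The Fourier estimate then gives RKHS norm of order $\gamma_n^{\dx/4}K_n^{\dx/2}b_n\gg b_n$, so the mollified function is not in $\calG_n$ and the argument is circular. The budget only closes because $\sup_{C_{K_n}}|g_T^*|\le\log 2+\sup_{C_{K_n}}|\ell_T|\lesssim K_n^2$ independently of $b_n$, so clipping is inactive on the box. That is exactly where the factor $K_n^{2+\dx/2}$ in $b_n$ comes from. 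The $O(K_n)$ Lipschitz constant, after a Lipschitz extension and cutoff outside the box as in \Cref{lemma:nf-approximation}, is what makes the smoothing error $K_n/\sqrt{\gamma_n}\asymp K_n^{-\epsilon/2}$.
\item For the $Q_T$-part of $\tau_2$: boundedness of $(f^*)'$ only gives the prefactor $A_{1,n}\le 1$; it does not control $\EE_{Q_T}[(|g_T^*|-b_n)^+]$. Your change of measure is one-sided. $Q_T(g_T^*<-t)$ is the $Q_T$-mass of $\{dP/dQ_T\lesssim e^{-t}\}$, which need not be small. The paper handles both parts of $\tau_2$ with $|g_T^*|\le\log 2+|\ell_T|$, the quadratic bound on $\ell_T$, and the uniform moment bounds.
\end{itemize}

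\textbf{Two further points.}
\begin{itemize}
\item In $\RadComp_n^{(2)}$ you cannot drop $\sup_T$. The bound $\EE_\epsilon\|\sum_i\epsilon_iw_ik_n(u_i,\cdot)\|_{k_n}\le\sqrt{\sum_iw_i}$ holds for fixed inputs. Here $u_i=(Y_i,T_\zz(X_i))$ and $w_i=\boldsymbol{1}_{C_{K_n}}(u_i)$ both move with $T$ inside $\EE_\epsilon\sup_T$. A correct bound needs a joint argument, e.g.\ a cover of the finite-dimensional parameter set of $\calT$, the $\sqrt{2\gamma_n}$-Lipschitz dependence of $u\mapsto k_n(u,\cdot)$ in $\calH_{k_n}$, and a growth-function bound for the sets $\{x:(y,T_\zz(x))\in C_{K_n}\}$. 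This costs only polylogarithmic factors. The paper is equally terse here.
\item Gaussian tails of $X$ do not follow from those of $Z$: only the $Z$-marginal is fixed, and $Y=T^*_\yy(X)$ is as heavy-tailed as $X$. This must be an explicit hypothesis; the paper's proof likewise assumes a finite $(2+\beta)$-moment of $X$.
\end{itemize}
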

The proof of this result is in~\Cref{proof:inn-example}.

As mentioned earlier, the purpose of this theorem is to illustrate that our abstract theoretical results are applicable to a non-trivial, reasonably realistic variational \ac{INN} pipeline. In particular,~\Cref{theorem:inn-example} demonstrates that one can work with architectures that are expressive enough to model high-dimensional data and still verify the technical assumptions required for our posterior accuracy guarantees. Extending our arguments to other classes of invertible architectures, such as coupling-based flows equipped with appropriate spectral  or Jacobian regularization, is an interesting direction for future work.

\section{Empirical Results}
\label{sec:case-studies}
In this section, we first present a series of small-scale probing experiments to understand the effects of various design choices involved in training invertible models~(\S~\ref{subsec:observations}), and then apply the insights gained to a practically relevant ocean-acoustic inversion problem~(\S~\ref{subsec:geoacoustic_inversion}).  
\noindent Our objective in~\Cref{subsec:observations} is two-fold:
\begin{itemize}
    \item The first objective is to provide  empirical support for several theoretical claims derived from the results of the previous section on \acp{INN} and from \Cref{subsec:variational-nf} on \acp{NF}. In particular, we investigate the use of the variational \ac{USL} framework \Cref{sec:general-framework} for training invertible architectures, and we compare the performance of different $f$-divergences with that of \acp{IPM}. We further analyze the behavior of the forward and backward training losses  when variational $f$-divergences are employed. Additionally, we empirically examine the main theoretical result for \acp{NF} presented in \Cref{subsec:variational-nf} by studying the relationship between the number of finite moments of  $X$ and the $W_1$ distance between the true and the estimated distributions. The observed polynomial convergence rates are consistent with our theoretical predictions and analogous to the behavior is established for \acp{INN} in \Cref{theorem:finite-sample}.
    \item The other direction is empirically explore the effects of certain design choices in practical performance of \acp{INN}. This includes the prior loss or reconstruction loss, the effect of dimensionality of the latent space, and role of the support set, and the effect of entropy regularization in Wasserstein metric.  This empirical analysis complements the existing literature and providing insights for applied implementations.
\end{itemize}
The software accompanying this paper is available on GitHub,
\href{https://github.com/ananya-ac/INN-Project}{https://github.com/ananya-ac/INN-Project}.

\subsection{Effect of design choices on \ac{INN} training}
\label{subsec:observations}
As mentioned earlier, in this section, we study the effects of various design choices (such as prior distribution, latent dimension, entropic regularization parameter, etc.) on the training of invertible neural networks.
Before presenting the empirical studies in detail, we first clarify two general aspects. First, unless otherwise stated, all case studies were trained for $10$ epochs using a batch size of $512$. The model parameters were optimized with the Adam optimizer with a learning rate of $10^{-4}$ and $\ell_2$ regularization. Second, since the  \ac{NLL}  is a widely adopted loss function for training\acp{INN}~\citep{dinh2016density,ren2020benchmarking, kruse2021benchmarking},   we incorporate the \ac{NLL} in our case studies, complementing the USL described in~\Cref{sec:general-framework} (the relevant definitions are reviewed in~\Cref{sec:NLL}).
Also, as discussed in \Cref{sec:related-work}, while the Gaussian assumption underlying the computation of the \ac{NLL} can be restrictive, we avoid such distributional assumptions by leveraging the variational formulation of f-divergences.

\subsubsection{Effect of prior on training}
\label{sec:observation-1-prior}
In this case study, we use the \ac{IK} framework described next. 

\noindent\textbf{Inverse kinematics example:}
\Ac{IK} is a core problem in robotics, where the objective is to determine the joint configurations required to achieve a desired end-effector position in space~\citep{niku2020introduction}. Efficiently solving this problem is essential for tasks like motion control, path planning, and real-time manipulation. Traditional methods such as analytical solutions and numerical optimization often face challenges as the robot configurations become complicated and number of degrees of freedom (DOF) increases. 
Machine learning algorithms are increasingly being employed for solving \ac{IK} problems~\citep{toquica2021analytical}, with the 4 DOF robot example becoming a standard benchmark in analyzing the analyzing the practical aspect of training invertible architecture
\ac{INN}~\citep{ardizzone2018analyzing,kruse2021benchmarking,hagemann2021stabilizing}. 

We consider an articulated robotic arm that moves vertically along a rail and rotates at three joints. 
These four degrees of freedom constitute the parameter vector 
\(X = [X^{(1)}, X^{(2)}, X^{(3)}, X^{(4)}]^\top\).
The dataset is generated using Gaussian priors defined as 
$X^{(i)} \sim \mathcal{N}(0, \sigma_i)$,
where the standard deviations are specified as 
\(\sigma_1 = 0.25\) and \(\sigma_2 = \sigma_3 = \sigma_4 = 0.5\), and 
the forward kinematic process of the robotics arm is modeled as
\begin{align}
Y^{(1)} &= X^{(1)} + l_1 \sin(X^{(2)}) + l_2 \sin(X^{(3)} - X^{(2)}) + l_3 \sin(X^{(4)} - X^{(2)} - X^{(3)}); \label{eq:y1}\\
Y^{(2)} &= l_1 \cos(X^{(2)}) + l_2 \cos(X^{(3)} - X^{(2)}) + l_3 \cos(X^{(4)} - X^{(2)} - X^{(3)}), \label{eq:y2}
\end{align}
where the arm segment lengths are given by \(l_1 = 0.5\), \(l_2 = 0.5\), and \(l_3 = 1.0\).

 We trained an \ac{INN} to model the relationship between the joint angles of a robotic arm and a point in its 2D workspace.  All models were trained using the \ac{NLL} loss formulation for the INN. The input consisted of 4 variables corresponding to the joint angles of the robotic arm, and the output was a 2-dimensional vector representing a point in the 2D workspace.

For this case study, we use two different regularization priors: a Gaussian prior and a uniform prior.
The Gaussian prior represents the correct prior (Normal prior with $\mu = [0,0,0,0]$  and $\sigma = [0.25, 0.5, 0.5, 0.5]$) as the data-generation process follows the same Gaussian distribution. The corresponding prior  loss can be computed using the negative log-likelihood, up to a  constant, as follows  
\begin{align}
\label{eq:prior_new_Guass}
\prescript{p}{}{L_x} = %
\frac{1}{n} \sum_{i=1}^{n} (T^{-1} (Y_i,Z_i) - \tilde{X}_{i})^2 \,.
\end{align}
Here,  $\{\tilde{X}_{i}\}_{i=1}^N$ is the data generated from the input.
A uniform prior~\citep{andrle2021invertible} also represents an inconsistent with the true distribution.  If the input variable ranges from $a$ to $b$ ($b>a$), then the loss term is as follows:
 \begin{align}
 \prescript{p}{}{L_x} = \frac{1}{n} \sum_{i=1}^{n} \left( \max(0, T^{-1} (Y_i,Z_i) - b) + \max(0, a - T^{-1} (Y_i,Z_i)) \right).
\end{align}
Here, $a=0$ and $b=1$. 

\begin{figure}[!htbp]
    \centering

    \begin{subfigure}[t]{0.24\textwidth}
        \centering
        \includegraphics[width=\textwidth]{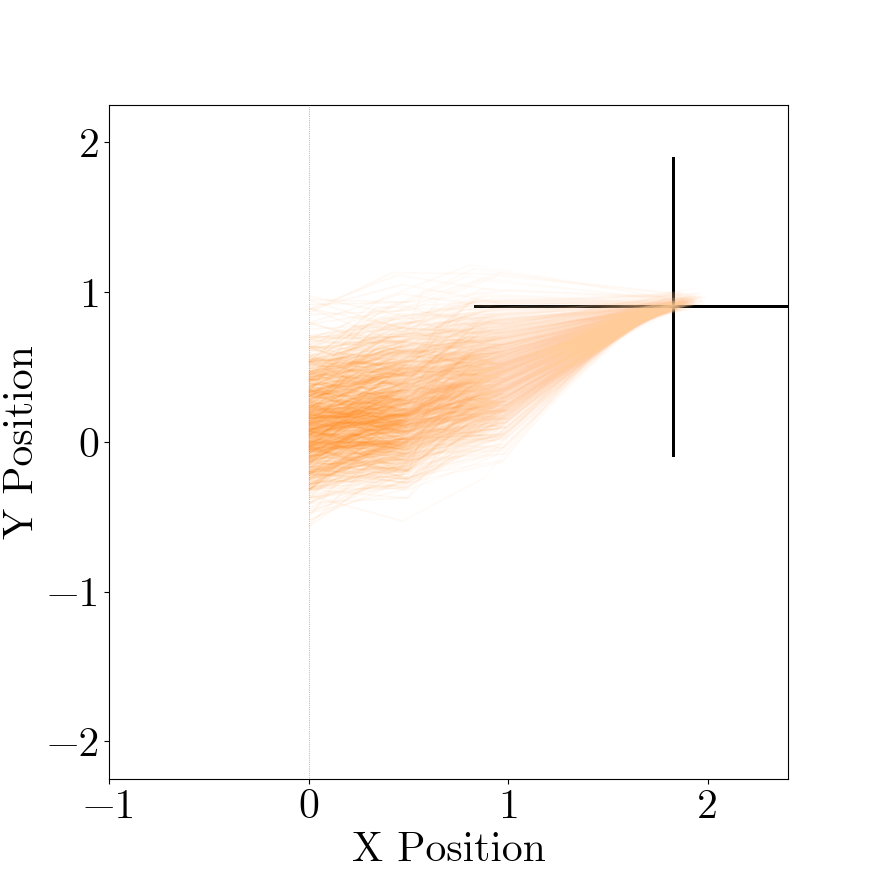}
        \caption{No prior}
        \label{noprior}
    \end{subfigure}
    \begin{subfigure}[t]{0.24\textwidth}
        \centering
        \includegraphics[width=\textwidth]{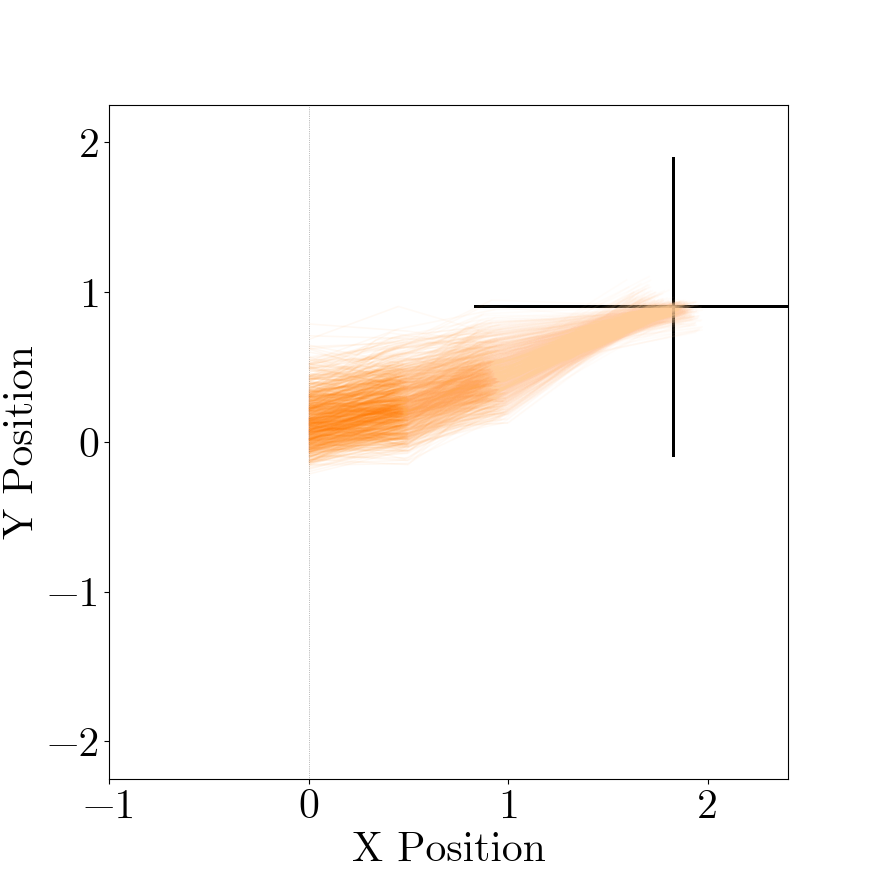}
        \caption{Correct Prior,\\ $\lambda'=1$}
        \label{subfigure:gprior}
        
    \end{subfigure}
    \begin{subfigure}[t]{0.24\textwidth}
        \centering
        \includegraphics[width=\textwidth]{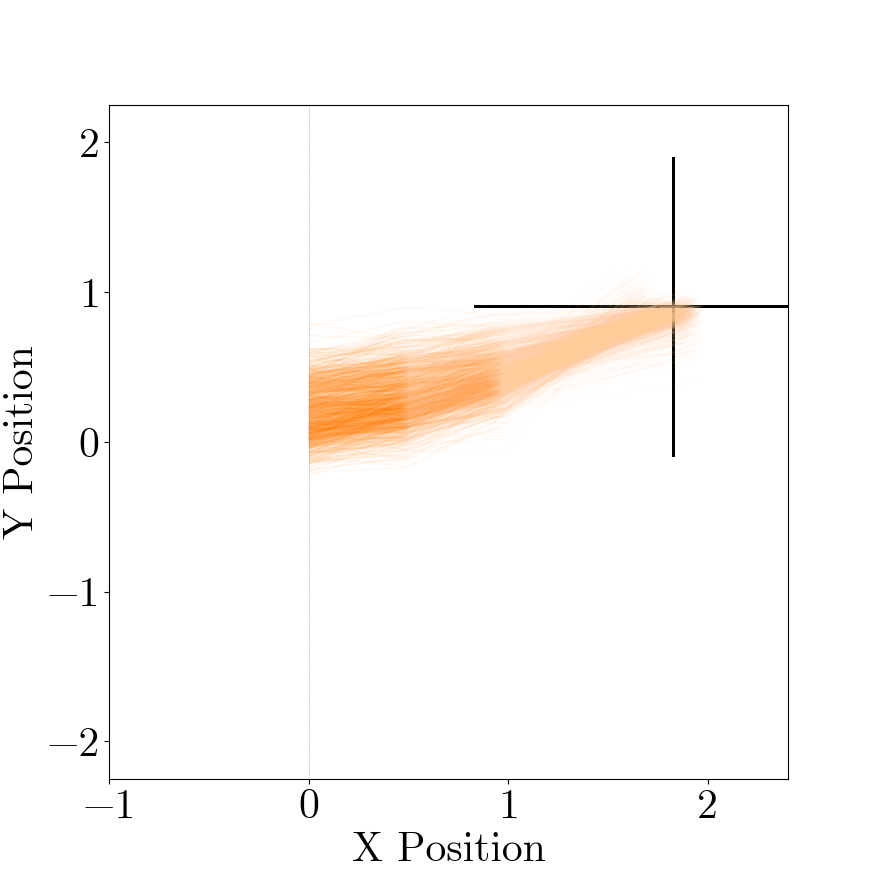}
        \caption{\centering Misspecified Prior,\\ $\lambda'=1$}
        \label{subfigure:bprior}
        
    \end{subfigure}
    \begin{subfigure}[t]{0.24\textwidth}
        \centering
        \includegraphics[width=\textwidth]{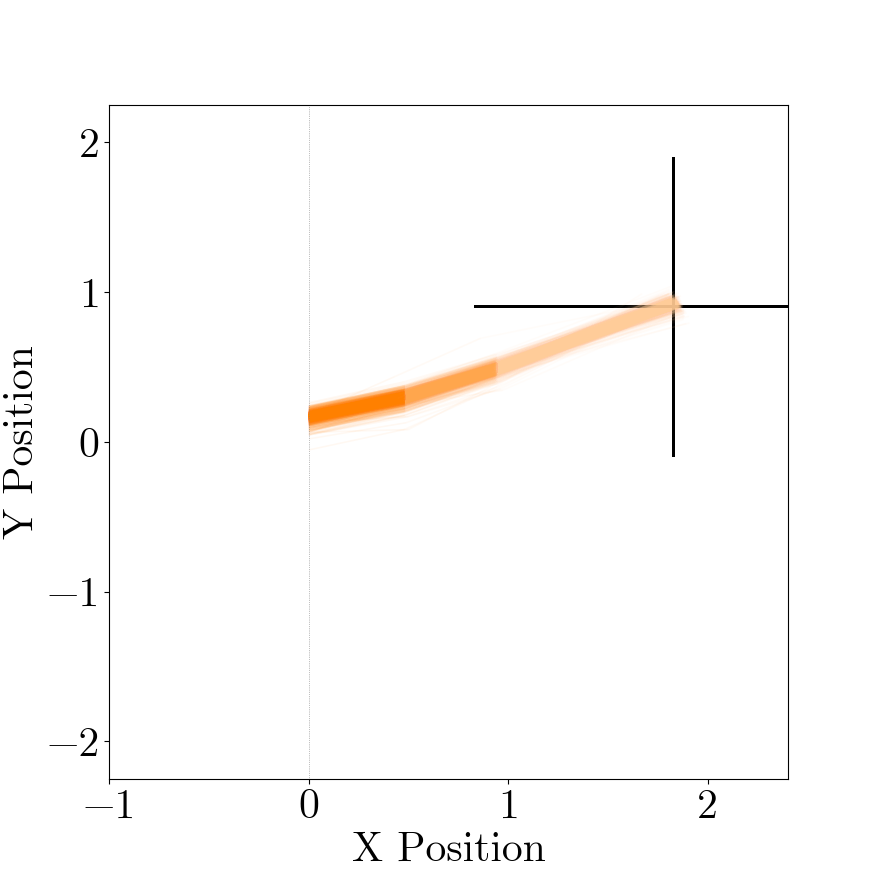}
        \caption{Correct Prior,\\ $\lambda' =100$}
        \label{subfigure:maxprior}
        
    \end{subfigure}

    \caption{ 
    The four plots illustrate the results of the inverse kinematics case study with different prior regularizations.
    (a) shows samples from a model trained without any prior loss (i.e., with prior weight $\lambda' = 0$). (b) and (c) display samples from models trained with a prior weight of $\lambda' = 1$. In (b), the model assumes a Gaussian prior over the joint configuration, resulting in a scaled $L_2$ loss between the ground truth and the generated samples. In contrast, (c) uses a uniform prior $\mathcal{U}(0,1)$, which is inconsistent with the true distribution. (d) illustrates results from a model trained with a Gaussian prior, using $\lambda' = 100$. The cross indicates the true end-effector position. All models were trained using the \ac{NLL} loss formulation for the INN. 
    }
    \label{fig:posterior_comparison}
\end{figure}
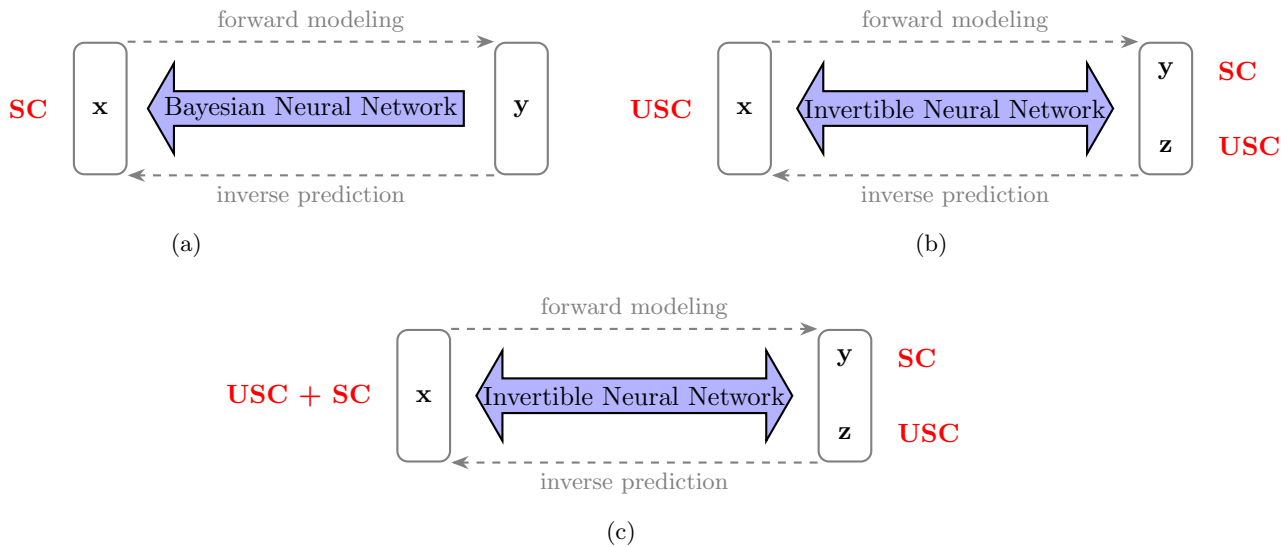
\begin{figure}[t]
    \centering
    \begin{minipage}{0.3\textwidth}
        \centering
        \begin{subfigure}[t]{\textwidth}
            \begin{tikzpicture}[
    scale=1, transform shape,
    box/.style={draw, rectangle, rounded corners, minimum width=0.7cm, minimum height=1.75cm, thick, gray},
    dashedarrow/.style={thick, dashed, gray, -Stealth},
    networkarrow/.style={single arrow, fill=blue!30, draw=black, thick, minimum height=4.2cm, minimum width=1.2cm, single arrow head extend=0.21cm, single arrow tip angle=120},
    label/.style={font=\small},
]

\node[box] (left) at (0,0) {};
\node[left=0.21cm of left, red, font=\bfseries] {SC};
\node at (left.center) {\bfseries x};

\node[box] (right) at (5.6,0) {};
\node at (right.center) {\bfseries y};

\node[networkarrow, rotate=180] (net) at (2.8,0) {};
\node[rotate=0] at (net.center) {Bayesian Neural Network};

\draw[dashedarrow] (left.north east) -- node[above, label] {forward modeling} (right.north west);

\draw[dashedarrow] (right.south west) -- node[below, label] {inverse prediction} (left.south east);

\end{tikzpicture}

            \caption{}
            \label{bnn}
        \end{subfigure}
    \end{minipage}
    \hfill
    \begin{minipage}{0.5\textwidth}
        \centering
        \begin{subfigure}[t]{\textwidth}
            \begin{tikzpicture}[
    scale=1, transform shape,
    box/.style={draw, rectangle, rounded corners, minimum width=0.7cm, minimum height=1.75cm, thick, gray},
    dashedarrow/.style={thick, dashed, gray, -Stealth},
    label/.style={font=\small},
]

\node[box] (left) at (0,0) {};
\node[left=0.21cm of left, red, font=\bfseries] {USC};
\node at (left.center) {\bfseries x};

\node[box, minimum height=1.75cm] (right) at (5.6,0) {};
\node at (right.center) [yshift=0.5cm] {\bfseries y};
\node at (right.center) [yshift=-0.5cm] {\bfseries z};
\node[right=0.21cm of right, red, font=\bfseries, yshift=0.5cm] {SC};
\node[right=0.21cm of right, red, font=\bfseries, yshift=-0.5cm] {USC};

\node[double arrow, fill=blue!30, draw=black, thick, minimum height=4.2cm, minimum width=1.2cm, 
      double arrow head extend=0.21cm, double arrow tip angle=120] (net) at (2.8,0) {};
\node at (net.center) {Invertible Neural Network};

\draw[dashedarrow] (left.north east) -- node[above, label] {forward modeling} (right.north west);

\draw[dashedarrow] (right.south west) -- node[below, label] {inverse prediction} (left.south east);

\end{tikzpicture}

            \caption{}
            \label{noprior_inn}
        \end{subfigure}
    \end{minipage}

    \vspace{1em} %

    \begin{minipage}{0.65\textwidth}
        \centering
        \begin{subfigure}[t]{\textwidth}
\begin{tikzpicture}[
    scale=1, transform shape,
    box/.style={draw, rectangle, rounded corners, minimum width=0.7cm, minimum height=1.75cm, thick, gray},
    dashedarrow/.style={thick, dashed, gray, -Stealth},
    label/.style={font=\small},
]

\node[box] (left) at (0,0) {};
\node[left=0.21cm of left, red, font=\bfseries] {USC + SC};
\node at (left.center) {\bfseries x};

\node[box, minimum height=1.75cm] (right) at (5.6,0) {};
\node at (right.center) [yshift=0.5cm] {\bfseries y};
\node at (right.center) [yshift=-0.5cm] {\bfseries z};
\node[right=0.21cm of right, red, font=\bfseries, yshift=0.5cm] {SC};
\node[right=0.21cm of right, red, font=\bfseries, yshift=-0.5cm] {USC};

\node[double arrow, fill=blue!30, draw=black, thick, minimum height=4.2cm, minimum width=1.2cm, 
      double arrow head extend=0.21cm, double arrow tip angle=120] (net) at (2.8,0) {};
\node at (net.center) {Invertible Neural Network};

\draw[dashedarrow] (left.north east) -- node[above, label] {forward modeling} (right.north west);

\draw[dashedarrow] (right.south west) -- node[below, label] {inverse prediction} (left.south east);

\end{tikzpicture}            \caption{}
            \label{yesprior_inn}
        \end{subfigure}
    \end{minipage}

    \caption{Schematic illustration of how different training paradigms are used to solve inverse problems in the Gaussian setting. (a) Bayesian neural network for solving inverse problem. There is supervised cost (SC) between predicted and true $\textbf{x}$. (b) INN Trained without $\prescript{p}{}{L_x}$. There is unsupervised cost (USC) between predicted and true $\textbf{x}$. (c) Shows an INN model trained with $\prescript{p}{}{L_x}$, which serves as a reconstruction loss.}
    \label{fig:INNBNN}
\end{figure}
To demonstrate the effect of incorporating a prior objective during training, the prior weight $\lambda'$ was set to values of $0$, $1$, and $100$.   Figure~\ref{fig:posterior_comparison} illustrates the effect of incorporating the prior loss $^pL_x$  into the training objective. We observe that using a suitable prior, along with a well-chosen prior weight $\lambda'$, leads to more plausible samples. In the context of the inverse kinematics case study, this corresponds to configurations that keep joint angles close to zero, resulting in ``straighter" arm trajectories, as shown in Figure~\ref{subfigure:gprior}. Since the prior had a mean of $0$ and a small variance, it led to samples that favored configurations where the arm positions were straight, with all joint angles close to $0$ degrees. However, when the assumed prior is incorrect, incorporating it into the training objective leads to 
inaccurate joint configurations, as seen in Figure~\ref{subfigure:bprior}. Finally, Figure~\ref{subfigure:maxprior} demonstrates that when the prior weight $\lambda'$ is set too high, even with a suitable prior, the prior loss dominates the training, resulting in degraded performance and unrealistic outputs.
 Increasing the coefficient for $\prescript{p}{}{L_x}$ highlights this effect further.

We conclude that using prior distribution knowledge in training \acp{INN} acts as a form of regularization by constraining the estimated posterior to align with known distributions. By incorporating such priors, the model is guided toward more plausible solutions, preventing it from learning overly complex or irrelevant patterns. Essentially, the prior acts as a form of bias that encourages the model to explore a more reasonable subset of possible solutions, leading to  robustness in generation tasks.
The designers need to navigate the fine line between Bayesian wisdom and risky assumptions. A well-chosen prior can improve the \ac{INN} performance by effectively regularizing the model, and poorly chosen priors can dominate the results, leading to biased or misleading conclusions.

We conclude this section by discussing the reason behind the importance of prior cost on training by focusing on the Gaussian case. Fig.~\ref{fig:INNBNN}  compares the training procedure of \ac{BNN}, \ac{INN}, and \ac{INN} with prior for solving an inverse problem. 
The standard direct formulation in \ac{BNN} depends on a discriminative supervised loss term on $X$, which is one motivation for introducing \ac{INN} for inverse problems~\citep{ardizzone2018analyzing}.
In the case of Gaussian distributions, we have demonstrated that $\prescript{p}{}{L_x}$ can serve as an effective regularizer by imposing a supervised cost on $X$. In this case, as demonstrated in Fig.~\ref{yesprior_inn},
training with prior propose a hybrid approach applying both supervised and unsupervised costs on $X$. This approach leverages  the regularization with prior to enhance the performance of \ac{INN}. 

 \subsubsection{Comparison of architectures}
\label{sec:observation-2-f-div}

This observation compares the performance of \acp{INN} on an inverse problem using coupling-based and iResNet architectures, trained with different variational f-divergences as loss functions, considering both forward and backward losses.
 In general, it is challenging to optimize over all measurable function in the  the critic  class $\calG$, defined in Section~\ref{sec:general-framework}. Hence, in practice, finite-parameter family of functions are used to approximate the optimal critic function. We adopt the method proposed in~\citet{nowozin2016f}, to approximate the critic function 
 by a neural network. In this case, we replace the function $g$ in equation~\eqref{eq:f-div} with 
$
V_\omega(x) = g_f(V'_\omega(x))$, 
where \( V'_\omega : \mathcal{X} \to \mathbb{R} \) denotes a neural network with parameters $\omega$, and \( g_f \) is a monotonic activation function (mapping into the domain of \( f^* \)).  Here, \( V'_\omega \) is implemented as a fully connected network with two hidden layers of size 64 with ReLU activations~\citep{goodfellow2016deep}.   The function $g_f$ is determined by the specific choice of the $f$-divergence~\citep{nowozin2016f}. 
For the Kullback--Leibler (KL) divergence, $g_f$ employs an identity activation, i.e., $g_f(v) = v$. In the case of the Reverse KL divergence, the activation is given by $g_f(v) = -\exp(-v)$, whereas for the Jensen--Shannon (JS) divergence, the activation takes the form $g_f(v) = \log(2) - \log(1 + \exp(-v))$.

In this numerical study, we consider the \ac{USL} in both the backward and forward directions. %
 In the backward direction, we wish to minimize $D_f(P_{T^{-1}(Y,Z)} \parallel P_X)$ using the following joint objective function
\begin{equation}
\mathcal{F}(\theta, \omega) = 
\mathbb{E}[V_\omega(T^{-1}_\theta(Y,Z)] -
\mathbb{E} [f^*(V_\omega(X))],
\label{eq:vdm-objective-backward}
\end{equation}
where $T^{-1}$ and $V$ are parametrized by $\theta$ and $\omega$, respectively. In particular, in this case study, we refer to the following regularized empirical estimation as the \textit{backward loss}
\begin{equation}
\mathcal{L}_{\text{backward}}(\theta, \omega) = \frac{1}{N}\sum_{i=1}^{N} V_\omega(T^{-1}_\theta(Y_i, Z_i)) -\frac{1}{N}\sum_{i=1}^{N} f^*( V_\omega(X_i)) + \frac{1}{N}\sum_{i=1}^{N} \|T^{-1}_\theta(Y_i, Z_i) - X_i\|^2
\label{eq:vdm-objective-backward-empirical}
\end{equation}
Similarly, in the forward direction, we wish to minimize $ D_f(P_{Y,Z} \parallel P_{T(X)} ) $, using the the objective function
\begin{equation}
\mathcal{F}(\theta, \omega) = \mathbb{E}[V_\omega(Y,Z)] -\mathbb{E} [f^*(V_\omega(T_\theta(X)))].
\label{eq:vdm-objective-forward}
\end{equation}
Here, the following empirical estimation is called the \textit{forward loss}
\begin{equation}
\mathcal{L}_{\text{forward}}(\theta, \omega) = \frac{1}{N}\sum_{i=1}^{N} V_\omega(Y_i, Z_i)-\frac{1}{N}\sum_{i=1}^{N} f^*(V_\omega(T_\theta(X_i)))  +  \frac{1}{N}\sum_{i=1}^{N} \|T_{\theta}(X_i) - (Y_i,Z_i)\|^2 \, .
\label{eq:vdm-objective-backward-empirical-2}
\end{equation}
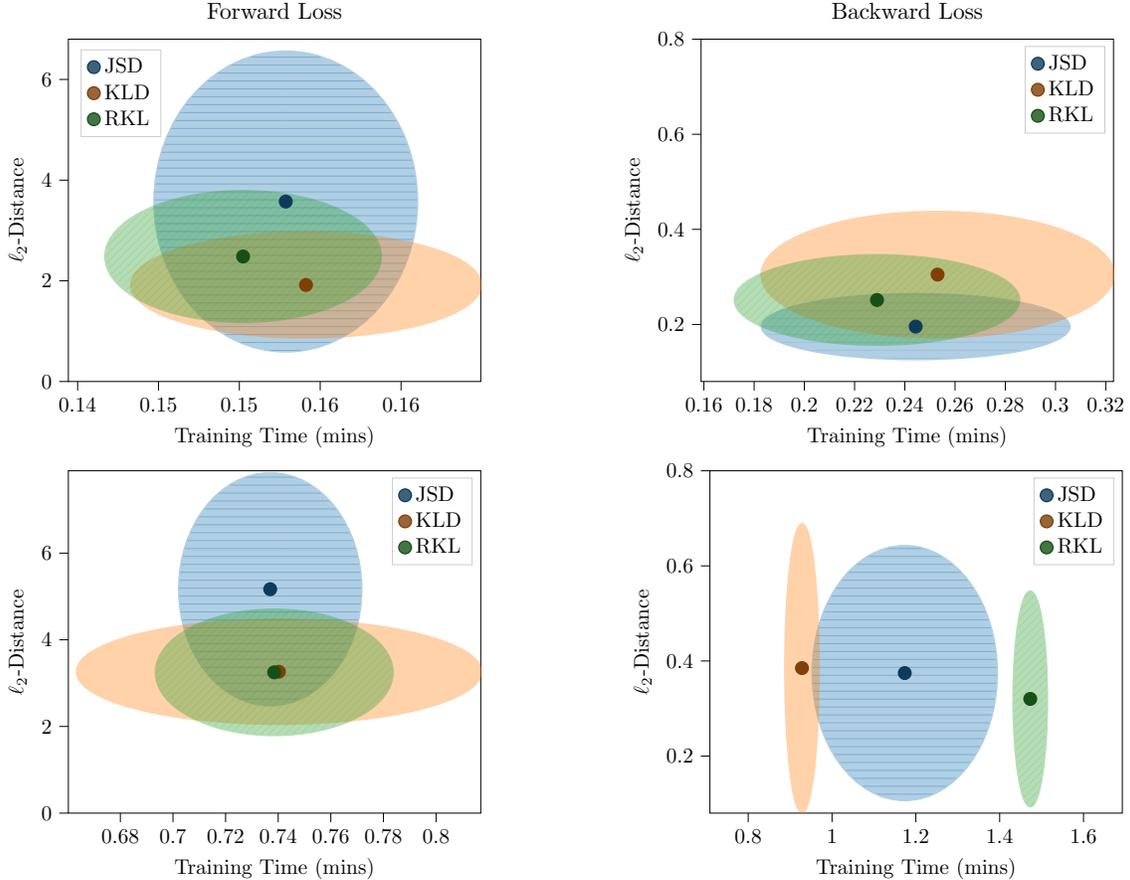
\begin{figure}[!tb]
    \centering
    \begin{subfigure}{0.49\linewidth}
        \centering
        \begin{tikzpicture}[scale=0.8]

\definecolor{darkgray176}{RGB}{176,176,176}
\definecolor{darkgreen228022}{RGB}{22,80,22}
\definecolor{darkorange25512714}{RGB}{255,127,14}
\definecolor{darkslategray155990}{RGB}{15,59,90}
\definecolor{forestgreen4416044}{RGB}{44,160,44}
\definecolor{lightgray204}{RGB}{204,204,204}
\definecolor{saddlebrown127637}{RGB}{127,63,7}
\definecolor{steelblue31119180}{RGB}{31,119,180}

\begin{axis}[
legend cell align={left},
legend style={
  fill opacity=0.8,
  draw opacity=1,
  text opacity=1,
  at={(0.03,0.97)},
  anchor=north west,
  draw=lightgray204
},
tick align=outside,
tick pos=left,
x grid style={darkgray176},
xlabel={Training Time (mins)},
xmin=0.139425517149703, xmax=0.164928278393846,
xtick style={color=black},
y grid style={darkgray176},
ylabel={$\ell_2$-Distance},
ymin=0, ymax=6.8,
ytick style={color=black},
title={\fontsize{11}{17}\selectfont Forward Loss}
]
\draw[draw=steelblue31119180,fill=steelblue31119180,opacity=0.35,very thick,postaction={pattern=horizontal lines, pattern color=steelblue31119180, fill opacity=0.35}] (axis cs:0.15286987281981,3.57493996052515) ellipse (0.00811683077174429 and 2.97583139794213);
\draw[draw=darkorange25512714,fill=darkorange25512714,opacity=0.35,very thick] (axis cs:0.154122326866029,1.9177303654807) ellipse (0.0108059515278173 and 1.04068247619129);
\draw[draw=forestgreen4416044,fill=forestgreen4416044,opacity=0.35,very thick,postaction={pattern=north east lines, pattern color=forestgreen4416044, fill opacity=0.35}] (axis cs:0.150231468677521,2.482901598726) ellipse (0.00850995884701292 and 1.2970025851613);
\addplot [semithick, darkslategray155990, mark=*, mark size=3, mark options={solid}, only marks]
table {%
0.15286987281981 3.57493996052515
};
\addlegendentry{JSD}
\addplot [semithick, saddlebrown127637, mark=*, mark size=3, mark options={solid}, only marks]
table {%
0.154122326866029 1.9177303654807
};
\addlegendentry{KLD}
\addplot [semithick, darkgreen228022, mark=*, mark size=3, mark options={solid}, only marks]
table {%
0.150231468677521 2.482901598726
};
\addlegendentry{RKL}
\end{axis}

\end{tikzpicture}
        \label{fig:fdiv_3}
    \end{subfigure}
    \hfill
    \begin{subfigure}{0.49\linewidth}
        \centering
        \begin{tikzpicture}[scale=0.8]

\definecolor{darkgray176}{RGB}{176,176,176}
\definecolor{darkgreen228022}{RGB}{22,80,22}
\definecolor{darkorange25512714}{RGB}{255,127,14}
\definecolor{darkslategray155990}{RGB}{15,59,90}
\definecolor{forestgreen4416044}{RGB}{44,160,44}
\definecolor{lightgray204}{RGB}{204,204,204}
\definecolor{saddlebrown127637}{RGB}{127,63,7}
\definecolor{steelblue31119180}{RGB}{31,119,180}

\begin{axis}[
legend cell align={left},
legend style={fill opacity=0.8, draw opacity=1, text opacity=1, draw=lightgray204},
tick align=outside,
tick pos=left,
x grid style={darkgray176},
xlabel={Training Time (mins)},
xmin=0.15888004486044, xmax=0.32309066612102,
xtick style={color=black},
y grid style={darkgray176},
ylabel={$\ell_2$-Distance},
ymin=0.08, ymax=0.8,
ytick style={color=black},
title={\fontsize{11}{17}\selectfont Backward Loss}
]
\draw[draw=steelblue31119180,fill=steelblue31119180,opacity=0.35,very thick,postaction={pattern=horizontal lines, pattern color=steelblue31119180, fill opacity=0.35}] (axis cs:0.2443229334695,0.195491413275401) ellipse (0.061237907462851 and 0.0683092807552644);
\draw[draw=darkorange25512714,fill=darkorange25512714,opacity=0.35,very thick] (axis cs:0.253044174209474,0.304880958582674) ellipse (0.0700464919115459 and 0.131798781676307);
\draw[draw=forestgreen4416044,fill=forestgreen4416044,opacity=0.35,very thick,postaction={pattern=north east lines, pattern color=forestgreen4416044, fill opacity=0.35}] (axis cs:0.228926536771986,0.251503958233765) ellipse (0.0565990604541264 and 0.0938564848066086);
\addplot [semithick, darkslategray155990, mark=*, mark size=3, mark options={solid}, only marks]
table {%
0.2443229334695 0.195491413275401
};
\addlegendentry{JSD}
\addplot [semithick, saddlebrown127637, mark=*, mark size=3, mark options={solid}, only marks]
table {%
0.253044174209474 0.304880958582674
};
\addlegendentry{KLD}
\addplot [semithick, darkgreen228022, mark=*, mark size=3, mark options={solid}, only marks]
table {%
0.228926536771986 0.251503958233765
};
\addlegendentry{RKL}
\end{axis}

\end{tikzpicture}
        \label{fig:fdiv_4}
    \end{subfigure}
    \vspace{1em}
    \begin{subfigure}{0.49\linewidth}
        \centering
        \begin{tikzpicture}[scale=0.8]

\definecolor{darkgray176}{RGB}{176,176,176}
\definecolor{darkgreen228022}{RGB}{22,80,22}
\definecolor{darkorange25512714}{RGB}{255,127,14}
\definecolor{darkslategray155990}{RGB}{15,59,90}
\definecolor{forestgreen4416044}{RGB}{44,160,44}
\definecolor{lightgray204}{RGB}{204,204,204}
\definecolor{saddlebrown127637}{RGB}{127,63,7}
\definecolor{steelblue31119180}{RGB}{31,119,180}

\begin{axis}[
legend cell align={left},
legend style={fill opacity=0.8, draw opacity=1, text opacity=1, draw=lightgray204},
tick align=outside,
tick pos=left,
x grid style={darkgray176},
xlabel={Training Time (mins)},
xmin=0.660229012596322, xmax=0.816990059687569,
xtick style={color=black},
y grid style={darkgray176},
ylabel={$\ell_2$-Distance},
ymin=0, ymax=7.9,
ytick style={color=black}
]
\draw[draw=steelblue31119180,fill=steelblue31119180,opacity=0.35,very thick,postaction={pattern=horizontal lines, pattern color=steelblue31119180, fill opacity=0.35}] (axis cs:0.736956180655767,5.16721945149558) ellipse (0.034598220304999 and 2.67456286861783);
\draw[draw=darkorange25512714,fill=darkorange25512714,opacity=0.35,very thick] (axis cs:0.740262891628124,3.26295985115899) ellipse (0.0767271680594448 and 1.19916152622965);
\draw[draw=forestgreen4416044,fill=forestgreen4416044,opacity=0.35,very thick,postaction={pattern=north east lines, pattern color=forestgreen4416044, fill opacity=0.35}] (axis cs:0.738471567063105,3.24842510336921) ellipse (0.0449620738355365 and 1.4448231288365);
\addplot [semithick, darkslategray155990, mark=*, mark size=3, mark options={solid}, only marks]
table {%
0.736956180655767 5.16721945149558
};
\addlegendentry{JSD}
\addplot [semithick, saddlebrown127637, mark=*, mark size=3, mark options={solid}, only marks]
table {%
0.740262891628124 3.26295985115899
};
\addlegendentry{KLD}
\addplot [semithick, darkgreen228022, mark=*, mark size=3, mark options={solid}, only marks]
table {%
0.738471567063105 3.24842510336921
};
\addlegendentry{RKL}
\end{axis}

\end{tikzpicture}
        \label{fig:fdiv_5}
    \end{subfigure}
    \hfill
    \begin{subfigure}{0.49\linewidth}
        \centering
        \begin{tikzpicture}[scale=0.8]

\definecolor{darkgray176}{RGB}{176,176,176}
\definecolor{darkgreen228022}{RGB}{22,80,22}
\definecolor{darkorange25512714}{RGB}{255,127,14}
\definecolor{darkslategray155990}{RGB}{15,59,90}
\definecolor{forestgreen4416044}{RGB}{44,160,44}
\definecolor{lightgray204}{RGB}{204,204,204}
\definecolor{saddlebrown127637}{RGB}{127,63,7}
\definecolor{steelblue31119180}{RGB}{31,119,180}

\begin{axis}[
legend cell align={left},
legend style={fill opacity=0.8, draw opacity=1, text opacity=1, draw=lightgray204},
tick align=outside,
tick pos=left,
x grid style={darkgray176},
xlabel={Training Time (mins)},
xmin=0.708695396720536, xmax=1.69262751512478,
xtick style={color=black},
y grid style={darkgray176},
ylabel={$\ell_2$-Distance},
ymin=0.08, ymax=0.8,
ytick style={color=black}
]
\draw[draw=steelblue31119180,fill=steelblue31119180,opacity=0.35,very thick,postaction={pattern=horizontal lines, pattern color=steelblue31119180, fill opacity=0.35}] (axis cs:1.17352908509118,0.37458312582402) ellipse (0.219646879239569 and 0.26696520582551);
\draw[draw=darkorange25512714,fill=darkorange25512714,opacity=0.35,very thick] (axis cs:0.928342275960105,0.38503199886708) ellipse (0.0396913990892333 and 0.302501204822745);
\draw[draw=forestgreen4416044,fill=forestgreen4416044,opacity=0.35,very thick,postaction={pattern=north east lines, pattern color=forestgreen4416044, fill opacity=0.35}] (axis cs:1.47298063588521,0.320186023201261) ellipse (0.0397846994859983 and 0.225721918756054);
\addplot [semithick, darkslategray155990, mark=*, mark size=3, mark options={solid}, only marks]
table {%
1.17352908509118 0.37458312582402
};
\addlegendentry{JSD}
\addplot [semithick, saddlebrown127637, mark=*, mark size=3, mark options={solid}, only marks]
table {%
0.928342275960105 0.38503199886708
};
\addlegendentry{KLD}
\addplot [semithick, darkgreen228022, mark=*, mark size=3, mark options={solid}, only marks]
table {%
1.47298063588521 0.320186023201261
};
\addlegendentry{RKL}
\end{axis}

\end{tikzpicture}
        \label{fig:fdiv_6}
    \end{subfigure}
    \vspace{1em}
    \caption{
    Comparison of $f$-divergence performance for coupling-based and iResNet architectures in the inverse kinematics case study.
    The x-axis shows training time and the y-axis shows $\ell_2$-distance between true and generated end-effector positions. Ellipses are centered at the mean values, with horizontal and vertical diameters indicating the standard deviations in training time and $\ell_2$-distance. 
    The right and left columns correspond to forward loss~\eqref{eq:vdm-objective-forward} and backward losses~\eqref{eq:vdm-objective-backward-empirical}, respectively.
    The top row corresponds to the coupling-based architecture whereas the figures on the bottom row generated based on iResNet layers.}
    \label{fig:f-div_comparison}
\end{figure}

The training scheme, which incorporates f-divergences, differs from the configuration described in the previous section. Unlike models trained with the other cost functions, this setup includes both a discriminator network and a generator network (cf.~\citet{nowozin2016f}). 
Both backward and forward losses are obtained by finding the saddle point of $\mathcal{F}(\theta, \omega)$:
\begin{equation}
\theta^*, \omega^* = \arg \min_{\theta} \max_{\omega} \mathcal{F}(\theta, \omega).
\end{equation}
Each training epoch includes gradient updates for both the discriminator and the generator, and both networks were trained using the Adam optimizer with a learning rate of $2 \times 10^{-4}$.

To evaluate the  impact of different f-divergences we consider the inverse kinematics problem, describe in Section~\ref{sec:observation-1-prior}, aiming to recover joint parameters from the end-effector position.
We train INN models using each of the previously discussed f-divergence objectives to investigate their influence on the quality of generated samples. 
Training performance depends on multiple factors, including the network architecture, and, in this case study, we consider both coupling-based and residual invertible architectures.
Both coupling-based and iResNet architecture  comprise four layers, each with affine transformations parameterized by feedforward neural networks with a hidden size of 128. The initial weights of the network were initialized by sampling values from a standard normal distribution. The latent dimension was chosen to be $14$ for both configurations.

Figure \ref{fig:f-div_comparison} compares the mean square error and the computation times of the f-divergence losses. We take the mean error over 20 runs 
of the f-divergences under the given task. As depicted in this figure, training performance is influenced by many factors, including the network architecture, forward and backward training, as well as the capacity of the critic class. Also, 
      different variational formulations of f-divergences may be better suited to different settings.
     As shown in Figure \ref{fig:f-div_comparison}, the backward loss in equation~\eqref{eq:vdm-objective-backward-empirical} yields better performance, and coupling-based designs are more efficient.

\subsubsection{Effect of latent dimension in invertible neural networks}
\label{sec:observation-3}

\begin{figure}[t]
  \centering
  \begin{minipage}{0.48\textwidth}
    \centering
    \resizebox{\linewidth}{!}{%
    \begin{tikzpicture}%[font=\LARGE]

\definecolor{darkgray176}{RGB}{176,176,176}
\definecolor{green}{RGB}{0,128,0}
\definecolor{lightgray204}{RGB}{204,204,204}
\definecolor{orange}{RGB}{255,165,0}

\begin{axis}[
width=10.5cm,  % adjust as needed
height=9.7cm,  % adjust as needed
scale only axis,
legend cell align={left},
tick label style={font=\large},
label style={font=\Large},
legend style={fill opacity=0.8, draw opacity=1, text opacity=1, draw=lightgray204},
tick align=outside,
tick pos=left,
x grid style={darkgray176},
xlabel={Latent Dimension},
xmin=-0.5, xmax=9.5,
xtick style={color=black, font=\Large},
xticklabel style={font=\large},
xtick={0,1,2,3,4,5,6,7,8,9},
xticklabels={
  \(\displaystyle 2\),
  \(\displaystyle 6\),
  \(\displaystyle 10\),
  \(\displaystyle 14\),
  \(\displaystyle 18\),
  \(\displaystyle 22\),
  \(\displaystyle 26\),
  \(\displaystyle 30\),
  \(\displaystyle 34\),
  \(\displaystyle 38\)
},
xtick style={font=\Large},
y grid style={darkgray176},
ylabel=\textcolor{green}{KLD, $\ell_2$-Distance},
title = {Effect of latent dimension},
title style={font=\large,yshift=0.3cm},
ymin=-0.10657880802584, ymax=3.95228865408607,
ytick style={color=black, font=\Large},
yticklabel style={color=black, font=\Large}
]
\path [draw=green, draw opacity=0.8, very thick]
(axis cs:0,0.0779151675247011)
--(axis cs:0,0.475807135303815);

\path [draw=green, draw opacity=0.8, very thick]
(axis cs:1,1.63154910965001)
--(axis cs:1,3.76779467853553);

\path [draw=green, draw opacity=0.8, very thick]
(axis cs:2,0.52713570667102)
--(axis cs:2,1.12396696242778);

\path [draw=green, draw opacity=0.8, very thick]
(axis cs:3,0.209545804746449)
--(axis cs:3,0.429104186035693);

\path [draw=green, draw opacity=0.8, very thick]
(axis cs:4,0.270602234121826)
--(axis cs:4,0.540120175729195);

\path [draw=green, draw opacity=0.8, very thick]
(axis cs:5,0.148376725170584)
--(axis cs:5,0.344970426477847);

\path [draw=green, draw opacity=0.8, very thick]
(axis cs:6,0.238666679477319)
--(axis cs:6,0.52444616262801);

\path [draw=green, draw opacity=0.8, very thick]
(axis cs:7,0.2165593426852)
--(axis cs:7,0.617054666791643);

\path [draw=green, draw opacity=0.8, very thick]
(axis cs:8,0.777243125651564)
--(axis cs:8,2.39366130992061);

\path [draw=green, draw opacity=0.8, very thick]
(axis cs:9,0.108772109040902)
--(axis cs:9,0.320387755032806);

\addplot [semithick, green, opacity=0.8, mark=-, mark size=5, mark options={solid}, only marks, forget plot]
table {%
0 0.0779151675247011
1 1.63154910965001
2 0.52713570667102
3 0.209545804746449
4 0.270602234121826
5 0.148376725170584
6 0.238666679477319
7 0.2165593426852
8 0.777243125651564
9 0.108772109040902
};
\addplot [semithick, green, opacity=0.8, mark=-, mark size=5, mark options={solid}, only marks, forget plot]
table {%
0 0.475807135303815
1 3.76779467853553
2 1.12396696242778
3 0.429104186035693
4 0.540120175729195
5 0.344970426477847
6 0.52444616262801
7 0.617054666791643
8 2.39366130992061
9 0.320387755032806
};
\addplot [semithick, green, opacity=0.8, mark=*, mark size=4, mark options={solid}, only marks, forget plot]
table {%
0 0.276861151414258
1 2.69967189409277
2 0.825551334549399
3 0.319324995391071
4 0.405361204925511
5 0.246673575824215
6 0.381556421052665
7 0.416807004738422
8 1.58545221778609
9 0.214579932036854
};
\end{axis}

\begin{axis}[
width=10.5cm,  % adjust as needed
height=9.7cm,
scale only axis,
axis y line=right,
hide x axis,
log basis y={10},
tick align=outside,
y grid style={darkgray176},
ylabel=\textcolor{orange}{Sinkhorn, $\ell_2$-Distance},
ylabel style={font=\LARGE},
ymin=0.0248455786863937, ymax=0.724893668947103,
ymode=log,
ytick pos=right,
ytick style={color=black},
yticklabel style={anchor=west, font=\Large}  % Instead of just {anchor=west}
]
\path [draw=orange, draw opacity=0.8, very thick]
(axis cs:0,0.0707867849795591)
--(axis cs:0,0.205327986074345);

\path [draw=orange, draw opacity=0.8, very thick]
(axis cs:1,0.0289628417009399)
--(axis cs:1,0.18864007329657);

\path [draw=orange, draw opacity=0.8, very thick]
(axis cs:2,0.0295982289881933)
--(axis cs:2,0.188642436549777);

\path [draw=orange, draw opacity=0.8, very thick]
(axis cs:3,0.0640699136115256)
--(axis cs:3,0.248941985978967);

\path [draw=orange, draw opacity=0.8, very thick]
(axis cs:4,0.038180808581057)
--(axis cs:4,0.185649536550045);

\path [draw=orange, draw opacity=0.8, very thick]
(axis cs:5,0.0341302229180222)
--(axis cs:5,0.177088288856404);

\path [draw=orange, draw opacity=0.8, very thick]
(axis cs:6,0.0570021466839881)
--(axis cs:6,0.219012134486721);

\path [draw=orange, draw opacity=0.8, very thick]
(axis cs:7,0.138016059285118)
--(axis cs:7,0.621845151696886);

\path [draw=orange, draw opacity=0.8, very thick]
(axis cs:8,0.0625437977058547)
--(axis cs:8,0.257805326510043);

\path [draw=orange, draw opacity=0.8, very thick]
(axis cs:9,0.0760199317619914)
--(axis cs:9,0.294187067165261);

\addplot [semithick, orange, opacity=0.8, mark=-, mark size=5, mark options={solid}, only marks]
table {%
0 0.0707867849795591
1 0.0289628417009399
2 0.0295982289881933
3 0.0640699136115256
4 0.038180808581057
5 0.0341302229180222
6 0.0570021466839881
7 0.138016059285118
8 0.0625437977058547
9 0.0760199317619914
};
\addplot [semithick, orange, opacity=0.8, mark=-, mark size=5, mark options={solid}, only marks]
table {%
0 0.205327986074345
1 0.18864007329657
2 0.188642436549777
3 0.248941985978967
4 0.185649536550045
5 0.177088288856404
6 0.219012134486721
7 0.621845151696886
8 0.257805326510043
9 0.294187067165261
};
\addplot [semithick, orange, opacity=0.8, mark=square, mark size=4, mark options={solid,fill opacity=0}, only marks]
table {%
0 0.138057385526952
1 0.108801457498755
2 0.109120332768985
3 0.156505949795246
4 0.111915172565551
5 0.105609255887213
6 0.138007140585354
7 0.379930605491002
8 0.160174562107949
9 0.185103499463626
};
\end{axis}

\end{tikzpicture}
  }
    \captionof{figure}{This diagram illustrates the impact of varying latent dimensions on the posterior samples. The  circular dots represent the variational formulation of the forward KL divergence and the hollow squares correspond to the Sinkhorn divergence.  The x-axis shows latent dimension and the y-axis shows $\ell_2$-distance between true and generated end-effector positions. 
    }
    \label{obs:1}
  \end{minipage}\hfill
  \begin{minipage}{0.48\textwidth}
    \centering
    \resizebox{\linewidth}{!}{%
    \begin{tikzpicture}[scale=0.4]

\definecolor{crimson2143940}{RGB}{214,39,40}
\definecolor{darkgray176}{RGB}{176,176,176}
\definecolor{darkgreen228022}{RGB}{22,80,22}
\definecolor{darkorange25512714}{RGB}{255,127,14}
\definecolor{darkslategray155990}{RGB}{15,59,90}
\definecolor{forestgreen4416044}{RGB}{44,160,44}
\definecolor{lightgray204}{RGB}{204,204,204}
\definecolor{maroon1071920}{RGB}{107,19,20}
\definecolor{saddlebrown127637}{RGB}{127,63,7}
\definecolor{steelblue31119180}{RGB}{31,119,180}

\begin{axis}[
legend cell align={left},
legend columns=2,
legend style={fill opacity=0.8, draw opacity=1, text opacity=1, draw=lightgray204},
tick align=outside,
tick pos=left,
x grid style={darkgray176},
xlabel={Training Time (mins)},
xmin=0.344824008336143, xmax=0.523028865503886,
xtick style={color=black},
y grid style={darkgray176},
ylabel={$\ell_2$-Distance},
ymin=0.01, ymax=0.91,
ytick style={color=black},
title = {\scriptsize{Effect of Entropy Regularization parameter}}
]
\path [draw=steelblue31119180, draw opacity=0.8, very thick]
(axis cs:0.473028865503886,0.0622954262154443)
--(axis cs:0.473028865503886,0.249765211627597);

\path [draw=darkorange25512714, draw opacity=0.8, very thick]
(axis cs:0.454964831329527,0.0579206794500351)
--(axis cs:0.454964831329527,0.241661194534529);

\path [draw=forestgreen4416044, draw opacity=0.8, very thick]
(axis cs:0.435489033706605,0.0201806484588555)
--(axis cs:0.435489033706605,0.127420417432274);

\path [draw=crimson2143940, draw opacity=0.8, very thick]
(axis cs:0.400297844220722,0.0129211258497976)
--(axis cs:0.400297844220722,0.050723251576225);

\path [draw=steelblue31119180, draw opacity=0.8, very thick]
(axis cs:0.41825940381913,0.0622943593632607)
--(axis cs:0.41825940381913,0.249755858310631);

\path [draw=darkorange25512714, draw opacity=0.8, very thick]
(axis cs:0.406715494110471,0.0580104291439057)
--(axis cs:0.406715494110471,0.250879304040046);

\path [draw=forestgreen4416044, draw opacity=0.8, very thick]
(axis cs:0.397094922595554,0.836325191759637)
--(axis cs:0.397094922595554,0.890563970076896);

\path [draw=crimson2143940, draw opacity=0.8, very thick]
(axis cs:0.394824008336143,0.85375086377774)
--(axis cs:0.394824008336143,0.902284753641912);

\addplot [semithick, steelblue31119180, opacity=0.8, mark=-, mark size=5, mark options={solid}, only marks]
table {%
0.473028865503886 0.0622954262154443
};
\addlegendentry{$0.01$}
\addplot [semithick, steelblue31119180, opacity=0.8, mark=-, mark size=5, mark options={solid}, only marks, forget plot]
table {%
0.473028865503886 0.249765211627597
};
\addplot [semithick, darkorange25512714, opacity=0.8, mark=-, mark size=5, mark options={solid}, only marks]
table {%
0.454964831329527 0.0579206794500351
};
\addlegendentry{$0.1$}
\addplot [semithick, darkorange25512714, opacity=0.8, mark=-, mark size=5, mark options={solid}, only marks, forget plot]
table {%
0.454964831329527 0.241661194534529
};
\addplot [semithick, forestgreen4416044, opacity=0.8, mark=-, mark size=5, mark options={solid}, only marks]
table {%
0.435489033706605 0.0201806484588555
};
\addlegendentry{$1$}
\addplot [semithick, forestgreen4416044, opacity=0.8, mark=-, mark size=5, mark options={solid}, only marks, forget plot]
table {%
0.435489033706605 0.127420417432274
};
\addplot [semithick, crimson2143940, opacity=0.8, mark=-, mark size=5, mark options={solid}, only marks]
table {%
0.400297844220722 0.0129211258497976
};
\addlegendentry{$10$}
\addplot [semithick, crimson2143940, opacity=0.8, mark=-, mark size=5, mark options={solid}, only marks, forget plot]
table {%
0.400297844220722 0.050723251576225
};
\addplot [semithick, steelblue31119180, opacity=0.8, mark=-, mark size=5, mark options={solid}, only marks, forget plot]
table {%
0.41825940381913 0.0622943593632607
};
\addplot [semithick, steelblue31119180, opacity=0.8, mark=-, mark size=5, mark options={solid}, only marks, forget plot]
table {%
0.41825940381913 0.249755858310631
};
\addplot [semithick, darkorange25512714, opacity=0.8, mark=-, mark size=5, mark options={solid}, only marks, forget plot]
table {%
0.406715494110471 0.0580104291439057
};
\addplot [semithick, darkorange25512714, opacity=0.8, mark=-, mark size=5, mark options={solid}, only marks, forget plot]
table {%
0.406715494110471 0.250879304040046
};
\addplot [semithick, forestgreen4416044, opacity=0.8, mark=-, mark size=5, mark options={solid}, only marks, forget plot]
table {%
0.397094922595554 0.836325191759637
};
\addplot [semithick, forestgreen4416044, opacity=0.8, mark=-, mark size=5, mark options={solid}, only marks, forget plot]
table {%
0.397094922595554 0.890563970076896
};
\addplot [semithick, crimson2143940, opacity=0.8, mark=-, mark size=5, mark options={solid}, only marks, forget plot]
table {%
0.394824008336143 0.85375086377774
};
\addplot [semithick, crimson2143940, opacity=0.8, mark=-, mark size=5, mark options={solid}, only marks, forget plot]
table {%
0.394824008336143 0.902284753641912
};
\addplot [semithick, darkslategray155990, opacity=0.8, mark=*, mark size=4, mark options={solid}, only marks, forget plot]
table {%
0.473028865503886 0.156030318921521
};
\addplot [semithick, saddlebrown127637, opacity=0.8, mark=*, mark size=4, mark options={solid}, only marks, forget plot]
table {%
0.454964831329527 0.149790936992282
};
\addplot [semithick, darkgreen228022, opacity=0.8, mark=*, mark size=4, mark options={solid}, only marks, forget plot]
table {%
0.435489033706605 0.0738005329455648
};
\addplot [semithick, maroon1071920, opacity=0.8, mark=*, mark size=4, mark options={solid}, only marks, forget plot]
table {%
0.400297844220722 0.0318221887130113
};
\addplot [semithick, darkslategray155990, opacity=0.8, mark=square, mark size=4, mark options={solid,fill opacity=0}, only marks, forget plot]
table {%
0.41825940381913 0.156025108836946
};
\addplot [semithick, saddlebrown127637, opacity=0.8, mark=square, mark size=4, mark options={solid,fill opacity=0}, only marks, forget plot]
table {%
0.406715494110471 0.154444866591976
};
\addplot [semithick, darkgreen228022, opacity=0.8, mark=square, mark size=4, mark options={solid,fill opacity=0}, only marks, forget plot]
table {%
0.397094922595554 0.863444580918267
};
\addplot [semithick, maroon1071920, opacity=0.8, mark=square, mark size=4, mark options={solid,fill opacity=0}, only marks, forget plot]
table {%
0.394824008336143 0.878017808709826
};
\end{axis}

\end{tikzpicture}
  }
    \captionof{figure}{This diagram illustrates the effect of entropy regularization on the Wasserstein distance. The circular dots represent the debiased Sinkhorn divergence, while the hollow squares indicate the Sinkhorn approximation of the 
    $W_2$ distance. The x-axis shows training time and the y-axis shows the $\ell_2$-distance between true and generated end-effector positions. 
    }
    \label{fig:obs_4}
  \end{minipage}
\end{figure}

To investigate the impact of latent dimension size on training and inference, we train models with different latent dimensions (which can be tuned by appropriately changing $\dx$ by padding). The loss functions employed are the Sinkhorn divergence and the variational formulation of  the forward \ac{KL} divergence.
Also, similar to previous examples, the \ac{IK} problem described in Section~\ref{sec:observation-1-prior} is used for this case study.  
 As illustrated in Figure~\ref{obs:1}, we observe an
 approximately local $U$-shaped relationship between latent dimension and inference performance, where 
        the overall trend is  multimodal. 
        Additionally, training with the Sinkhorn divergence exhibits greater stability compared to the forward \ac{KL} divergence.
While in this case study we focus on the dimension of latent variable, the work in~\citet{hagemann2021stabilizing} investigated the number of modes of $P_Z$, demonstrating that parameterizing $P_Z$ in dependence on the labels $Y$ is helpful  for obtaining stable \acp{INN} with reasonable Lipschitz constants.

\subsubsection{Effect of entropic regularization on Sinkhorn divergence and Wasserstein distance approximation}
\label{sec:observation-4}

To demonstrate the effect of entropic regularization on the debiased Sinkhorn divergence and the Sinkhorn approximation of the Wasserstein distance, we vary the entropic regularization parameter across experiments. The model is trained for the inverse kinematics problem same training configuration described in Section~\ref{sec:observation-1-prior}.  As illustrated in Figure~\ref{fig:obs_4}, a decrease in entropic regularization tends to improves sample quality but increases training time.  Additionally, training with the Sinkhorn divergence exhibits greater stability compared to the Sinkhorn approximation of the Wasserstein distance.

\subsubsection{Effect of support set of the latent distribution and the data distribution}
\label{sec:observation-5}

We investigate the effect of the support set of the latent distribution when training with the variational formulation of the KL divergence and the Sinkhorn divergence. In this experiment, the data distribution is fixed as \( \mathcal{U}(0,1) \). The latent distributions considered for comparison are \( \mathcal{U}(0,1) \), \( \mathcal{U}(3,4) \), \( \mathcal{U}(5,6)  \), \( \mathcal{U}(10,11)\) and \( \mathcal{U} (15,16) \). 
 As illustrated in Figure~\ref{support} and Table~\ref{table_obs6}, the quality of the generated samples deteriorates as the support of the latent distribution becomes increasingly disjoint from that of the true distribution. This effect is observed for both cost functions; however, the model trained using the \( f \)-divergence formulation of the KL divergence exhibits a more pronounced degradation compared to the model trained with the Sinkhorn divergence. 
Thus, our empirical results support the hypothesis that models trained with the Sinkhorn divergence are more robust to mismatches between the supports of the latent and prior distributions.
\begin{figure}[!ht]
    \centering
    \input{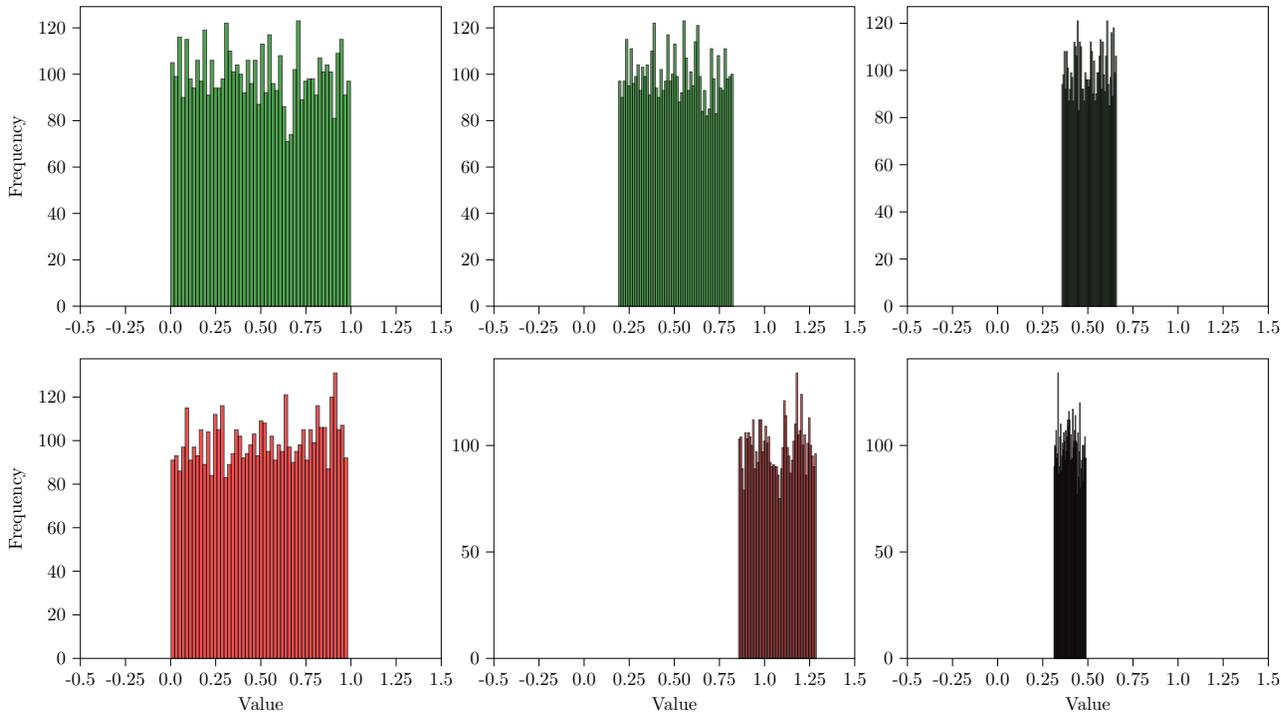}
    \caption{
\textbf{Top Row:} Samples from NF models trained using the Sinkhorn divergence. The left image corresponds to a latent distribution \( \mathcal{U}(0,1) \) used to simulate the data, the middle image uses $\mathcal{U}(3,4)$, and the right image uses $\mathcal{U}(5,6)$. 
\textbf{Bottom Row:} Samples from NF models trained using an $f$-divergence approximation of the forward KL divergence. The latent distributions mirror those used in the Sinkhorn divergence experiments.
}

    \label{support}
\end{figure}

\begin{table}[ht!]
  \centering 
  \small
  \renewcommand{\arraystretch}{1.25}
  \begin{tabular}{|l|c|c|c|}
    \hline
    \diagbox[width=9em,innerleftsep=0cm,innerrightsep=0.2cm]{~~~$P_Z$}{~~Divergence} & KL Divergence & Sinkhorn Divergence  \\
    \hline
    \rowcolor{Turquoise!12!white} 
    $\mathcal{U}(0,1)$ & 0.003 & 0.001 \\
    \hline
    $\mathcal{U}(3,4)$ & 1.22 & 0.24 \\
    \hline
    \rowcolor{Turquoise!12!white} 
    $\mathcal{U}(5,6)$ &  1.36 & 0.91  \\
    \hline
    $\mathcal{U}(10,11)$ & 3.07 & 1.28 \\
    \hline
    \rowcolor{Turquoise!12!white} 
    $\mathcal{U}(15,16)$ & 1.43 & 1.33  \\
    \hline
    
  \end{tabular}
  \vspace{2mm}
  \caption{This table shows the \ac{MMD} between samples from the true distribution $\mathcal{U}(0,1)$ and those generated by the trained model.
}
  \label{table_obs6}
\end{table}

\subsubsection{Effect of moments of $X$ on the performance of \ac{NF}}
\label{sec:observation-6}
In this experiment, we investigate how the number of finite moments of the $P_X$ affects the performance of the \ac{NF}. To ensure that the moment of $X$ can be adjusted, we assumed $X$ has a Pareto distribution~\citep{arnold2014pareto}. That is, the dataset was generated by sampling from a Pareto distribution characterized by a shape parameter $\alpha$ and a scale parameter $x_m$. In this case the probability density function of $X$ is given by
\begin{equation}
    f(X; x_m, \alpha) = 
    \begin{cases}
        \dfrac{\alpha x_m^{\alpha}}{X^{\alpha + 1}}, & X \geq x_m, \\
        0, & X < x_m,
    \end{cases}
\end{equation}
where $\alpha > 0$ and $x_m > 0$. Also, the \ac{NF} trained with the variational formulation of KL-Divergence. The latent space was defined by a $10$-dimensional random vector drawn from a standard normal distribution. 

Here, the parameter $\alpha$ is varied from 1 to 10, and,  as illustrated in Fig.~\ref{pareto_dist},  the Wasserstein-1 distance between the true and generated distributions decreased. This is consistent with our theoretical results in~\Cref{subsec:variational-nf} that as the number of finite moment of $X$ increases, the $W_1$ distance between the true and estimated distributions decreases.

\begin{figure}[!ht]
    \centering
    \begin{tikzpicture}
\begin{semilogyaxis}[
    width=12cm,
    height=7cm,
    xlabel={$\alpha$},
    ylabel={${W}_1$ Distance},
    xlabel style={font=\large},
    ylabel style={font=\large},
    xmin=0.5, xmax=10.5,
    ymin=0.04, ymax=40,
    grid=none,
    legend pos=north east,
    legend style={font=\small, draw=black, fill=white},
    legend cell align=left,
    mark size=2.5pt,
    xticklabel style={font=\scriptsize},
    yticklabel style={font=\scriptsize},
    major tick length=2pt,
]
\addplot[blue!70, thick, mark=*, solid] coordinates {
    (1, 28.5)
    (2, 0.74)
    (3, 0.106)
    (4, 0.105)
    (5, 0.134)
    (6, 0.168)
    (7, 0.153)
    (8, 0.071)
    (9, 0.069)
    (10, 0.054)
};
\addlegendentry{config 1}
\addplot[red!70, very thick,  mark=triangle, mark size = 4pt] coordinates {
    (1, 9.1)
    (2, 0.58)
    (3, 0.398)
    (4, 0.301)
    (5, 0.224)
    (6, 0.184)
    (7, 0.161)
    (8, 0.132)
    (9, 0.103)
    (10, 0.081)
};
\addlegendentry{config 2}
\addplot[cyan!70!blue, very thick,  mark=square, mark size=3pt] coordinates {
    (1, 1.26)
    (2, 0.58)
    (3, 0.416)
    (4, 0.316)
    (5, 0.245)
    (6, 0.206)
    (7, 0.181)
    (8, 0.146)
    (9, 0.117)
    (10, 0.092)
};
\addlegendentry{config 3}
\end{semilogyaxis}
\end{tikzpicture}
    \caption{This figure illustrates the Wasserstein-1 distance for increasing values of $\alpha$ in the Pareto distribution, which corresponds to increasing number of finite moments. The y-axis is set to a logarithmic (base 10) scale. Configurations 1–3 specify \acp{INN} with {2, 6, 8} coupling layers and subnetworks of hidden sizes {128, 64, 128}, respectively. 
    }
    \label{pareto_dist}
\end{figure}
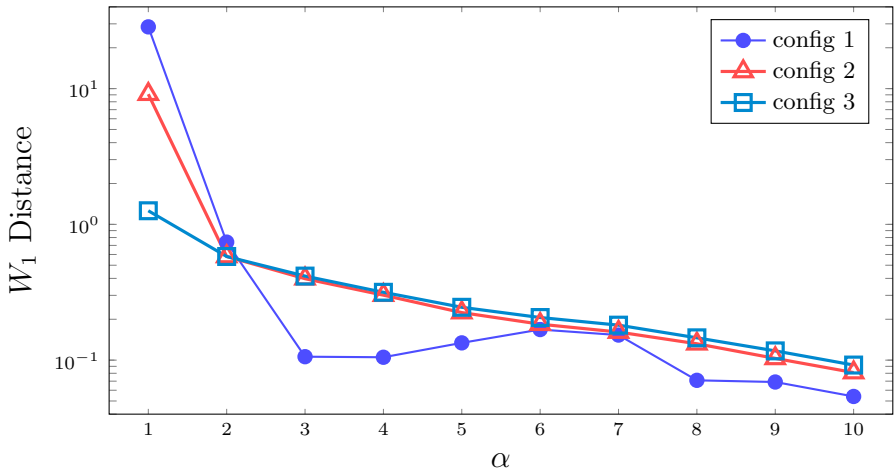

\subsection{Geoacoustic Inversion}
\label{subsec:geoacoustic_inversion}
To demonstrate the practical relevance of our approach,  we employ the insights gained from the previous  exploratory experiments to the \ac{GI} setting of the SWellEx-96 experiment~\citep{yardim2010geoacoustic,MeyGem:J21} conducted off the coast of San Diego near Point Loma. This experimental setting  is one of the most used, documented, and understood studies in the undersea acoustics community.\footnote{see \url{http://swellex96.ucsd.edu/}} Here, synthetic data corresponding to this experiment is used to simplify the task for the initial application of an \ac{INN}-based framework in ocean acoustics.

\begin{figure}[t]
    \centering      \includegraphics[scale=0.65,draft=false]{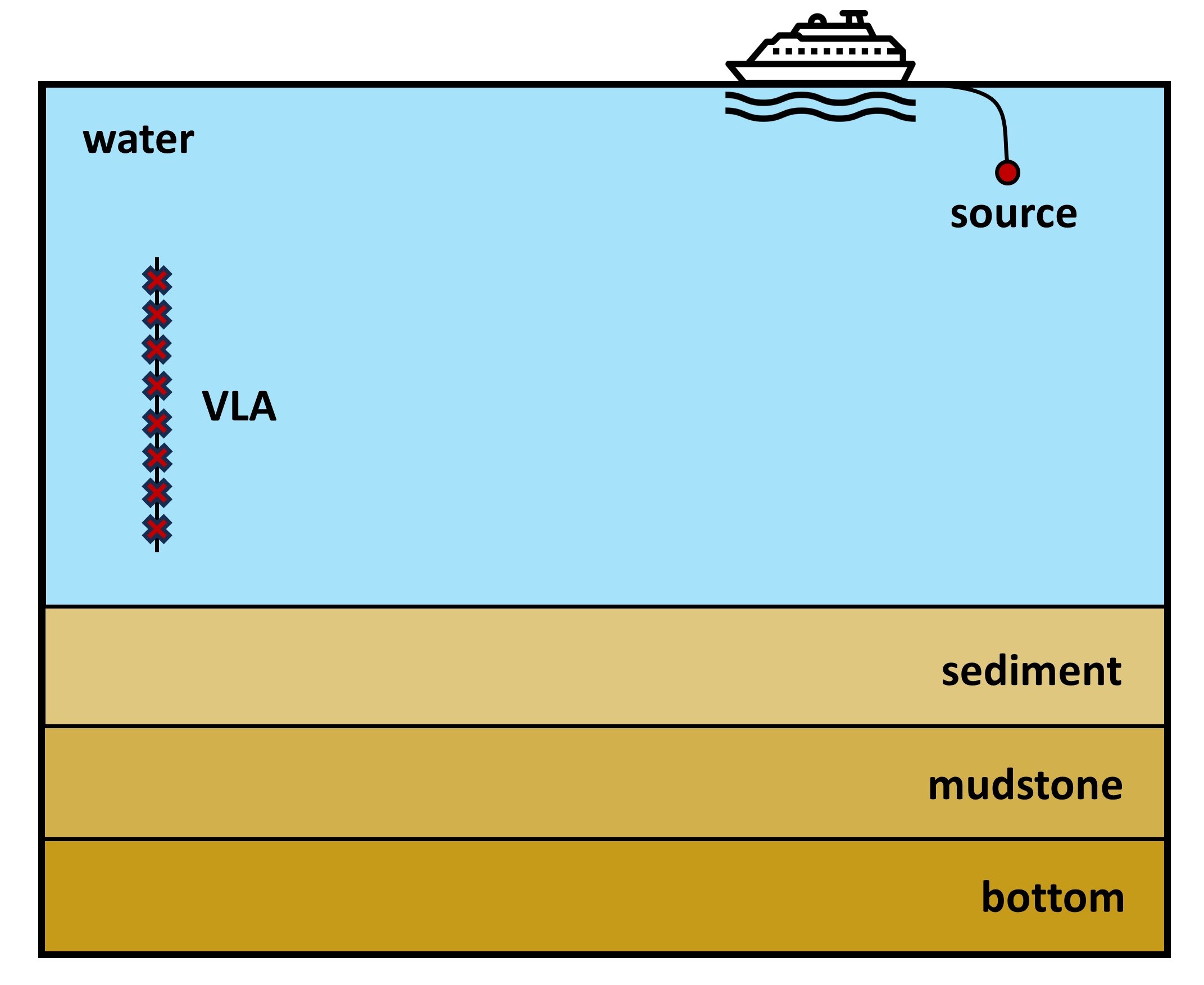}
    \caption{The SWellEx-96 experiment environment. The acoustic source is towed by a research vessel and transmits signals at various frequencies. The acoustic sensor consists of a vertical line array (VLA). Based on the measurements collected at the VLA, the objective is to estimate posterior distributions over parameters of interest (e.g. water depth, sound speed at the water-sediment interface, source range and depth, etc.).}
 \label{SWellEx-96-diagram}
\end{figure} 

 As depicted in Figure~\ref{SWellEx-96-diagram}, the data is collected via a vertical line array (VLA). 
The specification of the 21 hydrophones of the VLA and sound speed profile (SSP) in the water column is provided in the SWellEx-96 documentation. The SSP and sediment parameters are considered to be range-independent. Water depth refers to the depth of the water at the array. The source is towed by a research vessel which  consists of a comb signal comprising  frequencies of $49,79,112,148,201,283,\ {\mathrm{and}}\ 388\ \mathrm{Hz}$. While in the SWellEx-96 experiment the position of the source changes with time, for this task we consider the instant when the source depth is $60$ m and the distance (or range) between the source and the VLA is \mbox{$3$ km}.

The sediment layer is modeled with the following properties.
The seabed consists initially of a sediment layer that is $23.5$ meters thick, with a density of $1.76$ g/$\mbox{cm}^3$, and an attenuation of $0.2$ dB/kmHz. The sound speed at the bottom of this layer is assumed to be 
$1593$ m/s. The second layer is mudstone that is $800$ meters thick, possessing a density of $2.06$ g/$\mbox{cm}^3$, and an attenuation of $0.06$ dB/kmHz. The top and bottom sound speeds of this layer are $1881$ m/s and $3245$ m/s respectively. The description of the geoacoustic model of the SWellEx-96 experiment is complemented by a half-space featuring a density of $2.66$ g/$\mbox{cm}^3$, an attenuation of $0.020$ dB/kmHz, and a sound speed of $5200$ m/s.

Based on the measurements at the VLA, the objective of this task is to infer the posterior distribution over the water depth as well as the sound speed at the water-sediment interface. For this task, we assume all the quantities above to be known. The unknown parameters $m_1$ (the water depth) and $m_2$ (the sound speed at the water-sediment interface) follow a uniform prior in $[200.5, 236.5]$ m and $[1532, 1592]$ m/s, i.e. \mbox{$m_1\sim \mathcal{U}([200.5, 236.5])$} and $m_2\sim \mathcal{U}([1532, 1592])$, where $\mathcal{U}(\Omega)$ denotes a uniform distribution in \mbox{the domain $\Omega$.}

The received pressure {\bf y} on each hydrophone and for each frequency is a function of unknown parameters ${\bf m}$ (e.g. water depth, sound speed at the water-sediment interface, etc.) and additive noise $\boldsymbol{\epsilon}$ as follows
\begin{align}
\label{eq:GI-forwardmodel}
    {\bf y}= s({\bf m}, \boldsymbol{\epsilon})=F({\bf m}) + \boldsymbol{\epsilon},\qquad\qquad\boldsymbol{\epsilon} \sim \mathcal{N}(\boldsymbol{0}, \boldsymbol{\Sigma})
\end{align}
where $\boldsymbol{\Sigma}$ is the covariance matrix of data noise. Here, $s({\bf m}, \boldsymbol{\epsilon})$ is a known forward model that, assuming an additive noise model, can be rewritten as $F({\bf m}) + \boldsymbol{\epsilon}$, where $F({\bf m})$ represents the undersea acoustic model~\citep{jensen2011computational}. The SWellEx-96 experiment setup involves a complicated environment and no closed-form analytical solution is available for $F({\bf m})$. In this case, $F({\bf m})$ can only be evaluated numerically, and we use the normal-modes program KRAKEN~\citep{porter1992kraken} for this purpose. The signal-to-noise ratio is $15$ dB.  Inspired by \citet{zhang2021bayesian}, for the invertible architectures, we include data noise $\boldsymbol{\epsilon}$ as additional model parameters to be learned.
In this context, the input of the network is obtained by augmenting the unknown parameters {\bf m} with additive noise $\boldsymbol{\epsilon}$.

The pressures received on the hydrophones are considered in the frequency domain, and hence they can be complex numbers.
While the invertible architectures can be constructed to address complex numbers, in this case study, we stack the real and imaginary parts of the pressure field at the network's output. That is, the pressure $y=\mbox{Re}\{y\}+i \, \mbox{Im} \, \{y\}$ will be represented as $\big[\mbox{Re}\{y\}, \mbox{Im} \, \{y\}\big]$ at the network's output.

 This case study is analyzed noise and signal data collected from five hydrophones. Each hydrophone captured complex measurements across seven frequencies, which we processed by separating into real and imaginary components and then concatenating into a single vector. This processing approach resulted in input data tensors containing 70 signal measurements. We combined the input parameters with noise measurements, creating tensors of sizes 72.

\begin{figure}[!ht]
    \centering
    
    \begin{subfigure}[b]{0.3\textwidth}
        \centering
        % This file was created with tikzplotlib v0.10.1.
\begin{tikzpicture}[scale=0.62]

\definecolor{darkblue}{RGB}{0,0,139}
\definecolor{darkgray176}{RGB}{176,176,176}

\begin{axis}[
tick align=outside,
tick pos=left,
x grid style={darkgray176},
xlabel={Water Depth},
xmin=199, xmax=237,
xtick style={color=black},
y grid style={darkgray176},
ylabel={Probability Density},
ymin=0, ymax=0.245700012892485,
ytick style={color=black},
yticklabel style={/pgf/number format/fixed},
title={NLL}
]
\draw[draw=black,fill=darkblue] (axis cs:207.457946777344,0) rectangle (axis cs:208.369430541992,0.017000000923872);
\draw[draw=black,fill=darkblue] (axis cs:208.369445800781,0) rectangle (axis cs:209.280944824219,0.0830000042915344);
\draw[draw=black,fill=darkblue] (axis cs:209.280944824219,0) rectangle (axis cs:210.192428588867,0.189000010490417);
\draw[draw=black,fill=darkblue] (axis cs:210.192428588867,0) rectangle (axis cs:211.103927612305,0.224000006914139);
\draw[draw=black,fill=darkblue] (axis cs:211.103912353516,0) rectangle (axis cs:212.015396118164,0.234000012278557);
\draw[draw=black,fill=darkblue] (axis cs:212.015411376953,0) rectangle (axis cs:212.926895141602,0.156000003218651);
\draw[draw=black,fill=darkblue] (axis cs:212.926895141602,0) rectangle (axis cs:213.838394165039,0.0650000050663948);
\draw[draw=black,fill=darkblue] (axis cs:213.83837890625,0) rectangle (axis cs:214.749862670898,0.0270000007003546);
\draw[draw=black,fill=darkblue] (axis cs:214.749877929688,0) rectangle (axis cs:215.661376953125,0.00400000018998981);
\draw[draw=black,fill=darkblue] (axis cs:215.661376953125,0) rectangle (axis cs:216.572860717773,0.00100000004749745);
\addplot [red, dashed]
table {%
221.588272094727 -6.93889390390723e-18
221.588272094727 0.245700012892485
};
\end{axis}

\end{tikzpicture}
    \end{subfigure}
    \hfill
    \begin{subfigure}[b]{0.3\textwidth}
        \centering
        % This file was created with tikzplotlib v0.10.1.
\begin{tikzpicture}[scale=0.62]

\definecolor{darkblue}{RGB}{0,0,139}
\definecolor{darkgray176}{RGB}{176,176,176}

\begin{axis}[
tick align=outside,
tick pos=left,
x grid style={darkgray176},
xlabel={Water Depth},
xmin=199, xmax=237,
xtick style={color=black},
y grid style={darkgray176},
ylabel={Probability Density},
ymin=0, ymax=0.259350009262562,
ytick style={color=black},
yticklabel style={/pgf/number format/fixed},
title={KLD}
]
\draw[draw=black,fill=darkblue] (axis cs:219.741149902344,0) rectangle (axis cs:219.855331420898,0.017000000923872);
\draw[draw=black,fill=darkblue] (axis cs:219.855316162109,0) rectangle (axis cs:219.969497680664,0.0610000044107437);
\draw[draw=black,fill=darkblue] (axis cs:219.969512939453,0) rectangle (axis cs:220.083679199219,0.16100001335144);
\draw[draw=black,fill=darkblue] (axis cs:220.083679199219,0) rectangle (axis cs:220.197860717773,0.234000012278557);
\draw[draw=black,fill=darkblue] (axis cs:220.197845458984,0) rectangle (axis cs:220.312026977539,0.247000008821487);
\draw[draw=black,fill=darkblue] (axis cs:220.312042236328,0) rectangle (axis cs:220.426223754883,0.160000011324883);
\draw[draw=black,fill=darkblue] (axis cs:220.426208496094,0) rectangle (axis cs:220.540390014648,0.0720000043511391);
\draw[draw=black,fill=darkblue] (axis cs:220.540405273438,0) rectangle (axis cs:220.654571533203,0.034000001847744);
\draw[draw=black,fill=darkblue] (axis cs:220.654571533203,0) rectangle (axis cs:220.768753051758,0.00900000054389238);
\draw[draw=black,fill=darkblue] (axis cs:220.768737792969,0) rectangle (axis cs:220.882919311523,0.00500000035390258);
\addplot [red, dashed]
table {%
221.588272094727 -6.93889390390723e-18
221.588272094727 0.259350009262562
};
\end{axis}

\end{tikzpicture}

       \end{subfigure}
    \hfill
    \begin{subfigure}[b]{0.3\textwidth}
        \centering
        % This file was created with tikzplotlib v0.10.1.
\begin{tikzpicture}[scale=0.62]

\definecolor{darkblue}{RGB}{0,0,139}
\definecolor{darkgray176}{RGB}{176,176,176}

\begin{axis}[
tick align=outside,
tick pos=left,
x grid style={darkgray176},
xlabel={Water Depth},
xmin=199, xmax=237,
xtick style={color=black},
y grid style={darkgray176},
ylabel={Probability Density},
ymin=0, ymax=0.284550029039383,
ytick style={color=black},
yticklabel style={/pgf/number format/fixed},
title={JSD}
]
\draw[draw=black,fill=darkblue] (axis cs:219.657669067383,0) rectangle (axis cs:219.848129272461,0.00300000002607703);
\draw[draw=black,fill=darkblue] (axis cs:219.848129272461,0) rectangle (axis cs:220.038589477539,0.0410000011324883);
\draw[draw=black,fill=darkblue] (axis cs:220.038589477539,0) rectangle (axis cs:220.229049682617,0.129000008106232);
\draw[draw=black,fill=darkblue] (axis cs:220.229049682617,0) rectangle (axis cs:220.419509887695,0.242000013589859);
\draw[draw=black,fill=darkblue] (axis cs:220.419494628906,0) rectangle (axis cs:220.609970092773,0.271000027656555);
\draw[draw=black,fill=darkblue] (axis cs:220.609985351562,0) rectangle (axis cs:220.800445556641,0.181000009179115);
\draw[draw=black,fill=darkblue] (axis cs:220.800445556641,0) rectangle (axis cs:220.990905761719,0.0890000015497208);
\draw[draw=black,fill=darkblue] (axis cs:220.990905761719,0) rectangle (axis cs:221.181365966797,0.0360000021755695);
\draw[draw=black,fill=darkblue] (axis cs:221.181365966797,0) rectangle (axis cs:221.371826171875,0.00700000021606684);
\draw[draw=black,fill=darkblue] (axis cs:221.371826171875,0) rectangle (axis cs:221.562286376953,0.00100000004749745);
\addplot [red, dashed]
table {%
221.588272094727 -6.93889390390723e-18
221.588272094727 0.284550029039383
};
\end{axis}
\end{tikzpicture}
       
    \end{subfigure}

    \vspace{0.4cm} %

    \begin{subfigure}[b]{0.3\textwidth}
        \centering
        % This file was created with tikzplotlib v0.10.1.
\begin{tikzpicture}[scale=0.62]

\definecolor{darkblue}{RGB}{0,0,139}
\definecolor{darkgray176}{RGB}{176,176,176}

\begin{axis}[
tick align=outside,
tick pos=left,
x grid style={darkgray176},
xlabel={Water Depth},
xmin=199, xmax=237,
xtick style={color=black},
y grid style={darkgray176},
ylabel={Probability Density},
ymin=0, ymax=0.33180001527071,
ytick style={color=black},
title={NLL}
]
\draw[draw=black,fill=darkblue] (axis cs:221.002502441406,0) rectangle (axis cs:221.155349731445,0.00100000004749745);
\draw[draw=black,fill=darkblue] (axis cs:221.155334472656,0) rectangle (axis cs:221.308197021484,0.0020000000949949);
\draw[draw=black,fill=darkblue] (axis cs:221.308197021484,0) rectangle (axis cs:221.461044311523,0.00400000018998981);
\draw[draw=black,fill=darkblue] (axis cs:221.461044311523,0) rectangle (axis cs:221.613906860352,0.0190000012516975);
\draw[draw=black,fill=darkblue] (axis cs:221.613922119141,0) rectangle (axis cs:221.76676940918,0.0520000010728836);
\draw[draw=black,fill=darkblue] (axis cs:221.766754150391,0) rectangle (axis cs:221.91960144043,0.109000004827976);
\draw[draw=black,fill=darkblue] (axis cs:221.91960144043,0) rectangle (axis cs:222.072463989258,0.234000012278557);
\draw[draw=black,fill=darkblue] (axis cs:222.072479248047,0) rectangle (axis cs:222.225326538086,0.316000014543533);
\draw[draw=black,fill=darkblue] (axis cs:222.225311279297,0) rectangle (axis cs:222.378173828125,0.220000013709068);
\draw[draw=black,fill=darkblue] (axis cs:222.378173828125,0) rectangle (axis cs:222.531021118164,0.0430000014603138);
\addplot [red, dashed]
table {%
221.588272094727 0
221.588272094727 0.33180001527071
};
\end{axis}

\end{tikzpicture}
       
    \end{subfigure}
    \hfill
    \begin{subfigure}[b]{0.3\textwidth}
        \centering
        % This file was created with tikzplotlib v0.10.1.
\begin{tikzpicture}[scale=0.62]

\definecolor{darkblue}{RGB}{0,0,139}
\definecolor{darkgray176}{RGB}{176,176,176}

\begin{axis}[
tick align=outside,
tick pos=left,
x grid style={darkgray176},
xlabel={Water Depth},
xmin=199, xmax=237,
xtick style={color=black},
y grid style={darkgray176},
ylabel={Probability Density},
ymin=0, ymax=0.235200007259846,
ytick style={color=black},
yticklabel style={/pgf/number format/fixed},
title={KLD}
]
\draw[draw=black,fill=darkblue] (axis cs:217.664337158203,0) rectangle (axis cs:217.729797363281,0.00400000018998981);
\draw[draw=black,fill=darkblue] (axis cs:217.729797363281,0) rectangle (axis cs:217.79524230957,0.0110000008717179);
\draw[draw=black,fill=darkblue] (axis cs:217.79524230957,0) rectangle (axis cs:217.860702514648,0.0320000015199184);
\draw[draw=black,fill=darkblue] (axis cs:217.860702514648,0) rectangle (axis cs:217.926162719727,0.0430000014603138);
\draw[draw=black,fill=darkblue] (axis cs:217.926177978516,0) rectangle (axis cs:217.991622924805,0.0880000069737434);
\draw[draw=black,fill=darkblue] (axis cs:217.991607666016,0) rectangle (axis cs:218.057067871094,0.128000006079674);
\draw[draw=black,fill=darkblue] (axis cs:218.057067871094,0) rectangle (axis cs:218.122528076172,0.173000007867813);
\draw[draw=black,fill=darkblue] (axis cs:218.122528076172,0) rectangle (axis cs:218.18798828125,0.18800000846386);
\draw[draw=black,fill=darkblue] (axis cs:218.18798828125,0) rectangle (axis cs:218.253433227539,0.224000006914139);
\draw[draw=black,fill=darkblue] (axis cs:218.253433227539,0) rectangle (axis cs:218.318893432617,0.109000004827976);
\addplot [red, dashed]
table {%
221.588272094727 0
221.588272094727 0.235200007259846
};
\end{axis}

\end{tikzpicture}
       
    \end{subfigure}
    \hfill
    \begin{subfigure}[b]{0.284\textwidth}
        \centering
        % This file was created with tikzplotlib v0.10.1.
\begin{tikzpicture}[scale=0.62]

\definecolor{darkblue}{RGB}{0,0,139}
\definecolor{darkgray176}{RGB}{176,176,176}

\begin{axis}[
tick align=outside,
tick pos=left,
x grid style={darkgray176},
xlabel={Water Depth},
xmin=199, xmax=237,
xtick style={color=black},
y grid style={darkgray176},
ylabel={Probability Density},
ymin=0, ymax=0.329700011014938,
ytick style={color=black},
title={JSD}
]
\draw[draw=black,fill=darkblue] (axis cs:217.600769042969,0) rectangle (axis cs:217.654006958008,0.00500000035390258);
\draw[draw=black,fill=darkblue] (axis cs:217.654006958008,0) rectangle (axis cs:217.707229614258,0.0140000004321337);
\draw[draw=black,fill=darkblue] (axis cs:217.707244873047,0) rectangle (axis cs:217.760482788086,0.034000001847744);
\draw[draw=black,fill=darkblue] (axis cs:217.760467529297,0) rectangle (axis cs:217.813705444336,0.0440000034868717);
\draw[draw=black,fill=darkblue] (axis cs:217.813720703125,0) rectangle (axis cs:217.866958618164,0.0930000022053719);
\draw[draw=black,fill=darkblue] (axis cs:217.866943359375,0) rectangle (axis cs:217.920166015625,0.114000007510185);
\draw[draw=black,fill=darkblue] (axis cs:217.920166015625,0) rectangle (axis cs:217.973403930664,0.146999999880791);
\draw[draw=black,fill=darkblue] (axis cs:217.973419189453,0) rectangle (axis cs:218.026657104492,0.186000004410744);
\draw[draw=black,fill=darkblue] (axis cs:218.026641845703,0) rectangle (axis cs:218.079864501953,0.314000010490417);
\draw[draw=black,fill=darkblue] (axis cs:218.079864501953,0) rectangle (axis cs:218.133102416992,0.0490000024437904);
\addplot [red, dashed]
table {%
221.588272094727 0
221.588272094727 0.329700011014938
};
\end{axis}

\end{tikzpicture}
       
    \end{subfigure}

    \caption{Posterior comparisons between models trained with different loss functions for the \ac{GI} dataset, using signal measurements from \textbf{five} hydrophones. The inferred parameter is \textbf{water depth}, with the red dotted line indicating the true value. The \ac{INN} configuration for the upper row uses Configuration \ref{def:configuration1} whereas the configuration used in the bottom row is Configuration \ref{def:configuration2}. The y-axis shows un-normalized probability scores.}
    \label{fig:water_depth}
\end{figure}
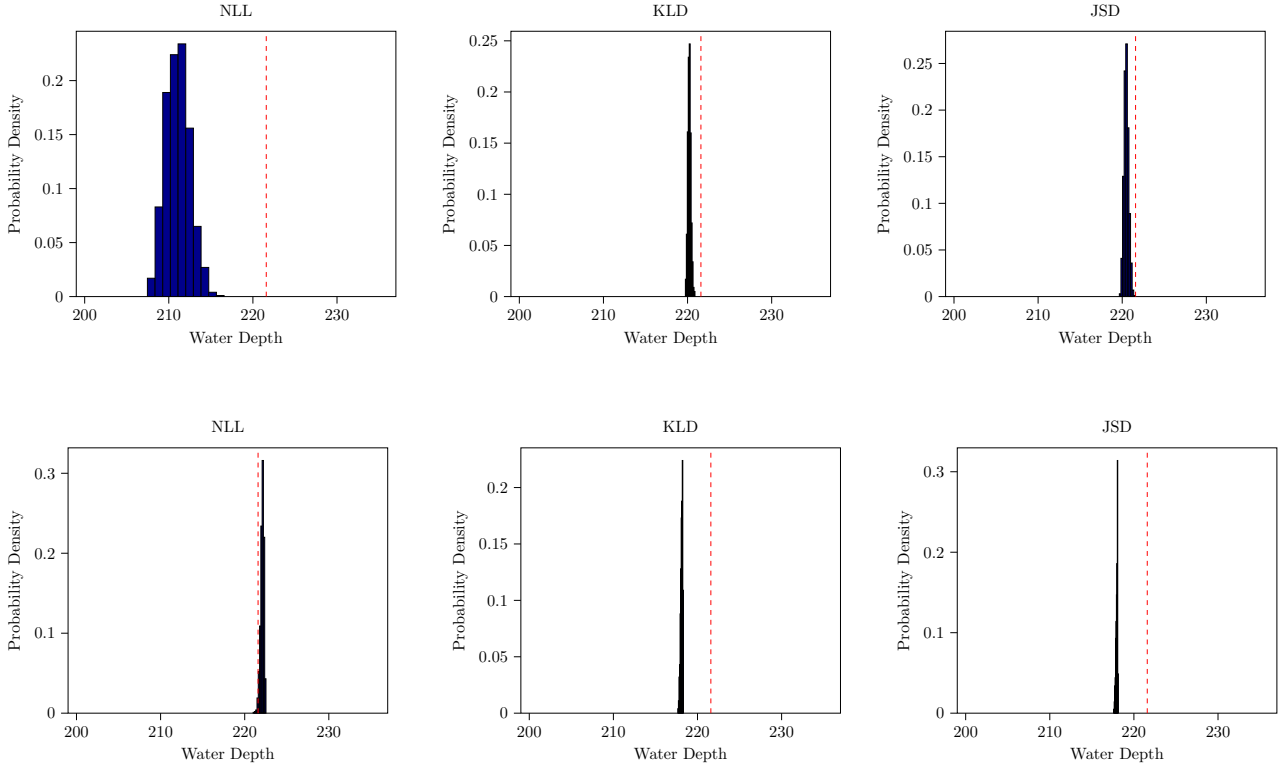

\noindent
For the \ac{GI} dataset, training is conducted using a batch size of 128 over 10 epochs with an 80/20 train--validation split. 
The optimization settings vary by objective: the \ac{INN} trained with the \ac{NLL} loss uses a learning rate of 
$3\times10^{-4}$, while the \ac{INN} and critic network $V_\omega$ trained with the f-divergence objective use a 
learning rate of $2\times10^{-4}$ together with $L_2$ regularization of $2\times10^{-5}$. All models employ the Adam optimizer with betas $(0.8,\,0.9)$. 
\vspace{2mm}
\noindent
Two different coupling-based configuration are considered. 
\begin{configuration}
\label[definition]{def:configuration1}
corresponds to a model with sub-networks of hidden size 64, a latent dimension of 18, and 3 coupling layers.
\end{configuration}

\begin{configuration}
\label[definition]{def:configuration2}
corresponds to a model with sub-networks of hidden size 403, a latent dimension of 2, and 4 coupling layers.
\end{configuration}

\noindent
The results are presented in Figs~\ref{fig:water_depth} and \ref{fig:sound_speed}, and in Tables  \ref{tab:water_depth}, \ref{tab:sound_speed} and \ref{table:training_time}. The histograms, in Figs~\ref{fig:water_depth} and \ref{fig:sound_speed}, are based on 1,000 samples drawn from the trained \ac{INN}. Also, while training times are compared in Table~\ref{table:training_time} the inference time is also provided in its caption.

\begin{figure}[!ht]
    \centering
    
    \begin{subfigure}[b]{0.3\textwidth}
        \centering
        % This file was created with tikzplotlib v0.10.1.
\begin{tikzpicture}[scale=0.62]

\definecolor{darkblue}{RGB}{0,0,139}
\definecolor{darkgray176}{RGB}{176,176,176}

\begin{axis}[
tick align=outside,
tick pos=left,
x grid style={darkgray176},
xlabel={Sound Speed},
xmin=1492, xmax=1592,
xtick style={color=black},
y grid style={darkgray176},
ylabel={Probability Density},
ymin=0, ymax=0.253050012141466,
ytick style={color=black}, 
yticklabel style={/pgf/number format/fixed},
title = {NLL}
]
\draw[draw=black,fill=darkblue] (axis cs:1562.42724609375,0) rectangle (axis cs:1562.97009277344,0.00100000004749745);
\draw[draw=black,fill=darkblue] (axis cs:1562.97021484375,0) rectangle (axis cs:1563.51318359375,0.00500000035390258);
\draw[draw=black,fill=darkblue] (axis cs:1563.51318359375,0) rectangle (axis cs:1564.05615234375,0.0310000013560057);
\draw[draw=black,fill=darkblue] (axis cs:1564.05615234375,0) rectangle (axis cs:1564.59899902344,0.0860000029206276);
\draw[draw=black,fill=darkblue] (axis cs:1564.59887695312,0) rectangle (axis cs:1565.14172363281,0.169000014662743);
\draw[draw=black,fill=darkblue] (axis cs:1565.14184570312,0) rectangle (axis cs:1565.68481445312,0.241000011563301);
\draw[draw=black,fill=darkblue] (axis cs:1565.68481445312,0) rectangle (axis cs:1566.22778320312,0.222000017762184);
\draw[draw=black,fill=darkblue] (axis cs:1566.22778320312,0) rectangle (axis cs:1566.77062988281,0.141000002622604);
\draw[draw=black,fill=darkblue] (axis cs:1566.7705078125,0) rectangle (axis cs:1567.31335449219,0.0790000036358833);
\draw[draw=black,fill=darkblue] (axis cs:1567.3134765625,0) rectangle (axis cs:1567.8564453125,0.025000000372529);
\addplot [red, dashed]
table {%
1554.46752929688 -6.93889390390723e-18
1554.46752929688 0.253050012141466
};
\end{axis}

\end{tikzpicture}
        
    \end{subfigure}
    \hfill
    \begin{subfigure}[b]{0.3\textwidth}
        \centering
        % This file was created with tikzplotlib v0.10.1.
\begin{tikzpicture}[scale=0.62]

\definecolor{darkblue}{RGB}{0,0,139}
\definecolor{darkgray176}{RGB}{176,176,176}

\begin{axis}[
tick align=outside,
tick pos=left,
x grid style={darkgray176},
xlabel={Sound Speed},
xmin=1492, xmax=1592,
xtick style={color=black},
y grid style={darkgray176},
ylabel={Probability Density},
ymin=0, ymax=0.255150016397238,
ytick style={color=black}, 
yticklabel style={/pgf/number format/fixed},
title={KLD}
]
\draw[draw=black,fill=darkblue] (axis cs:1552.28991699219,0) rectangle (axis cs:1552.84216308594,0.00400000018998981);
\draw[draw=black,fill=darkblue] (axis cs:1552.84216308594,0) rectangle (axis cs:1553.39440917969,0.0190000012516975);
\draw[draw=black,fill=darkblue] (axis cs:1553.39453125,0) rectangle (axis cs:1553.94689941406,0.0910000056028366);
\draw[draw=black,fill=darkblue] (axis cs:1553.94677734375,0) rectangle (axis cs:1554.4990234375,0.146000012755394);
\draw[draw=black,fill=darkblue] (axis cs:1554.4990234375,0) rectangle (axis cs:1555.05126953125,0.243000015616417);
\draw[draw=black,fill=darkblue] (axis cs:1555.05126953125,0) rectangle (axis cs:1555.603515625,0.22800001502037);
\draw[draw=black,fill=darkblue] (axis cs:1555.603515625,0) rectangle (axis cs:1556.15576171875,0.179000005125999);
\draw[draw=black,fill=darkblue] (axis cs:1556.15576171875,0) rectangle (axis cs:1556.70812988281,0.0650000050663948);
\draw[draw=black,fill=darkblue] (axis cs:1556.70812988281,0) rectangle (axis cs:1557.26037597656,0.0210000015795231);
\draw[draw=black,fill=darkblue] (axis cs:1557.26037597656,0) rectangle (axis cs:1557.81262207031,0.00400000018998981);
\addplot [red, dashed]
table {%
1554.46752929688 -6.93889390390723e-18
1554.46752929688 0.255150016397238
};
\end{axis}

\end{tikzpicture}
        
    \end{subfigure}
    \hfill
    \begin{subfigure}[b]{0.3\textwidth}
        \centering
        % This file was created with tikzplotlib v0.10.1.
\begin{tikzpicture}[scale=0.62]

\definecolor{darkblue}{RGB}{0,0,139}
\definecolor{darkgray176}{RGB}{176,176,176}

\begin{axis}[
tick align=outside,
tick pos=left,
x grid style={darkgray176},
xlabel={Sound Speed},
xmin=1492, xmax=1592,
xtick style={color=black},
y grid style={darkgray176},
ylabel={Probability Density},
ymin=0, ymax=0.232050016522408,
ytick style={color=black},
yticklabel style={/pgf/number format/fixed},
title={JSD}
]
\draw[draw=black,fill=darkblue] (axis cs:1552.18200683594,0) rectangle (axis cs:1552.67175292969,0.00600000005215406);
\draw[draw=black,fill=darkblue] (axis cs:1552.67175292969,0) rectangle (axis cs:1553.16149902344,0.0329999998211861);
\draw[draw=black,fill=darkblue] (axis cs:1553.16149902344,0) rectangle (axis cs:1553.65124511719,0.0570000037550926);
\draw[draw=black,fill=darkblue] (axis cs:1553.65124511719,0) rectangle (axis cs:1554.14099121094,0.118000008165836);
\draw[draw=black,fill=darkblue] (axis cs:1554.14111328125,0) rectangle (axis cs:1554.63098144531,0.190000012516975);
\draw[draw=black,fill=darkblue] (axis cs:1554.630859375,0) rectangle (axis cs:1555.12060546875,0.221000015735626);
\draw[draw=black,fill=darkblue] (axis cs:1555.12060546875,0) rectangle (axis cs:1555.6103515625,0.185000002384186);
\draw[draw=black,fill=darkblue] (axis cs:1555.6103515625,0) rectangle (axis cs:1556.10009765625,0.122000008821487);
\draw[draw=black,fill=darkblue] (axis cs:1556.10009765625,0) rectangle (axis cs:1556.58984375,0.0530000030994415);
\draw[draw=black,fill=darkblue] (axis cs:1556.58984375,0) rectangle (axis cs:1557.07958984375,0.0150000005960464);
\addplot [red, dashed]
table {%
1554.46752929688 -6.93889390390723e-18
1554.46752929688 0.232050016522408
};
\end{axis}

\end{tikzpicture}
        
    \end{subfigure}

    \vspace{0.4cm} %

    \begin{subfigure}[b]{0.3\textwidth}
        \centering
        % This file was created with tikzplotlib v0.10.1.
\begin{tikzpicture}[scale=0.62]

\definecolor{darkblue}{RGB}{0,0,139}
\definecolor{darkgray176}{RGB}{176,176,176}

\begin{axis}[
tick align=outside,
tick pos=left,
x grid style={darkgray176},
xlabel={Sound Speed},
xmin=1492, xmax=1592,
xtick style={color=black},
y grid style={darkgray176},
ylabel={Probability Density},
ymin=0, ymax=0.276150012016296,
ytick style={color=black},
yticklabel style={/pgf/number format/fixed},
title={NLL}
]
\draw[draw=black,fill=darkblue] (axis cs:1552.78247070312,0) rectangle (axis cs:1552.99670410156,0.0020000000949949);
\draw[draw=black,fill=darkblue] (axis cs:1552.99682617188,0) rectangle (axis cs:1553.21105957031,0.0130000002682209);
\draw[draw=black,fill=darkblue] (axis cs:1553.2109375,0) rectangle (axis cs:1553.42517089844,0.0430000014603138);
\draw[draw=black,fill=darkblue] (axis cs:1553.42529296875,0) rectangle (axis cs:1553.63952636719,0.113000005483627);
\draw[draw=black,fill=darkblue] (axis cs:1553.63940429688,0) rectangle (axis cs:1553.85363769531,0.207000017166138);
\draw[draw=black,fill=darkblue] (axis cs:1553.85375976562,0) rectangle (axis cs:1554.06799316406,0.263000011444092);
\draw[draw=black,fill=darkblue] (axis cs:1554.06787109375,0) rectangle (axis cs:1554.28210449219,0.212000012397766);
\draw[draw=black,fill=darkblue] (axis cs:1554.2822265625,0) rectangle (axis cs:1554.49645996094,0.106000006198883);
\draw[draw=black,fill=darkblue] (axis cs:1554.49633789062,0) rectangle (axis cs:1554.71057128906,0.0329999998211861);
\draw[draw=black,fill=darkblue] (axis cs:1554.71069335938,0) rectangle (axis cs:1554.92492675781,0.00800000037997961);
\addplot [red, dashed]
table {%
1554.46752929688 -6.93889390390723e-18
1554.46752929688 0.276150012016296
};
\end{axis}

\end{tikzpicture}
        
    \end{subfigure}
    \hfill
    \begin{subfigure}[b]{0.3\textwidth}
        \centering
        % This file was created with tikzplotlib v0.10.1.
\begin{tikzpicture}[scale=0.62]

\definecolor{darkblue}{RGB}{0,0,139}
\definecolor{darkgray176}{RGB}{176,176,176}

\begin{axis}[
tick align=outside,
tick pos=left,
x grid style={darkgray176},
xlabel={Sound Speed},
xmin=1492, xmax=1592,
xtick style={color=black},
y grid style={darkgray176},
ylabel={Probability Density},
ymin=0, ymax=0.435750022530556,
ytick style={color=black}, 
title={KLD}
]
\draw[draw=black,fill=darkblue] (axis cs:1550.8505859375,0) rectangle (axis cs:1551.05712890625,0.0020000000949949);
\draw[draw=black,fill=darkblue] (axis cs:1551.05712890625,0) rectangle (axis cs:1551.26379394531,0.00600000005215406);
\draw[draw=black,fill=darkblue] (axis cs:1551.26391601562,0) rectangle (axis cs:1551.47058105469,0.00600000005215406);
\draw[draw=black,fill=darkblue] (axis cs:1551.47045898438,0) rectangle (axis cs:1551.67700195312,0.0140000004321337);
\draw[draw=black,fill=darkblue] (axis cs:1551.67700195312,0) rectangle (axis cs:1551.88354492188,0.0290000010281801);
\draw[draw=black,fill=darkblue] (axis cs:1551.88354492188,0) rectangle (axis cs:1552.09020996094,0.0530000030994415);
\draw[draw=black,fill=darkblue] (axis cs:1552.09033203125,0) rectangle (axis cs:1552.29699707031,0.0690000057220459);
\draw[draw=black,fill=darkblue] (axis cs:1552.296875,0) rectangle (axis cs:1552.50341796875,0.144000008702278);
\draw[draw=black,fill=darkblue] (axis cs:1552.50341796875,0) rectangle (axis cs:1552.7099609375,0.262000024318695);
\draw[draw=black,fill=darkblue] (axis cs:1552.7099609375,0) rectangle (axis cs:1552.91662597656,0.415000021457672);
\addplot [red, dashed]
table {%
1554.46752929688 -6.93889390390723e-18
1554.46752929688 0.435750022530556
};
\end{axis}

\end{tikzpicture}
        
    \end{subfigure}
    \hfill
    \begin{subfigure}[b]{0.3\textwidth}
        \centering
        % This file was created with tikzplotlib v0.10.1.
\begin{tikzpicture}[scale=0.62]

\definecolor{darkblue}{RGB}{0,0,139}
\definecolor{darkgray176}{RGB}{176,176,176}

\begin{axis}[
tick align=outside,
tick pos=left,
x grid style={darkgray176},
xlabel={Water Depth},
xmin=199, xmax=237,
xtick style={color=black},
y grid style={darkgray176},
ylabel={Probability Density},
ymin=0, ymax=0.329700011014938,
ytick style={color=black},
title={JSD}
]
\draw[draw=black,fill=darkblue] (axis cs:217.600769042969,0) rectangle (axis cs:217.654006958008,0.00500000035390258);
\draw[draw=black,fill=darkblue] (axis cs:217.654006958008,0) rectangle (axis cs:217.707229614258,0.0140000004321337);
\draw[draw=black,fill=darkblue] (axis cs:217.707244873047,0) rectangle (axis cs:217.760482788086,0.034000001847744);
\draw[draw=black,fill=darkblue] (axis cs:217.760467529297,0) rectangle (axis cs:217.813705444336,0.0440000034868717);
\draw[draw=black,fill=darkblue] (axis cs:217.813720703125,0) rectangle (axis cs:217.866958618164,0.0930000022053719);
\draw[draw=black,fill=darkblue] (axis cs:217.866943359375,0) rectangle (axis cs:217.920166015625,0.114000007510185);
\draw[draw=black,fill=darkblue] (axis cs:217.920166015625,0) rectangle (axis cs:217.973403930664,0.146999999880791);
\draw[draw=black,fill=darkblue] (axis cs:217.973419189453,0) rectangle (axis cs:218.026657104492,0.186000004410744);
\draw[draw=black,fill=darkblue] (axis cs:218.026641845703,0) rectangle (axis cs:218.079864501953,0.314000010490417);
\draw[draw=black,fill=darkblue] (axis cs:218.079864501953,0) rectangle (axis cs:218.133102416992,0.0490000024437904);
\addplot [red, dashed]
table {%
221.588272094727 0
221.588272094727 0.329700011014938
};
\end{axis}

\end{tikzpicture}
        
    \end{subfigure}

    \caption{Posterior comparisons between models trained with different loss functions for the \ac{GI} dataset, using signal measurements from \textbf{five} hydrophones. The inferred parameter is \textbf{sound speed}, with the red dotted line indicating the true value. The \ac{INN} configuration for the upper row uses Configuration \ref{def:configuration1} whereas the configuration used in the bottom row is Configuration \ref{def:configuration2}.}
    \label{fig:sound_speed}
\end{figure}
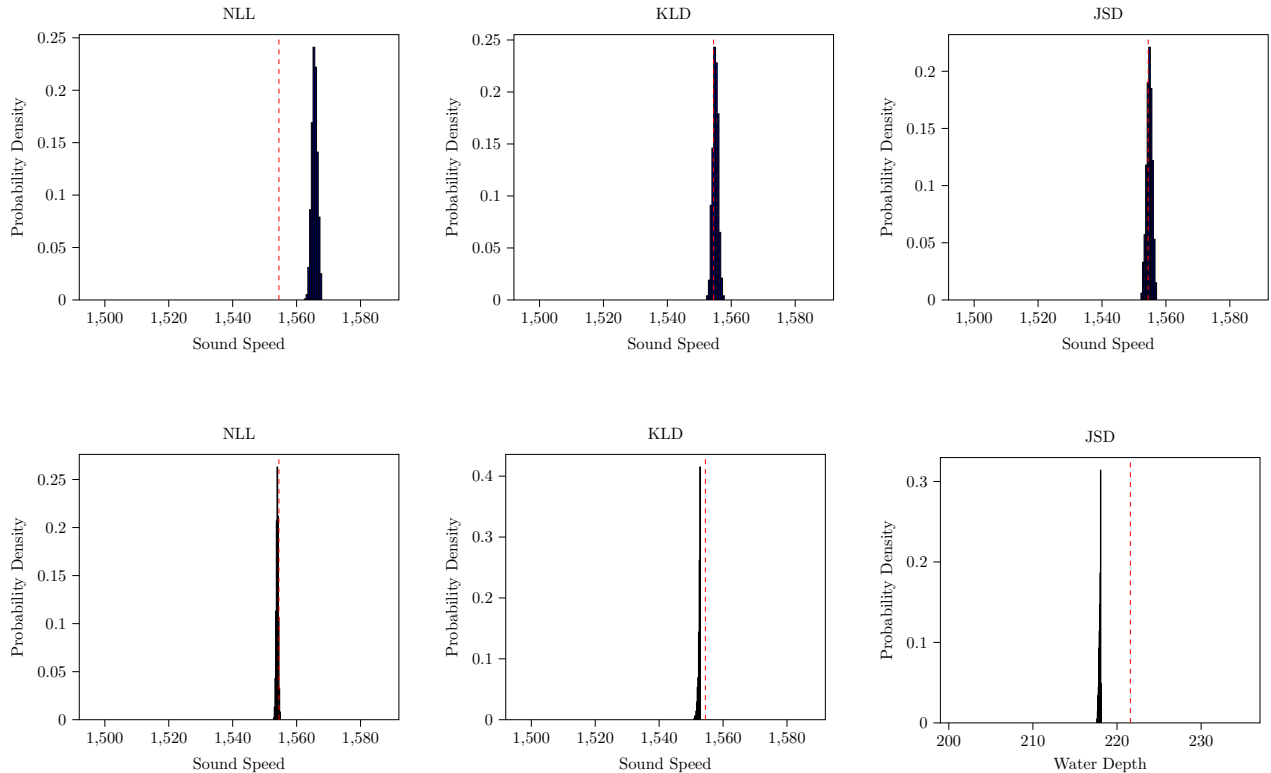

\begin{table}[ht!]
  \centering
  \begin{subtable}[t]{0.48\textwidth}
    \centering
    \small
    \renewcommand{\arraystretch}{1.25}
    \begin{tabular}{|l|c|c|c|}
      \hline
      \diagbox[width=9em,innerleftsep=0cm,innerrightsep=0.2cm]{~Method}{Epochs} & $1$ & $5$ & $10$ \\
      \hline
      NLL & 6.19 & 22.90 & 22.48 \\
      \hline
      \rowcolor{Turquoise!12!white}
      KL Divergence & 7.06 & 3.46 & 3.27 \\
      \hline
      JS Divergence & 6.63 & 3.21 & 3.26 \\
      \hline
    \end{tabular}
    \caption{Configuration \ref{def:configuration1}}
    
  \end{subtable}\hfill
  \begin{subtable}[t]{0.48\textwidth}
    \centering
    \small
    \renewcommand{\arraystretch}{1.25}
    \begin{tabular}{|l|c|c|c|}
      \hline
      \diagbox[width=9em,innerleftsep=0cm,innerrightsep=0.2cm]{~Method}{Epochs} & $1$ & $5$ & $10$ \\
      \hline
      NLL & 25.16 & 2.32 & 0.75 \\
      \hline
      \rowcolor{Turquoise!12!white}
      KL Divergence & 10.56 &  10.66 & 11.56 \\
      \hline
      JS Divergence & 11.20 & 11.04 & 12.26 \\
      \hline
    \end{tabular}
    \caption{Configuration \ref{def:configuration2}}
    
  \end{subtable}

  \caption{Comparison of the $L_2$ loss for water depth using two different model configurations. The parameter being recovered is water depth.} 
  \label{tab:water_depth}
\end{table}

\begin{table}[ht!]
  \centering
  \begin{subtable}[t]{0.48\textwidth}
    \centering
    \small
    \renewcommand{\arraystretch}{1.25}
    \begin{tabular}{|l|c|c|c|}
      \hline
      \diagbox[width=9em,innerleftsep=0cm,innerrightsep=0.2cm]{~Method}{Epochs} & $1$ & $5$ & $10$ \\
      \hline
      NLL & 365.26 & 211.71 & 193.99 \\
      \hline
      \rowcolor{Turquoise!12!white}
      KL Divergence &  31.36 & 0.92 & 2.24\\
      \hline
      JS Divergence & 27.40 & 0.61 & 7.26 \\
      \hline
    \end{tabular}
    \caption{Configuration \ref{def:configuration1}}
    
  \end{subtable}\hfill
  \begin{subtable}[t]{0.48\textwidth}
    \centering
    \small
    \renewcommand{\arraystretch}{1.25}
    \begin{tabular}{|l|c|c|c|}
      \hline
      \diagbox[width=9em,innerleftsep=0cm,innerrightsep=0.2cm]{~Method}{Epochs} & $1$ & $5$ & $10$ \\
      \hline
      NLL & 9.33 & 0.73 & 1.91 \\
      \hline
      \rowcolor{Turquoise!12!white}
      KL Divergence &  9.26 & 1.25 & 6.31 \\
      \hline
      JS Divergence & 8.73 & 2.29 & 10.24 \\
      \hline
    \end{tabular}
    \caption{Configuration \ref{def:configuration2}}
      \end{subtable}

  \caption{Comparison of the $L_2$ loss for sound speed using two different model configurations introduced. The parameter being recovered is sound speed.}
  \label{tab:sound_speed}
\end{table}

\begin{table}[ht!]
  \centering
  \begin{subtable}[t]{0.48\textwidth}
    \centering
    \small
    \renewcommand{\arraystretch}{1.25}
    \begin{tabular}{|l|c|c|c|}
      \hline
      \diagbox[width=9em,innerleftsep=0cm,innerrightsep=0.2cm]{~Method}{Epochs} & $1$ & $5$ & $10$ \\
      \hline
      NLL & 4.87 & 16.78 & 30.50 \\
      \hline
      \rowcolor{Turquoise!12!white}
      KL Divergence &  1.37 & 5.44 & 10.91\\
      \hline
      JS Divergence & 1.34 & 5.58 & 11.08 \\
      \hline
    \end{tabular}
    \caption{Configuration \ref{def:configuration1}}
   
  \end{subtable}\hfill
  \begin{subtable}[t]{0.48\textwidth}
    \centering
    \small
    \renewcommand{\arraystretch}{1.25}
    \begin{tabular}{|l|c|c|c|}
      \hline
      \diagbox[width=9em,innerleftsep=0cm,innerrightsep=0.2cm]{~Method}{Epochs} & $1$ & $5$ & $10$ \\
      \hline
      NLL & 4.22 & 14.19 & 26.02 \\
      \hline
      \rowcolor{Turquoise!12!white}
      KL Divergence &  1.71 &  6.76 & 13.36 \\
      \hline
      JS Divergence & 1.86 & 6.64 & 12.45 \\
      \hline
    \end{tabular}
    \caption{Configuration \ref{def:configuration2}}
   
  \end{subtable}

  \caption{Comparison of \textbf{training times} for two model configurations. The configuration used for the left table achieves an \textbf{inference time} of 0.0060 seconds per 1000 samples, while the configuration used for the right table achieves 0.0036 seconds for the same number of samples.}
  \label{table:training_time}
\end{table}

For \ac{GI}, 
standard likelihood-evaluation-based MCMC is computationally intensive, as each likelihood evaluation necessitates solving the forward model with an acoustic propagation simulator KRAKEN~\citep{porter1992kraken}. 
Alternately, a pre-trained \ac{INN} facilitates rapid, near-real-time posterior inference by eliminating the need for repeated forward model evaluations. This speedup (inference time of 0.0060 seconds per 1000 samples) comes from shifting computation offline into training. Posterior sampling  is often computationally challenging, especially in high dimensions~\citep{montanari2023posterior}, and there is no universally best configuration for these problems.
As demonstrated in our case studies, \acp{INN} can be sensitive to architectural and optimization choices; therefore, automated hyperparameter search methods such as the tree-structured Parzen estimator~\citep{bergstra2011algorithms} can be useful in practice.

\vspace{5mm}

\section{Conclusion}
 In this work, we have presented a unified theoretical and practical framework that connects \acp{INN} and \acp{NF} through a common variational formulation, motivated by related works of~\citet{nowozin2016f, zhang2019variational, grover2018flow}. By analyzing these architectures under a shared perspective, we derived new theoretical guarantees on the approximation quality of both posterior and generative distributions under weaker, more realistic assumptions than those used in prior studies. 
 Our findings bridge a key gap in the literature by providing a principled understanding of when and why these models succeed in both generative and inverse problem settings. 
 Beyond the theoretical results, our empirical investigations yield general design principles that can guide practitioners in the effective implementation of \ac{INN}- and \ac{NF}-based systems. Finally, the application of our framework to a ocean-acoustic inversion task demonstrates its practical value and robustness.

\subsection*{Acknowledgements}
M. J. Khojasteh acknowledges support from the Gleason Endowment at RIT and also thanks Prof. Florian Meyer and Dr. Mohsen Sadr for their constructive discussions and suggestions.
 K. Youcef-Toumi acknowledges the support from the Center for Complex Engineering Systems (CCES) at King Abdulaziz City for Science and Technology (KACST) and Massachusetts Institute of Technology (MIT).

\newpage

\bibliographystyle{abbrvnat} %
\bibliography{references}  %

\newpage

\appendix

\section{Additional Background}
\label{appendix:background}

\begin{definition}[Rademacher Complexity]
    \label[definition]{def:rademacher-complexity} Let $\calF$ denote a class of real-valued functions over the domain $\mathbb{R}^d$,  and let $X^n \overset{i.i.d.}{\sim} P_X$ denote an \iid dataset. With $\epsilon^n$ denoting $n$ \iid Rademacher random variables~(i.e., $\pm 1$ w.p. $1/2$ each), we define the Rademacher complexity of the function class $\calF$ as 
    \begin{align}
        \RadComp_n(\calF, X^n) = \mathbb{E}_{\epsilon^n}\lb \sup_{f \in \calF} \frac{1}{n} \sum_{i=1}^n \epsilon_i f(X_i) \middle \vert X^n \rb
    \end{align}
    We will denote the expected Rademacher complexity~(with respect to the data $X^n$) as $\RadComp_n(\calF) = \mathbb{E}_{X^n}[\RadComp_n(\calF, X^n)]$. 
\end{definition}

We now recall a result about vector contraction for Rademacher complexities from~\cite{maurer2016vector}. 
\begin{fact}
    \label{fact:maurer-contraction} For some $d \geq 1$, let $\phi_i:\mathbb{R}^d \to \mathbb{R}$ denote a collection of $J$-Lipschitz functions in $\ell_2$ norm with $\phi_i(\boldsymbol{0})=0$. Then, for a given set of points $\{x_1, \ldots, x_n\}$ in some domain $\calX$, and a function class $\calH$ consisting of $h:\calX \to \mathbb{R}^d$, we have the following: 
    \begin{align}
        \mathbb{E}_{\epsilon^n} \lb \sup_{h \in \calH} \frac 1 n \sum_{i=1}^n  \epsilon_i \phi_i(h(x_i)) \rb \leq  {2 J}  \mathbb{E}_{\sigma^n}\lb \sup_{h \in \calH} \frac 1n  \sum_{i=1}^n \langle \sigma_i, h(x_i) \rangle \rb. 
    \end{align}
    Above, we assume that $(\epsilon_1, \ldots, \epsilon_n)$ are \iid Rademacher, and  each $(\sigma_1, \ldots, \sigma_n)$ represents  \iid $\{-1, 1\}^d$-valued random vectors with \iid Rademacher coordinates. 
\end{fact}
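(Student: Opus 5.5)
The plan is to follow Maurer's original argument. It replaces the scalar Rademacher variables one index at a time, conditioning on all the others. The constant it produces is $\sqrt{2}J$, which is at most the stated $2J$. The normalization $\phi_i(\boldsymbol{0})=0$ is not needed for this direction. I assume the supremum is finite and measurable, for instance by reducing to countable or finite subclasses of $\calH$ and taking monotone limits.

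\emph{Step 1 (one-coordinate lemma).} For an arbitrary functional $A:\calH\to\mathbb{R}$, a fixed point $x$ and a $J$-Lipschitz $\phi$, I would show
\begin{align}
\mathbb{E}_{\epsilon}\lb \sup_{h\in\calH} A(h)+\epsilon\,\phi(h(x))\rb \;\le\; \mathbb{E}_{\sigma}\lb \sup_{h\in\calH} A(h)+\sqrt{2}J\langle \sigma, h(x)\rangle\rb .
\end{align}
\begin{itemize}
\item Averaging over $\epsilon=\pm1$, the left side equals $\tfrac12\sup_{h,h'}\lbr A(h)+A(h')+\phi(h(x))-\phi(h'(x))\rbr$.
\item The Lipschitz property bounds this by $\tfrac12\sup_{h,h'}\lbr A(h)+A(h')+J\|h(x)-h'(x)\|\rbr$.
\item Khintchine's inequality with Szarek's optimal constant gives $\|v\|_2\le\sqrt2\,\mathbb{E}_\sigma|\langle\sigma,v\rangle|$ for $v\in\mathbb{R}^d$. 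Hence the bound is at most $\tfrac12\sup_{h,h'}\mathbb{E}_\sigma\lbr A(h)+A(h')+\sqrt2 J|\langle \sigma,h(x)-h'(x)\rangle|\rbr$.
\item The supremum of an expectation is at most the expectation of the supremum, so we may move $\mathbb{E}_\sigma$ outside.
\item Inside the supremum the expression is symmetric in $(h,h')$, so the absolute value can be dropped.
\item The resulting supremum splits as $\sup_h\lbr A(h)+\sqrt2J\langle\sigma,h(x)\rangle\rbr+\sup_{h'}\lbr A(h')-\sqrt2J\langle\sigma,h'(x)\rangle\rbr$. Since $\sigma\stackrel{d}{=}-\sigma$, the two halves have equal expectation, which gives the claim.
\end{itemize}

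\emph{Step 2 (induction).} I would apply Step 1 successively for $i=1,\dots,n$.
\begin{itemize}
\item At step $i$, condition on $\epsilon_{j}$ for $j>i$ and on the $\sigma_j$ already introduced for $j<i$.
\item Put all remaining terms into $A(h)$, namely $\tfrac1n\sum_{j<i}\sqrt2J\langle\sigma_j,h(x_j)\rangle+\tfrac1n\sum_{j>i}\epsilon_j\phi_j(h(x_j))$.
\item Apply the lemma with $\phi=\phi_i/n$, which is $(J/n)$-Lipschitz, and then take expectations over the conditioned variables.
\end{itemize}
After $n$ steps the left side of the Fact is bounded by $\sqrt2J\,\mathbb{E}_\sigma\lb\sup_h\tfrac1n\sum_i\langle\sigma_i,h(x_i)\rangle\rb\le 2J(\cdots)$. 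The final inequality uses only that the right-hand expectation is nonnegative, which holds because fixing any single $h$ gives expectation zero.

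\emph{Main obstacle.} The delicate point is the Khintchine step with an explicit constant. If Szarek's sharp constant is not invoked, the elementary bound $\|v\|_2\le\sqrt{3}\,\mathbb{E}|\langle\sigma,v\rangle|$ (via $\mathbb{E}\langle\sigma,v\rangle^4\le3\|v\|^4$ and Hölder) still works. It gives the constant $\sqrt3J$, which is also at most $2J$, so the stated Fact holds either way. The remaining care is measure-theoretic: exchanging supremum and expectation, and handling possibly infinite suprema. Both are handled by the countable-class reduction mentioned at the start.
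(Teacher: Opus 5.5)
Your proposal is correct and is exactly Maurer's (2016) argument: the one-coordinate symmetrization, the Lipschitz bound, Szarek's sharp Khintchine constant, and then replacing the $\epsilon_i$ one at a time. The paper only cites that source for this Fact and gives no proof of its own. Your argument in fact gives the sharper constant $\sqrt{2}J$, and the passage to $2J$ via nonnegativity of the right-hand side is valid. So is your remark that $\phi_i(\boldsymbol{0})=0$ is not needed for this one-sided inequality.
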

Finally, we now recall a version of a well-known concentration inequality that we will use to obtain high probability uniform deviation bounds. 
\begin{fact}[McDiarmid's Inequality]
    \label{fact:mcdiarmid}
    Let $\{X_i: 1 \leq i \leq n\}$ denote a stream of independent $\calX$-valued observations, and let $h:\calX^n \to \mathbb{R}$ be a function satisfying a bounded difference property: 
    \begin{align}
        |h(x_1, \ldots, x_{i-1}, x_i, x_{i+1}, \ldots, x_n) -  
        h(x_1, \ldots, x_{i-1}, x_i', x_{i+1}, \ldots, x_n)| \leq c, 
    \end{align}
    for all $i \in [n]$,  $x^n \in \calX^n$ and $x'_i \in \calX$. Then we have the following for all $\epsilon>0$: 
    \begin{align}
        \mathbb{P}\lp h(X^n) - \EE[h(X^n)] >n \epsilon \rp \leq \exp \lp - \frac{2n \epsilon^2}{c^2} \rp.  
    \end{align}
\end{fact}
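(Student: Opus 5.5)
We will prove the bound by the standard martingale method: a Doob martingale decomposition, then Hoeffding's lemma applied to each increment, then a Chernoff bound. Write $t = n\epsilon$, and let $\calF_i = \sigma(X_1, \ldots, X_i)$ with $\calF_0$ trivial. Define the Doob martingale $V_i = \EE[h(X^n) \mid \calF_i]$, so that $V_0 = \EE[h(X^n)]$ and $V_n = h(X^n)$. This gives the telescoping decomposition $h(X^n) - \EE[h(X^n)] = \sum_{i=1}^n D_i$, where $D_i = V_i - V_{i-1}$. We assume $h(X^n)$ is integrable, which is implicit in the statement.

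The key structural step uses independence. Because the $X_j$ are independent, we can write $V_i = g_i(X_1, \ldots, X_i)$ with
\begin{align}
g_i(x_1, \ldots, x_i) = \int h(x_1, \ldots, x_i, x_{i+1}', \ldots, x_n') \, dP_{X_{i+1}}(x'_{i+1}) \cdots dP_{X_n}(x'_n).
\end{align}
Conditionally on $\calF_{i-1}$, the increment $D_i$ lies in the interval $[L_i, U_i]$, where
\begin{align}
L_i = \inf_{x} g_i(X^{i-1}, x) - V_{i-1}, \qquad U_i = \sup_{x} g_i(X^{i-1}, x) - V_{i-1},
\end{align}
and both endpoints are $\calF_{i-1}$-measurable. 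Integrating the bounded-difference hypothesis against the product measure of the remaining coordinates gives $|g_i(X^{i-1}, x) - g_i(X^{i-1}, x')| \leq c$ for all $x, x'$, so $U_i - L_i \leq c$. Since also $\EE[D_i \mid \calF_{i-1}] = 0$, the conditional version of Hoeffding's lemma yields
\begin{align}
\EE[e^{\lambda D_i} \mid \calF_{i-1}] \leq \exp(\lambda^2 c^2/8) \qquad \text{for every } \lambda > 0.
\end{align}

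Next we peel off one increment at a time, conditioning successively on $\calF_{n-1}, \calF_{n-2}, \ldots$ via the tower property. This gives $\EE[e^{\lambda(h - \EE h)}] \leq \exp(n\lambda^2 c^2/8)$. Markov's inequality then gives
\begin{align}
\mathbb{P}(h - \EE h > t) \leq \exp(-\lambda t + n\lambda^2 c^2/8).
\end{align}
Optimizing at $\lambda = 4t/(nc^2)$ yields $\exp(-2t^2/(nc^2))$. Substituting $t = n\epsilon$ gives exactly $\exp(-2n\epsilon^2/c^2)$, which is the claimed bound.

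The main obstacle is the structural step. The bound $U_i - L_i \leq c$ must hold with endpoints that depend only on the past $X^{i-1}$, not on $X_i$, since otherwise Hoeffding's lemma cannot be applied conditionally. This is precisely where independence is used: it lets us write $V_i$ as the integral $g_i$ against the product law of the remaining coordinates. A minor measurability point about the sup/inf over $x$ can be handled by working with the essential sup/inf, or with any $\calF_{i-1}$-measurable interval of length $c$ containing the increment. Everything else is routine.
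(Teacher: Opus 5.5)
Your proof is correct, and it is the standard argument for this inequality. The paper itself gives no proof; it simply recalls the statement as a well-known fact. Your steps are the Doob martingale, independence used to write $V_i = g_i(X_1,\ldots,X_i)$ so that each increment ranges over an interval of length at most $c$ fixed by $X^{i-1}$, the conditional Hoeffding lemma, and the Chernoff optimization at $\lambda = 4t/(nc^2)$ with $t = n\epsilon$.

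Two small simplifications are available. First, integrability needs no separate assumption: bounded differences already force $\sup h - \inf h \leq nc$. Second, you can avoid the measurability issue with the sup/inf endpoints entirely. Apply the unconditional Hoeffding lemma to $x \mapsto g_i(x^{i-1},x) - \int g_i(x^{i-1},x')\,dP_{X_i}(x')$ for each fixed $x^{i-1}$. Then use independence (Fubini) to identify $\mathbb{E}[e^{\lambda D_i}\mid\mathcal{F}_{i-1}]$ with that bound evaluated at $X^{i-1}$.
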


\subsection{Discussion of Invertible Neural Architectures}
\label{sec:background}
This section outlines the key factors that ensure practical invertibility in \acp{INN}, which are formally defined in Section~\ref{sec:introduction}, including architectural structures, bidirectional training, and padding schemes.

\noindent\textbf{Model Architecture:}
A key aspect in designing \acp{INN} is constructing architectures that are both expressive and invertible.
One of the most popular design is constructed by concatenating  affine coupling
blocks~\cite{kingma2018glow,dinh2016density,dinh2014nice}.
In this case, invertible architecture $T$  is defined by a sequence of reversible blocks,  where %
each block consists of two complementary affine coupling layers. The the block’s input $\mathbf{u} \in \mathbb{R}^{d_{\mathbf{u}}}$                   is split into $\mathbf{u}_1 \in \mathbb{R}^{d_{\mathbf{u}_1}}$ and $\mathbf{u}_2 \in \mathbb{R}^{d_{\mathbf{u}} - d_{\mathbf{u}_1}}
$.  Each transformation stage applies a learned mapping to a subset of the input features, while the remaining features are left unchanged, ensuring invertibility of the overall transformation.  In other words, we have 
\begin{align}
    \begin{bmatrix}
\mathbf{v}_1\\[\smallskipamount]
\mathbf{v}_2
\end{bmatrix} = 
\begin{bmatrix}
\mathbf{u}_1 \odot \exp\scalebox{1.1}{$($}s_1(\mathbf{u}_2)\scalebox{1.1}{$)$} + t_1(\mathbf{u}_2) \\[\smallskipamount]
\mathbf{u}_2
\end{bmatrix}
,\qquad\quad %
    \begin{bmatrix}
\mathbf{o}_1\\[\smallskipamount]
\mathbf{o}_2
\end{bmatrix} = 
\begin{bmatrix}
\mathbf{v}_1 \\[\smallskipamount]
\mathbf{v}_2 \odot \exp\scalebox{1.1}{$($}s_2(\mathbf{v}_1)\scalebox{1.1}{$)$} + t_2(\mathbf{v}_1) 
\end{bmatrix}
\end{align}
The mappings  $s_i$    and   $t_i$  for $i=1,2$ are arbitrarily  neural network. Each block can be inverted, %
that is, given the output $\mathbf{o} = [\mathbf{o}_1, \mathbf{o}_2]
$                         the inverse can be calculated as 
\begin{align}
    \mathbf{u}_2 = \big(\mathbf{o}_2 - t_2(\mathbf{o}_1)\big) \oslash \exp \scalebox{1.1}{$($} s_2(\mathbf{o}_1) \scalebox{1.1}{$)$}
\end{align}
and
\begin{align}
    \mathbf{u}_1 = \big(\mathbf{o}_1 - t_1(\mathbf{u}_2)\big) \oslash  \exp \scalebox{1.1}{$($} s_1(\mathbf{u}_2) \scalebox{1.1}{$)$}
\end{align}
Note that when the coupling block is inverted, the subnetworks    $s_i$    and   $t_i$     need not to be invertible
they are only evaluated in the forward direction.
Also, this structure yields a triangular Jacobian, making the log-determinant
computationally tractable. %

Another popular approach to achieve invertibility is using residual connections (iResNet models)~\cite{gomez2017reversible,jacobsen2018revnet,behrmann2019invertible}. In this way, the mapping $T$ is constructed by the successive composition of residual blocks, $T =  (I + F_M) \circ (I + F_{M-1})\circ \cdots \circ (I + F_1)$, where $``\circ"$ denotes the composition operation. A sufficient condition for invertibility of each block is that the residual sub-network has a Lipschitz constant
less than one. However, unlike coupling-based methods that offer exact solutions for the inverse and the log-determinant of the Jacobian, they need to be approximated in the case of  iResNet models~\cite{chen2019residual}.  

In summary, coupling layers provide exact inversion and efficient Jacobian computation, but their fixed partitioning limits flexibility and can affect stability. Invertible residual networks support more flexible and expressive architectures, needing only a Lipschitz constraint for stability. However, they have less efficient inversion and Jacobian computation than coupling layers.

\noindent\textbf{Bi-Directional Training:}
Invertible networks allow applying losses in both the input and output domains, since the mapping can be evaluated in both directions. During training, one may alternate between forward and inverse passes, accumulating gradients from both directions. This bidirectional loss formulation can improve the effectiveness and stability of training~\cite{ardizzone2018analyzing}.In the forward direction, as discussed in~\eqref{eq:forwardcostINN}, the \ac{SL} $L_\yy(T)$ and \ac{USL} $L_\zz(T)$ will be minimized. Similarly, in the backward direction   an \ac{USL} $L_\xx(T)$ can be considered. 
For  the \ac{SL} we consider mean square error, and the \ac{USL} is discussed in the next section.

\noindent\textbf{Padding:}
Padding adjusts input dimensionality to meet a model's structural requirements~\cite{ardizzone2018analyzing}. For \acp{INN}, it resolves mismatches by adding  entries enabling bijective mappings. Common strategies like zero-padding and repetition-padding keep forward and inverse passes dimensionally consistent and can support stable learning.
For instance, if zero-padding is used on either side of the network, additional loss terms, called reconstruction loss, are required to ensure no information is encoded in the padding dimensions and forcing these dimensions to remain inactive.

\subsection{Negative Log-Likelihood (NLL)}
\label{sec:NLL}

For a  map $T(X;\theta)\mapsto [Y,Z]$ parameterized by $\theta$, and assuming $Y$ and $Z$ are independent, the change-of-variables formula implies that the density $P_{T^{-1}(Y,Z)}$  is
\begin{align}
\label{eq:change_var_4_KL}
P_{T^{-1}(Y,Z)}= P_{T_{\yy}(X)} \, P_{T_{\zz}(X)} \cdot  \left|\det\left(\Jac_{X \,\mapsto\,[Y,Z]}(X)\right)\right|
\end{align}
where $\Jac_{X \,\mapsto\,[Y, Z]}(X;\theta)$ denotes the Jacobian of the map $T$ parameterized by $\theta$. As described next, this expression can be used to define an unsupervised training loss.
In particular, we aim to minimize the forward KL divergence between the true posterior distribution $P_{X|Y}$ and $P_{T^{-1}(Y,Z)}$, given by
\begin{align}
\mathcal{L}(\theta) &= D_{\text{KL}}\big(P_{X|Y}, P_{T^{-1}(Y,Z)}\big)\notag
= - \mathbb{E}_{P_{X|Y}}\big[\log P_{T^{-1}(Y,Z)}\big] + \text{const.}\notag\\
&= - \mathbb{E}_{P_{X|Y}}\big[
\log P_{T_{\yy}(X)} + 
\log P_{T_{\zz}(X)} + \log\left|\mbox{det}\big(\Jac_{X \,\mapsto\,[Z,Y]}(X)\big)\right|
\big] + \text{const.}\label{eq:kl_div}
\end{align}
The empirical approximation of the above loss is as follows.
\begin{align}
\label{eq:KL_forward_app}
\hat{\mathcal{L}}(\theta) = - \frac{1}{n}\sum_{i=1}^n \Big( 
\log P_{T_{\yy}(X_i)} + 
\log P_{T_{\zz}(X_i)} + \log\left|\mbox{det}\big(\Jac_{X \,\mapsto\,[Y,Z]}(X_i)\big)\right|
+ \text{const.}\Big) 
\end{align}

\vspace{1mm}
As we can see, minimizing the above Monte Carlo approximation of the KL divergence is equivalent to maximizing likelihood (or minimizing negative log-likelihood). Assuming $P_{Z}$ is standard Gaussian and $P_{Y}$ is a multivariate normal distribution around $\mathbf{y}_\mathrm{gt}$  and a small standard deviation $\sigma$, \ac{NLL} loss in Eq.~(\ref{eq:KL_forward_app}) becomes
\begin{align}
\mathcal{L}_\mathrm{NLL}(\theta) &= \frac{1}{n}\sum_{i=1}^n \left( 
\frac{1}{2}\,\frac{\big(T_{\yy}(X_i) - y_{\text{gt}}\big)^2}{\sigma^2} + \frac{1}{2}\, T_{\zz}(X_i)^2 - \log\left|\mbox{det}\big(\Jac_{X \,\mapsto\,[Y,Z]}(X_i)\big)\right|\right) \label{eq:nll_new}
\end{align}
 Note that we started with the backward loss between $P_{X|Y}$ and $P_{T^{-1}(Y,Z)}$ and end up with a supervised loss and unsupervised loss in the forward direction.  Also, the Gaussian assumption on
 $Y$ can be restrictive and as discussed in Section~\ref{sec:related-work}, we eliminate the need for such distributional assumptions by relying solely on samples and leveraging the variational formulation of f-divergences, which yields a lower bound on the true value which is discussed more in Section~\ref{sec:observation-2-f-div}.
\section{Variational normalizing flow~(V-NF)}
\label{subsec:variational-nf}
In this section, we introduce a class of normalizing flows trained using a variational objective. To simplify our presentation, we focus on the concrete example of the variational form of relative entropy, although, similar arguments can be developed for a larger class of $f$-divergences. Our discussion in this section closely follows the structure of~\Cref{subsec:variational-inn} for~\acp{INN}, and in fact some of the results we present can be directly inferred from the results of~\Cref{subsec:variational-inn}. However, we include all the details to keep this section self-contained and independent of~\Cref{subsec:variational-inn}.

Let $\calX = \mathbb{R}^d$ denote the observation space and let $\calZ = \mathbb{R}^d$ denote the latent space in which the latent variable $Z \sim P_Z$ takes its values. Let $\calT:\calX \to \calZ$ denote a class of {  diffeomorphisms\footnote{Our theoretical guarantees in~\Cref{theorem:nf-estimation-error} are also valid for homeomorphisms, but due to the specific requirements during the training of \ac{NF}, here we state the definition based on diffeomorphisms.}}, and for any $T \in \calT$, define the model distribution  on $\calX$ as 
\begin{align}
    Q_T = (T^{-1})_{\#}P_Z \quad \iff \quad Q_T(E) = P_Z(\{z \in \calZ:T^{-1}(z) \in E\}), \; \text{for any measurable } E \subset \calX. 
\end{align}
In other words, for $Z \sim P_Z$, the model $T$ represents a random variable $X_T  = T^{-1}(Z) \sim Q_T$ with density $q_T(x) = p_Z(T(x)) \mathrm{det} \Jac T(x)$, with $\Jac T$ denoting the associated Jacobian. 

\paragraph{Training procedure.} As mentioned in~\Cref{sec:general-framework},  we consider a variational objective for training \acp{NF}, and in particular, in this section we focus on the concrete case of forward relative entropy loss by using a function class $\calG_n$~(critic or witness function class that we allow to grow with $n$).  Formally, the population loss that will be used to train a model is 
\begin{align}
    \LNF(T) =  \sup_{g \in \calG_n} \lbr \mathbb{E}_{X \sim P_X}[\log g(X)] - \mathbb{E}_{X_T \sim Q_T}[g(X_T)]  \rbr + 1.  
\end{align}
Here we have used the variational definition of relative entropy presented in~\cite{nguyen2010estimating}, and by definition,  $\LNF(T) \leq \dkl(P_X \parallel Q_T)$ with equality if the true likelihood ratio $dP_X/dQ_T$ is contained in $\calG_n$.
Given $n$ \iid draws $X_1, \ldots, X_n$ from the data distribution $P_X$, and with $m_n$ independent latent random variables $Z_1, \ldots, Z_{m_n} \overset{\iid}{\sim} P_Z$, we define the empirical loss associated with $T \in \calT$ as 
\begin{align}
    \LhatNF_n(T) = \sup_{g \in \calG_n} \lbr \frac{1}{n} \sum_{i=1}^n \log g(X_i) - \frac{1}{m} \sum_{i=1}^{m_n} g(T^{-1}(Z_i)) \rbr + 1. 
\end{align}
Observe that here we use $n$ to denote the size of the true dataset that we are trying to model, while $m_n$ denotes the number of latent data points we use. In general $m \equiv m_n$ can be much larger than the $n$, as the latent random variables are much easier to generate~(a common choice is to select $P_Z$ to be a known multivariate Gaussian distribution).  
We can now define the empirical \ac{NF} model $\That_n$ as 
\begin{align}
    \That_n  \in \argmin_{T \in \calT} \LhatNF_n(T).  \label{eq:erm-NF}
\end{align}
The performance of the \ac{ERM} estimator with a variational objective defined above is governed by two factors: the sampling error, and the alproximation error due to the restriction of the critic class to $\calG_n$. Classical results from empirical process theory can allow us to control the difference between $\LNF$ and $\LhatNF$ in terms of certain notions of capacity of the class $\calG_n$. Informally, smaller $\calG_n$ will lead to much better finite sample approximation. On the other hand, reducing the difference $\dkl(P_X \parallel Q_T)$, that we refer to as the \emph{variational gap}, requires a larger $\calG_n$.  This represents the key challenge in our analysis: finding the right tradeoff for the size of the critic class $\calG_n$ that simultaneously controls these two terms. We first present a general result that assumes the existence of such a sequence of $\{\calG_n:n \geq 1\}$ that satisfies the conditions stated in~\Cref{assump:NF-theorem}. Then, in~\Cref{subsubsec:interpretable-conditions-nf}, we identify sufficient that are easier to verify for~\Cref{assump:NF-theorem} to hold, and finally in~\Cref{subsubsec:variational-nf-example}, we discuss a concrete realistic example satisfying these conditions in~\Cref{subsubsec:variational-nf-example}. 

\begin{assumption}
\label{assump:NF-theorem}
To analyze the \ac{NF} model $\That_n$ defined in~\eqref{eq:erm-NF}, we require the following assumptions: 
\begin{itemize}
    \item \textbf{(NF1): Realizability.} For the $P_X, P_Z$ under consideration, there exists a $T^* \in \calT$ with $Q_{T^*} = P_X$, which implies that $\LNF(T^*) = 0$ assuming that $\boldsymbol{1} \in \calG_n$~(note that we allow the critic class $\calG_n$ to grow with $n$, keeping $\calT$ fixed). 
    \item \textbf{(NF2): Uniform Convergence.} There exists a deterministic sequence $r_n \to 0$ and a confidence level $\delta \in (0, 1)$, such that with probability $1-\delta$, we have 
    \begin{align}
        \sup_{T \in \calT} |\widehat{L}_{NF}(T) - L_{NF}(T)| \leq r_n. 
    \end{align}
    In other words, we assume that the function classes $(\calG_n, \calT)$ are small enough to ensure uniform learnability. 
    
    \item \textbf{(NF3): Variational Approximation Gap.} $P_X \ll Q_T$ for all $T \in \calT$, and there exists a deterministic sequence $\eta_n$ converging to $0$ as $n \to \infty$, such that 
    \begin{align}
        \sup_{T \in \calT} \dkl(P_X \parallel Q_T) -  L_{NF}(T) \leq \eta_n. 
    \end{align}
    In other words, we assume that the capacity of the critic class $\calG_n$ grows with $n$ to approximate the likelihood ratios of all distributions modeled by elements of $\calT$ and the true data distribution $P_X$. 
    \item \textbf{(NF4): Moment Bounds.} There exists  positive constants $R$ and $a$, such that 
    \begin{align}
        \max \lbr \mathbb{E}[\|X\|^{1+a}], \; \sup_{T \in \calT} \mathbb{E}[\|T^{-1}(Z)\|^{1+a}] \rbr \leq R. 
    \end{align}
\end{itemize}
\end{assumption}
With these assumptions in place, we can now state the main result of this section.
\begin{theorem}
    \label{theorem:nf-estimation-error}
    Under~\Cref{assump:NF-theorem}, the empirical risk minimizer~$\That_n$ defined in~\eqref{eq:erm-NF} satisfies the following with probability at least $1-\delta$: 
    \begin{align}
        W_1(P_X, Q_{\That_n}) 
        \lesssim (r_n+\eta_n)^{\frac a {2(1 + a)}},  
    \end{align}
    where $\lesssim$ suppresses a multiplicative constant that depends on $a$, and $W_1$ is the $1$-Wasserstein metric.  
\end{theorem}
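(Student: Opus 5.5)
The proof reduces the ERM guarantee to a bound on $\dkl(P_X \parallel Q_{\That_n})$, converts this to total variation by Pinsker's inequality, and finally converts total variation to $W_1$ with a truncation argument that uses the moment bound \NFmoments. This mirrors the proof of~\Cref{theorem:finite-sample}, and here there is no supervised term and no conditioning on $Y \in A$.

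\emph{Step 1: controlling the KL of the ERM.} Work on the $(1-\delta)$ event of \NFconvergence. By \NFvariational, $\dkl(P_X\parallel Q_{\That_n}) \leq \LNF(\That_n) + \eta_n$. By uniform convergence, $\LNF(\That_n) \leq \LhatNF_n(\That_n) + r_n$. Optimality of $\That_n$ gives $\LhatNF_n(\That_n) \leq \LhatNF_n(T^*)$, and uniform convergence again gives $\LhatNF_n(T^*) \leq \LNF(T^*) + r_n$. Finally \NFrealizability gives $\LNF(T^*) = 0$; this also follows from $\LNF(T^*) \leq \dkl(P_X \parallel Q_{T^*}) = 0$. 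Together,
\begin{align}
\dkl(P_X \parallel Q_{\That_n}) \leq 2 r_n + \eta_n,
\end{align}
and Pinsker's inequality yields $TV(P_X, Q_{\That_n}) \leq \sqrt{(2r_n+\eta_n)/2}$.

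\emph{Step 2: from TV to $W_1$ by truncation.} This is the step that needs the most care, and the plan is to use the truncation lemma (\Cref{lemma:truncation}) as in the INN proof. By Kantorovich--Rubinstein duality, $W_1(P,Q) = \sup_{\mathrm{Lip}(h)\le 1,\, h(0)=0} \EE_P h - \EE_Q h$. For such an $h$ and a radius $K>0$, split $h = h\boldsymbol{1}_{\{\|x\|\le K\}} + h\boldsymbol{1}_{\{\|x\|>K\}}$.
\begin{itemize}
\item The bounded part satisfies $|h| \leq K$ on the ball, so it contributes at most $2K\, TV(P_X, Q_{\That_n})$.
\item For the tail part, $|h(x)| \leq \|x\|$. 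Markov-type reasoning gives $\EE[\|X\|\boldsymbol{1}_{\{\|X\|>K\}}] \leq \EE\|X\|^{1+a}/K^a \leq R K^{-a}$, and the same bound holds for $\That_n^{-1}(Z)$ by the uniform moment bound in \NFmoments. The tail part therefore contributes at most $2R K^{-a}$.
\end{itemize}
Combining the two parts,
\begin{align}
W_1(P_X, Q_{\That_n}) \leq 2K\,\epsilon_n + 2RK^{-a}, \qquad \epsilon_n = TV(P_X, Q_{\That_n}).
\end{align}
Choose $K = \epsilon_n^{-1/(1+a)}$, which balances the two terms. This gives $W_1 \lesssim \epsilon_n^{a/(1+a)} \lesssim (r_n+\eta_n)^{a/(2(1+a))}$, with constant depending on $R$ and $a$.

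The main technical care is in Step 2. The truncation has to be compatible with the supremum over all $1$-Lipschitz test functions. This is handled by centering at $h(0)=0$, which is allowed because constants cancel in the difference of expectations. The uniform moment bound is also essential: \NFmoments\ holds uniformly over $\calT$, so it applies to the data-dependent model $\That_n$. The remaining ingredients are deterministic consequences of the assumptions on the high-probability event.
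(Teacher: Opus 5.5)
Your proposal is correct and follows the same route as the paper. The ERM/realizability chain plus \NFvariational gives $\dkl(P_X \parallel Q_{\That_n}) \le 2r_n + \eta_n$, Pinsker converts this to total variation, and the truncation argument of \Cref{lemma:truncation} gives the $W_1$ rate; you re-derive that argument inline with radius $K=\epsilon_n^{-1/(1+a)}$ instead of the optimized one, which changes only constants, and your one-sided use of \NFvariational together with $\LNF(T^*) \le 0$ is slightly cleaner than the paper's write-up.
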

The proof of this statement is given in~\Cref{proof:nf-estimation-error}. This theorem obtains a H\"older-type  transfer from the original empirical variational objective to the Wasserstein error. In particular, it says that the quality of approximation achieved by the \ac{NF} model $\That_n$ depends on the sum of uniform empirical-process deviation $r_n$ and the variational gap $\eta_n$. The exponent $a/(1+a)$ encodes the tail behavior from~\Cref{assump:NF-theorem}: distributions with lighter tails~(or larger $a$) yield a stronger, near-$\sqrt{\cdot}$ relation between $W_1$ and $r_n + \eta_n$, while heavier tails cause a weaker transformation. 

\begin{remark}
    \label{remark:extension-to-other-f-divergences} 
    The proof of~\Cref{theorem:nf-estimation-error} proceeds in three steps. (i) On the high-probability event from~\NFconvergence, ERM and realizability assumption imply that $\LNF(\That_n) \leq 2r_n$. Combining this with the variational gap assumption~\NFvariational yields $\dkl(P_X \parallel Q_{\That_n}) \leq 2r_n + \eta_n$. (ii) The next step is to use Pinsker's inequality to get $\|P_X - Q_{\That_n}\|_{TV} \leq \sqrt{(1/2) (2r_n + \eta_n)}$. (iii) Finally, we employ the truncation argument formalized in~\Cref{lemma:truncation} along with the bounded $(1+a)$-moment assumption~\NFmoments to get the required bound in $W_1$ metric. 
    
    As this outline suggests, this exact argument goes through for any distance or divergence  that dominates the total variation metric via a Pinsker-type inequality. Hence,~\Cref{theorem:nf-estimation-error} can be stated more generally for such a class divergence measures, which includes the Jensen Shannon,  Hellinger, and chi-squared divergence. 
\end{remark}

\subsection{Verifiable Sufficient Conditions for~\Cref{assump:NF-theorem}}
\label{subsubsec:interpretable-conditions-nf}
In this section, we work under the realizability assumption~\NFrealizability, and derive sufficient conditions on $(\calT, \calG_n)$ for the other conditions to hold, such that the $W_1$ metric between the model distribution and the true distribution converges to zero as the sample size $n$ increases. 

We begin with the uniform convergence condition~\textbf{(NF2)}, and identify sufficient conditions in terms of the Rademacher complexities of the function classes involved. 
\begin{proposition}
    \label[proposition]{prop:nf2-uniform-convergence-verification} 
    Let $(\epsilon_i)_{i \geq 1}$ denote \iid Rademacher random variables, and with $\calH_n$ denoting   the function class $\{\log g: g \in \calG_n\}$, define  the expected Rademacher complexities
    \begin{align}
        &\mathfrak{R}_n(\calH_n) = \mathbb{E}_{\epsilon^n, X^n}\lb \sup_{h \in \calH_n} \frac{1}{n} \sum_{i=1}^n \epsilon_i h(X_i)   \rb, \qtext{and}  
        \mathfrak{R}_{m_n}(\calH_n, \calT) = \mathbb{E}_{\epsilon^{m_n}, Z^{m_n}}\lb \sup_{h, T} \frac{1}{m_n} \sum_{i=1}^n \epsilon_i h(T^{-1}(Z_i))  \rb. 
    \end{align}
    If $\|h\|_\infty \leq b_n$ for all $h \in \calH_n$,  for any $\delta >0$, we have the following: 
    \begin{align}
        \sup_{T \in \calT} \lv \LhatNF_n(T) - \LNF(T) \rv \leq 2 \mathfrak{R}_n(\calH_n) + 2 e^{b_n}  \mathfrak{R}_{m_n}\lp \calH_n, \calT \rp + (\frac{b_n}{\sqrt{n}} + \frac{e^{b_n}}{\sqrt{m_n}}) \sqrt{2 \log(4/\delta)}  
    \end{align}
    Thus, to verify~\textbf{(NF2)}, we need to identify uniformly bounded $\calH_n$~(equivalently $\calG_n$) whose expected Rademacher complexity can be appropriately controlled to drive the above term to $0$ with $n$. 
\end{proposition}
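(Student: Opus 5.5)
The plan is to split the deviation into the two empirical averages appearing inside the supremum and control each by symmetrization plus McDiarmid's inequality (\Cref{fact:mcdiarmid}). Writing $g = e^{h}$ with $h \in \calH_n$, for every $T \in \calT$ we have
\begin{align}
\lv \LhatNF_n(T) - \LNF(T) \rv \leq \sup_{h \in \calH_n} \lv \frac1n \sum_{i=1}^n h(X_i) - \EE[h(X)] \rv + \sup_{h \in \calH_n,\, T \in \calT} \lv \frac1{m_n} \sum_{i=1}^{m_n} e^{h(T^{-1}(Z_i))} - \EE[e^{h(T^{-1}(Z))}] \rv,
\end{align}
using $|\sup_g A(g) - \sup_g B(g)| \leq \sup_g |A(g)-B(g)|$. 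The right-hand side no longer depends on $T$, so the supremum over $T$ on the left is free. Call the two terms $\Delta_1$ and $\Delta_2$. The $X$-sample and $Z$-sample are independent, so we can treat them separately and union bound, giving each a confidence budget of $\delta/2$.

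For $\Delta_1$, the map $X^n \mapsto \Delta_1$ has bounded differences $2b_n/n$, since $\|h\|_\infty \leq b_n$. McDiarmid's inequality therefore gives
\[
\Delta_1 \leq \EE[\Delta_1] + b_n\sqrt{2\log(4/\delta)/n}
\]
with probability at least $1-\delta/4$. For the absolute value, we apply the bound to the class together with its negation, or handle each sign with $\delta/4$; the constant $\log(4/\delta)$ is meant to absorb this. Standard symmetrization then gives $\EE[\Delta_1] \leq 2\mathfrak{R}_n(\calH_n)$.

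For $\Delta_2$, the summands $e^{h(T^{-1}(Z_i))}$ lie in $[e^{-b_n}, e^{b_n}]$, so the bounded differences are at most $e^{b_n}/m_n$ up to a constant. McDiarmid's inequality then yields the term $e^{b_n}\sqrt{2\log(4/\delta)/m_n}$. Symmetrization gives
\[
\EE[\Delta_2] \leq 2\,\EE\sup_{h,T}\frac1{m_n}\sum_{i=1}^{m_n}\epsilon_i e^{h(T^{-1}(Z_i))}.
\]
Next we use Talagrand's contraction lemma, the scalar case of \Cref{fact:maurer-contraction}. The function $u \mapsto e^u - 1$ is $e^{b_n}$-Lipschitz on $[-b_n,b_n]$ and vanishes at $0$; subtracting the constant $1$ does not change the Rademacher average. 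This bounds the symmetrized term by $e^{b_n}\mathfrak{R}_{m_n}(\calH_n,\calT)$. Matching the factor $2$ in the statement requires the sharp scalar Ledoux–Talagrand constant, with no extra factor of $2$. If only the constant from \Cref{fact:maurer-contraction} is used, we get a harmless factor of $4$ instead.

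Summing the two pieces and union bounding gives the claim. The main point of care is bookkeeping the constants: the two-sided McDiarmid bounds, the split of $\delta$ producing $\log(4/\delta)$, and the contraction constant. There is no conceptual obstacle. One subtlety to check is measurability of the supremum over $(h,T)$. This is handled as usual by assuming countable or separable classes, or by working with outer expectations.
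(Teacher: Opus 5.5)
Your proposal is correct and follows the same route as the paper. The paper also splits the deviation into $\mathrm{D}_X + \mathrm{D}_Z$ via $|\sup A - \sup B| \le \sup|A-B|$, bounds each expectation by symmetrization, and concentrates with McDiarmid. It then passes from $e^{h}$ to $h$ by the scalar contraction lemma with Lipschitz constant $e^{b_n}$. Your bookkeeping is in fact more careful than the paper's, whose final display uses an unspecified constant $C$ and drops the factor $e^{b_n}$ in front of $\mathfrak{R}_{m_n}(\calH_n,\calT)$. In particular, you handle each sign separately to match the non-absolute Rademacher complexity, split $\delta$ to get $\log(4/\delta)$, and use the sharp Ledoux--Talagrand constant.
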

This result follows from the standard symmetrization via Rademacher variables argument, and we present the details in~\Cref{proof:nf2-uniform-convergence-verification}. 
We next observe that if the model class $\calT$ satisfies a uniform Lipschitz property, then the uniform moment control assumption~\textbf{(NF4)} follows easily. 
\begin{proposition}
    \label[proposition]{prop:nf4-moment-control}
    Suppose the function class $\calT$ satisfies the following uniform affine growth property: 
    \begin{align}
        \sup_{T \in \calT}  \|T^{-1}(z)\| \leq J_0 + J_1 \|z\| \qtext{for all} z \in \calZ. 
    \end{align}
    If $\mathbb{E}_{P_Z}[\|Z\|^{1+a}] = M_Z<\infty$ for an $a>0$, we have the following under the realizability assumption $(NF1)$:
    \begin{align}
        \max\lbr \mathbb{E}[\|X\|^{1+a}], \, \sup_{T \in \calT} \mathbb{E}\lb \|T^{-1}(Z)\|^{1+a} \rb \rbr \leq 2^a\lp J_0^{1+a} + J_1^{1+a} M_Z \rp \eqcolon R. 
    \end{align}
\end{proposition}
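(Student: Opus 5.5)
The argument is a direct moment computation, parallel to the proof of~\Cref{prop:inn3-uniform-moment-bound}. Throughout, write $p = 1+a$ and use the elementary convexity inequality $(u+v)^p \leq 2^{p-1}(u^p + v^p)$ for $u,v \geq 0$, where $2^{p-1} = 2^a$.

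\emph{Model term.} Fix $T \in \calT$ and $Z \sim P_Z$. The affine growth hypothesis gives $\|T^{-1}(Z)\| \leq J_0 + J_1\|Z\|$ pointwise. Raising both sides to the power $p$ and applying the convexity inequality yields
\begin{align}
\|T^{-1}(Z)\|^{p} \leq 2^{a}\lp J_0^{p} + J_1^{p}\|Z\|^{p}\rp .
\end{align}
Taking expectations gives $\mathbb{E}[\|T^{-1}(Z)\|^{p}] \leq 2^a(J_0^{1+a} + J_1^{1+a} M_Z)$. The right-hand side does not depend on $T$, so the same bound holds for the supremum over $\calT$.

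\emph{Data term.} By the realizability assumption~\NFrealizability, there is a $T^* \in \calT$ with $Q_{T^*} = P_X$, that is, $X \stackrel{d}{=} (T^*)^{-1}(Z)$. Hence $\mathbb{E}[\|X\|^{1+a}] = \mathbb{E}[\|(T^*)^{-1}(Z)\|^{1+a}]$, which is bounded by the previous step applied to $T = T^*$.

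Taking the maximum of the two (identical) bounds gives the stated $R$. I do not expect any real obstacle. The only points needing care are that $T^*$ belongs to $\calT$, so the uniform affine growth bound applies to it, and that the constant $2^{p-1}$ is exactly $2^a$, which matches the stated value of $R$. Unlike the INN case, there is no cross term here, so no extra factor of $2$ appears.
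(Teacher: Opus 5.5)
Your proposal is correct and follows essentially the same route as the paper, which omits this proof and refers to the argument of~\Cref{prop:inn3-uniform-moment-bound}: apply $(u+v)^{p}\leq 2^{p-1}(u^{p}+v^{p})$ with $p=1+a$ to the affine growth bound, then use realizability ($X \stackrel{d}{=} (T^*)^{-1}(Z)$ with $T^*\in\calT$) to transfer the bound to $X$. You are also right that there is no cross term here, so the constant is exactly $2^a$ rather than the $2^{2a+1}$ that appears in the INN version.
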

The proof of this result is exactly the same as that of~\Cref{prop:inn3-uniform-moment-bound}, and we omit it to avoid repetition. 
In other words, assuming realizability, a sufficient condition for \textbf{(NF4)} is if the latent variable $Z$ has finite $(1+a)$ moments, and the model class is globally Lipschitz and uniformly bounded at the origin. 
Finally, we proceed to the condition of vanishing variational approximation gap~\textbf{(NF3)}. 
\begin{proposition}
    \label[proposition]{prop:nf3-variational-gap} Suppose $\dkl(P_X \parallel Q_T)< \infty$ for all $T \in \calT$, and let $v_T = dP_X/dQ_T$ and $\ell_T = \log v_T$ denote the likelihood ratio and log-likelihood ratio (resp.) of $P_X$ and $Q_T$. 
    Fix some sequence $(K_n)_{n \geq 1}$ such that $K_n \overset{n \to \infty}{\longrightarrow} \infty$, and define the following uniform tail expectation terms, with $X \sim P_X$, $X' \sim Q_T$, and $\calH_n = \{\log g: g \in \calG_n\}$:  
    \begin{align}
        &\tau_1 \equiv \tau_1(K_n, \calT, \calH_n) = \sup_{T \in \calT} \sup_{h \in \calH_n} \mathbb{E}_{P_X}[\lv \ell_T(X) - h(X) \rv \boldsymbol{1}_{\|X\|_\infty > K_n}], \\
        \text{and} \quad &\tau_2 \equiv \tau_2(K_n, \calT, \calH_n) = \sup_{T \in \calT} \sup_{h \in \calH_n} \mathbb{E}_{Q_T}\lb \lv e^{h(X')} - e^{\ell_T(X')}   \rv \boldsymbol{1}_{\|X'\|_\infty > K_n} \rb. 
    \end{align}
    Next, introduce a uniform approximation error term computed over a restricted domain: 
    \begin{align}
        &\delta_n \equiv \delta_n(K_n, \calT, \calH_n) = \sup_{T \in \calT} \inf_{h \in \calH_n} \sup_{x:\|x\|_\infty \leq K_n} |h(x) - \ell_T(x)|. 
    \end{align}
    Suppose there exists a sequence of $\{K_n: n \geq 1\}$ with  $K_n \uparrow \infty$, such that 
    \begin{align}
        \lim_{n \to \infty} \lbr     \delta_n + \tau_1 + \tau_2 \rbr = 0.  \label{eq:nf3-sufficient-condition}
    \end{align}
    Then the variational gap $\eta_n$ also converges to zero; that is, if \eqref{eq:nf3-sufficient-condition} holds, then 
    \begin{align}
         \eta_n &= \sup_{T \in \calT} \lbr \dkl(P_X \parallel Q_T) - \LNF(T) \rbr  \;\stackrel{n \to \infty}{\longrightarrow}\;  0. 
    \end{align}
\end{proposition}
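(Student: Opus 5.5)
The plan is to bound the variational gap for each fixed $T \in \calT$ by a quantity that splits into an on-cube approximation part and two tail parts, and then take the supremum over $T$. The starting point is the Nguyen--Wainwright--Jordan variational identity already used to define $\LNF$. For each $T$, the supremum over all positive measurable $g$ is attained at $g = v_T = dP_X/dQ_T$. Since $\mathbb{E}_{Q_T}[v_T] = 1$, this gives
\begin{align}
    \dkl(P_X \parallel Q_T) = \mathbb{E}_{P_X}[\ell_T(X)] - \mathbb{E}_{Q_T}[e^{\ell_T(X')}] + 1.
\end{align}
On the other hand, for any $h \in \calH_n$ (that is, $g = e^h \in \calG_n$), the definition of $\LNF$ as a supremum gives $\LNF(T) \geq \mathbb{E}_{P_X}[h(X)] - \mathbb{E}_{Q_T}[e^{h(X')}] + 1$. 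Subtracting the two displays yields, for every $h \in \calH_n$,
\begin{align}
    \dkl(P_X \parallel Q_T) - \LNF(T) \leq \mathbb{E}_{P_X}\lb |\ell_T(X) - h(X)| \rb + \mathbb{E}_{Q_T}\lb |e^{h(X')} - e^{\ell_T(X')}| \rb.
\end{align}
The finiteness assumption $\dkl(P_X \parallel Q_T) < \infty$ ensures that the terms involving $\ell_T$ are integrable, so the subtraction is legitimate.

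Next, I split each expectation according to whether the argument lies in the cube $\{\|x\|_\infty \leq K_n\}$ or outside it. The outside parts are bounded by $\tau_1$ and $\tau_2$ directly. This holds for any $h \in \calH_n$, because those terms are defined with a supremum over $h$.

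For the inside parts, I fix $T$ and pick $h_T \in \calH_n$ with $\sup_{\|x\|_\infty \leq K_n} |h_T(x) - \ell_T(x)| \leq \delta_n + 1/n \eqcolon \delta_n'$; such an $h_T$ exists by the definition of $\delta_n$ as an infimum. The $P_X$ inside term is then at most $\delta_n'$. For the $Q_T$ inside term, I factor out $e^{\ell_T}$ and use $|e^{u} - 1| \leq e^{|u|} - 1$:
\begin{align}
    \mathbb{E}_{Q_T}\lb e^{\ell_T(X')} \lv e^{h_T(X') - \ell_T(X')} - 1 \rv \boldsymbol{1}_{\|X'\|_\infty \leq K_n} \rb \leq (e^{\delta_n'} - 1)\, \mathbb{E}_{Q_T}[v_T] = e^{\delta_n'} - 1.
\end{align}
Collecting the four pieces gives
\begin{align}
    \dkl(P_X \parallel Q_T) - \LNF(T) \leq \delta_n' + (e^{\delta_n'} - 1) + \tau_1 + \tau_2.
\end{align}
The right-hand side does not depend on $T$, so taking $\sup_{T \in \calT}$ bounds $\eta_n$ by the same quantity, and it vanishes under~\eqref{eq:nf3-sufficient-condition}.

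The step I expect to need the most care is the $Q_T$ term on the cube. A uniform approximation of $\ell_T$ does not immediately control $e^{h} - e^{\ell_T}$, because $v_T = e^{\ell_T}$ may be unbounded there. The multiplicative factorization above is what resolves this: it only requires $\mathbb{E}_{Q_T}[v_T] = 1$, not any bound on $v_T$ itself. The remaining points are bookkeeping: the near-optimal choice $h_T$ (the infimum defining $\delta_n$ need not be attained), and the measurability and integrability of $e^{h}$ under $Q_T$.
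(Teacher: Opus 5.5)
Your proposal is correct and follows essentially the same route as the paper: the identity $\dkl(P_X \parallel Q_T) - \LNF(T) = \inf_{h \in \calH_n}\{\mathbb{E}_{P_X}[\ell_T - h] + \mathbb{E}_{Q_T}[e^{h} - e^{\ell_T}]\}$ coming from $\mathbb{E}_{Q_T}[v_T]=1$, the split into the cube $\{\|x\|_\infty \le K_n\}$ and its complement with the tails absorbed into $\tau_1,\tau_2$, and the factorization $|e^{h}-e^{\ell_T}| = e^{\ell_T}|e^{h-\ell_T}-1|$ on the cube. The differences are minor improvements. You use $|e^u-1|\le e^{|u|}-1$, whereas the paper uses a somewhat garbled second-order Taylor bound that yields $\delta_n(2+e^{\delta_n}\delta_n/2)$. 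You also pick an explicit $(\delta_n+1/n)$-near-minimizer $h_T$, whereas the paper tacitly treats the infimum defining $\delta_n$ as attained.
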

The proof of this result is in~\Cref{proof:nf3-variational-gap}.

\subsection{V-NF Example}
\label{subsubsec:variational-nf-example}

We now illustrate that the assumptions required by~\Cref{theorem:nf-estimation-error} are satisfied by practically useful models,  by constructing specific $(\calT, \calG_n)$, and verifying this assumptions. 
Our model class is an iResNet flow with uniformly controlled Jacobians that ensures certain useful global regularity properties. Our critic class is induced by a Gaussian \ac{RKHS} restricted to a growing cube $C_{K_n} = [-K_n, K_n]^d$, which allows a neat separation between (i) approximation on a compact set, and (ii) tail control outside the compact set, as needed by~\Cref{prop:nf3-variational-gap}. 
\begin{definition}
    \label[definition]{def:calT-nf} We will work with i-ResNets of the form $T =  (I_d + F_M) \circ (I_d + F_{M-1})\circ \cdots \circ (I_d + F_1)$, where $``\circ"$ denotes the composition operation, and 
    \begin{align}
        &F_j(x) = W_{j,2} \tanh\lp W_{j,1}x + b_{j,1} \rp + b_{j,2},  \\
        \qtext{with} &\max\lbr \|W_{j,1} \|_{op}, \|W_{j,2}\|_{op}\rbr \leq s \in (0, 1), \qtext{and} \max \{\|b_{j,1}\|, \; \|b_{j,2}\|\} \leq B, \quad \;\forall j \in [M]. 
    \end{align}
    Each $W_{j,1}$ lies in $\mathbb{R}^{d_j} \times \mathbb{R}^d$ while $W_{j,2}$ lies in $\mathbb{R}^{d} \times \mathbb{R}^{d_j}$, with $d_j$ denoting the dimension of the hidden layer in block $j$. Similarly, $b_{j,1} \in \mathbb{R}^{d_j}$ and $b_{j,2} \in \mathbb{R}^{d}$. Let $H$ denote the maximum value of the hidden layer dimension; that is, $H = \max_{1 \leq j \leq M} d_j$, $\|\cdot\|_{op}$ denotes the operator norm, and $\|\cdot\|$ denotes the $\ell_2$ norm. 
\end{definition}
Next, we introduce the critic function class $\calG_n$. 
\begin{definition}
\label[definition]{def:calG-nf}
Let $k_n(x,y) \equiv k_{\gamma_n}(x, y) = \exp \lp - \gamma_n \|x-y\|^2 \rp$ denote a Gaussian kernel with scale parameter $\gamma_n>0$, and let $\calH_{k_n}$ denote the  reproducing kernel Hilbert space~(RKHS) associated with this kernel. Then, we defile the critic class $\calG_n$ as 
\begin{align}
    \calG_n = \{e^h: h \in \calH_n\}, \qtext{where} \calH_n = \{h \boldsymbol{1}_{[-K_n,K_n]^d}:  h \in \calH_{k_n}, \;\text{with} \; \|h\|_{k_n}\leq b_n \} \qtext{for some} b_n, K_n > 0. 
\end{align}
\sloppy By the reproducing property, it follows that $\|h\|_{k_{\gamma_n}} \leq b_n$ implies $\|h\|_\infty = \sup_{ x\in \calX} \langle h, k_{\gamma_n}(x, \cdot) \rangle_{k_{\gamma_n}} \leq \sup_{x \in \calX} \sqrt{k_{\gamma_n}(x,x)} \|h\| \leq b_n$. As a result, every $g \in \calG_n$ satisfies 
    \begin{align}
       0 < e^{- b_n} \leq g(x) \leq e^{b_n}.  
    \end{align}
\end{definition}
As mentioned above, we work with the i-ResNet model class $\calT$ in~\Cref{def:calT-nf} since this satisfies a global Lipschitz continuity condition that allows us to work with unbounded latent random variables $Z \sim N(0, I_d)$. This global Lipschitz property turns out to be useful in establishing that the $(\calT, \calG_n)$ pair of~\Cref{def:calT-nf} and~\Cref{def:calG-nf} satisfy the conditions required by~\Cref{assump:NF-theorem}. We can also ensure these assumptions are satisfied for the coupling-based architecture~\cite{dinh2016density} by restricting the latent distribution $P_Z$ to be supported on a compact domain. We now state the main result of this section. 
\begin{theorem}
    \label{theorem:nf3-example}
    Let $(\calT, \calG_n)$ be as in the two definitions above, and define the parameters 
    \begin{align}
        K_n &= M (s \sqrt{H} + B) + \sqrt{d} + \sqrt{2 \log n}, \\
        \gamma_n &= K_n^{2 + \epsilon} \quad \text{for some fixed } \epsilon>0, \\
        b_n &= C_b \gamma_n^{d/4} K_n^{2 + d/2} = C_b K_n^{2 + d + \frac{d}{4} \epsilon}, \\
        m_n &= \lceil e^{4 b_n} \rceil. 
    \end{align}
    Then, under the realizability assumption \textbf{(NF1)}, and with these choices of the parameters, the ERM model~$\That_n$ defined in~\eqref{eq:erm-NF} satisfies 
    \begin{align}
        W_1(P_X, Q_{\That_n}) = o(1), \quad \text{w.p. at least } 1-\delta.  \label{eq:nf-theorem-main-result}
    \end{align}
    In other words, with these parameters, the models in~\Cref{def:calT-nf} satisfy the sufficient conditions for \textbf{(NF2)}-\textbf{(NF4)} to hold, as derived in~\Cref{subsubsec:interpretable-conditions-nf}. 
\end{theorem}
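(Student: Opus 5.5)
The proof will reduce \Cref{theorem:nf3-example} to \Cref{theorem:nf-estimation-error}. Realizability \NFrealizability is assumed, so the work is to verify \NFconvergence, \NFvariational and \NFmoments for the pair $(\calT,\calG_n)$ of \Cref{def:calT-nf} and \Cref{def:calG-nf}. We do this through the sufficient conditions of \Cref{subsubsec:interpretable-conditions-nf}. Then $r_n+\eta_n\to0$ gives $W_1(P_X,Q_{\That_n})\lesssim (r_n+\eta_n)^{a/(2(1+a))}=o(1)$.

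\textbf{Step 1: moments.} We first establish global regularity of $\calT$. Each residual block $I+F_j$ has $\mathrm{Lip}(F_j)\le s^2<1$, so it is invertible with $\mathrm{Lip}((I+F_j)^{-1})\le 1/(1-s^2)$. We also have $\|F_j(x)\|\le s\sqrt{d_j}+B\le s\sqrt H+B$, because $\tanh$ is bounded by one in each coordinate. Hence $\|T(x)-x\|$ and $\|T^{-1}(z)-z\|$ are both at most $M(s\sqrt H+B)$, since the inverse block satisfies $u=v-F_j(u)$. This gives the affine growth $\|T^{-1}(z)\|\le J_0+\|z\|$ with $J_0=M(s\sqrt H+B)$. \Cref{prop:nf4-moment-control} with $Z\sim N(0,I_d)$ then yields \NFmoments for every $a>0$. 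The same displacement bound explains the choice $K_n=J_0+\sqrt d+\sqrt{2\log n}$. For $Z$ Gaussian, $P(\|Z\|>\sqrt d+\sqrt{2\log n})\le 1/n$, so $Q_T(\|x\|_\infty>K_n)\le 1/n$ uniformly in $T$, and the same holds for $P_X=Q_{T^*}$.

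\textbf{Step 2: uniform convergence.} We apply \Cref{prop:nf2-uniform-convergence-verification}. For the Gaussian RKHS ball truncated to a cube, the standard bound gives $\mathfrak R_n(\calH_n)\le b_n/\sqrt n$. The truncation only multiplies by an indicator, which we handle by bounding $\sup_h\sum\epsilon_i h(x_i)\mathbf 1(x_i)$ by the norm of $\sum\epsilon_i\mathbf 1(x_i)k(x_i,\cdot)$. The same argument gives $\mathfrak R_{m_n}(\calH_n,\calT)\le b_n/\sqrt{m_n}$: for fixed samples the bound is uniform over the points $T^{-1}(Z_i)$, so the supremum over $T$ costs nothing. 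The deviation is then $O(b_n/\sqrt n+b_ne^{b_n}/\sqrt{m_n})$. With $m_n=\lceil e^{4b_n}\rceil$ the second term is $b_ne^{-b_n}\to0$. The first term is polylogarithmic in $n$ over $\sqrt n$, since $b_n$ is polynomial in $K_n\asymp\sqrt{\log n}$, so it also vanishes.

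\textbf{Step 3: variational gap, the main obstacle.} We verify \eqref{eq:nf3-sufficient-condition} of \Cref{prop:nf3-variational-gap}. We will show that $\ell_T=\log q_{T^*}-\log q_T$ is smooth on $C_{K_n}$ with controlled growth. Write $\log q_T(x)=-\|T(x)\|^2/2+\log\det\Jac T(x)+c$. Since $\|T(x)\|\le J_0+\|x\|$, the quadratic part is $O(K_n^2)$ on the cube. The log-determinant is bounded by $dM\log\frac{1+s^2}{1-s^2}$, and all its derivatives are bounded uniformly in $T$ because $\tanh$ is smooth and the weights are bounded. So $\ell_T$ restricted to $C_{K_n}$ has $\sup$ norm and Sobolev norm $O(K_n^2)$ uniformly in $T$.

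Next we extend $\ell_T$ smoothly off the cube and approximate it in the Gaussian RKHS with bandwidth $\gamma_n$, using convolution with a rescaled Gaussian-type kernel. The approximation error in $\sup$ norm scales like $\gamma_n^{-1}$ times second derivatives, roughly $K_n^2/K_n^{2+\epsilon}\to0$. The RKHS norm of the approximant is about $\gamma_n^{d/4}$ times an $L^2$ norm over a domain of volume $K_n^d$, which gives $\gamma_n^{d/4}K_n^{2+d/2}$. This is exactly $b_n$, so $\delta_n\to0$. This constructive bound is the delicate part. If the direct estimate fails, we would cite a known Gaussian-RKHS approximation rate for Sobolev functions on cubes, with constants tracked in the bandwidth and the domain size.

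For the tails we use Cauchy--Schwarz. For $\tau_1$, the terms satisfy $|\ell_T|\lesssim 1+\|x\|^2$ and $|h|\le b_n$, and $h$ vanishes off the cube, so $\tau_1\lesssim (\mathbb E\|X\|^4)^{1/2}(1/n)^{1/2}\to0$. For $\tau_2$, we use $h=0$ off the cube, so the integrand is $|1-v_T|$. Its $Q_T$-integral over the tail set is at most $Q_T(\mathrm{tail})+P_X(\mathrm{tail})\le2/n$. Together these give $\eta_n\to0$, which completes the reduction.
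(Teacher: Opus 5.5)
Your reduction is the paper's: verify \Cref{prop:nf2-uniform-convergence-verification}, \Cref{prop:nf4-moment-control} and \Cref{prop:nf3-variational-gap}, then invoke \Cref{theorem:nf-estimation-error}. Steps 1 and 3 are sound, and in places sharper than the paper. The displacement bound $\|T^{\pm 1}(u)-u\|\le M(s\sqrt{H}+B)$ explains the exact form of $K_n$ and gives tail probabilities at most $1/n$. Noticing that $h\equiv 0$ off $C_{K_n}$ removes the $b_n$ factor from $\tau_1$. A smooth cutoff gives a second-order smoothing error $O(K_n^2/\gamma_n)$ in place of the paper's $O(K_n/\sqrt{\gamma_n})$.

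The gap is in Step 2, where you claim $\mathfrak{R}_{m_n}(\calH_n,\calT)\le b_n/\sqrt{m_n}$ because the supremum over $T$ ``costs nothing.'' Your fixed-points estimate shows $\mathbb{E}_\epsilon \sup_h \frac{1}{m_n}\sum_i \epsilon_i h(x_i)\le b_n/\sqrt{m_n}$ for every configuration $(x_i)$. That bounds $\sup_T \mathbb{E}_\epsilon[\,\cdot\,]$. The Rademacher complexity needed in \Cref{prop:nf2-uniform-convergence-verification} is $\mathbb{E}_\epsilon \sup_T[\,\cdot\,]$, and uniformity over configurations does not let you exchange the two. After maximizing over $h$ you must bound $\frac{b_n}{m_n}\mathbb{E}_\epsilon \sup_T \sqrt{\epsilon^\top K_T\epsilon}$. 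Here $K_T$ is the Gram matrix of $k_n$ at those $T^{-1}(Z_i)$ lying in $C_{K_n}$, an index set that itself depends on $T$. Jensen gives only $\sqrt{\mathbb{E}_\epsilon \sup_T \epsilon^\top K_T\epsilon}$, not $\sqrt{\sup_T \operatorname{tr} K_T}$. The only $T$-free bound, $\epsilon^\top K_T\epsilon\le m_n\lambda_{\max}(K_T)$, is useless here. The $m_n=\lceil e^{4b_n}\rceil$ points are packed at a spacing far below the bandwidth $\gamma_n^{-1/2}$, so $\lambda_{\max}(K_T)\asymp m_n\gamma_n^{-d/2}$. This route then yields $b_n\gamma_n^{-d/4}\asymp K_n^{2+d/2}$, which diverges even before the factor $e^{b_n}$.

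The paper's own verification has the same error, so you cannot lean on it. Its ``Jensen's $+\sup_{x,x'}k(x,x')\le 1$'' step replaces $\sup_T\sum_{i,j}\sigma_i\sigma_j k(T^{-1}(z_i),T^{-1}(z_j))$ by $\sum_{i,j}\sigma_i\sigma_j$, which is invalid for a signed sum. What is needed is an argument over the finite-dimensional weight set of $\calT$:
\begin{itemize}
\item Take an $\eta$-net of $N$ points in the weight space.
\item At each net point, $\epsilon\mapsto\|\sum_i\epsilon_i\boldsymbol{1}\{T^{-1}(Z_i)\in C_{K_n}\}k_n(T^{-1}(Z_i),\cdot)\|_{k_n}$ is convex and $\sqrt{m_n}$-Lipschitz. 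It therefore concentrates around a mean at most $\sqrt{m_n}$, and a union bound over the net costs $O(\sqrt{m_n\log N})$.
\item Control the discretization error by $\|k_n(x,\cdot)-k_n(x',\cdot)\|_{k_n}\le\sqrt{2\gamma_n}\|x-x'\|$ and the Lipschitz dependence of $T^{-1}(z)$ on the weights.
\item Count separately the sample points whose membership in $C_{K_n}$ flips inside a net cell; by bi-Lipschitzness these lie in a thin shell around $\partial C_{K_n}$.
\end{itemize}
Since $\log N=O(\log m_n)=O(b_n)$, this gives $\mathfrak{R}_{m_n}(\calH_n,\calT)\lesssim\mathrm{poly}(b_n)/\sqrt{m_n}$. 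The choice $m_n=\lceil e^{4b_n}\rceil$ still forces $e^{b_n}\mathfrak{R}_{m_n}(\calH_n,\calT)\to 0$. So the theorem survives, but only once this step is actually supplied.
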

The proof of this result is in~\Cref{proof:nf3-variational-gap-example}.

\subsection{Proof of~\Cref{theorem:nf-estimation-error}}
\label{proof:nf-estimation-error}

Introduce the uniform approximation event $\calE_n = \{\sup_{T \in \calT}|\widehat{L}_{NF}(T) - L_{NF}(T)| \leq r_n\}$, and note that under the assumption \textbf{(NF2)}, we have $\mathbb{P}(\calE_n) \geq 1-\delta$. For the rest of this proof, we will work under the event $\calE_n$. 

By definition of ERM model, we knlw that for any $T \in \calT$, we have $\widehat{L}_{NF}(\That_n) \leq \widehat{L}_{NF}(T)$. This implies the following chain: 
\begin{align}
    L_{NF}(\That_n) & \leq \widehat{L}_{NF}(\That_n) + r_n && (\text{event } \calE_n) \\
    & \leq \widehat{L}_{NF}(T^*) + r_n && (\text{ERM property}) \\
    & \leq (L_{NF}(T^*) + r_n) + r_n && (\text{event } \calE_n) \\
    & = 2r_n && (\text{assumption \textbf{(NF1)}}). 
\end{align}
Now, by the variational gap assumption~\textbf{(NF3)}, we know that 
\begin{align}
    \left\lvert \dkl(P_X \parallel Q_{\That_n}) - L_{NF}(\That_n) \right\rvert &\leq \sup_{T \in \calT} \left\lvert \dkl(P_X \parallel Q_T) - L_{NF}(T) \right\rvert \leq \eta_n. 
\end{align}
Thus, combining the two displays above, we obtain the following bound on the relative entropy between the true and the \ac{NF} distributions, under the $1-\delta$ probability event $\calE_n$: 
\begin{align}
    \dkl(P_X \parallel P_{\That_n}) \leq \eta_n + 2r_n \quad \stackrel{\text{Pinsker's}}{\implies} \quad  
    \|P_X - P_{\That_n}\|_{TV} \leq \sqrt{\frac{1}{2} (2r_n + \eta_n)}. \label{eq:nf-proof-1}
\end{align}
We can now conclude the proof with the following chain: 
\begin{align}
    W_1(P_X, P_{\That_n}) \lesssim \lp \|P_X - P_{\That_n}\|_{TV}\rp^{\frac{a}{1+a}} \lesssim (2r_n + \eta_n)^{\frac{a}{2(1+a)}}. 
\end{align}
Here the first inequality~(modulo constants) is due to~\Cref{lemma:truncation}, and the second inequality is by~\eqref{eq:nf-proof-1}.

\subsection{Proof of~\Cref{prop:nf2-uniform-convergence-verification}} 
\label{proof:nf2-uniform-convergence-verification}
This follows from the standard uniform convergence results for bounded function classes. Observe that 
\begin{align}
    \sup_{T \in \calT} |\LhatNF(T) - \LNF(T)| &\leq \sup_{h \in \calH_n} |(\empPX_n - \PX) h| + \sup_{T \in \calT} \sup_{h \in \calH_n} |(\empPZ_{m_n} - \PZ) e^h \circ T^{-1}| \\
    & \coloneqq \rmD_X + \rmD_Z. 
\end{align}
Note that above we have used the notation $\empPX_n h = \frac{1}{n} \sum_{i=1}^n h(X_i)$, $\PX h = \mathbb{E}_{X \sim P_X}[h(X)]$, $\empPZ_{m_n} e^h\circ T^{-1} = \frac{1}{m_n} \sum_{i=1}^{m_n} e^{h(T^{-1}(Z_i))}$ and $\PZ e^h \circ T^{-1} = \mathbb{E}_{Z \sim P_Z}[e^{h(T^{-1}(Z))}]$.  

Now observe that by the classical symmetrization technique, we have the following upper bound on the expected value of $\rmD_X$~\cite[Lemma 26.2]{shalev2014understanding}:  
\begin{align}
    \mathbb{E}[\rmD_X] \leq 2 \RadComp_n(\calH_n) = 2 \mathbb{E}\lb \RadComp_n(\calH_n, X^n) \rb %
\end{align}
Here, $\RadComp_n$ denotes the Rademacher complexity whose definition is recalled in~\Cref{def:rademacher-complexity} in~\Cref{appendix:background}. 
Since we have assumed that $\|h\|_{\infty} \leq b_b$, for all $h \in \calH_n$, an application of the bounded difference concentration inequality~(recalled in~\Cref{fact:mcdiarmid} in~\Cref{appendix:background}) immediately implies with some constant $C>0$: 
\begin{align}
    \mathbb{P}_{X^n}\lp \rmD_X \geq \mathbb{E}[\rmD_X] +  C b_n\sqrt{\frac{\log(2/\delta)}{n}} \rp 
    &\leq \mathbb{P}_{X^n}\lp \rmD_X \geq 2 \RadComp_n(\calH_n) +  C b_n\sqrt{\frac{\log(2/\delta)}{n}} \rp  
    \leq  \frac{\delta}{2}. \label{eq:nf2-proof-3}
\end{align}

A similar argument works for the term $\rmD_Z$. In particular, we know that the function class $\{e^{h \circ T^{-1}}: h \in \calH_n, T \in \calT\}$ is uniformly bounded from above by $e^{b_n}$. Thus, we can again get a high probability upper bound using the expected Rademacher complexity and the bounded difference inequality: 
\begin{align}
    \mathbb{P}_{Z^{m_n}}\lp \rmD_Z \geq \mathbb{E}[\rmD_Z] +  C e^{b_n}\sqrt{\frac{\log(2/\delta)}{m_n}} \rp &\leq \mathbb{P}_{Z^n}\lp \rmD_Z \geq 2\RadComp_{m_n}(\calG_n \circ \calT^{-1}) +  C e^{b_n}\sqrt{\frac{\log(2/\delta)}{m_n}} \rp  \leq  \frac{\delta}{2}. \label{eq:nf2-proof-4}
\end{align}
Together,~\eqref{eq:nf2-proof-4} and~\eqref{eq:nf2-proof-3}, along with the observation that $\RadComp_{m_n}(\calG_n \circ \calT^{-1}) \leq e^{b_n} \RadComp_{m_n}(\calH_n, \calT)$ by the contraction result, imply the required uniform convergence bound with probability at least $1-\delta$: 
\begin{align}
    \rmD_X + \rmD_Z \leq 2 \RadComp_n(\calH_n)  + 2\RadComp_{m_n}(\calH_n, \calT) + C\sqrt{\log(2/\delta)} \lp \frac{b_n}{\sqrt{n}} + \frac{e^{b_n}}{\sqrt{m_n}} \rp \eqcolon r_n. 
\end{align}
Thus, in order to establish a uniform convergence guarantee, it suffices to find a sequence of $\{(n, m_n, b_n, \calH_n): n \geq 1\}$, such that the term $r_n$ in  RHS above converges to $0$. \hfill %
\subsection{Proof of~\Cref{prop:nf3-variational-gap}}
\label{proof:nf3-variational-gap}
Throughout this proof, we assume that $X \sim P_X$ and $X' \sim Q_T$ for $T \in \calT$, where $Q_T$ represents the distribution of $T^{-1}(Z)$ for some latent variable $Z \sim P_Z$. Our argument below does not rely on how $X'$ is generated, or on any properties of $P_Z$.  
Now, observe that for any $T \in \calT$, the relative entropy $\dkl(P_X \parallel Q_T)$ is equal to $\mathbb{E}_{P}[\ell_T(X)]$, which implies that 
\begin{align}
    \Delta_n(T) &\coloneqq \dkl(P_X \parallel Q_T) - \sup_{h \in \calH_n} \lbr \mathbb{E}_{P_X}[h(X)] - \mathbb{E}_{Q_T}[e^{h(X')}] + 1 \rbr \\
    &=  \inf_{h \in \calH_n} \lbr \mathbb{E}_{P_X}[\ell_T(X) - h(X)] + \mathbb{E}_{Q_T}[e^{h(X')} - e^{\ell_T(X')}]  \rbr,  \label{eq:nf3-proof-1}
\end{align}
where the second equality uses the fact that $E_{Q_T}[e^{\ell_T}] = 1$~(recall that $\ell_T$ is the log-likelihood ratio between $P_X$ and $Q_T$). Now, for some $K_n \to \infty$, define the events $E_n = \{\|X\|_\infty \leq K_n\}$ and $F_n = \{\|X'\|_\infty \leq K_n\}$, and observe that 
\begin{align}
    \Delta_n(T) &\leq  \inf_{h \in \calH_n} \lbr \mathbb{E}_{P_X}[\lp\ell_T(X) - h(X) \rp \boldsymbol{1}_{E_n}] + \mathbb{E}_{Q_T}[\lp e^{h(X')} - e^{\ell_T(X')} \rp \boldsymbol{1}_{F_n}]  \rbr + \tau_1 + \tau_2,  \label{eq:nf3-proof-2}
\end{align}
where the two terms $\tau_1$ and $\tau_2$ are defined as 
\begin{align}
    \tau_1(K_n, \calT, \calH_n) &= \sup_{T \in \calT} \sup_{h \in \calH_n} \mathbb{E}_{P_X}[ \lv \ell_T(X) - h(X) \rv \boldsymbol{1}_{\|X\|_\infty > K_n}], \\
    \text{and} \quad 
    \tau_2(K_n, \calT, \calH_n) &= \sup_{T \in \calT} \sup_{h \in \calH_n} \mathbb{E}_{Q_T}\lb \lv e^{h(X')} - e^{\ell_T(X')} \rv\boldsymbol{1}_{\|X'\|_\infty > K_n} \rb.
\end{align}
It remains to analyze the first term in the RHS of~\eqref{eq:nf3-proof-2}. We proceed by observing that 
\begin{align}
    \lv e^{\ell_T} - e^{h} \rv = e^{\ell_T} \lv e^{h-\ell_T} - 1 \rv   \leq e^{\ell_T} \lp |\ell_T - h| + \frac{e^{|\ell_T-h|}}{2} |\ell_T-h|^2 \rp, 
\end{align}
where the inequality uses a second-order Taylor approximation of $e^x$ around $0$. More specifically, we use $|e^u - 1| \leq |u| \leq e^{|u|} \tfrac{u^2}{2}$ with $u \leftarrow \ell_T-h$.  
Now, let us introduce the following terms: 
\begin{align}
    e^{b_n} \coloneqq \sup_{h \in \calH_n} \sup_{x: \|x\|_\infty \leq K_n} e^{h(x)}, \qtext{and} \delta_n =  \sup_{T \in \calT} \inf_{h \in \calH_n} \sup_{x: \|x\|_\infty\leq  K_n} |\ell_T(x) - h(x)|. \label{eq:nf3-proof-3}
\end{align}
Using these definitions in~\eqref{eq:nf3-proof-2}, we obtain 
\begin{align}
    \Delta_n(T) &\leq \mathbb{E}_{P_X}[\delta_n \boldsymbol{1}_{E_n}] + \mathbb{E}_{Q_T}\lb e^{\ell_T(X')} \lp \delta_n + e^{\delta_n} \frac{\delta_n^2}{2} \rp \boldsymbol{1}_{F_n} \rb + \tau_1 + \tau_2 \\
    & \leq \delta_n + \lp  \delta_n + e^{\delta_n} \frac{\delta_n^2}{2}\rp \mathbb{E}_{Q_T}[e^{\ell_T}] + \tau_1 + \tau_2 \\
    & = \delta_n \lp 2 + e^{\delta_n} \delta_n/2 \rp + \tau_1 + \tau_2. 
\end{align}
Since the RHS is independent of $T$, taking a supremum over the function class $\calT$, gives us the required 
\begin{align}
    \eta_n \;=\; \sup_{T \in \calT} \Delta_n(T) \;\leq\; \lp 2 + e^{\delta_n} \delta_n/2 \rp \delta_n  + \tau_1 + \tau_2, 
\end{align}
which converges to $0$ if $\delta_n, \tau_1, \tau_2$ converge to $0$. 
This concludes the proof. \hfill %

\subsection{Proof of~\Cref{theorem:nf3-example}}
\label{proof:nf3-variational-gap-example}

To prove this result, we need to verify that the sufficient conditions obtained in~\Cref{prop:nf2-uniform-convergence-verification}~(uniform convergence),~\Cref{prop:nf4-moment-control}~(moment bounds), and~\Cref{prop:nf3-variational-gap}~(variational gap), are satisfied by our specific choices in~\Cref{def:calT-nf} and~\Cref{def:calG-nf}. We verify these conditions in~\Cref{proof:nf-theorem-uniform-convergence},~\Cref{proof:nf-theorem-moment}, and~\Cref{proof:nf-theorem-variational-gap} respectively. Before proceeding to these steps, we first establish certain properties of the function class $\calT$ that will be used often.

Suppose $P_Z$ is a Gaussian distribution with identity covariance, and density $p_Z(z) \propto \exp (- \|z\|^2/2)$, which implies that $\psi(z) = \log p_Z(z) = -\|z\|^2/2 + \mathrm{const}$. 
Under the realizability assumption, there exists $T^* \in \calT$ such that $P_X = Q_{T^*}$.  Every $T \in \calT$ can be represented as 
\begin{align}
    T = G_M \circ G_{M-1} \circ \ldots \circ G_1, \qtext{where} G_j(x) = x + F_j(x), \; \; F_j(x) = W_{j,2} \tanh (W_{j,1}x + b_{j,1}) + b_{j,2}. 
\end{align}
By assumption, we have $\|W_{j,i}\|_{op} \leq s \in (0, 1)$ and $\|b_{j,i}\|\leq B$ for all $j, i$, where $\|\cdot\|$ denotes the $\ell_2$ norm, and $\|\cdot\|_{op}$ denotes the operator norm induced by $\|\cdot\|$.

\subsubsection{Verification of the Uniform Convergence Assumption}
\label{proof:nf-theorem-uniform-convergence}
To simplify the notation, introduce the function classes $\calS_n = \{g \circ T^{-1}: g \in \calG_n, T \in \calT\}$ and $\calU_n = \{(1/b_n)h \circ T^{-1}: h \in \calH_n, T \in \calT\}$. Recall that $\calH_n$ consists of functions of the form $\tilde{h} \boldsymbol{1}_{[-K_n, K_n]^d}$ for all $\tilde{h}$ lying in the RKHS of a Gaussian kernel $k_n$~(denoted by $\calH_{k_n}$), with $\|\tilde{h}\|_{k_n} \leq b_n$. 
We first consider the term $\RadComp_{m_n}(\calS_n)$, since the bound on the other term follows similarly.  
Observe that $\calS_n = \{e^{b_n u}: u \in \calU_n\}$ by definition, leading to the following inequality via the contraction lemma~\cite[Lemma 26.9]{shalev2014understanding}: 
\begin{align}
    \mathfrak{R}_{m_n}(\calS_n, Z^{m_n}) \leq e^{b_n} \mathfrak{R}_{m_n}\lp \calU_n, Z^{m_n} \rp. \label{eq:nf-theorem-proof-1}
\end{align}
It remains for us to control the Rademacher complexity of the composition class $\calU_n$. For any $z^{m_n} = (z_1, \ldots, z_{m_n})$, introduce the terms $\calI \equiv \calI(z^n, K_n) = \{i \in [m_n]: z_i \in [-K_n, K_n]^d\}$ and $s_n \equiv s_n(z^{m_n}, K_n) = |\calI|$. For $z^{m_n}$ such that $s_n >0$, observe that
\begin{align}
    b_n\mathfrak{R}_{m_n}(\calU_n, z^{m_n}) &= \mathbb{E}_{\boldsymbol{\sigma}}\lb  \sup_{T\in \calT} \sup_{h \in \calH_n}\; \frac{1}{m_n} \sum_{i=1}^{m_n} \sigma_i h(T^{-1}(z_i)) \rb \\
    &= \mathbb{E}_{\boldsymbol{\sigma}}\lb  \sup_{T\in \calT} \sup_{\htilde: \|\htilde\|_{k_n} \leq b_n}\;  \frac{1}{m_n} \sum_{i \in \calI} \sigma_i \htilde(T^{-1}(z_i)) \rb \\
    &= \mathbb{E}_{\boldsymbol{\sigma}}\lb  \sup_{T\in \calT} \sup_{\htilde: \|\htilde\|_{k_n} \leq b_n}\; \left \langle \htilde, \, \frac{1}{m_n}  \sum_{i \in \calI} \sigma_i k_n(T^{-1}(z_i), \cdot) \right \rangle_{k_n} \rb  && (\text{reproducing property})\\ 
    &= \mathbb{E}_{\boldsymbol{\sigma}}\lb  \sup_{T\in \calT} {b_n } \left\lVert \frac{1}{m_n} \sum_{i \in \calI} \sigma_ik_n(T^{-1}(z_i), \cdot ) \right\rVert_{k_n} \rb  && \lp \text{optimal } h \propto \frac{b_n}{m_n}\sum_i k_n(T^{-1}(z_i), \cdot)\rp\\
    & = \mathbb{E}_{\boldsymbol{\sigma}}\lb \frac{b_n}{m_n} \sqrt{\sup_{T\in \calT}  \sum_{i \in \calI} \sum_{j \in \calI} \sigma_i \sigma_j k(T^{-1}(z_i), T^{-1}(z_j)) }\rb   \\
    &\leq  \frac{b_n}{m_n} \sqrt{\mathbb{E}_{\boldsymbol{\sigma}}\lb   \sum_{i=1}^{m_n} \sum_{j=1}^{m_n} \sigma_i \sigma_j  \rb } && \lp \text{Jensen's + } \sup_{x,x'} k(x,x') \leq 1 \rp \\
    & = \frac{b_n \sqrt{s_n}}{m_n}. 
\end{align}
As a result, with an \iid sample $Z^{m_n}$, the expected Rademacher complexity satisfies 
\begin{align}
    b_n \RadComp_{m_n}(\calU_n) &=  b_n\mathbb{E}_{Z^{m_n}}\lb \RadComp_{m_n}(\calU_n, Z^{m_n})\rb  \leq \frac{b_n}{m_n} \mathbb{E}_{Z^{m_n}} \lb \sqrt{\sum_{i=1}^{m_n} \boldsymbol{1}_{Z_i \in [-K_n, K_n]^d} }\rb \\ 
    & \leq b_n \sqrt{ \frac{ P_Z([-K_n, K_n]^d) }{m_n} } \leq \frac{b_n}{\sqrt{m_n}}. 
\end{align}
An exactly analogous argument implies $\RadComp_n(\calH_n) \leq b_n/\sqrt{n}$. Combining these two bounds, we see that the uniform convergence rate obtained in~\Cref{prop:nf2-uniform-convergence-verification} reduces to the following (up to leading constants): 
\begin{align}
    r_n \lesssim \frac{b_n}{\sqrt{n}} + \frac{b_n e^{b_n}}{\sqrt{m_n}}. \label{eq:nf-theorem-proof-2}
\end{align}
This can be made to converge to $0$ with $n$, by selecting $b_n = o(\sqrt{n})$ and $m_n = \Omega(e^{(2+\epsilon)b_n})$ for any $\epsilon>0$. Observe that the choices of $b_n, m_n$ in~\Cref{theorem:nf3-example} satisfy these conditions, as we discuss further at the end of this section.  \hfill %

\subsubsection{Verification of the Moment Assumption}
\label{proof:nf-theorem-moment}

\paragraph{Lipschitz property.} Observe that $\tanh$ is $1$-Lipschitz as its derivative is $\operatorname{sech}^2 \in [0,1]$, which implies 
\begin{align}
    \mathrm{Lip}(F_j) \leq \|W_{j,2}\|_{op} \|W_{j,1}\|_{op} \leq s^2 \eqcolon \rho \in (0, 1).  \label{eq:nf-theorem-proof-3}
\end{align}
Hence, for every block $G_j = I_d + F_j$, we have 
\begin{align}
(1-\rho) \|x-x'\| \leq \|G_j(x) - G_j(x')\| \leq (1+ \rho) \|x-x'\|    
\end{align}
Since $T$ is the composition of $M$ such blocks, we immediately obtain 
\begin{align}
 (1-\rho)^M \|x - x'\| \leq   \|T(x) - T(x')\| \leq (1+\rho)^M \|x - x'\|. 
\end{align}
This allows us to conclude that every $T \in \calT$ is bi-Lipschitz with 
\begin{align}
    \mathrm{Lip}(T) \leq (1+s^2)^{M}, \qtext{and} \mathrm{Lip}(T^{-1}) \leq (1-s^2)^{-M}. 
\end{align}
Since these constants are independent of $T$, they are also uniformly valid over the class $\calT$: 
\begin{align}
    \sup_{T \in \calT} \max \lbr \mathrm{Lip}(T), \mathrm{Lip}(T^{-1}) \rbr \leq \max \lbr (1-s^2)^{-M}, \, (1+s^2)^M \rbr \eqcolon J_1. \label{eq:nf-theorem-proof-8}
\end{align}

\paragraph{Values at $x=0$.} Fix any $T = G_M \circ \ldots \circ G_1$ in $\calT$, and consider $x_0 = 0$, $x_j = G_j(x_{j-1}) = x_{j-1} + F_j(x_{j-1})$ for $j \in [M]$. Observe that by~\eqref{eq:nf-theorem-proof-3}, we have for any $x$ and $j$: 
\begin{align}
    \|F_j(x)\| \leq \|F_j(0)\| + s^2 \|x\|. 
\end{align}
Now, $\|F_j(0)\|$ can be bounded as 
\begin{align}
    \|F_j(0)\| = \|W_{j,2} \tanh(b_{j,1}) + b_{j,2}\| \leq \|W_{j,2}\|_{op} \lp \|\tanh(b_{j,1})\| + \|b_{j,2}\|\rp. 
\end{align}
We know that $\|W_{j,2}\|_{op} \leq s$ and $\|b_{j,i}\| \leq B$ for $i =1, 2$. Furthermore, let $H$ denote the maximum dimension of the hidden layer in $M$ blocks; that is, $H =\max_{j\in [M]} d_j$. Then, $\|\tanh(b_{j,1})\| \leq \sqrt{d_j} \tanh(B) \leq \sqrt{H}$. These facts imply that 
\begin{align}
    \|F_j(0)\|\leq C_0 \coloneqq s \sqrt{H} + B. \label{eq:nf-theorem-proof-4}
\end{align}
This implies that for any $j \in [M]$, due to the Lipschitz property, we have 
\begin{align}
    \|x_j\| &= \|G_j (x_{j-1})\| =  \|x_{j-1} + F_j(x_{j-1})\| \leq \|x_{j-1}\| +  \|F_j(x_{j-1})\| \leq \|x_{j-1}\| +  \|F_j(0)\| + s^2 \|x_{j-1}\| \\
    & \leq C_0 + (1+ s^2) \|x_{j-1}\|. 
\end{align}
Applying this inequality iteratively, with $x_0=0$ and $x_M = T(x_0)$, we have 
\begin{align}
    \|T(0)\| \leq C_0 \sum_{i=0}^{M-1} (1+s^2)^{i} = C_0 \lp \frac{(1+s^2)^M -1}{s^2} \rp.  \label{eq:nf-theorem-prooof-5}
\end{align}
Finally, for any $T$, let $y_T$ denote $T^{-1}(0)$, and observe that 
\begin{align}
\|y_T\| =\| T^{-1} \circ T (y_T - 0)\| \leq \mathrm{Lip}(T^{-1}) \|T(y_T) - T(0)\| = \mathrm{Lip}(T^{-1}) \| T(0)\|. \label{eq:nf-theorem-prooof-6} 
\end{align}
Together,~\eqref{eq:nf-theorem-prooof-5} and~\eqref{eq:nf-theorem-prooof-6} imply 
\begin{align}
    \sup_{T \in \calT} \max \lbr \|T(0)\|, \, \|T^{-1}(0)\| \rbr \leq J_1 (s\sqrt{H} + B) \lp \frac{ (1+s^2)^M - 1}{s^2} \rp \eqcolon J_0. \label{eq:nf-theorem-prooof-7}
\end{align}

Since $Z \sim N(0, I_d)$ has finite moments of all orders and $\|T^{-1}(z)\| \leq J_0 + J_1\|z\|$ uniformly in $T$, we have that $\sup_{T \in \calT}\mathbb{E}[\|T^{-1}(Z)\|^a] < \infty$ for all $a >0$.
Under the realizability assumption, $X \stackrel{d}{=} (T^*)^{-1}(Z)$, hence this also implies $\EE\|X\|^{1+a} < \infty$. 
Thus, with an arbitrary $a>0$, and  $M_Z = 2^{a/2} \Gamma( (a+d)/2)/\Gamma( d/2)$, we get 
\begin{align}
    R = 2^a\lp J_0^{1+a} + J_1^{1+a} 2^{a/2} \Gamma( (a+d)/2)/\Gamma( d/2) \rp.  \label{eq:nf-theorem-proof-9}
\end{align}
Recall that $J_0$ was defined in~\eqref{eq:nf-theorem-prooof-7}, and $J_1$ in~\eqref{eq:nf-theorem-proof-8}.  This completes the proof. \hfill %

\subsubsection{Verification of the Variational Gap Assumption}
\label{proof:nf-theorem-variational-gap}

To verify this assumption for the model introduced in~\Cref{def:calT-nf}, we will first obtain the Jacobian of the functions in $\calT$, which will then allow us to obtain a closed form expression for the log likelihood ratio $dP_X/dQ_T$ and characterize its behavior on a cube $C_{K_n} = [-K_n, K_n]^d$. 

\begin{lemma}
\label[lemma]{lemma:nf-Jacobian} 
Consider any $T = G_M \circ G_{M-1} \circ \cdots \circ G_1  \in \calT$ of~\Cref{def:calT-nf}, where each block $G_j(x) = (I_d + F_j)(x) = x + W_{j,2} \tanh(W_{j,1} x +  b_{j,1}) + b_{j,2}$. Then, we have the following: 
\begin{align}
    &\Jac G_j(x) = I_d + W_{j,2} D_j(x) W_{j,1}, \qtext{where} D_j(x) = \mathrm{diag}(\operatorname{sech}^2(W_{j,1} x + b_{j,1})) \\
    & \|\Jac F_j(x)\|_{op} \leq s^2, \qtext{and} \|\Jac F_j(x) - \Jac F_j(y)\|_{op} \leq 2s^3 \|x -y\|. 
\end{align}
\end{lemma}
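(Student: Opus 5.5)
The Jacobian formula follows from the chain rule applied to $F_j(x) = W_{j,2}\tanh(W_{j,1}x + b_{j,1}) + b_{j,2}$. Since $\tanh$ acts coordinatewise with derivative $\operatorname{sech}^2$, the Jacobian of $u \mapsto \tanh(u)$ at $u = W_{j,1}x + b_{j,1}$ is the diagonal matrix $D_j(x) = \mathrm{diag}(\operatorname{sech}^2(W_{j,1}x + b_{j,1}))$. Hence
\begin{align}
\Jac F_j(x) = W_{j,2} D_j(x) W_{j,1}, \qquad \Jac G_j(x) = I_d + \Jac F_j(x),
\end{align}
because the bias $b_{j,2}$ contributes nothing to the derivative.

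For the operator norm bound, use submultiplicativity:
\begin{align}
\|\Jac F_j(x)\|_{op} \leq \|W_{j,2}\|_{op}\, \|D_j(x)\|_{op}\, \|W_{j,1}\|_{op}.
\end{align}
The outer two factors are at most $s$ by the constraints in~\Cref{def:calT-nf}. The middle factor is the largest diagonal entry, and $\operatorname{sech}^2 \in (0,1]$, so it is at most $1$. This gives $\|\Jac F_j(x)\|_{op} \leq s^2$, consistent with~\eqref{eq:nf-theorem-proof-3}.

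For the Lipschitz bound on the Jacobian, write
\begin{align}
\Jac F_j(x) - \Jac F_j(y) = W_{j,2}\bigl(D_j(x) - D_j(y)\bigr) W_{j,1},
\end{align}
so that
\begin{align}
\|\Jac F_j(x) - \Jac F_j(y)\|_{op} \leq s^2\, \|D_j(x) - D_j(y)\|_{op}.
\end{align}
Set $u = W_{j,1}x + b_{j,1}$ and $v = W_{j,1}y + b_{j,1}$. The operator norm of a diagonal difference is its maximal entry, $\max_k |\operatorname{sech}^2(u_k) - \operatorname{sech}^2(v_k)|$. The scalar function $\operatorname{sech}^2$ has derivative $-2\operatorname{sech}^2(t)\tanh(t)$. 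Its absolute value is maximized where $\tanh^2 = 1/3$, giving the value $4/(3\sqrt{3}) \approx 0.77 \leq 2$. So $\operatorname{sech}^2$ is $2$-Lipschitz; one could even use the sharper constant $4/(3\sqrt 3)$, but the stated constant $2$ is enough. Then
\begin{align}
\max_k |u_k - v_k| \leq \|u - v\| = \|W_{j,1}(x-y)\| \leq s\|x-y\|.
\end{align}
Chaining these gives $\|D_j(x) - D_j(y)\|_{op} \leq 2s\|x-y\|$. Combining with the previous display yields the bound $2s^3\|x-y\|$.

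There is no real obstacle; the only point needing care is the scalar Lipschitz constant of $\operatorname{sech}^2$. That is handled by the elementary maximization of $|2\operatorname{sech}^2(t)\tanh(t)|$, or simply by bounding it above by $2 \cdot 1 \cdot 1 = 2$.
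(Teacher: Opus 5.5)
Your proof is correct. For the Jacobian formula and the bound $\|\Jac F_j(x)\|_{op}\le s^2$, it is exactly the paper's argument: the chain rule, then submultiplicativity using $\boldsymbol{0} \preccurlyeq D_j(x) \preccurlyeq I_{d_j}$.

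The paper's proof never checks the third claim, $\|\Jac F_j(x)-\Jac F_j(y)\|_{op}\le 2s^3\|x-y\|$. It instead shows that the singular values of $\Jac G_j$ lie in $[1-s^2,1+s^2]$ and that $Md\log(1-s^2)\le \log\det \Jac T(x)\le Md\log(1+s^2)$. Your estimate $|\tfrac{d}{dt}\operatorname{sech}^2 t|\le 4/(3\sqrt{3})\le 2$, combined with $\|W_{j,1}(x-y)\|\le s\|x-y\|$, therefore supplies a missing step. The paper relies on that step later, when it bounds $\|D_v \Jac G_j(u)\|_{op}\le 2s^3\|v\|$ in the proof of \Cref{lemma:nf-likelihood-ratio}.
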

This result,  proved in~\Cref{proof:nf-Jacobian},  tells us that every $G_j$ is a diffeomorphism with singular values in $[1-s^2, 1+s^2]$, and hence, $\det \Jac G_j(x) > 0$ which implies that $\log \det \Jac G_j$ is well defined. We now obtain a closed-form expression for the log-likelihood ratio $\ell_T =\log dP_X/dQ_T$.

\begin{lemma}
    \label[lemma]{lemma:nf-likelihood-ratio} 
    The log-likelihood ratio function $\ell_T(x) = \log dP_X/dQ_T$ for any $T \in \calT$ is equal to 
    \begin{align}
         \ell_T(x) = \frac{1}{2} \lp \|T(x)\|^2 - \|T^*(x)\|^2 \rp + \log \det \Jac T^*(x) - \log \det \Jac T(x). 
    \end{align}
    Furthermore, when restricted to the domain $[-K, K]^d$, each $\ell_T$ is Lipschitz with constant $L_1 = \calO(K)$~(see~\eqref{eq:nf-likelihood-ratio-proof-4} for exact expression), which implies the following uniform bound 
    \begin{align}
        \sup_{T \in \calT} \sup_{x \in [-K, K]^d} |\ell_T(x)| \leq L_0 + L_1 \sqrt{d} K, 
    \end{align}
    for a universal constant $L_0$ stated in~\eqref{eq:nf-likelihood-ratio-proof-5}
\end{lemma}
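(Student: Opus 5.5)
The plan is to start from the change-of-variables formula for $Q_T$. Since $Q_T=(T^{-1})_{\#}P_Z$ with $P_Z=N(0,I_d)$, its density is $q_T(x)=p_Z(T(x))\det\Jac T(x)$. By Lemma~\ref{lemma:nf-Jacobian}, $\det \Jac T(x)>0$, because every factor $\Jac G_j$ has singular values in $[1-s^2,1+s^2]$ and $T$ is a composition of such blocks. By realizability~\NFrealizability, $P_X=Q_{T^*}$, so $p_X(x)=p_Z(T^*(x))\det\Jac T^*(x)$. Taking the logarithm of the ratio, the Gaussian normalizing constants cancel. Using $\log p_Z(z)=-\|z\|^2/2+\mathrm{const}$, this gives exactly
\[
\ell_T(x)=\tfrac12\lp\|T(x)\|^2-\|T^*(x)\|^2\rp+\log\det\Jac T^*(x)-\log\det\Jac T(x).
\]

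For the Lipschitz bound on $[-K,K]^d$, I would treat the quadratic part and the log-determinant part separately. The quadratic part has gradient $\Jac T(x)^\top T(x)$, whose norm is at most $J_1(J_0+J_1\|x\|)\le J_1(J_0+J_1\sqrt d K)$ on the cube. This uses \eqref{eq:nf-theorem-proof-8} and \eqref{eq:nf-theorem-prooof-7}. Doing the same for $T^*$ yields a Lipschitz constant of order $K$.

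For the log-determinant, I write $\log\det\Jac T(x)=\sum_j\log\det(I+\Jac F_j(x_{j-1}(x)))$, where $x_{j-1}=G_{j-1}\circ\cdots\circ G_1(x)$. Each $x\mapsto x_{j-1}(x)$ is $(1+s^2)^{j-1}$-Lipschitz. The map $A\mapsto\log\det(I+A)$ on $\{\|A\|_{op}\le s^2\}$ has derivative $\mathrm{tr}((I+A)^{-1}\,\cdot)$, whose magnitude is bounded by $d/(1-s^2)$ times the operator norm of the perturbation. Combining this with $\|\Jac F_j(u)-\Jac F_j(v)\|_{op}\le 2s^3\|u-v\|$ from Lemma~\ref{lemma:nf-Jacobian}, each summand is Lipschitz with constant $2ds^3(1+s^2)^{j-1}/(1-s^2)$. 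This part is therefore globally Lipschitz with a constant independent of $K$. Summing the two parts gives $L_1=\calO(K)$.

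For the sup bound, I would use the Lipschitz estimate along the segment from $0$ to $x$. On the cube $\|x\|\le\sqrt d K$, so
\[
|\ell_T(x)|\le|\ell_T(0)|+L_1\sqrt dK.
\]
It then remains to bound $|\ell_T(0)|$ uniformly in $T$. The quadratic part contributes at most $J_0^2$. Each log-determinant lies in $[d M\log(1-s^2),\, dM\log(1+s^2)]$, so a difference of two of them is at most $2dM|\log(1-s^2)|$. Hence $L_0=J_0^2+2dM|\log(1-s^2)|$ works.

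The main obstacle I expect is the Lipschitz control of $\log\det\Jac T$ through the composition. This needs a careful mean-value argument for $\log\det$ on matrices near the identity, together with the chain-rule dependence of $x_{j-1}$ on $x$. The remaining steps are routine bookkeeping with $J_0$ and $J_1$.
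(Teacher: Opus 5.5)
Your proposal is correct and follows the paper's proof: the formula comes from change of variables plus realizability, the Lipschitz bound splits the gradient into the quadratic part $\Jac T(x)^\top T(x)=\calO(K)$ and a $K$-independent log-determinant part bounded blockwise by $\frac{2ds^3}{1-s^2}\cdot\frac{(1+s^2)^M-1}{s^2}$, and the sup bound is $|\ell_T(0)|+L_1\sqrt d K$ with $|\ell_T(0)|$ controlled by $\|T(0)\|$ and the log-determinant range from Lemma~\ref{lemma:nf-Jacobian}. Your blockwise chain rule through the inner maps $x_{j-1}(x)$, with Lipschitz constants $(1+s^2)^{j-1}$, is a cleaner version of the paper's step, and your constants differ only cosmetically (e.g., $J_0^2$ versus $(C_0A)^2$).
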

This result, proved in~\Cref{proof:nf-likelihood-ratio}, establishes that when restricted to a box $[-K,K]^d$ in the domain $\calX = \mathbb{R}^d$, the likelihood ratio function class $\{\ell_T: T \in \calT\}$ is well behaved. With these results at hand, we can now proceed towards verifying the sufficient conditions for the variational gap to vanish as $n$ goes to $\infty$.

\paragraph{Approximation error $\delta_n$.} We begin by obtaining a characterization of the approximation error $\delta_n$. 

\begin{lemma}
    \label[lemma]{lemma:nf-approximation}
    Consider the given $\calH_n = \{\htilde \boldsymbol{1}_{[-K_n, K_n]^d}: \htilde \in \calH_{k_n}, \; \|\htilde\|_{k_n} \leq b_n\}$, where $\calH_{k_n}$ is the RKHS associated with the Gaussian kernel $k_n(x,y) = \exp(-\gamma_n \|x-y\|^2)$ for a scale parameter $\gamma_n$. For parameters $b_n$, $\gamma_n$, and  $K_n$, we have 
    \begin{align}
        b_n \gtrsim  \gamma_n^{d/4} K_n^{2 + d/2} \quad \implies \quad 
        \delta_n(K_n) \lesssim \frac{K_n}{\sqrt{\gamma_n}}.  \label{eq:delta-Kn-bound}
    \end{align}
    Here $a_n \lesssim b_n$ represents that there exists a constant $c$ that does not change with $n$, such that $a_n \leq c b_n$ for all $n$ large enough.
\end{lemma}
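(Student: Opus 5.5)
The plan is to build, for each fixed $T\in\calT$, an explicit element of $\calH_{k_n}$ by smoothing the log-likelihood ratio $\ell_T$ with the Gaussian kernel itself. I will then bound its RKHS norm and its sup-distance to $\ell_T$ on $[-K_n,K_n]^d$. Since every construction constant will be uniform in $T$, taking $\sup_T$ at the end gives the bound on $\delta_n$.

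\textbf{Step 1 (extension).} By~\Cref{lemma:nf-likelihood-ratio}, on the box $[-2K_n,2K_n]^d$ the function $\ell_T$ has Lipschitz constant $L_1=\calO(K_n)$ and sup-norm $\calO(K_n^2)$, uniformly in $T$. Let $f_T=\ell_T\boldsymbol{1}_{[-2K_n,2K_n]^d}$. Then
\[
\|f_T\|_{L^2}\lesssim K_n^2\,(4K_n)^{d/2}\lesssim K_n^{2+d/2}.
\]
Alternatively one can use a McShane Lipschitz extension followed by this cutoff; either version works.

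\textbf{Step 2 (RKHS element).} Define
\[
\htilde_T(x)=c_{\gamma}\int f_T(u)\,k_n(x,u)\,du, \qquad c_\gamma=(\gamma_n/\pi)^{d/2},
\]
so that $c_\gamma k_n(x,\cdot)$ is the density of $N(x,(2\gamma_n)^{-1}I_d)$. This $\htilde_T$ is a Bochner integral of feature maps, hence lies in $\calH_{k_n}$, with
\[
\|\htilde_T\|_{k_n}^2=c_\gamma^2\iint f_T(u)f_T(v)k_n(u,v)\,du\,dv\le c_\gamma^2\|f_T\|_{L^2}^2\int k_n(0,w)\,dw
\]
by Young/Schur. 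Since $\int k_n(0,w)\,dw=(\pi/\gamma_n)^{d/2}$, this gives $\|\htilde_T\|_{k_n}\le(\gamma_n/\pi)^{d/4}\|f_T\|_{L^2}\lesssim\gamma_n^{d/4}K_n^{2+d/2}$. So the hypothesis $b_n\gtrsim\gamma_n^{d/4}K_n^{2+d/2}$ puts $h_T=\htilde_T\boldsymbol{1}_{[-K_n,K_n]^d}$ in $\calH_n$.

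\textbf{Step 3 (approximation error).} For $x\in[-K_n,K_n]^d$, write $\htilde_T(x)-\ell_T(x)=\mathbb{E}[f_T(x+\xi)-\ell_T(x)]$ with $\xi\sim N(0,(2\gamma_n)^{-1}I_d)$. Split on the event $\{\|\xi\|_\infty\le K_n\}$.
\begin{itemize}
\item On this event $x+\xi$ stays in the doubled box, so Lipschitzness gives a contribution of at most $L_1\mathbb{E}\|\xi\|\lesssim K_n\sqrt{d/\gamma_n}$.
\item On the complement, the integrand is bounded by $\calO(K_n^2)$, and the probability is at most $2d\exp(-\gamma_nK_n^2)$. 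Because $\gamma_n=K_n^{2+\epsilon}$, the term $K_n^2e^{-\gamma_nK_n^2}$ is negligible compared with $K_n/\sqrt{\gamma_n}$.
\end{itemize}
Together these give $\delta_n\lesssim K_n/\sqrt{\gamma_n}$.

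\textbf{Main obstacle.} The delicate point is Step 1. \Cref{lemma:nf-likelihood-ratio} must supply Lipschitz and sup bounds on a box slightly larger than $[-K_n,K_n]^d$ (I use $2K_n$), uniformly in $T$. This requires the $\calO(K)$ scaling of $L_1$ and the $L_0+L_1\sqrt d K$ sup bound to hold at radius $2K_n$, which only changes constants. If that were unavailable, the fallback is a McShane extension of $\ell_T|_{[-K_n,K_n]^d}$ with the same Lipschitz constant, whose sup on the doubled box is still $\calO(K_n^2)$.
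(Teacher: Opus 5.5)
Your proof is correct, and it reaches the paper's bound by a partly different route.

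\textbf{The paper's route.} It first builds a globally Lipschitz, compactly supported surrogate $f_\nu=\bar\ell_T\,\nu_d$: a McShane extension of $\ell_T$ restricted to $[-K_n,K_n]^d$, multiplied by a piecewise-linear cutoff that vanishes outside $[-2K_n,2K_n]^d$. It smooths this with a Gaussian of variance $\sigma^2=1/(4\gamma_n)$. It then bounds the RKHS norm through the Fourier characterization $\|h\|_{k_n}^2\propto\gamma_n^{d/2}\int|\hat h(\omega)|^2e^{\|\omega\|^2/4\gamma_n}\,d\omega$ together with Plancherel. Because $f_\nu$ is $\mathcal{O}(K_n)$-Lipschitz on all of $\mathbb{R}^d$, the approximation error is the one-line bound $L_1\sqrt d\,\sigma$, with no tail term.

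\textbf{Your route.} You use a hard cutoff and mollify with the normalized kernel itself (variance $1/(2\gamma_n)$). This makes $\htilde_T$ visibly a Bochner integral of feature maps, and its norm is controlled by the quadratic form $\iint f_T(u)f_T(v)k_n(u,v)\,du\,dv$ plus Young/Schur. That avoids Fourier analysis and any appeal to the spectral description of the Gaussian RKHS, while giving the same $\gamma_n^{d/4}\|f_T\|_{L^2}$ scaling. The price is the Gaussian-tail split in Step 3, since your $f_T$ is not globally Lipschitz.

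\textbf{A small point.} You do not need $\gamma_n=K_n^{2+\epsilon}$ to discard the tail term. This matters because the lemma is stated for general parameters. Writing $t=\gamma_nK_n^2$,
\[
K_n^2e^{-t}=\frac{K_n}{\sqrt{\gamma_n}}\,\sqrt t\,e^{-t}\le (2e)^{-1/2}\,\frac{K_n}{\sqrt{\gamma_n}},
\]
so the tail contribution is always $\lesssim K_n/\sqrt{\gamma_n}$, and your argument proves the lemma in its stated generality.
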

The proof of this statement is in~\Cref{proof:nf-approximation}.

\paragraph{Tail term $\tau_1$. } Let us consider the first tail term $\tau_1$, whose definition is recalled below:
\begin{align}
    \tau_1 \equiv \tau_1(K_n, \calT, \calH_n) = \sup_{T \in \calT} \sup_{h \in \calH_n} \mathbb{E}_{P_X}[\lv \ell_T(X) - h(X) \rv\, \boldsymbol{1}_{\|X\|_\infty > K_n}]. 
\end{align}
Fix an $h$ and $T$, and observe that 
\begin{align}
\mathbb{E}_{P_X}[|\ell_T(X) - h(X)|  \boldsymbol{1}_{\|X\|_\infty > K_n}] &\leq \mathbb{E}_{P_X}[|\ell_T(X)| \boldsymbol{1}_{\|X\|_\infty > K_n} ] +  \mathbb{E}_{P_X}[|h(X)| \boldsymbol{1}_{\|X\|_\infty > K_n}] \\
& \eqcolon \tau_{11}(T, h) + \tau_{12}(T, h). 
\end{align}
Now, from~\Cref{lemma:nf-likelihood-ratio}, we know that there exist constants $c_0, c_1, c_2$ independent of $n, T$, such that we have 
\begin{align}
    |\ell_T(x)| \leq c_0 + c_1 \|x\| + c_2 \|x\|^2 \qtext{for all} x \in \mathbb{R}^d. 
\end{align}
If the distribution $P_X$ is such that $\mathbb{E}_{P_X}[\|X\|^2] < \infty$, then, this implies that 
\begin{align}
    \tau_{11}(T, h) \leq \mathbb{E}_{P_X}[\lp c_0 + c_1 \|X\| + c_2 \|X\|^2\rp \boldsymbol{1}_{\|X\|_\infty > K_n}] \stackrel{n \to \infty}{\longrightarrow} 0. 
\end{align}
Thus, the only condition needed to control $\tau_{11}$ is that $K_n \to \infty$, and that $\mathbb{E}_{P_X}[\|X\|^2] < \infty$. 

Next, we look at the other term $\tau_{12}$. This is much easier to handle, since we know that $\|h\|_\infty \leq b_n$ for all $h \in \calH_n$. Hence, we have 
\begin{align}
    \tau_{12} \leq b_n \mathbb{P}_{P_X}(\|X\|_\infty > K_n) \leq b_n e^{-c K_n^2}, 
\end{align}
for some constant $c$~(depending on the Lipschitz constant of $T^*$). Hence, a $K_n =\Omega (\sqrt{\log b_n}$) is sufficient to drive this to $0$. 

\paragraph{Tail term $\tau_2$.} Finally, we consider the remaining tail term $\tau_2$, defined as 
\begin{align}
    \tau_2 \equiv \tau_2(K_n, \calT, \calH_n) = \sup_{T \in \calT} \sup_{h \in \calH_n} \mathbb{E}_{Q_T}\lb \lv e^{h(X')} - e^{\ell_T(X)}   \rv \boldsymbol{1}_{\|X'\|_\infty > K_n} \rb
\end{align}
For fixed $T \in \calT$ and $h \in \calH_n$, observe that since $h(x)=0$ for all $x:\|x\|_\infty > K_n$, we get 
\begin{align}
    \mathbb{E}_{Q_T}\lb \lv e^{h(X')} - e^{\ell_T(X)}   \rv\boldsymbol{1}_{\|X'\|_\infty > K_n}  \rb & = \mathbb{E}_{Q_T}\lb |1 - e^{\ell_T}|\boldsymbol{1}_{\|X'\|_\infty > K_n}  \rb \\
    & \leq \mathbb{P}_T(\|X'\|_\infty > K_n) + \mathbb{E}_{Q_T}\lb e^{\ell_T} \boldsymbol{1}_{\|X'\|_\infty > K_n} \rb \\
    & = \mathbb{P}(\|X'\|_\infty > K_n) + \mathbb{P}\lp  \|X\|_\infty > K_n \rp.  
\end{align}
Thus, we have 
\begin{align}
    \tau_2 \leq \sup_{T} \mathbb{P} \lp \|T^{-1}(Z)\|_\infty > K_n \rp + \mathbb{P}\lp \|(T^*)^{-1}Z\| >K_n \rp,
\end{align}
which goes to $0$ with $n$  under the $\calT$-uniform Lipschitz bounds derived in~\Cref{lemma:nf-Jacobian}. 

\subsubsection{Completing the Proof}
\label{subsubsec:completing-the-proof-nf}

To complete the proof, we will show an explicit choice of the free parameters that drives the $W_1$-metric between the true and the \ac{NF} distributions to zero. In particular, we observe the following: 
\begin{itemize}
    \item With $m_n = \lceil e^{4 b_n} \rceil$, we get $e^{b_n}/\sqrt{m_n} \leq e^{-b_n}$. Plugging this in~\eqref{eq:nf-theorem-proof-2} then gives us
    \begin{align}
        r_n \lesssim \frac{b_n}{\sqrt{n}} +  b_n e^{-b_n} \lesssim \frac{b_n}{\sqrt{n}}. 
    \end{align}
    \item Now, set $b_n = C_b \gamma^{d/4} K_n^{2 + d/2 + \epsilon}$, and $\gamma_n = K_n^{2 + \epsilon}$ for some $\epsilon>0$. This implies that 
    \begin{align}
        b_n \asymp K_n^{2 + d + \epsilon\lp 1 + \frac d 4 \rp} \quad \implies \quad  r_n \lesssim \frac{K_n^{2 + d + \epsilon\lp 1 + \frac d 4 \rp}}{\sqrt{n}}. 
    \end{align}
    \item The above choice of $b_n$ ensures that $b_n \gtrsim \gamma_n^{d/4} K_n^{2+d/2}$, and hence~\eqref{eq:delta-Kn-bound} applies to these parameter choices, which leads to 
    \begin{align}
        \delta_n(K_n) \lesssim \frac{K_n}{\sqrt{\gamma_n}} \lesssim K_n^{-\epsilon/2}. 
    \end{align}
    \item Due to the uniform Lipschitz condition on the elements of $\calT$ along with the fact that $Z \sim N(0, I_d)$, we can show that $\tau_1 + \tau_2 \lesssim \mathrm{poly(K_n)} e^{-c K_n^2}$ for some $c>0$. With the choice of $K_n \asymp \sqrt{\log n}$ as  stated in~\Cref{theorem:nf-estimation-error}, this reduces to $\tau_1 + \tau_2 \lesssim \mathrm{poly}(\log n) n^{-c}$. Thus, combining with the previous state, we get 
    \begin{align}
        \delta_n + \tau_1 + \tau_2 \lesssim (\log n)^{-\epsilon/4} + \mathrm{poly}(\log n) n^{-c}. 
    \end{align}
\end{itemize}
Thus, with the choice of the parameters in~\Cref{theorem:nf3-example}, the estimation error converges to $0$ with probability at least $1-\delta$.

\section{Deferred Proofs from~\Cref{subsec:variational-inn}}
\label{proof:finite-sample-bound}

\subsection{Proof of~\Cref{theorem:finite-sample}}
\label{proof:finite-sample}

     To simplify the notation, we will use $\PA$ to represent the conditional probability $P(\cdot \mid Y \in A)$. By the realizability assumption, we have $\PA_{Y, T^*_z(X)} = \PA_{YZ}$. 
     The starting point of the proof is to use the realizability and the uniform convergence assumptions to relate the population losses associated with the \ac{ERM} model $\That_n$ to the error terms $u_n$ and  $r_n$  from~\Cref{assump:INN-theorem}. For any $T \in \calT$, let $L(T) = L_\yy(T) + \lambda L_\zz(T)$, and $\Lhat(T) = \LhatY(T) + \lambda \LhatZ(T)$, and observe the following chain: 
     \begin{align}
         0 = L(T^*) \geq \Lhat(T^*) - u_n - \lambda r_n \geq \Lhat(\That_n) - u_n - \lambda r_n  \geq L(\That_n) - 2 (u_n + \lambda r_n).  \label{eq:erm-property-1}
     \end{align}
     The equality follows from the realizability assumption~\textbf{(INN1)}, while the first and third inequalities use the uniform convergence assumption~\textbf{(INN2)}, and the second inequality follows form the definition of $\That_n$ as the \ac{ERM} model. This simple argument allows us to control the individual components of $L(\That_n)$ as follows: 
     \begin{align}
         L_{\mathbf{y}}(\That_n) \leq 2(u_n + \lambda r_n), \qtext{and}  L_{\mathbf{z}}(\That_n) \leq 2 \bigg( r_n + \frac{u_n}{\lambda} \bigg). \label{eq:erm-property-2}
     \end{align}
     For the first inequality, we use the fact that $L_{\mathbf{z}}(T) \geq 0$ for all $T$ as the critic class $\calG_n$ is assumed to contain the $\boldsymbol{0}$ element, while the second inequality uses the fact that $L_{\mathbf{y}}(T) \geq 0$ by definition of the squared loss.

    The next step of the proof is to use the Lipschitz property of the inverse map $\That_n^{-1}$ to relate the $W_1$ metric between the posterior distributions $\PA_{\That_n(X)}$ and $\PA_{YZ}$. In particular, to further simplify the notation and use $\mu = \PA_{\That_n(X)}$, $\rho = \PA_{YZ}$, and let $F$ denote the inverse map $\That_n^{-1}:\calY \times \calZ \to \calX$.  Then, $F_{\#}\mu$ and $F_{\#} \rho$ denote the true and \ac{INN} posterior distributions on $\calX$  by definition~(recall that we use $F_{\#} \mu$ to represent the  pushforward of measure $\mu$ on to the image space of $F$; that is for any measurable $E \subset \calX$, we have $(F_{\#}\mu)(E) = \mu(F^{-1}(E))$). Let $\pi \in \Pi(\mu, \rho)$ denote any coupling on $(\calY \times \calZ)^2$ with marginals $\mu$ and $\rho$, and let $\pi_X \in \Pi(F_{\#}\mu, F_{\#}\rho)$ denote the coupling by pushing $\pi$ forward under $F \times F$~(recall that $F=\That_n^{-1}$); that is, $\pi_X(E_X \times G_X) = \pi\lp \{(y,z): F(y) \in E_X, \; F(z) \in G_X\}\rp$.  With these definitions, we immediately have 
    \begin{align}
        W_1(F_{\#}\mu, F_{\#}\rho) &= \inf_{\pi' \in \Pi\lp F_{\#}\mu, F_{\#} \rho \rp} \int_{\calX \times \calX} \|x - x'\| d\pi'(x, x') = \inf_{\substack{\pi_X = (F \times F)_{\#} \pi \\ \pi \in \Pi(\mu, \rho)}} \int_{\calX \times \calX} \|x - x'\| d\pi_X(x, x') \\
        & = \inf_{\pi \in \Pi(\mu, \rho)} \int_{\calX \times \calX} \|F(y,z) - F(y', z') \| d\pi((y,z), (y', z')) \\
        &  \leq J \inf_{\pi \in \Pi(\mu, \rho)}  \int_{(\calY\times \calZ)^2}  \|(y,z) - (y', z') \| d\pi((y,z), (y', z')) 
         = J W_1(\mu, \rho). 
    \end{align}
    Putting back the values $\mu \leftarrow \PA_{\That_n(X)}$, $\rho \leftarrow \PA_{YZ}$, and $F \leftarrow \That_n^{-1}$, the inequality chain above implies  
    \begin{align}
        W_1\bigg(\PA_X, \PA_{\hat{T}_n^{-1}(Y,Z)} \bigg) &\leq  J W_1\lp \PA_{\That_n(X)}, \PA_{YZ} \rp.  \label{eq:main-theorem-proof-0}
    \end{align}
    Let us introduce the notation: 
    \begin{align}
        \nu = \PA_{\That_n(X)}, \quad \gamma = \PA_{Y, \That_{n,z}(X)}, \qtext{and} \omega = \PA_{Y,Z}. \label{eq:main-proof-nu-mu-rho}
    \end{align}
    Using the fact that $W_1$ is a metric, we can use triangle inequality to get 
    \begin{align}
        W_1\lp \PA_{\That_n(X)}, \PA_{YZ} \rp = W_1(\nu, \omega) \leq W_1(\nu, \gamma) + W_1(\gamma, \omega).  \label{eq:main-theorem-proof-1}
    \end{align}
    To complete the proof, we need to obtain upper bounds on the two terms in the RHS above. To control the first term in~\eqref{eq:main-theorem-proof-1},  
    We begin by recalling the explicit definition of the first term
        \begin{align}
             W_1(\nu, \gamma) = \inf_{\pi \in \Pi(\nu, \gamma)} \int \|(y,z) - (y', z')\| d\pi((y,z), (y', z'))
        \end{align}
    Let us now construct the natural coupling $\pi$ between $\nu$ and $\gamma$  in the following steps: 
    
    \begin{itemize}
        \item Generate $(X, Y) \sim \PA_{XY}$. 
        \item Let $\nu$ denote the conditional distribution of $\That_n(X) = (\That_{n,y}(X), \That_{n,z}(X))$
        \item Let $\gamma$ denote the conditional distribution of $(Y, \That_{n,z}(X))$. Thus $\nu$ and $\gamma$ have a common second component. 
    \end{itemize}
    Using this particular coupling in the definition of $W_1(\nu, \gamma)$, we observe that 
    \begin{align}
        W_1(\nu, \gamma) &\leq \int \sqrt{\|\That_{n,y}(x) - y\|^2 + \|\That_{n,z}(x) - \That_{n,z}(x)\|^2 }\, d\PA_{XY}(x, y)  
         = \EA_{XY} \lb \|\That_{n,y}(X) - Y\|\rb \\
         &\leq \sqrt{\EA_{XY} \lb \|\That_{n,y}(X) - Y\|^2\rb} 
          \stackrel{\eqref{eq:main-theorem-proof-2}.1}{\leq} \frac{\sqrt{L_{\mathbf{y}}(\That_n)}}{\sqrt{P_Y(A)}} \leq \frac{\sqrt{2(u_n + \lambda r_n)}}{\sqrt{P_Y(A)}},  \label{eq:main-theorem-proof-2}
    \end{align}
    \sloppy where the last inequality follows from~\eqref{eq:erm-property-2}, and $\eqref{eq:main-theorem-proof-2}.1$ uses the fact that $L_{\mathbf{y}}(\That_n) = \mathbb{E}_{XY}[\|\That_{n,y}(X) - Y\|^2] = P_Y(A) \EA_{XY}[\|\That_{n,y}(X)-Y\|^2] + P_Y(A^c) \mathbb{E}_{XY}^{(A^c)}[\|\That_{n,y}(X)-Y\|^2] \geq P_Y(A) \EA_{XY}[\|\That_{n,y}(X)-Y\|^2]$.

    It remains for us to obtain a bound on the term~$W_1(\gamma, \omega)$. Introduce the conditional total variation~(TV) distance between these two measures, 
    \begin{align}
        \Delta_A &= \|\PA_{YZ} - \PA_{Y, \That_{n,z}(X)} \|_{TV} = \|\omega - \gamma\|_{TV},  \label{eq:conditional-tv-distance}
    \end{align}
    and observe that an application of the truncation argument stated in~\Cref{lemma:truncation} gives us 
    \begin{align}
        W_1(\gamma, \omega) \leq C_a  \lp \frac{2R}{P_Y(A)}\rp^{\frac{1}{1+a}}\Delta_A^{\frac{a}{1+a}}, \qtext{where} C_a = 2\lp a^{\frac{1}{1+a}} + a^{\frac{-a}{1+a}} \rp.  \label{eq:main-theorem-proof-3}
    \end{align}
    Next, we use the fact that the two distributions $\gamma$ and $\omega$ share the marginal distribution of $Y$, which can be used to show that 
    \begin{align}
        D_f\lp \PA_{Y, \That_{n,z}(X)} \parallel \PA_{YZ} \rp \stackrel{\eqref{eq:main-theorem-proof-5}.1}{\leq} \frac{1}{P_Y(A)} D_f\lp P_{Y, \That_{n,z}(X)} \parallel P_{YZ} \rp \stackrel{\eqref{eq:main-theorem-proof-5}.2}{\leq} \frac{2(r_n + u_n/\lambda) + \eta_n}{P_Y(A)}, \label{eq:main-theorem-proof-5}
    \end{align}
    where$~\eqref{eq:main-theorem-proof-5}.1$ is justified in~\Cref{lemma:condition-f-div}, and~$\eqref{eq:main-theorem-proof-5}.2$ follows  from the bound obtained in~\eqref{eq:erm-property-2}, along with an application of the variational gap assumption~\textbf{(INN4)} that relates $L_{\mathbf{z}}(\That_n)$ to $D_f\big( P_{Y, \That_{n,z}(X)} \parallel P_{YZ} \big)$.

    Next, we use the  Pinsker-type inequality to obtain 
    \begin{align}
        \Delta_A \leq c_f \sqrt{\frac{1}{2} D_f\lp \PA_{Y, \That_{n,z}(X)} \parallel \PA_{YZ} \rp} \leq c_f \sqrt{\frac{(2r_n + 2u_n/\lambda +  \eta_n)}{2 P_Y(A)}}. 
    \end{align}
    Plugging this into~\eqref{eq:main-theorem-proof-3}, we get 
    \begin{align}
        W_1(\gamma, \omega) \leq C_a 2^{\frac{2-a}{2(1+a)}}\lp P_Y(A)\rp^{-\frac{2+a}{2(1+a)}}   c_f^{\frac{a}{1+a}} (2r_n + 2 u_n/\lambda +  \eta_n)^{\frac{a}{2(1+a)}} \label{eq:main-theorem-proof-4}
    \end{align}
    
    Combining~\eqref{eq:main-theorem-proof-4} and~\eqref{eq:main-theorem-proof-2} with~\eqref{eq:main-theorem-proof-1} gives us the stated upper bound: 
    \begin{align}
        W_1(\PA_X, \PA_{\That_n^{-1}(Y,Z)}) \leq J\lp \sqrt{\frac{2(u_n + \lambda r_n)}{P_Y(A)}} + K_a (P_Y(A))^{-\frac{2+a}{2(1+a)}}  \lp 2r_n + 2u_n/\lambda +  \eta_n \rp^{\frac{a}{2(1+a)}} \rp,   \label{eq:main-theorem-proof-6}
    \end{align}
    where $K_a = 2^{\frac{4+a}{2+2a}}\lp a^{\frac{1}{1+a}} +a^{\frac{-a}{1+a}} \rp c_f^{\frac{a}{1+a}}$. For any finite $a>0$, the dominant term is $\lesssim (r_n + u_n + \eta_n)^{\frac{a}{2(1+a)}}$, as claimed in the statement of~\Cref{theorem:finite-sample}.  This completes the proof. \hfill %

\subsection{Proof of~\Cref{prop:inn2-uniform-convergence}}
\label{proof:inn2-uniform-convergence}

\subsubsection{Proof of~\eqref{eq:inn2-prop-Lz}}

The uniform convergence of the $\LhatZ(T)$ follows from the boundedness of the function classes involved, along with some standard symmetrization and concentration techniques.  In particular, we can show that with probability at least $1-\delta/2$, we have 
\begin{align}
    \sup_{T \in \calT} |\LhatZ(T) - L_{\mathbf{z}}(T)| &\leq 2 \RadComp_n^{(1)} + 2 \RadComp_n^{(2)} + (b_n + A_{2,n}) \sqrt{\frac{2\log(4/\delta)}{n}}. \label{eq:inn2-proof-0}
\end{align}
We first introduce some notation to simplify the expressions. For any $T$ and $g$, define 
\begin{align}
    &\alpha(g) = \frac{1}{n} \sum_{i=1}^n g(Y_i, Z_i) - \mathbb{E}[g(Y,Z)], \quad 
    \psi(T, g) = \frac 1n \sum_{i=1}^n f^*\lp g(Y_i, T_{\mathbf{z}}(X_i))\rp- \mathbb{E}[f^*(g(Y, T_{\mathbf{z}}(X)))],  \label{eq:alpha-psi-def}\\
   \text{and} \quad 
  & s(T, g) = \mathbb{E}[g(Y,Z)] - \mathbb{E}[f^*\lp g(Y, T_{\mathbf{z}}(X))\rp].  \label{eq:s-T-g}
\end{align}
With these terms, we can write 
\begin{align}
    \LhatZ(T) & = \sup_{g \in \calG_n} \big[ s(T, g) + \alpha(g) - \psi(T, g) \big], \qtext{and}
    L_{\mathbf{z}}(T)  = \sup_{g \in \calG_n}  s(T, g). 
\end{align}
This leads to the following chain: 
\begin{align}
    \sup_{T \in \calT} \left\lvert \LhatZ(T) - L_{\mathbf{z}}(T) \right\rvert & = \sup_{T \in \calT} \left\lvert  \sup_{g\in \calG_n} \lp s(T,g) + \alpha(g) - \psi(T, g) \rp - \sup_{g \in \calG_n} s(T,g) \right\rvert  \\ 
    & \leq \sup_{T \in \calT} \sup_{g \in \calG_n} \left\lvert \alpha(g) + \psi(T,g) \right \rvert  \\
     & \leq \sup_{g \in \calG_n}|\alpha(g)| +  \sup_{T \in \calT} \sup_{g \in \calG_n} \left\lvert \psi(T,g) \right \rvert.  \label{eq:inn-uniform-convergence-proof-1}
\end{align}
It remains to control the two terms in~\eqref{eq:inn-uniform-convergence-proof-1}. For the first term, standard symmetrization arguments imply 
\begin{align}
    \EE[\sup_{g \in \calG_n} |\alpha(g)| ] & \leq 2 \mathbb{E}_{Y^n, Z^n, \epsilon^n}\lb \sup_{g \in \calG_n} \frac 1n \sum_{i=1}^n \epsilon_i g(Y_i, Z_i) \rb = 2 \RadComp_n^{(1)}(\calG_n). 
\end{align}
Finally,  the uniform boundedness assumption on $\calG_n$ implies that the random variable $\sup_{g\in \calG_n} \frac 1n \sum_{i=1}^n \epsilon_i g(Y_i, Z_i)$ satisfies the bounded difference property. Hence, an application of McDiarmid's inequality leads to 
\begin{align}
    \mathbb{P}\lp \sup_{g \in \calG_n} |\alpha(\alpha)| \geq 2 \RadComp_n^{(1)}(\calG_n) + b_n \sqrt{\frac{2 \log(8/\delta)}{n}} \rp \leq \frac{\delta}{4}. \label{eq:inn-uniform-convergence-proof-2}
\end{align}
Next, to bound the second term in~\eqref{eq:inn-uniform-convergence-proof-1}, we use the shorthand $m_{T,g}(x,y) = f^*(g(y, T_{\mathbf{z}}(x))$, and let $\calM_n = \{h_{T,g}: g \in \calG_n, \; T \in \calT\}$ denote the associated function, and obtain the following~(dropping the subscript from $m_{T,g}$):
\begin{align}
    \EE[\sup_{T, g} |\psi(T,g)|] & =  \EE\lb \sup_{m \in \calM_n} \lv \frac 1n \sum_{i=1}^n m(X_i, Y_i) - \EE[m(X, Y)] \rv  \rb \leq 2 \EE_{X^n, Y^n, \epsilon^n}\lb \sup_{m \in \calM_n} \frac 1n \sum_{i=1}^n \epsilon_i m(X_i, Y_i) \rb, 
\end{align}
by another application of the symmetrization technique. Next, observe that by definition, $u \mapsto f^*(u)$ is Lipschitz with constant $A_{1,n}$, and thus using the vector contraction lemma~(\Cref{fact:maurer-contraction}), we obtain 
\begin{align}
    \EE_{X^n, Y^n, \epsilon^n}\lb \sup_{m \in \calM_n} \frac 1n \sum_{i=1}^n \epsilon_i m(X_i, Y_i) \rb & \leq 2 A_{1,n} \EE_{X^n, Y^n, \epsilon^n} \lb \sup_{T,g} \frac 1n \sum_{i=1}^n \epsilon_i g(Y_i, T_{\mathbf{z}}(X_i)) \rb = 2 \RadComp_n^{(2)}(\calG_n, \calT). 
\end{align}
Finally, observing that each $|f^*(g(Y_i, T_{\mathbf{z}}(X_i)))| \leq A_{2,n}$ by assumption, another application of McDiarmid's bounded difference inequality gives us the required concentration result: 
\begin{align}
    \mathbb{P}\lp \sup_{\substack{g \in \calG_n \\ T \in \calT}} |\psi_{T,g}| \geq 4 A_{1,n} \RadComp_n^{(2)}(\calG_n, \calT) + A_{2,n}\sqrt{\frac{2 \log(8/\delta)}{n}} \rp \leq \frac{\delta}{4}. \label{eq:inn-uniform-convergence-proof-3}
\end{align}
Combining~\eqref{eq:inn-uniform-convergence-proof-2} and~\eqref{eq:inn-uniform-convergence-proof-3} give us the required $1-\delta/2$ probability bound claimed in~\eqref{eq:inn2-prop-Lz}.

\subsubsection{Proof of~\eqref{eq:inn2-prop-Ly}}
The proof of~\eqref{eq:inn2-prop-Ly} requires a careful truncation idea, as the squared loss function can be unbounded. We begin with the simple decomposition for some $K_n>0$: 
\begin{align}
    \sup_{T}|\LhatY(T) - L_{\mathbf{y}}(T)| &= \underbrace{\sup_{T}|\LhatY^{K_n}(T) - L_{\mathbf{y}}^{K_n}(T)|}_{\coloneqq \mathrm{Term1}} + \underbrace{\sup_{T}|\LhatY(T) - \LhatY^{K_n}(T)|}_{\coloneqq \mathrm{Term2}} + \underbrace{\sup_{T}|L_{\mathbf{y}}^{K_n}(T) - L_{\mathbf{y}}(T)|}_{\coloneqq \mathrm{Term3}}.  \label{eq:inn2-proof-1}
\end{align}
We will bound the three terms in~\eqref{eq:inn2-proof-1} separately.  

\paragraph{Bound on~$\mathrm{Term1}$ in~\eqref{eq:inn2-proof-1}.} Let us introduce the function $\phi_y(u) = \|\Pi_{K_n}(u)-\Pi_{K_n}(y)\|^2$, and its centered version $\bar{\phi}_y(u) = \phi_y(u) - \phi_y(0)$. Then, observe the following: 
\begin{align}
    \mathbb{E}_{\epsilon^n}\bigg[ \sup_{T \in \calT} \frac{1}{n} \sum_{i=1}^n \epsilon_i \phi_{Y_i}(T_{\mathbf{y}}(X_i)) \bigg] &= \mathbb{E}_{\epsilon^n} \lb \sup_{T} \frac{1}{n} \sum_{i=1}^n \epsilon_i \bar{\phi}_{Y_i}(T_{\mathbf{y}}(X_i)) + \frac{1}{n} \sum_{i=1}^n \epsilon_i \phi_{Y_i}(0)\rb \\
    &= \mathbb{E}_{\epsilon^n} \lb \sup_{T} \frac{1}{n} \sum_{i=1}^n \epsilon_i \bar{\phi}_{Y_i}(T_{\mathbf{y}}(X_i)) \rb  + \mathbb{E}_{\epsilon^n} \lb \frac{1}{n} \sum_{i=1}^n \epsilon_i \phi_{Y_i}(0)\rb\\
    & = \mathbb{E}_{\epsilon^n} \lb \sup_{T} \frac{1}{n} \sum_{i=1}^n \epsilon_i \bar{\phi}_{Y_i}(T_{\mathbf{y}}(X_i)) \rb,   \label{eq:inn2-proof-2}
\end{align}
since each Rademacher random variable $\epsilon_i$ is independent of $Y_i$. Thus, we can conclude via the standard symmetrization argument that 
\begin{align}
    \mathbb{E}\lb \mathrm{Term1} \rb \leq 2 \mathbb{E} \lb \sup_{T \in \calT} \frac{1}{n} \sum_{i=1}^n \epsilon_i \phi_{Y_i}(T_{\mathbf{y}}(X_i)) \rb \leq 2 \mathbb{E} \lb \sup_{T \in \calT} \frac{1}{n} \sum_{i=1}^n \epsilon_i \bar{\phi}_{Y_i}(T_{\mathbf{y}}(X_i)) \rb. 
\end{align}
Now, we can verify that the functions $\phi_y(u)$ are $4K_n$ Lipschitz globally, which implies that we can apply the (scalar) contraction lemma to get 
\begin{align}
    \mathbb{E}_{\epsilon^n} \lb \sup_{T \in \calT} \frac{1}{n} \sum_{i=1}^n \epsilon_i \phi_{Y_i}(T_{\mathbf{y}}(X_i)) \rb \leq 4 K_n \mathbb{E}_{\epsilon^n} \lb \sup_{\substack{T \in \calT \\ \|u_i\|\leq 1}} \frac{1}{n} \sum_{i=1}^n \epsilon_i \langle u_i, T_{\mathbf{y}}(X_i)\rangle \rb = 4K_n \RadComp_n(\calF_{\yy}). 
\end{align}
Plugging this into~\eqref{eq:inn2-proof-2}, we get 
\begin{align}
    \mathbb{E}[\mathrm{Term1}] \leq 8K_n \RadComp_n(\calF_{\yy}). 
\end{align}
We can translate this expectation result into a high probability deviation bound via the standard bounded difference concentration inequality. In particular, by construction, the functions involved in $\mathrm{Term1}$ satisfy a bounded differences property with parameter $2K_n$, which implies the following 
\begin{align}
\mathbb{P}\lp \mathrm{Term1} \geq 8K_n \RadComp_n(\calF_{\yy}) + 4K_n^2\sqrt{2 \log\lp \tfrac{8}{\delta} \rp}{n} \rp \leq \frac{\delta}{4}, \label{eq:inn2-proof-3}
\end{align}
by an application of Mcdiarmid's inequality.

\paragraph{Bound on $\mathrm{Term2}$ in~\eqref{eq:inn2-proof-1}.} From the definitions of $\LhatY^{K_n}$ and $\LhatY$, we observe that 
\begin{align}
    \LhatY - \LhatY^{K_n} &\leq \frac{1}{n} \sum_{i=1}^n \|T_{\mathbf{y}}(X_i) - Y_i\|^2 \boldsymbol{1}_{\|T_{\mathbf{y}}(X_i)\| \vee \|Y_i\| > K_n} \\ 
    &\leq \frac{1}{n} \sum_{i=1}^n 2\|T_{\mathbf{y}}(X_i)\|^2 \boldsymbol{1}_{\|T_{\mathbf{y}}(X_i)\| > K_n} + 2\|Y_i\|^2 \boldsymbol{1}_{\|Y_i\| > K_n} \\
    & \leq  \frac{2}{n} \sum_{i=1}^n \frac{\|T_{\mathbf{y}}(X_i)\|^{2+\beta}}{K_n^\beta} + \frac{\|Y_i\|^{2+\beta}}{K_n^\beta}, 
\end{align}
where the last inequality uses the fact that $\boldsymbol{1}_{A > B} \leq (A/B)^\beta$. Thus, the assumption on the $(2+\beta)$ moment of $T_{\mathbf{y}}(X_i)$ and $Y_i$ together imply that 
\begin{align}
    \mathbb{E}[\sup_{T} |\LhatY(T) - \LhatY^{K_n}(T)| ] \leq \frac{4R_{\yy}}{K_n^{\beta}}. \label{eq:inn2-proof-4}
\end{align}
For a given $\delta>0$, we then apply Markov's inequality to conclude that 
\begin{align}
    \mathbb{P}\lp \sup_{T} |\LhatY(T) - \LhatY^{K_n}(T)|  > \frac{16R_{\yy}}{K_n^\beta \delta}\rp \leq \frac{\delta}{4}. \label{eq:inn2-proof-5}
\end{align}

\paragraph{Bound on $\mathrm{Term3}$ in~\eqref{eq:inn2-proof-1}.} The same argument that we used to obtain~\eqref{eq:inn2-proof-4} is also applicable to this term, and we can conclude that 
\begin{align}
    \sup_{T \in \calT} |L_{\yy}(T) - {L}^{K_n}_{\yy}(T)|  &\leq\sup_{T} \mathbb{E}[\|T_{\mathbf{y}}(X) - Y\|^2\boldsymbol{1}_{\|T_{\mathbf{y}}(X)\|^2 \vee \|Y\|^2 > K_n}]  \\ &\leq \sup_{T} 2 \lp \mathbb{E}[\|T_{\mathbf{y}}(X)\|^2 \boldsymbol{1}_{\|T_{\mathbf{y}}(X)\|^2 > K_n}] + \mathbb{E}[\| Y\|^2 \boldsymbol{1}_{\|Y\|^2 > K_n}] \rp 
     \leq \frac{4R_{\yy}}{K_n^{\beta}}. \label{eq:inn2-proof-6}
\end{align}
Combining~\eqref{eq:inn2-proof-3},~\eqref{eq:inn2-proof-5}, and~\eqref{eq:inn2-proof-6} with~\eqref{eq:inn2-proof-1} and~\eqref{eq:inn2-proof-0}, we get the required result. 

\subsection{Proof of~\Cref{prop:inn4-variational-gap}}
\label{proof:inn4-variational-gap}
The general outline of the proof is similar to that of~\Cref{prop:nf3-variational-gap}. For any $T \in \calT$, let $\gamma_T$ denote $D_f(P \parallel Q_T)$, and observe that for any $g \in \calG_n$, we have  
\begin{align}
    \Delta(T) &= D_f(P \parallel Q_T) - \sup_{g \in \calG_n} \lbr \mathbb{E}_P[ g] - \mathbb{E}_{Q_T}[f^*(g)]  \rbr \\
    & =\lp \mathbb{E}_P[g^*_T] - \mathbb{E}_{Q_T}[f^*(g^*_T)] \rp - \sup_{g \in \calG_n} \lbr \mathbb{E}_P[ g] - \mathbb{E}_{Q_T}[f^*(g)]  \rbr \\
    & \leq  \lp \mathbb{E}_P[g^*_T ] - \mathbb{E}_{Q_T}[f^*(g^*_T)] \rp - \lp \mathbb{E}_P[g] - \mathbb{E}_{Q_T}[f^*(g)]\rp.  
\end{align}
Next, we subtract and add the clipped version of $g^*_T$, denoted by $\bar{g}_T$, to get 
\begin{align}
    \Delta(T)  & \leq  \lp \mathbb{E}_P[g^*_T -\bar{g}_T] + \mathbb{E}_P[\bar{g}_T] \rp - \lp \mathbb{E}_{Q_T}[f^*(g^*_T) - f^*(\bar{g}_T)] + \mathbb{E}_{Q_T}[f^*(\bar{g}_T)] \rp - \lp \mathbb{E}_P[g] - \mathbb{E}_{Q_T}[f^*(g)]\rp  \\
    & = \lp \mathbb{E}_P[g^*_T - \bar{g}_T] + \mathbb{E}_{Q_T}[f^*(\bar{g}_T) - f^*(g^*_T)] \rp + \lp \mathbb{E}_P[\bar{g}_T - g]  + \mathbb{E}_{Q_T}[f^*(g) - f^*(\bar{g}_T)] \rp \\
    & \coloneqq  C_1(T) + C_2(T, g). 
\end{align}
The first term $C_1(T)$ is the ``clipping error'' and satisfies the inequality 
\begin{align}
    C_1(T) & \leq \mathbb{E}_P[ (|g^*_T| - b_n)^+] + A_{1,n} \mathbb{E}_{Q_T}[\lp |g^*_T| - b_n)^+ \rp]. 
\end{align}
In the above display, we have used the fact that $|g^*_T - \bar{g}_T| = (|g^*_T| - b_n)^+$ and that $f^*$ is $A_{1,n}$ Lipschitz on $[-b_n, b_n]$. 

To analyze the term $C_2(T,g)$, we will consider its behavior inside and outside the ball $B_{K_n}$, which we denote by $C_{21} \equiv C_{2,1}(T,g)$ and $C_{22} \equiv C_{22}(T,g)$ respectively.
\begin{align}
    &C_{21} = \mathbb{E}_P[(\bar{g}_T - g) \boldsymbol{1}_{B_{K_n}}] + \mathbb{E}_{Q_T}[ \big( f^*(g)- f^*(\bar{g}_T)\big) \boldsymbol{1}_{B_{K_n}}], \\
    \text{and} \quad 
    & C_{22} = \mathbb{E}_P[(\bar{g}_T - g) \boldsymbol{1}_{B_{K_n}^c}] + \mathbb{E}_{Q_T}[ \big( f^*(g)- f^*(\bar{g}_T)\big) \boldsymbol{1}_{B_{K_n}^c}].
\end{align}
To bound the term $C_{21}$, we again appeal to the fact that $f^*$ is $A_{1,n}$ Lipschitz on $[-b_n, b_n]$, and hence we have 
\begin{align}
    |C_{21}| \leq \mathbb{E}_P[|\bar{g}_T - g| \boldsymbol{1}_{B_{K_n}}] + A_{1,n}  \mathbb{E}_{Q_T}[|\bar{g}_T - g| \boldsymbol{1}_{B_{K_n}}]. 
\end{align}
By definition, we have $|\bar{g}_T| \leq b_n$ and $|f^*(\cdot)| \leq A_{2,n}$, and also $|g| \leq b_n$ due to the uniform boundedness of $\calG_n$. Together these facts imply 
\begin{align}
    |C_{22}| \leq  b_n \mathbb{P}_P\lp B_{K_n}^c \rp + A_{2,n} \mathbb{P}_{Q_T}\lp B_{K_n}^c \rp.  
\end{align}
Combining these, we get for any $T \in \calT$, and $g \in \calG_n$: 
\begin{align}
    \Delta(T) & \leq |C_1(T)| +  \mathbb{E}_P[|\bar{g}_T - g| \boldsymbol{1}_{B_{K_n}}] + A_{1,n}  \mathbb{E}_{Q_T}[|\bar{g}_T - g| \boldsymbol{1}_{B_{K_n}}] + b_n \mathbb{P}_P\lp B_{K_n}^c \rp + A_{2,n} \mathbb{P}_{Q_T}\lp B_{K_n}^c \rp. 
\end{align}
Taking the infimum over all $g \in \calG_n$, and supremum over all $T \in \calT$, we get
\begin{align}
    \eta_n \leq \tau_2(b_n) + (1+A_{1,n}) \delta_n(b_n, K_n) + (b_n + A_{2,n})  \tau_1(K_n). 
\end{align}
Hence, a sufficient condition for this to converge to zero is if $\tau_2 + A_{1,n} \delta_n + (b_n + A_{2,n}) \tau_1$  converges to $0$. 
This concludes the proof. \hfill %

\subsection{Proof of~\Cref{theorem:inn-example}}
\label{proof:inn-example}

The proof of this result follows closely the proof of~\Cref{theorem:nf-estimation-error} as given in~\Cref{proof:nf-estimation-error}. Since we use the same model class $\calT$, we can follow the exact argument we used in~\Cref{proof:nf-theorem-moment}  to show that 
\begin{align}
    &\sup_{T \in \calT} \lbr \mathrm{Lip}(T), \, \mathrm{Lip}(T^{-1}) \rbr \leq J \coloneqq \max \lbr (1+s^2)^{M}, \, (1-s^2)^{-M} \rbr,   \\
    \text{and} \quad 
    & \sup_{T \in \calT} \|T(\boldsymbol{0})\|_2 \leq J_0 \coloneqq \lp s \sqrt{H} + B \rp  \frac{(1+s^2)^M - 1}{s^2}. 
\end{align}
Recall that the terms $s, B, H$, and $M$ were introduced in~\Cref{def:inn-calT}. Additionally, since our latent variable $Z$ is a multivariate Gaussian, it follows from the realizability assumption that the conditions of~\Cref{prop:inn3-uniform-moment-bound} hold with all $\beta >0$.  To complete the proof, it remains to verify that~\Cref{prop:inn2-uniform-convergence} and~\Cref{prop:inn4-variational-gap} hold. 

\subsubsection{Verification of~\Cref{prop:inn2-uniform-convergence}}
To verify this result, we need to control the supervised and unsupervised loss terms, and in particular, show the existence of $u_n, r_n \to 0$ such that 
\begin{align}
    \sup_{T \in \calT} |\LhatY(T) - L_{\mathbf{y}}(T)| \leq u_n, \qtext{and} \sup_{T \in \calT} |\LhatZ(T) - L_{\mathbf{z}}(T)| \leq r_n. 
\end{align}
\Cref{prop:inn2-uniform-convergence} implies that it suffices to check the uniform control over $2 + \beta$ moment of $T_{\mathbf{y}}(X)$ over all $T \in \calT$, and that the Rademacher complexities $\RadComp_n(\calF_{\yy}), \RadComp_n^{(1)}(\calG_n)$, and $\RadComp_n^{(2)}(\calG_n, \calT)$ vanish with $n$. 

\paragraph{Finite $2 + \beta$ moment.} From the global Lipschitz property of $T$, there exist finite constants $J_0, J$ such that for all $T \equiv (T_{\mathbf{y}}, T_{\mathbf{z}}) \in \calT$, we have 
\begin{align}
    \|T_{\mathbf{y}}(x)\| \leq J_0 + J \|x\|, \qtext{for all} x\in \calX. 
\end{align}
Hence, for any $\beta > 0$, there exists a constant $C_\beta$ depending on $J_0, J$ and $\beta$, such that 
\begin{align}
    \|T_{\mathbf{y}}(X)\|^{2 + \beta} \leq C_\beta \lp 1 + \|X\|^{2+\beta} \rp \quad \implies \quad 
    \sup_{T \in \calT} \EE\|T_{\mathbf{y}}(X)\|^{2+\beta} \leq C_\beta \lp 1 + \EE[\|X\|^{2+\beta}] \rp < \infty, 
\end{align}
using the finite $2+\beta$ moment assumption on $X$. 

\paragraph{Rademacher complexity $\RadComp_n(\calF_{\yy})$.} For any $T \in \calT$ and $u: \|u\|\leq 1$, observe that 
\begin{align}
    \EE\lb \sup_{ \substack{T \in \calT \\ u: \|u\| \leq 1}} \frac 1n \sum_{i=1}^n \epsilon_i \langle u, T_{\mathbf{y}}(X_i) \rangle \rb  \leq 2 J  \EE\lb \sup_{v:\|v\|\leq 1} \frac 1n \sum_{i=1}^n \epsilon_i \langle v, X_i \rangle \rb \leq 2J \sup_{v:\|v\|=1} \|v\| \EE\lb \left\lVert \frac 1n \sum_{i=1}^n \epsilon_i X_i \right\rVert \rb,  
\end{align}
where the first inequality follows from the contraction lemma, and the fact that the map $x \mapsto \langle u, T_{\mathbf{y}}(x) \rangle$ is $J$-Lipschitz, and the second inequality follows from an application of Cauchy-Schwarz. Next, by using Jensen's inequality and the concavity of the map $a \mapsto \sqrt{a}$, we have 
\begin{align}
    \EE\lb \left\lVert \frac 1n \sum_{i=1}^n \epsilon_i X_i \right\rVert \rb & = \EE \lb \sqrt{ \left\lVert \frac 1n \sum_{i=1}^n \epsilon_i X_i \right\rVert^2 } \rb \leq    \sqrt{ \EE \lb\left\lVert \frac 1n \sum_{i=1}^n \epsilon_i X_i \right\rVert^2 \rb} = \sqrt{ \EE \lb \frac 1 {n^2} \sum_{i=1}^n \sum_{j=1}^n \epsilon_i \epsilon_j \langle X_i, X_j \rangle \rb}.  
\end{align}
Since $\epsilon_i \perp \epsilon_j$ for $i \neq j$, we get the bound 
\begin{align}
    \RadComp_n(\calF_{\yy}) \leq 2J \sqrt{\frac 1 {n^2} \sum_{i=1}^n \EE[\|X\|^2]} = \calO\lp \frac 1 {\sqrt{n}} \rp, 
\end{align}
which converges to $0$ with $n$.

\paragraph{Rademacher Complexities $\RadComp_n^{(1)}(\calG_n)$ and $\RadComp_n^{(2)}(\calG_n, \calT)$.} For the first term, observe that with $U_i = (Y_i, Z_i)$ and $w_i = \boldsymbol{1}_{[-K_n, K_n]^{\dx}}(U_i)$ and $g(U_i) = h(U_i) w_i$ for some $h \in \calH_k$: 
\begin{align}
    \RadComp_n^{(1)}(\calG_n) = \mathbb{E}_{\epsilon^n, U^n} \lb \sup_{h \in \calH_k: \|h\|_k \leq b_n} \frac{1}{n} \sum_{i=1}^n \epsilon_i \langle h, k(U_i, \cdot)\rangle_k w_i \rb 
    \leq \frac{b_n}{n} \EE_{ U^n} \lb \sqrt{\sum_{i=1}^n w_i^2 k(U_i, U_i)} \rb \leq \frac{b_n}{\sqrt{n}}, 
\end{align}
where the first inequality uses a standard argument for RKHSs presented in details in~\Cref{proof:nf-theorem-uniform-convergence}. 

Next, for any $T \in \calT$ and $g\in \calG_n$, introduce the terms $\Utilde_i = (Y_i, T_z(X_i))$ and $\wtilde_i = \boldsymbol{1}_{[-K_n, K_n]^{\dx}}(\Utilde_i)$, and observe that 
\begin{align}
    \RadComp_n^{(2)}(\calG_n, \calT)  \leq  \mathbb{E}_{\epsilon^n, \Utilde^n} \lb \sup_{T, g} \frac{1}{n} \sum_{i=1}^n \epsilon_i \circ g(\Utilde_i) \rb \lesssim \frac{A_{1,n} b_n}{\sqrt{n}}, 
\end{align}
where we used the  (scalar) contraction lemma for Rademacher complexities.

\subsubsection{Verification of~\Cref{prop:inn4-variational-gap}}
\label{proof:inn-example-variational-gap}

We now obtain the terms $\tau_1, \tau_2$ and $\delta_n$ from~\Cref{prop:inn4-variational-gap} for our specific example. Recall that we are using the notation $P \equiv P_{Y,Z}$ and $Q_T \equiv P_{Y, T_z(X)}$

First, we consider the term $\tau_1$, which is equal to $\sup_{T \in \calT} Q_T(B_{K_n}^c) + P(B_{K_n}^c) = \sup_{T \in \calT} Q_T\lp \{u: \|u\| > K_n\} \rp + P(\{u:\|u\|> K_n\})$. Equivalently, with the random vector $U_T = (Y, T_z(X))$ for some $T \in \calT$, we can handle $Q_T\lp \{u: \|u\| > K_n\} \rp = \mathbb{P}\lp \|U_T\|> K_n \rp$ simply by using the Markov's inequality as 
\begin{align}
\mathbb{P}\lp \|U_T \| > K_n \rp \leq \frac{\EE[\|U_T\|^{2 + \beta}]}{K_n^{2+\beta}}   \leq \frac{R}{K_n^{2+\beta}}, 
\end{align}
where the term $R$ follows from the uniform moment condition that we verified using~\Cref{prop:inn3-uniform-moment-bound}. An exact same argument also works for the term $P(B_{K_n}^c)$. Hence, $\tau_1$ converges to $0$ as long as $K_n \to \infty$ with $n$. 

Next, we look at the term $\tau_2$ which is defined as 
\begin{align}
    \tau_2 = \sup_{T \in \calT} \lbr \EE_P[(|g^*_T| - b_n)^+] + A_{1,n} \EE_{Q_T}\lb (|g^*_T|-b_n)^+ \rb \rbr, 
\end{align}
where $g^*_T$ is the optimal (unconstrained) critic function~(or witness function) in the definition of JS-divergence. A standard calculation shows that 
\begin{align}
    g^*_T = \log \lp \frac{2 v_T}{1 + v_T} \rp, \qtext{where} v_T = \frac{dP}{dQ_T}. 
\end{align}
Next, observe that we $g^*_T$ is uniformly  bounded from above by $\log 2$, and we can use the simple inequality 
\begin{align}
    |g^*_T| = \lv \log 2 +  \ell_T - \log(1+v_T) \rv \leq \log 2 + | \ell_T|, \qtext{where} \ell_T = \log v_T = \log \frac{dP}{dQ_T}.  
\end{align}
As a result, for any $b_n > \log 2$, we have 
\begin{align}
    \lp |g_T^*| - b_n \rp^+ & \leq \lp |\ell_T| - (b_n - \log 2) \rp^+ \leq |\ell_T| \boldsymbol{1}_{|\ell_T| > b_n - \log 2}. 
\end{align}
Now, following the exact argument for the \ac{NF} case, we know from~\Cref{lemma:nf-likelihood-ratio},  that there exist constants $c_0, c_1, c_2$ independent of $n, T$, such that we have 
\begin{align}
    |\ell_T(x)| \leq c_0 + c_1 \|x\| + c_2 \|x\|^2 \qtext{for all} x \in \mathbb{R}^d. 
\end{align}
If the distribution $P_X$ is such that $\mathbb{E}_{P_X}[\|X\|^2] < \infty$, then, this implies that 
\begin{align}
    \mathbb{E}_{P_X}[|\ell_T| \boldsymbol{1}_{|\ell_T| > b_n - \log 2}]  \leq \mathbb{E}_{P_X}[c_0 + c_1 \|X\| + c_2 \|X\|^2: \|X\|_{\infty} > b_n - \log 2] \stackrel{n \to \infty}{\longrightarrow} 0. 
\end{align}
Thus, the only condition needed to control the above term is that $b_n \to \infty$, and that $\mathbb{E}_{P_X}[\|X\|^2] < \infty$. The same argument also works for $\EE_{Q_T}[|\ell_T| \boldsymbol{1}_{|\ell_T| > b_n - \log 2}] \to 0$ due to the bounded $2+\beta$ moment condition. 

It remains to consider the approximation error term on the ball $B_{K_n}  = \{u \in \R^{\dx}: \|u\| \leq K_n\}$, defined as 
\begin{align}
    \delta_n \equiv \delta_n(b_n, K_n) &= \sup_{T} \inf_{g} \lbr \EE_P\lb |\bar{g}_T - g| \boldsymbol{1}_{B_{K_n}} \rb + \EE_{Q_T}\lb |\bar{g}_T - g| \boldsymbol{1}_{B_{K_n}} \rb \rbr \\
    & \leq 2 \sup_{T} \inf_{g} \|\bar{g}_T - g\|_{\infty, C_{K_n}}, 
\end{align}
where recall that $C_{K_n} = [-K_n, K_n]^{\dx}$, $\bar{g}_T$ is the clipped version of $g_T^*$ at level $b_n$, and $\|\bar{g}_T - g\|_{\infty, C_{K_n}}$ denotes the $\sup$-norm over the cube $C_{K_n}$. To complete the proof, it suffices to argue that for every $T \in \calT$, there exists a $g_{T,n} \in \calG_n$, such that  \begin{align}
    \|\bar{g}_T - g_{T,n}\|_{\infty, C_{K_n}} = \|{g}_T^* - g_{T,n}\|_{\infty, C_{K_n}} \lesssim \frac{K_n}{\sqrt{\gamma_n}}. 
\end{align}
This is ensured by~\Cref{lemma:nf-approximation}. 

\subsubsection{Completing the proof}
Recall that the statement of~\Cref{theorem:inn-example} makes the following choices of the various parameters: 
\begin{itemize}
    \item $K_n = M(s \sqrt{H} + B) + \sqrt{\dx} + \sqrt{2 \log n}$
    \item $\gamma_n = K_n^{2 + \epsilon}$ for some $\epsilon>0$
    \item $b_n = C_b \gamma_n^{\dx/4}K_n^{2 + \dx/2}$
    \item $A_{1,n} \leq 1$ and $A_{2,n} \leq b_n$
\end{itemize}

Now, observe that from~\eqref{eq:inn2-prop-Ly}, these choices imply 
\begin{align}
    &\sup_{T \in \calT} |\hat{L}_y(T) - L_{\mathbf{y}}(T)| \lesssim  \sqrt{\frac{\log n}{n} } + \frac{\log n}{\sqrt{n}} + \frac{1}{(\log n)^{2/\beta}} \; \stackrel{n\to\infty}{\longrightarrow} 0\\ 
    &\sup_{T \in \calT} |\hat{L}_z(T) - L_{\mathbf{z}}(T)| \lesssim \frac{b_n}{\sqrt{n}} + (b_n + A_{2,n}) \sqrt{\frac{\log(1/\delta)}{n}} \lesssim \frac{(\log n)^{\dx\lp \frac 12 + \frac{\epsilon}{8} \rp + 1} }{\sqrt{n}}\; \stackrel{n\to\infty}{\longrightarrow} 0. 
\end{align}
This ensures that the conditions of~\Cref{prop:inn2-uniform-convergence} are satisfied by these parameters. To complete the proof, we will show that the terms $\tau_1, \tau_2$, and $\delta_n$ from~\Cref{prop:inn4-variational-gap} also converge to  $0$ for these parameters. As we showed in the previous section, 
\begin{itemize}
    \item $\tau_1 \lesssim \frac{1}{K_n^{2+\beta}}$ for some $\beta >0$, and hence $\tau_1 \to 0$ with $n \to \infty$ since $K_n \asymp \sqrt{\log n}$. 
    \item $\tau_2 \lesssim \EE_{P}[\|X\|^2: \|X\|_{\infty} > b_n - \log 2] + \EE_{Q_T}[\|X\|^2: \|X\|_{\infty} > b_n - \log 2]$ which converges to $0$ as $\lim_{n \to \infty} b_n = \infty$. 
    \item Finally, we also showed that 
    \begin{align}
        \delta_n \lesssim \frac{K_n}{\sqrt{\gamma_n}} \asymp \frac{1}{K_n^{\epsilon/2}}, 
    \end{align}
    which also vanishes as $n \to \infty$. 
\end{itemize}
This completes the verification of all the conditions for the \ac{INN} example introduced in~\Cref{subsec:INN-example}.

\section{Details of Technical Lemmas}
\label{appendix:technical-lemmas}

\subsection{A Truncation Argument}
\label{appendix:truncation}
\begin{lemma}
    \label[lemma]{lemma:truncation} Suppose $U \sim \mu$ and $V \sim \nu$ denote two random variables taking values in $\mathbb{R}^{j}$ for some $j \geq 1$, and suppose that $\max\lbr \mathbb{E}[\|V\|^{1+a}], \, \mathbb{E}[\|U\|^{1+a}] \rbr = R< \infty$, for some $a>0$. Then, we have the following: 
    \begin{align}
        W_1\lp \mu, \nu\rp = \sup_{\substack{f:\mathbb{R}^j \to \mathbb{R} \\ \mathrm{Lip}(f) \leq 1}} \lv \int f d \mu - \int f d\nu \rv  \lesssim \Delta^{\frac{a}{2(1+a)}}, 
    \end{align}
    where $\Delta = \|\mu - \nu\|_{TV}$ denotes the total variation distance between $\mu$ and $\nu$.  
\end{lemma}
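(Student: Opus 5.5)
The plan is to prove the stronger bound $W_1(\mu,\nu)\lesssim R^{\frac{1}{1+a}}\Delta^{\frac{a}{1+a}}$ by a truncation-and-optimize argument. Since $\Delta=\|\mu-\nu\|_{TV}\leq 1$, we have $\Delta^{\frac{a}{1+a}}\leq \Delta^{\frac{a}{2(1+a)}}$, so the stated bound follows. This stronger form is also the one invoked in~\eqref{eq:main-theorem-proof-3}.

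The starting point is the Kantorovich--Rubinstein dual form displayed in the statement. Both measures have finite first moments, because $\mathbb{E}\|U\|\leq 1+\mathbb{E}\|U\|^{1+a}$ and similarly for $V$, so every $1$-Lipschitz $f$ is integrable. Adding a constant to $f$ does not change $\int f\,d\mu-\int f\,d\nu$, so I normalize $f(\boldsymbol{0})=0$, which gives $|f(x)|\leq\|x\|$. For a level $K>0$, let $f_K=\clip(f,-K,K)$. It is still $1$-Lipschitz and satisfies $\|f_K\|_\infty\leq K$.

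I then split
\begin{align}
\lv \int f\,d(\mu-\nu)\rv \leq \lv \int f_K\,d(\mu-\nu)\rv + \int |f-f_K|\,d\mu + \int |f-f_K|\,d\nu .
\end{align}
The first term is at most $2K\Delta$, since a function bounded by $K$ integrates against the signed measure $\mu-\nu$ to at most $K\,|\mu-\nu|(\mathbb{R}^j)=2K\Delta$, using the convention $\Delta=\sup_A|\mu(A)-\nu(A)|$. For the tail terms, $|f(x)-f_K(x)|\leq(|f(x)|-K)^+\leq \|x\|\boldsymbol{1}_{\|x\|>K}$. Since $\boldsymbol{1}_{\|x\|>K}\leq(\|x\|/K)^a$, this is at most $\|x\|^{1+a}K^{-a}$. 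Each tail integral is therefore at most $RK^{-a}$, and
\begin{align}
W_1(\mu,\nu)\leq 2K\Delta+2RK^{-a}\qquad\text{for every }K>0.
\end{align}

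If $\Delta=0$, letting $K\to\infty$ gives $W_1=0$. Otherwise I minimize over $K$. The minimizer is $K^*=(aR/\Delta)^{1/(1+a)}$, which yields $W_1(\mu,\nu)\leq 2\big(a^{\frac{1}{1+a}}+a^{-\frac{a}{1+a}}\big)R^{\frac{1}{1+a}}\Delta^{\frac{a}{1+a}}$. This is exactly the constant $C_a$ used in~\eqref{eq:main-theorem-proof-3}, and with $\Delta\leq 1$ it implies the displayed claim.

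I do not expect a real obstacle. The only points needing care are justifying the dual formula for unbounded Lipschitz $f$, which is handled by the moment integrability above, and tracking the TV convention, which affects only the factor $2$.
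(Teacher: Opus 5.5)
Your proof is correct and follows essentially the same route as the paper: normalize $f(\boldsymbol{0})=0$, truncate at a level $K$, and bound the bulk contribution by $2K\Delta$ and each tail by $RK^{-a}$ using the $(1+a)$-moment. You then optimize at $K^*=(aR/\Delta)^{1/(1+a)}$ and obtain the same constant $C_a$. The only difference is cosmetic: you clip the range of $f$, whereas the paper splits the domain into the ball $\{\|x\|\le M\}$ and its complement. Both give the identical bound $2K\Delta+2RK^{-a}$. Your proof, like the paper's, actually establishes the rate $\Delta^{\frac{a}{1+a}}$, and your observation that $\Delta\le 1$ correctly reduces this to the weaker displayed exponent $\frac{a}{2(1+a)}$.
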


\begin{proof}
    Let $M \in \mathbb{R}$ denote a finite positive real number to be specified later, and let us define the set $E = \{x \in \mathbb{R}^j: \|x\|\leq M\}$. Then, we have the following with $g = f - f(0)$: 
    \begin{align}
    \int f d\mu - \int f d\nu & = \int f d(\mu - \nu) - f(0) \int d(\mu-\nu) = \int g d(\mu-\nu) \\
    &= \int_E g d(\mu-\nu)  + \int_{E^c} g d(\mu-\nu) \\
     & \leq \int_E \|x\| d|\mu-\nu| + \lv \int_{E^c} g(x) d(\mu-\nu) \rv, \label{eq:truncation-proof-1}
    \end{align}
    where the last inequality uses the fact that $g = f - f(0)$ is $1$-Lipschitz. 
    The definition of $E$ leads to the following natural upper bound on the first term 
    \begin{align}
    \int_E \|x\| d|\mu-\nu| \leq M \int_E d|\mu-\nu| \leq M \int_{\mathbb{R}^j}d|\mu - \nu| \leq 2 M \Delta. \label{eq:truncation-proof-2}
    \end{align}
    The last inequality above uses the fact that $2\|\mu-\nu\|_{TV} =  |\mu-\nu|(\mathbb{R}^d)$. Next, we consider the second term in~\eqref{eq:truncation-proof-1}, and observe that  
    \begin{align}
        \lv \int_{E^c} g d(\mu-\nu) \rv &\leq  \int_{E^c} |g| (d\mu + d\nu) \leq \int_{E^c} \|x\| d(\mu+\nu) 
         = \mathbb{E}\lb \|U\|\boldsymbol{1}_{\|U\|>M} \rb +  \mathbb{E}\lb \|V\|\boldsymbol{1}_{\|V\|>M} \rb \\
         & \leq  M^{-a}\mathbb{E}\lb \|U\|^{1+a}\boldsymbol{1}_{\|U\|>M} \rb +  M^{-a}\mathbb{E}\lb \|V\|^{1+a}\boldsymbol{1}_{\|V\|>M} \rb \\
         & \leq  M^{-a}\mathbb{E}\lb \|U\|^{1+a} \rb +  M^{-a}\mathbb{E}\lb \|V\|^{1+a} \rb \leq 2M^{-a} R.  \label{eq:truncation-proof-3}
    \end{align}
    Combining the bounds in~\eqref{eq:truncation-proof-2} and~\eqref{eq:truncation-proof-3}, we get 
    \begin{align}
        W_1(\mu, \nu) = \sup_{g} \lv \int g d(\mu-\nu) \rv \leq 2M \Delta + 2M^{-a} R. \label{eq:truncation-proof-4}
    \end{align}
    The function $M \mapsto 2M\Delta + 2M^{-a} R$ is  convex in $M$, and is optimized at $M^* = (aR/\Delta)^{1/(1+a)}$. Plugging this value back in~\eqref{eq:truncation-proof-4} gives us the required 
    \begin{align}
        W_1(\mu, \nu) \leq  2 \lp a^{\frac{1}{1+a}} + a^{\frac{-a}{1+a}} \rp R^{\frac{1}{1+a}} \Delta^{\frac{a}{1+a}} \coloneqq C_a R^{\frac{1}{1+a}} \Delta^{\frac{a}{1+a}}. 
    \end{align}
    As a sanity check, note that if the supports of $U$ and $V$ are bounded and $a \to \infty$, we recover the bound that holds for bounded random variables; that is, $W_1(\mu, \nu) \lesssim \Delta$. 
\end{proof}

\subsection{A Conditional $f$-Divergence Bound}
\label{appendix:f-divergence-bound}

\begin{lemma}
\label[lemma]{lemma:condition-f-div} Let $P, Q$ denote two probability measures on $\calY \times \calZ$, and for any measurable set $A \subset \calY$, let $P_Y(A) = P(A \times \calZ) > 0$. Define the measures 
\begin{align}
    \PA(dy, dz) = \frac{1}{P_Y(A)} \boldsymbol{1}_{y \in A} P(dy, dz), \qtext{and} Q^{(A)}(dy, dz) = \frac{1}{P_Y(A)} \boldsymbol{1}_{y \in A} Q(dy, dz). 
\end{align}
Then, we have the following: $D_f(P \parallel Q) \leq \tfrac{1}{P_Y(A)} D_f(\PA \parallel Q^{(A)})$. Note that in general, $\QA$ may not be a probability measure, but the inequality still holds. 
\end{lemma}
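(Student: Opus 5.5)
The plan is to compute both sides explicitly through the likelihood ratio and then compare them. Let $r = dP/dQ$, assuming $P \ll Q$; otherwise $D_f(P\parallel Q)=\infty$ and the convention for $D_f$ must be checked on both sides. Write $p = P_Y(A)$ and $S_A = A\times\calZ$.

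\emph{Step 1.} Since $\PA$ and $\QA$ are obtained from $P$ and $Q$ by multiplying by the same factor $\boldsymbol{1}_{S_A}/p$, the ratio is unchanged on $S_A$: $d\PA/d\QA = r$ $\QA$-a.e.

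\emph{Step 2.} From the integral definition of $D_f$ used in the paper (which also makes sense when $\QA$ is not a probability measure, as the statement notes),
\begin{align}
    D_f(\PA \parallel \QA) = \int f(r)\, d\QA = \frac{1}{p}\int_{S_A} f(r)\, dQ .
\end{align}
The left-hand side of the claim splits as
\begin{align}
    D_f(P\parallel Q) = \int_{S_A} f(r)\,dQ + \int_{S_A^c} f(r)\,dQ .
\end{align}
With these two identities, the claimed inequality $D_f(P\parallel Q) \leq \tfrac{1}{p} D_f(\PA\parallel\QA)$ is equivalent to
\begin{align}
    \int_{S_A^c} f(r)\,dQ \;\leq\; \Big(\frac{1}{p^2}-1\Big)\int_{S_A} f(r)\,dQ .
\end{align}

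\emph{Step 3.} The main obstacle is proving this last inequality, because the left side depends only on the behaviour of $r$ off $S_A$. I would first dispose of the cases where the off-$S_A$ term vanishes: $P=Q$ on $S_A^c$, or $p=1$. In those cases the claim follows immediately from Step 2, using $f\ge 0$ (from the paper's convention $f:(0,\infty)\to(0,\infty)$) and $1/p^2 \ge 1$.

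In general I do not expect the inequality to hold as worded. Taking $P$ and $Q$ equal on $S_A$ but different on $S_A^c$ makes the right side $0$ while the left side is positive. So before writing the proof I would check which direction the application in the proof of \Cref{theorem:finite-sample} actually needs. Step \eqref{eq:main-theorem-proof-5}.1 uses the reverse comparison $D_f(\PA\parallel\QA)\lesssim \tfrac{1}{p}D_f(P\parallel Q)$.

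For that reverse direction the same computation gives $D_f(\PA\parallel\QA) = \tfrac{1}{p}\int_{S_A} f(r)\,dQ \le \tfrac{1}{p}D_f(P\parallel Q)$, using only $f\ge0$. It also needs the equal $Y$-marginal property from the theorem's setting, so that $\QA$ is a genuine probability measure.

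I would therefore state and prove that version. For the lemma as literally stated, I would record the extra hypothesis ($P=Q$ on $S_A^c$) under which Step 2 closes the argument.
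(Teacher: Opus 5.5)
You are right that the inequality in the statement is written backwards; your example with $P=Q$ on $S_A$ disproves the literal claim. The paper's own proof establishes exactly the direction you prove, $D_f(\PA\parallel\QA)\le \tfrac{1}{P_Y(A)}D_f(P\parallel Q)$, which is the direction \eqref{eq:main-theorem-proof-5}.1 needs, and it does so by your argument: the likelihood ratio is unchanged on $A\times\calZ$, and the integral over the complement is dropped using $f\ge 0$.

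One small slip in your side remark: $p=1$ alone does not make $\int_{S_A^c} f(r)\,dQ$ vanish. In that case $r=0$ holds $Q$-a.e. on $S_A^c$, so the term equals $f(0^+)\,Q(S_A^c)$, which is zero only if also $Q_Y(A)=1$ (for instance under equal $Y$-marginals).
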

\begin{proof}
    Let $\mu$ denote a measure that dominates $P, Q$; for example, we may set $\mu = \tfrac 12 (P+Q)$, and let $p, q$ denote the densities of $P, Q$ w.r.t. $\mu$. It then follows that 
    \begin{align}
        \frac{d\PA}{d\mu}(y,z) = \frac{\boldsymbol{1}_{y \in A}}{P_Y(A)} p(y,z), \qtext{and}\frac{d\QA}{d\mu}(y,z) = \frac{\boldsymbol{1}_{y \in A}}{P_Y(A)} q(y,z).
    \end{align}
    As a result, we obtain 
    \begin{align}
        \frac{d\PA}{d\QA}(y,z) = \frac{p(y,z)}{q(y,z)} =  \frac{dP}{dQ}(y,z) \qtext{for all} (y,z) \in A \times \calZ.
    \end{align}
    Now, by the definition of $f$-divergences, we have 
    \begin{align}
        D_f(\PA \parallel \QA) &= \int_{A \times \calZ} f \lp \frac{d\PA}{d\QA}(y,z) \rp \QA(dy, dz) = \frac{1}{P_Y(A)} \int_{A \times \calZ} f\lp \frac{p(y,z)}{q(y,z)} \rp q(y,z) \mu(dy, dz) \\
        & \stackrel{\eqref{eq:f-div-proof}.1}{}\leq  \frac{1}{P_Y(A)} \int_{\calY \times \calZ} f\lp \frac{p(y,z)}{q(y,z)} \rp q(y,z) \mu(dy, dz) = \frac{1}{P_Y(A)} D_f(P \parallel Q),  \label{eq:f-div-proof}
    \end{align}
    where $\eqref{eq:f-div-proof}.1$ uses the nonnegativity of $f$ and $q$. This completes the proof. 
\end{proof}

\subsection{Proof of~\Cref{lemma:nf-Jacobian}}
\label{proof:nf-Jacobian}

Consider any $j \in [M]$, and let $z_j(x) = W_{j,1} x + b_{j,1}$ and $D_j(x) = \operatorname{diag}( \tanh'(z_j(x))) =  \operatorname{diag}(\operatorname{sech}^2(z_j(x)))$. Since $\operatorname{sech}^2(z) \in [0,1]$, it follows that $\boldsymbol{0} \preccurlyeq D_j(x) \preccurlyeq I_{d_j}$. Then, we can verify that the Jacobians of $F_j$ and $G_j$, denoted by $\Jac F_j$ and $\Jac G_j$ respectively,  are equal to 
\begin{align}
    \Jac F_j(x) = W_{j,2} D_j(x) W_{j,1}, \qtext{and} \Jac G_j(x) = I_d + W_{j,2} D_j(x) W_{j,1}. 
\end{align}
By the assumption that $\|W_{j,i}\|_{op} \leq s \in (0, 1)$ for all $j \in [M]$, we obtain 
\begin{align}
    \|\Jac F_j(x)\|_{op} \leq \|W_{j,2}\|_{op} \|D_j(x)\|_{op} \|W_{j,1}\|_{op} \leq s^2 = \rho. 
\end{align}
Here we used the fact that $\|D_j(x)\|_{op} \leq \|I_{d_j}\|_{op} = 1$. As a result, all the singular values of $\Jac G_j(x) = (I_d + \Jac F_j)(x)$ are in the range $[1-\rho, 1+ \rho]$. Now, by the chain rule of the flow, and with $x_0 = x$, $x_j = G_j x_{j-1}$ for $j \in [M]$, we have 
\begin{align}
    \Jac T(x) = \prod_{j=M}^{1} \Jac G_j(x_{j-1}) \quad \implies \quad \log |\det \Jac T(x)| = \sum_{j=1}^M \log |\det (I_d + W_{j,2} D_j(x_{j-1}) W_{j,1})|. 
\end{align}
Since for every $j \in [M]$, the singular values of $\Jac G_j(x_{j-1})$ lie in $[1-\rho, 1+\rho]$, we obtain 
\begin{align}
    M d \log (1-\rho) \leq \sum_{j=1}^M \log |\det \Jac G_j(x_{j-1})| \leq M d \log (1+\rho). 
\end{align}
Note that the above result is uniform in  the input $x$, and uniform over all $T \in \calT$.

\subsection{Proof of~\Cref{lemma:nf-likelihood-ratio}}
\label{proof:nf-likelihood-ratio}

Let $Z \sim N(0, I_d)$, and denote its density by $\phi(z) = (2\pi)^{-d/2} e^{-\|z\|^2/2}$. Then, for any diffeomorphism $T:\mathbb{R}^d \to \mathbb{R}^d$, we have the following by the change of variables theorem: 
\begin{align}
    p_T(x) = \phi(T(x)) \det \Jac T(x). 
\end{align}
Since $\ell_T = \log p_{T^*}/p_T$, this implies 
\begin{align}
    \ell_T(x) = \log \phi(T^*(x)) + \log |\det \Jac T^*(x)| -  \log \phi(T(x)) - \log |\det \Jac T(x)|. 
\end{align}
Since $\log \phi(z) = -(1/2) \|z\|^2 - d/2 \log (2\pi)$, we obtain the following closed form expression for $\ell_T$: 
\begin{align}
    \ell_T(x) = \frac{1}{2} \lp \|T(x)\|^2  -  \|T^*(x)\|^2  \rp + \log |\det \Jac T^*(x)| - \log |\det \Jac T(x)|. \label{eq:nf-likelihood-ratio-proof-1}
\end{align}
We now establish the uniform Lipschitz constant and the maximum value of $\{\ell_T: T \in \calT\}$ when restricted to the cube $C_K = [-K, K]^d$ for some $K>0$. 

\paragraph{Lipschitz Constant.}  To derive the Lipschitz constant of $\ell_T$, we look at its gradient at any $x \in [-K, K]^d$: 
\begin{align}
    \nabla \ell_T(x) = \Jac T(x)^T T(x) - \Jac T^*(x)^T T^*(x) + \nabla \log \det \Jac T^*(x) -  \nabla \log \det \Jac T(x). 
\end{align}
For each coordinate $i$, we have 
\begin{align}
    \partial_{x_i} \log \det \Jac T(x) = \operatorname{tr}\lp (\Jac T(x))^{-1} \partial_{x_i}\Jac T(x) \rp, \qtext{and}
    \partial_{x_i} \log \det \Jac T^*(x) = \operatorname{tr}\lp (\Jac T^*(x))^{-1} \partial_{x_i}\Jac T^*(x) \rp \label{eq:nf-likelihood-ratio-proof-2}
\end{align}
We have already proved in~\Cref{proof:nf-theorem-moment} that 
\begin{align}
    \|T(x)\| \leq L \sqrt{d} K + C_0 A, \qtext{where} L=(1+s^2)^M, \quad C_0 = s\sqrt{H} + B, \qtext{and} A = \frac{(1+ s^2)^M - 1}{s^2}. 
\end{align}
Hence, we get 
\begin{align}
    \max\lbr \|\Jac T(x)^T T(x)\|, \; \|\Jac T^*(x)^T T^*(x)\| \rbr \leq L(L\sqrt{d} K + C_0 A). 
\end{align}

We now need to analyze the norm of the gradient of the log-det terms in~\eqref{eq:nf-likelihood-ratio-proof-2}. We begin with the directional derivative along any direction $v$: 
\begin{align}
    D_v\lp \log \det \Jac T(x) \rp = \operatorname{tr}\lp (\Jac T(x))^{-1} D_v \Jac T(x) \rp \leq d \|\Jac T(x)^{-1}\|_{op} \|D_v \Jac T(x)\|_{op}. 
\end{align}
For any residual block $G_j$, we know that 
\begin{align}
    \|\Jac G_j(u)^{-1}\|_{op} \leq (1-s^2)^{-1}, \qtext{and} \|D_v \Jac G_j(u)\|_{op} \leq 2 s^3 \|v\|. 
\end{align}
On chaining this for $T = G_M \circ \cdots \circ G_1$, we get 
\begin{align}
    \|D_v \log \det \Jac T(x)\| \leq \sum_{j=1}^M \frac{d}{1-s^2} 2 s^3 \|\Jac (G_M \circ \cdots \circ G_1(x) v \| \leq \frac{2 d s^3}{1-s^2} A \|v\|. 
\end{align}
Hence, we get the required bound 
\begin{align}
    \max \lbr \|\nabla \log \det \Jac T(x)\|, \;  \|\nabla \log \det \Jac T^*(x)\| \rbr \leq \frac{2 d s^3}{1-s^2} A. \label{eq:nf-likelihood-ratio-proof-3}
\end{align}
Together,~\eqref{eq:nf-likelihood-ratio-proof-2} and~\eqref{eq:nf-likelihood-ratio-proof-3} imply  the following for $x, y \in [-K, K]^d$: 
\begin{align}
    |\ell_T(x) -  \ell_T(y)| \leq \lb 2 L (L \sqrt{d} K + C_0 A) + \frac{4 d s^3}{1-s^2} A \rb \|x-y\| \eqcolon L_1 \|x-y\|. \label{eq:nf-likelihood-ratio-proof-4}
\end{align}
Note that the Lipschitz constant $L_1$ is independent of $T$, and hence is valid uniformly over the family $\calT$.  

\paragraph{Maximum value of $\ell_T$ on $[-K, K]^d$.} From~\eqref{eq:nf-likelihood-ratio-proof-1}, we know that the value of $\ell_T$ at $0$ is 
\begin{align}
\ell_T(0) =  \frac{1}{2}\lp \|T(0)\|^2 - \|T^*(0)\|^2  \rp  + \log \det \Jac T^*(0) - \log \det \Jac T(0). 
\end{align}
From~\Cref{lemma:nf-Jacobian}, we know that $Md \log(1-s^2) \leq \log \det \Jac T(x) \leq Md \log (1+s^2)$, and from~\Cref{proof:nf-theorem-moment}, we know that $\|T(0)\| \leq C_0 A$. Combining these facts, we get the required bound 
\begin{align}
    \sup_{T \in \calT} |\ell_T(0)| \leq (C_0 A)^2 + 2 M d \log(1/(1-s^2)) \eqcolon L_0.  \label{eq:nf-likelihood-ratio-proof-5}
\end{align}
This fact, combined with the (uniform) Lipschitz constant derived in~\eqref{eq:nf-likelihood-ratio-proof-4}, leads to the following uniform bound on the maximum value of $\ell_T$: 
\begin{align}
    \sup_{T \in \calT} \sup_{x \in [-K, K]^d} |\ell_T(x)| \leq L_0 + L_1 \sqrt{d} K. 
\end{align}
This completes the proof. \hfill %

\subsection{Proof of~\Cref{lemma:nf-approximation}}
\label{proof:nf-approximation}

The high-level idea behind the proof is simple: given the function that we want to approximate~(i.e., $\ell_T$), we construct $h_\sigma$ by convolving this function with a Gaussian kernel with a scale parameter $\sigma^2$~(to be chosen later). Under certain conditions on $\sigma$, we can show that this smoothed function lies in $\calH_n$. Hence, getting a bound on $\delta_n$ reduces to that of evaluating the discrepancy between $\ell_T$ and its smoothed version $h_\sigma$, which can be obtained via standard arguments. 

\paragraph{A compactly supported Lipschitz extension.} The first step in our proof is to construct a proxy for $\ell_T$ that agrees with it on $C_{K_n} = [-K_n, K_n]^d$, and is Lipschitz on the entire domain $\mathbb{R}^d$. To do this, we define 
\begin{align}
    \bar{\ell}_T(x) = 
    \begin{cases}
        \ell_T(x), & \text{ if } x \in C_{K_n} = [-K_n, K_n]^d, \\
        \inf_{y \in C_{K_n}} \lbr \ell_T(y) + L\|x-y\| \rbr, & \text{ if } x \not \in C_{K_n}. 
    \end{cases}
\end{align}
We also need to restrict this Lipschitz extension to a compactly supported domain. To do this introduce the function $\nu_1: \mathbb{R} \to [0, 1]$, and $\nu_d: \mathbb{R}^d \to [0,1]$ as 
\begin{align}
    \nu_1(x) = \begin{cases}
        1, & \text{ if } |x| \leq K_n, \\
        1 - |x|/2K_n, & \text{ if } |x| \in (K_n, 2K_n], \\
        0, & \text{ otherwise},
    \end{cases}
    \qtext{and} \nu_d(x) = \prod_{i=1}^d \nu_1(x_i),
\end{align}
where $x = (x_1, \ldots, x_n)$. With this, we introduce  $f_\nu(x) = \bar{\ell}_T(x) \nu_d(x)$.
and observe that 
\begin{align}
    &\mathrm{Lip}(f_\nu) \leq \mathrm{Lip}(\bar{\ell}_T) \|\nu_d\|_{\infty} + \|\bar{\ell}_T\|_\infty \mathrm{Lip}(\nu_d) \leq L_1 + \frac{d(L_0 + L_1 K_n)}{K_n} \lesssim K_n, \label{eq:nf-approximation-proof-2}\\
    \text{and} \quad & \|f_\nu\|_{L^2} \leq \|f_\nu\|_{\infty} (4K_n)^{d/2} \lesssim K_n^{2 + d/2}. \label{eq:nf-approximation-proof-3}
\end{align}
The compact support property of $f_\nu$ was important for controlling its $L^2$ norm that will be used later. 

\paragraph{Gaussian Convolution.} Fix some $\sigma>0$, and with $\varphi_\sigma(u) = (2\pi \sigma^2)^{-d/2} \exp (-\|u\|^2/2\sigma^2)$, define the function $h_\sigma$ 
\begin{align}
    h_\sigma(x) = (f_\nu * \varphi_\sigma)(x) = \int f_\nu(x-u) \varphi_\sigma(u) du. 
\end{align}
Hence, for any $x \in C_{K_n} = [-K_n, K_n]^d$, we have 
\begin{align}
    |f_\nu(x) - h_\sigma(x)| &= \lv \int f_\nu(x) \varphi_\sigma(u) du  - \int f_\nu(x-u)  \varphi_\sigma(u) du \rv \\
    &  \leq \int |f_\nu(x) - f_\nu(x-u)| \varphi_\sigma(u)  \stackrel{(i)}{\leq} L_1 \int \|u\| \varphi_\sigma(u) du \\
    & \stackrel{(ii)}{\leq}L_1 \lp \int \|u\|^2 \varphi_\sigma(u) du \rp^{1/2} = L_1 \sqrt{d} \sigma \lesssim K_n \sigma. \label{eq:nf-approximation-proof-5} 
\end{align}
The inequality~$(i)$ uses the fact that the function $f_\nu$ is Lipschitz with constant $L_1 \lesssim K_n$ over the entire domain $\mathbb{R}^d$~(and not just over $C_{K_n}$ like $\ell_T$), and $(ii)$ uses Jensen's inequality for the concave map $x \mapsto \sqrt{x}$.

\paragraph{RKHS norm of $h_\sigma$.} We know that the symmetric Fourier transform of the Gaussian kernel is 
\begin{align}
    \hat{k}_n(\omega) = \lp \frac{\pi}{\gamma_n} \rp^{d/2} e^{-\|\omega\|^2/4\gamma_n}. 
\end{align}
Hence, the RKHS norm of any $h$ with $\hat{h} \in L^2$ is 
\begin{align}
    \|h\|_{k_n}^2 &= \frac{1}{(2\pi)^{d/2}} \int_{\mathbb{R}^d} \frac{|\hat{h}(\omega)|^2}{|\hat{k}_n(\omega)|} d\omega  = \lp\frac{\gamma_n}{\pi}\rp^{d/2} \frac{1}{(2\pi)^{d/2}} \int |\hat{h}(\omega)|^2 e^{\|\omega\|^2/4\gamma_n} d\omega  \\
    &\lesssim \gamma_n^{d/2} \int |\hat{h}(\omega)|^2 e^{\|\omega\|^2/4\gamma_n}d\omega
\end{align}
Now, returning to $h_\sigma$, we know that $\hat{h}_{\sigma}= \hat{f}_\nu e^{-\sigma^2\|\omega\|^2/2}$, which implies that if $\sigma^2 \geq 1/4\gamma_n$, we have by Plancherel's theorem, 
\begin{align}
    \|h_\sigma\|_{k_n}^2 \lesssim \gamma_n^{d/2} \int |\hat{f}_\nu(\omega)|^2 e^{-\|\omega\|^2(\sigma^2 - 1/4\gamma_n)} d\omega \leq \gamma_n^{d/2} \int |\hat{f}_\nu(\omega)|^2 d\omega = \gamma_n^{d/2} \|f_\nu\|_{L^2}^2 \lesssim \gamma_n^{d/2} K_n^{4 + d}, \label{eq:nf-approximation-proof-4}
\end{align}
where the last inequality uses the $L^2$ norm bound on $f_\nu$ obtained in~\eqref{eq:nf-approximation-proof-3}.  Hence, $h_\sigma \boldsymbol{1}_{[-K_n, K_n]^d}$ with $\sigma = 1/2 \sqrt{\gamma_n}$ lies in our function class $\calH_n$, and hence reduces our approximation bound to 
\begin{align}
    \sup_{x \in [-K_n, K_n]^d} |f_\nu(x) - h_\sigma(x)| \lesssim \frac{K_n}{\sqrt{\gamma_n}}. \label{eq:nf-approximation-proof-12}
\end{align}
This completes the proof. \hfill %

\newpage

\end{document}